\documentclass[11pt]{article}
\usepackage{etex}

\usepackage{amsmath}
\usepackage[margin=1in]{geometry}
\usepackage[utf8]{inputenc}
\usepackage[T1]{fontenc}
\usepackage{newunicodechar}
\newunicodechar{（}{(}
\newunicodechar{）}{)}
\usepackage[dvipsnames]{xcolor}
\definecolor{BeaBlue}{RGB}{71,127,124}
\definecolor{BeaRed}{RGB}{210, 77, 4}

\usepackage [colorlinks=true, linkcolor=BeaRed, citecolor=BeaBlue, linktoc=page,pagebackref]{hyperref}
\usepackage{url}
\usepackage{booktabs}
\usepackage{amsfonts}
\usepackage{nicefrac}
\usepackage{microtype}
\usepackage{xcolor}
\usepackage[numbers]{natbib}
\usepackage{url}
\usepackage{amsmath, amsfonts, amsthm, amssymb}
\usepackage{thmtools}
\usepackage{thm-restate}
\usepackage{nicefrac}
\usepackage{bm}
\usepackage{enumitem}
\usepackage[T1]{fontenc}
\usepackage[english]{babel}
\usepackage{mathtools}
\usepackage{makecell}
\usepackage{multirow}
\usepackage{ucs}
\usepackage{url}
\usepackage{dsfont}
\usepackage{graphicx}
\usepackage{caption}

\usepackage[toc,page]{appendix}
\usepackage{stmaryrd}
\usepackage{authblk}
\usepackage{pgfplots}
\pgfplotsset{compat=1.8}
\usepackage{cleveref}
\usepackage{changepage}
\usepackage{breakcites}
\usepackage{patchcmd}
\usepackage{comment}
\usepackage{etoolbox}

 \usepackage[vvarbb]{newtx}
 \usepackage{newtxmath}

\newtheorem{theorem}{Theorem}[section]
\newtheorem{lemma}[theorem]{Lemma}
\newtheorem{assumption}[theorem]{Assumption}

\newtheorem{example}[theorem]{Example}
\newtheorem{proposition}[theorem]{Proposition}
\newtheorem{corollary}[theorem]{Corollary}

\newtheorem{remark}[theorem]{Remark}

\newtheorem{definition}[theorem]{Definition}

\let\oldparagraph=\paragraph
\renewcommand\paragraph[1]{\oldparagraph{#1.}}

\newcommand{\G}{\mathcal{G}}

\newcommand{\R}{\mathbb{R}}
\newcommand{\E}{\mathbb{E}}

\newcommand{\PP}{\mathbb{P}}
\newcommand{\Sph}{\mathcal{S}}

\newcommand{\eps}{\epsilon}

\newcommand{\vertiii}[1]{{\left\vert\kern-0.25ex\left\vert\kern-0.25ex\left\vert #1
    \right\vert\kern-0.25ex\right\vert\kern-0.25ex\right\vert}}

\newcommand{\one}{\mathop{\mathds{1}}}

\newcommand{\Unif}{\mathrm{Unif}}

\newcommand{\norm}[1]{\left\| #1 \right\|}

\newcommand{\abs}[1]{\left| #1 \right|}

\newcommand{\sph}{\mathcal{S}^{d-1}}

\newcommand{\op}{\mathrm{op}}

\usepackage{graphicx}
\usepackage{amssymb}
\usepackage{mathtools}
\usepackage{comment}
\usepackage{enumitem}

\DeclareMathOperator{\rank}{rank}

\newcommand{\ignore}[1]{}

\newcommand{\distpm}{\operatorname{dist}_{\pm}}
\newcommand{\Mat}{\mathsf{Mat}}

\newcommand{\ot}{\otimes}
\newcommand{\ip}[2]{\left\langle #1,#2\right\rangle}

\newcommand{\grad}{\nabla}

\newcommand{\Corr}{\mathrm{Corr}}
\newcommand{\od}{\odot}
\newcommand{\Gc}{{\mathcal G_\theta}}

\title{The Multiscale Single-Index Model: A Stylized Model for Hierarchical Feature Learning}
\author[1]{Joan Bruna}
\affil[1]{Courant Institute School of Mathematics, Computing and Data Science\\ New York University}
\begin{document}

\maketitle

\begin{abstract}
We consider the Multiscale Single-Index Model (MSIM), first introduced in \cite{oymak2021learning}, as a stylized model for hierarchical learning with \emph{scale separation}. Each layer extracts a shared single-index feature at one physical scale and passes it to the next, thus defining a tractable setting in which to study how deep architectures learn multiscale representations. Under non-degeneracy and delocalization assumptions on the link function and planted features respectively, for fixed depth $K$ and local scale $d$, the first Wiener chaos of the target behaves as a perturbed spiked tensor, where the perturbation of order $d^{-1/2}$ comes from the non-linearity --- revealing the MSIM as a natural non-linear analogue of the Tensor PCA model \cite{montanari2014statistical}.

While this perturbative picture is sufficient to enable efficient spectral recovery based on Tensor unfolding (as already observed in \cite{oymak2021learning}), it is not precise enough for the analysis of backpropagation gradient-based methods.
In this work, we address this limitation by performing a fine-grained analysis of the Wiener chaos using Edgeworth expansions.
In the first chaos, this gives a finite-rank hierarchy at scales $d^{-q/2}$. In higher chaoses, balanced flattenings exhibit staircase singular-value plateaus of size $d^{-\rho/2}$ and multiplicity $d^{\rho}$ under a natural higher-chaos non-cancellation condition. Using this higher-chaos structure, and under an additional slow Hermite-energy tail condition, we first establish shallow-network approximation lower bounds, quantifying the benefit of depth in this model. Next, and most importantly, we prove that online SGD on the correlation objective, where all layers evolve in the same timescale, achieves $1 - o_d(1)$ recovery with $n = \widetilde{O}( d^{K-1})$ samples, again recovering the same sample complexity as in the linear counterpart.

\end{abstract}

\tableofcontents

\section{Introduction}

Feature learning, informally defined as the ability to extract informative low-dimensional representations out of high-dimensional data, is the defining characteristic of Neural Networks trained with gradient descent ---and the basis for the modern AI paradigm of pretraining then fine-tuning.
Over the recent years, a feature learning theory has emerged, focusing on a stylized model for high-dimensional learning based on multi-index models \cite{arous2021onlinestochasticgradientdescent, dudeja2018learning, abbe2022merged, abbe2023sgd, dandi2023learning, barbier2019optimal, cui2024asymptotics,latourelle2026statistical}; see \cite{bruna2025surveyalgorithmsmultiindexmodels} for a recent survey, demonstrating the ability of shallow neural networks to perform efficient feature learning, and even identifying sharp phase transitions \cite{montanari2026phasetransitionsfeaturelearning}.

A natural shortcoming for such theory is that shallow NNs are limited idealizations of modern NNs, where depth is a critical component. Towards addressing this gap, several works \cite{allen-zhu2019what, nichani2023provable, ren2026provablelearningrandomhierarchy, dandi2025computationaladvantagedepthlearning, wang2023learninghierarchicalpolynomialsthreelayer} have studied feature learning in deeper architectures, raising novel questions such as elucidating the hierarchy between features at different layers, or how to adapt gradient methods to facilitate learning across all layers. In this context, our \emph{desiderata} for a hierarchical feature learning model are (i) the necessity of depth, ie the fact that previous shallow models cannot efficiently learn, and (ii) the sufficiency of gradient-based `vanilla' backpropagation, without artificial timescale separations across different layers.

In this work, motivated by applications in data domains such as images or text, we consider a stylized model for deep feature learning based on the principle of \emph{scale separation}, where each layer of the network can be associated with a different physical scale. In the context of computer vision, the model becomes a Convolutional Neural Network with disjoint receptive fields, as originally introduced in \cite{oymak2021learning}: the first hidden-layer extracts local features at each patch, and subsequent layers capture local information at the next scale, progressively coarse-graining the domain \cite{1467551,bruna2013invariant, mallat2016understanding}.
Each feature learning stage is instantiated using a single-index model. Specifically, we consider a setting with Gaussian inputs $X \in (\mathbb{R}^{d})^{\otimes K}$, capturing a multiscale structure with $K$ scales and `receptive fields' of extensive size $d$, and a target obtained by a $K$-layer network, whereby each layer $k \in [K]$ coarse-grains the feature map $Z^{(k-1)} \in (\R^{d})^{\otimes K-k+1}$ as $Z^{(k)} = \phi( Z^{(k-1)} \theta_{k}) \in (\R^{d})^{\otimes (K-k)}$; see Figure \ref{fig:schema}. The resulting \emph{multiscale single-index} model (MSIM) $X \mapsto f_{\theta}(X):=Z^{(K)} \in \R$ is thus parametrized by hidden directions or features $\theta_1, \ldots, \theta_K \in \sph$, and a non-linear activation function $\phi: \R \to \R$.

By viewing the input as a $K$-th order tensor, we can consider the Wiener chaos expansion of the target $f_\theta$, whose first level is $\G_\theta:=\E[ f_\theta(X) X]$. The MSIM can then be viewed as a non-linear analogue of Tensor PCA \cite{montanari2014statistical} (which becomes a special case when $\phi(t)=t$)\footnote{under a different data acquisition model: in Tensor PCA one directly observes a spiked tensor, whereas in the MSIM model we observe pairs $\{X_i, f(X_i)\}_i$.}, where the planted directions $\theta_1, \ldots, \theta_K$ may be estimated from a spike in the associated chaos decomposition tensor $\G_\theta$ under appropriate conditions.
This naturally leads to the study of two different algorithmic frameworks: (i) a spectral method, motivated by its efficiency at similar problems, eg single- and multi-index models \cite{mondelli2018fundamental, damian2024computational, joshi2024complexity}, and the tensor PCA \cite{richard2014statistical,donoho2023sharp};  and (ii) stochastic gradient-descent (SGD) on a `native' objective function, inspired by the practical connection with deep learning ---and the main focus of this work.

Our analysis makes two key structural assumptions, already present in the original \cite{oymak2021learning}: first, we consider the setting where $\phi$ is \emph{non-degenerate}, in the sense that $\E_{G \sim N(0,1)}[\phi'(G)] :=\kappa \neq 0$ \footnote{This corresponds to information exponent $1$ in the language of \cite{arous2021onlinestochasticgradientdescent}.}. Next, we assume that the planted directions are \emph{incoherent}, ie delocalized in the basis on which the non-linear activations act.
Under these conditions, we easily verify that $\G_\theta = \lambda \theta_1 \otimes \dots \otimes \theta_K + W$, with $\lambda := \kappa^K$ and $\|W \|_F = O(d^{-1/2})$, from which the efficiency of spectral estimators directly follows, as already observed in \cite{oymak2021learning}. Indeed, in this setting the machinery of tensor unfolding developed for Tensor PCA (e.g. \cite{richard2014statistical, donoho2023sharp}) directly applies, leading to strong recovery guarantees of the form $\tilde{\theta}_j \cdot \theta_j = 1 - o_d(1)$ as soon as the number of samples $n \gg O(d^{\lceil K/2 \rceil})$.

While the previous spiked structure at scale $O(d^{-1/2})$ is sufficient for the spectral analysis, it is not sufficiently precise for the SGD analysis. Instead, we consider a high-order Edgeworth expansion that reveals a low rank hierarchy in $W$ at different orders  $O(d^{-q/2})$, $q \leq K$, and which also extends to higher-order chaos.
We then leverage this fine-grained structure to address both the depth necessity and SGD sufficiency.

First, under a natural high-order chaos non-degeneracy and slow Hermite tail condition on the activation function $\phi$, we establish approximation lower bounds of the MSIM model using shallow NNs, which confirms the fact that this hierarchical model truly extends the abilities of shallow learning.
Next, we turn into the analysis of online spherical SGD over the correlation objective. We establish a `propagation-of-incoherence' property, whereby the incoherence of the student network is preserved during the dynamics. Combined with the Edgeworth expansion, this ultimately gives us enough control on the nonlinear SGD dynamics in the early phase of training (\emph{a.k.a.} the \emph{mediocrity} phase). Our main result shows that spherical SGD on the correlation objective $\mathcal{L}(\tilde{\theta}_1, \dots, \tilde{\theta}_K) := \E[f_{\theta}(X) f_{\tilde{\theta}}(X)] $, when initialized in a favorable basin of constant-in-$d$ probability, achieves weak recovery in $\widetilde O(d^{K-1})$ samples, and continues to $1-o_d(1)$ overlap after a lower-order continuation phase. In that respect, SGD  enjoys the same sample complexity as in the Tensor PCA model \cite{10.1214/19-AOP1415}.

\paragraph{Related Works}

Understanding the benefits of depth in Neural Networks is a longstanding question in the field, with initial progress centered on approximation questions \cite{telgarsky2016benefits,eldan2016power}. The efforts then moved towards incorporating
optimization aspects, with early notable works such as \cite{pmlr-v178-safran22a,malach2021connection}, and later developed in \cite{wang2023learninghierarchicalpolynomialsthreelayer, nichani2023provable, allen-zhu2019what}. One aspect shared by all these works is the fact that the input high-dimensional distribution is rotation-invariant, ie with no particular physical structure. Early works that also exploited tensor representations of data to capture scale separation are \cite{cohen2016expressive}, which focused on expressivity.

Besides the aforementioned \cite{oymak2021learning}, the closest works to ours are \cite{dandi2025computationaladvantagedepthlearning} and \cite{ren2026provablelearningrandomhierarchy}, which we discuss briefly.
Dandi et al. \cite{dandi2025computationaladvantagedepthlearning} study Single/Multi-Index Gaussian Hierarchical Targets $f^\star(x) = g^\star(a^{\star\top} P_k(W^\star x)/\sqrt{d^{\epsilon_1}})$ with $W^\star \in \mathbb{R}^{d^{\epsilon_1}\times d}$, where the hierarchy is one of \emph{dimensions} $d \to d^{\epsilon_1} \to 1$. They train a three-layer MLP \emph{layer-wise}
and propagate Hermite decompositions across layers via the asymptotic Gaussianity of the inner statistic $h^\star$, which holds because $\mathrm{IE}(P_k) = 2$. Our setup differs in three essentials: (1) the hierarchy is one of \emph{physical scales} (within- vs.\ across-patch), with both layers acting in the same dimension $d$ on single-index teachers $\beta^\star,\theta^\star \in \sph$; (2) both layers are trained \emph{jointly} by online SGD on the same timescale, so the second-layer student sees a moving, non-Gaussian feature map throughout; and (3) at $\mathrm{IE}(\phi) = 1$ the inner statistic is not asymptotically Gaussian, so we replace the CLT by a quantitative Stein/Lindeberg comparison driven by the delocalization.
Ren et al. \cite{ren2026provablelearningrandomhierarchy} prove that $L$-layer convolutional networks trained layer-wise learn $L$-level Random Hierarchy Models with $O(m^{(1+o(1))L})$ samples, via a problem-independent ``shallow-to-deep chaining'' reduction: layer-wise training succeeds whenever the target correlates with lower-level latents, the signal to lower layers is clean, and the lower-level features are weakly identifiable. Their setting is complementary to ours: The data are discrete tokens drawn from a probabilistic context-free grammar rather than continuous Gaussian patches, and the targets are categorical class labels rather than scalar regressions, so the relevant geometry is a tree of synonym classes rather than the product sphere.

Our use of Gaussian integration by parts is related to the score-function and
Stein-tensor literature for learning neural networks.  Tensor methods for shallow
networks were developed by Janzamin, Sedghi, and Anandkumar~\cite{janzamin2015beating},
while Ge, Lee, and Ma~\cite{ge2017learning} showed that, under Gaussian inputs,
score functions reduce to Hermite polynomials and the population objective
implicitly decomposes low-rank tensors.
These works motivate spectral and tensor-decomposition methods for
learning planted neural directions, but they do not analyze the fine-grained finite-rank perturbation structure of the Wiener chaos considered here.

The model is also connected to the literature on Gaussian single-index and
multi-index learning.  Classical single-index recovery in Gaussian space is
studied, for example, by Dudeja and Hsu~\cite{dudeja2018learning}.  Recent work
on feature learning for Gaussian multi-index targets emphasizes Hermite structure,
staircase or information-exponent phenomena, and the emergence of hierarchical
features under gradient dynamics~\cite{abbe2022merged,abbe2023sgd, dandi2023twolayer, bietti2023learning, damian2025generative}.
Our MSIM architecture differs in that the hierarchical structure is planted directly through repeated tensor contractions and nonlinearities, allowing us to identify the associated Stein tensor explicitly. Our use of incoherence is closely related to recent work on universality for high-dimensional SGD by Gheissari and Jagannath \cite{gheissari2025universality}, who focused on shallow models and arbitrary product data distributions.
Another notable recent related work is \cite{tabanelli2026deep}, where the authors study a latent Hermite-polynomial hierarchy with layer-wise spectral recovery (a la \cite{nichani2023provable}). While the MSIM model has instead  a physical multiscale tensor/patch structure with shared single-index directions, and we consider simultaneous SGD,
both works share several elements: compositional Gaussian targets, staged feature recovery, Hermite/Wiener-chaos structure, Gaussian universality, and spectral methods.

At a technical level, our population expansion is closest in spirit to
Edgeworth and cumulant expansions for neural networks.  Finite-width deviations
from Gaussian-process behavior have been modeled using multivariate Edgeworth
expansions~\cite{lu2023bayesian}, and recent work develops Hermite/Edgeworth
approximations adapted to neural-network preactivations~\cite{nica2024para}.
Here, high-order weighted Edgeworth expansions are used for a different purpose:
to show that the non-Gaussian corrections are not generic
noise, but a deterministic finite-rank hierarchy of tensor spikes.

Finally, the statistical recovery problem is naturally compared with tensor PCA.
The rank-one spiked tensor model was introduced and analyzed by Richard and
Montanari~\cite{richard2014statistical}, and sharp spectral thresholds for
unfolding, partial tracing, and recursive spectral methods were recently obtained
by Donoho and Feldman~\cite{donoho2023sharp}.  In contrast with the standard
single-spike tensor PCA model, our population tensor is itself a structured
multi-spike object.
Thus the empirical Stein tensor combines sampling noise with a deterministic
multiscale low-rank bias.

\paragraph{Statement of Tool Use} We used GPT 5.5 Pro to proofread our results and to carry out several routine calculations (specifically, Bernstein concentration, martingale and retraction control for the SGD results, and high-order derivative expansions). The tool was also used to generate Python scripts for our numerical simulations.
Following the use of this tool, the author independently re-derived, reviewed, and rigorously verified all mathematical assertions, proofs, computations, and code. The author assumes full responsibility and accountability for the accuracy, integrity, and originality of the final manuscript.

\paragraph{Acknowledgments}
The author would like to thank Yunwei Ren, Florent Krzkala, Denny Wu and Gordon Dai for reviewing our draft and providing helpful feedback. This work was completed during the author's sabbatical visit at the Flatiron Institute, which also provided the computing resources used in the project.

\paragraph{Notation}
All expectations \(\E_X\) are with respect to \(X\), with the directions held fixed,
unless explicitly stated otherwise.  For unit vectors \(u,v\), define the
sign-invariant distance $\distpm(u,v):=\min\{\norm{u-v}_2,\norm{u+v}_2\}$.
For a vector \(u\in\R^d\) and an integer \(m\ge1\), write
\(u^{\odot m}:=(u(1)^m,\ldots,u(d)^m).\)
For a tensor \(T\), \(\rank_{\rm CP}(T)\) denotes its CP rank.  We also write
$\mathcal G_\theta:=\E_X[\nabla f_\theta(X)]$ for the population first Stein tensor.

\section{Setup and Main Results}
\label{sec:mainresults}

\subsection{The Multi-scale Single-Index Model}
Let \(K\ge1\) be fixed. We model the input as a \(K\)-th order Gaussian tensor
of dimension \(d\), i.e. \(X\in(\R^d)^{\ot K}\) has independent \(N(0,1)\)
entries. Given unit vectors \(\theta_1,\ldots,\theta_K\in S^{d-1}\), define
\(Z^{(0)}=X\), and recursively
\begin{align}
  Y^{(r)}_{i_{r+1},\ldots,i_K}
  &=\sum_{j=1}^d\theta_r(j)
    Z^{(r-1)}_{j,i_{r+1},\ldots,i_K},
    \qquad 1\le r\le K,\label{eq:model-y}\\
  Z^{(r)}&=\phi(Y^{(r)})\quad\text{entrywise}.\nonumber
\end{align}
At the final layer \(Z^{(K)}\in\R\), and we write
$ f_\theta(X):=Z^{(K)}$.

\begin{figure}
    \centering
    \includegraphics[width=0.75\linewidth]{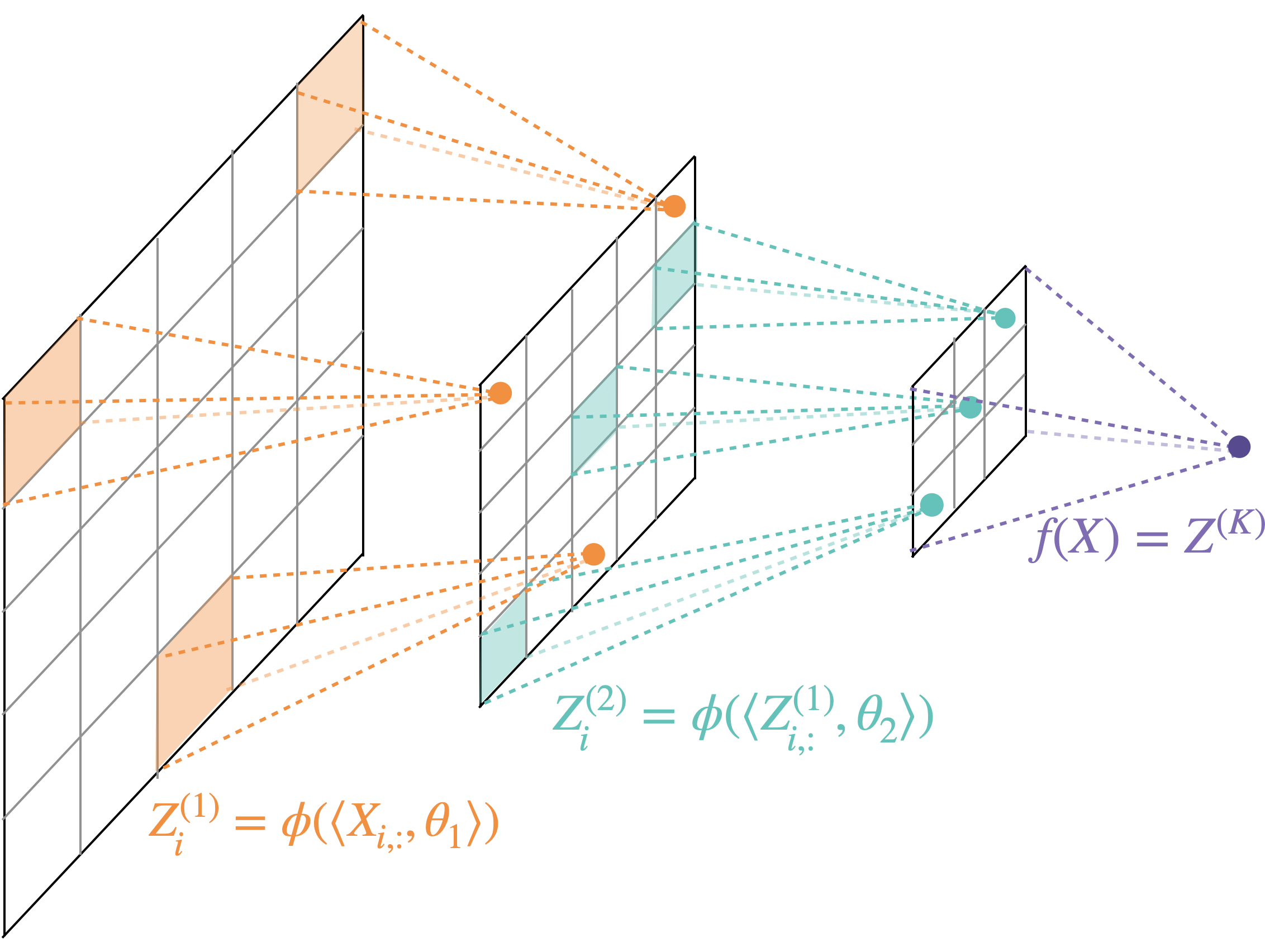}
    \caption{Schema of the Multiscale SIM model. Each patch within the same
    layer \(k\) shares the same local feature, parametrized by a SIM direction
    \(\theta_k\).}
    \label{fig:schema}
\end{figure}

As illustrated in Figure~\ref{fig:schema}, this is a stylized model of a neural
network with physical structure: each layer computes \emph{local} and
\emph{translation-invariant} features at a fixed spatial \emph{scale}.

We now introduce our main assumptions.
\begin{assumption}[Non-degeneracy and normalization]
\label{ass:nondegen_informal}
Let \(G\sim N(0,1)\). We assume
\begin{equation}
  \E\phi(G)=0,
  \qquad
  \E\phi(G)^2=1,
  \qquad
  \kappa:=\E\phi'(G)\ne0.
  \tag{A1}\label{ass:normalization}
\end{equation}
\end{assumption}

\begin{assumption}[Smoothness for the Edgeworth hierarchy]
\label{ass:edgeworth-regularity}
For the finite-rank hierarchy below it is enough to assume that, for a large
constant \(L_K\) depending only on \(K\), \(\phi\in C^{L_K}(\R)\), \(\phi\) is
globally Lipschitz, and all derivatives up to order \(L_K\) have at most
polynomial growth.  A simpler sufficient condition is
\(\phi\in C^{L_K}(\R)\) with
\(\sup_{1\le j\le L_K}\norm{\phi^{(j)}}_\infty<\infty\).
\end{assumption}

In the language of Ben Arous et al.~\cite{arous2021onlinestochasticgradientdescent},
non-degeneracy corresponds to the case where the information exponent of \(\phi\)
is \(1\). In the shallow setting \(K=1\), this assumption greatly simplifies the
geometry of the model, ensuring a descent direction from random initialization.
In our setting \(K>1\), the same first-Hermite non-degeneracy is instrumental for
gradient methods to escape mediocrity and simplifies the spectral analysis.

\begin{assumption}[Incoherence]
\label{ass:incoherence_informal}
The planted directions are \emph{delocalized}, in the sense that
\(\norm{\theta_k}_\infty \leq C_\infty \sqrt{\log(d)/d} \) and $\| \theta_k\|_p \leq C_p d^{1/p - 1/2}$ for each $p>2$ and for all \(k\in[K]\).
\end{assumption}

For \(\theta_k\sim_{\rm iid}{\rm Unif}(S^{d-1})\), these bounds hold with high probability, and the displayed powers are the typical scales.
We will adopt this uniform prior by default in our population analysis. The incoherence assumption formalizes the fact that no individual patch exerts a dominant influence on the output, and it enables a layerwise Gaussian-universality approximation, in the spirit of mean-field and
AMP-type approximations. Finally, we note that both incoherence and non-degeneracy already appear in the original formulation of the MSIM model in \cite{oymak2021learning} (respectively called \emph{kernel difuseness} and \emph{gain} and introduced in definitions 4.6 and 4.3).

\subsection{Main Results}

\paragraph{Chaos Expansion and fine-grained low-rank structure}
Our analysis starts by the harmonic analysis of $f_\theta$, which in our setting is given by the Chaos expansion $f_\theta = \sum_{r \geq 0} \langle \mathcal{G}_{r, \theta} , H_r \rangle$, where $H_r$ is the $r$-th order Hermite tensor of dimension $d^K$, and $\mathcal{G}_{r, \theta} = \E[ f_\theta H_r]$ is the $r$-th Chaos tensor.

Thanks to our structural assumptions on both $\phi$ and $\theta$, the first chaos $\G_{1,\theta} := \G_\theta$ is already informative: a simple first-order expansion along the backpropagation path directly yields $\G_\theta = \lambda \theta_1 \otimes \dots \otimes \theta_K + W$, with $\|W \|_F = O_\delta(d^{-1/2})$ with probability greater than $1-\delta$ under the uniform prior. The first chaos is thus a spiked tensor, whose spike contains the planted parameters of the MSIM model. As already observed in \cite{oymak2021learning}, and in direct analogy with the Tensor PCA model \cite{richard2014statistical, donoho2023sharp}, this property can be immediately leveraged with a suitable spectral method, such as the tensor unfolding (as we confirm in Section \ref{sec:spectral_estimation}, to yield strong recovery at sample complexity $n \gg O(d^{\lceil K/2 \rceil})$).
In this picture, the non-linear nature of the MSIM model is entirely captured in the perturbation $W$.

While spectral methods are indeed robust to any perturbation with small norm, this is not the case for gradient-methods initialised in the so-called \emph{mediocrity zone} \cite{arous2020online}. We thus need a fine-grained description of the specific perturbation $W$. For that purpose, we consider an Edgeworth expansion, which exploits incoherence and smoothness in $\theta$ and $\phi$ respectively to obtain a residual at scale $d^{-K/2}$, instead of the previous $d^{-1/2}$.
This expansion reveals a finite-rank staircase of the first chaos:
\begin{theorem}[Edgeworth finite-rank hierarchy (informal)]
\label{thm:edgeworth-hierarchy-body-informal}
Let \(K\ge2\) be fixed and assume Assumptions~\ref{ass:nondegen_informal} and
\ref{ass:edgeworth-regularity}.  Let
\(\theta_1,\ldots,\theta_K\stackrel{\rm iid}{\sim}\Unif(S^{d-1})\), independently
of \(X\).  For every fixed \(\delta>0\), there exist constants
\(C_{\phi,K,\delta}<\infty\) such that, with probability at
least \(1-\delta\) over the directions,
\begin{equation}
  \mathcal G_\theta
  =
  \sum_{q=0}^{K-1}\mathcal G_\theta^{[q]}
  +\mathcal R_K,
  \qquad
  \norm{\mathcal R_K}_F
  \le
  \,\frac{C_{\phi,K,\delta}}{d^{K/2}}.
  \label{eq:edgeworth-hierarchy-informal}
\end{equation}
Each level \(\mathcal G_\theta^{[q]}\) is a finite linear combination of $O_{K,q}(1)$ rank-one
tensor atoms of total formal order \(q\), and satisfies
\begin{equation}
  \norm{\mathcal G_\theta^{[q]}}_F
  \le
  C_{\phi,K,\delta}\,d^{-q/2},
  \qquad 0\le q\le K-1.
  \label{eq:level-size-informal}
\end{equation}
\end{theorem}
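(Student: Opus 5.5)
We start from Gaussian integration by parts, $\mathcal G_\theta=\E_X[\nabla_X f_\theta(X)]$, and write the gradient by backpropagation along the tree of the MSIM. For an input index $(i_1,\ldots,i_K)$ there is a unique path $Y^{(1)}_{i_2\ldots i_K}\to Y^{(2)}_{i_3\ldots i_K}\to\cdots\to Y^{(K)}$, and
\[
  \partial_{X_{i_1\ldots i_K}} f_\theta
  =\prod_{r=1}^K\theta_r(i_r)\,\prod_{r=1}^K\phi'\bigl(Y^{(r)}_{i_{r+1}\ldots i_K}\bigr).
\]
Consequently $\mathcal G_\theta(i_1,\ldots,i_K)=\bigl(\prod_r\theta_r(i_r)\bigr)\,M(i_2,\ldots,i_K)$, where $M$ is the expectation of a product of $\phi'$ evaluated at the $K$ nested preactivations. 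The task is therefore an expansion of the scalar moments $M(i_2,\ldots,i_K)$. Each $Y^{(r)}_{i_{r+1}\ldots}=\sum_j\theta_r(j)Z^{(r-1)}_{j,i_{r+1}\ldots}$ is a weighted sum of independent, identically distributed (for $r\ge2$, non-Gaussian but normalized) variables. It is coupled to the lower preactivations on the path only through the single summand $j=i_r$, which has weight $\theta_r(i_r)=O(d^{-1/2})$ under incoherence.

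We proceed by induction on the layer. Write $Y^{(r)}=\theta_r(i_r)Z^{(r-1)}_{i_r,\ldots}+\widetilde Y^{(r)}$, where $\widetilde Y^{(r)}$ is independent of the lower part of the path. Taylor-expand $\phi'(Y^{(r)})$ in the small term $\theta_r(i_r)Z^{(r-1)}_{i_r\ldots}$ to order $L_K$. Because $\widetilde Y^{(r)}$ is a sum of independent variables with weights $\theta_r(j)$, a weighted Edgeworth expansion (a Stein/Lindeberg comparison with cumulant corrections) gives, for any smooth $g$ of polynomial growth,
\[
  \E g(\widetilde Y^{(r)})=\sum_{m}\ c_m(g)\ \prod_\ell \|\theta_r\|_{p_\ell}^{p_\ell}
  +O(d^{-K/2}).
\]
Here the corrections involve power sums $\|\theta_r\|_p^p=O(d^{1-p/2})$ multiplied by cumulants of $Z^{(r-1)}$, which are themselves determined inductively by the same expansion. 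The normalization $\E\phi(G)^2=1$ keeps the variance of every layer equal to $1+O(d^{-1/2})$. Assumption~\ref{ass:edgeworth-regularity} supplies the derivatives and the polynomial growth needed to control Taylor remainders, and the uniform prior supplies the incoherence bounds of Assumption~\ref{ass:incoherence_informal} with probability $1-\delta$.

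Substituting these expansions, $M(i_2,\ldots,i_K)$ becomes a polynomial in the quantities $\theta_r(i_r)$ (from the Taylor step), with coefficients that are scalars (from moments and power sums). Each such monomial, multiplied by $\prod_r\theta_r(i_r)$, yields a rank-one atom $\bigotimes_r\theta_r^{\odot m_r}$. We take its formal order $q$ to be the total power of $d^{-1/2}$ it carries: $\sum_r(m_r-1)$ plus the orders of the scalar power sums. The key counting fact is $\|\theta^{\odot m}\|_2=\|\theta\|_{2m}^m=O(d^{-(m-1)/2})$, so an atom of order $q$ has Frobenius norm $O(d^{-q/2})$. We group atoms by $q$ to define $\mathcal G^{[q]}_\theta$, with $\mathcal G^{[0]}_\theta=\kappa^K\theta_1\otimes\cdots\otimes\theta_K$ up to $O(d^{-1})$ scalar corrections, which are placed in higher levels. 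Truncating at total order $K-1$ leaves only finitely many ($O_{K,q}(1)$) atoms at each level, since the expansion orders are bounded in terms of $K$.

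The main obstacle is the remainder $\mathcal R_K$. Its entrywise bound is $O(d^{-K/2})$ times $\prod_r|\theta_r(i_r)|$, but summing naïvely over $d^K$ entries loses everything. The remainder must therefore keep the factor $\prod_r\theta_r(i_r)$ together with additional powers $|\theta_r(i_r)|^{a_r}$: the Taylor remainder of order $a_r$ in layer $r$ contributes $\|\theta_r\|_{2+2a_r}^{2+2a_r}$ to the squared Frobenius norm. The Edgeworth remainders, on the other hand, must be uniform in the path indices, giving a factor $\prod_r\|\theta_r\|_2^2=1$ times $d^{-K}$ in the squared norm. The delicate step is making the Edgeworth remainder for the non-Gaussian, inductively defined $Z^{(r-1)}$ uniform with constants depending only on $(\phi,K,\delta)$. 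We would first attempt this by Lindeberg replacement with moment bounds propagated layer by layer via hypercontractivity-type polynomial moment control.
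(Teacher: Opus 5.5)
Your architecture matches the paper's: the chain-rule factorization $\mathcal G_\theta=\bigl(\prod_r\theta_r(i_r)\bigr)\Lambda$; the split of each path preactivation into a small on-spine term plus an independent off-spine sum; a Taylor expansion in the spine weights whose remainder is tensorized as $\prod_r\norm{\theta_r^{\odot(1+a_r)}}_2$; and a weighted Edgeworth expansion of the off-spine expectations, with cumulants propagated layer by layer. Your observation that $\Lambda$ does not depend on $i_1$ is correct, since $Y^{(1)}$ is exactly $N(0,1)$, and it is slightly sharper than the paper's parity argument.

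The gap is in the inductive Edgeworth step, which is not true as you state it. The first problem is the mean. You control only variances, and you do the bookkeeping with absolute power sums $\norm{\theta_r}_p^p\lesssim d^{1-p/2}$. For $r\ge3$, however, the inputs $Z^{(r-1)}=\phi(Y^{(r-1)})$ are not centered at finite $d$, because their mean picks up Edgeworth corrections. The off-spine sum therefore carries the first-cumulant term $\chi_1(Z^{(r-1)})\sum_{j\ne i_r}\theta_r(j)$, and your bookkeeping only gives $\sum_j|\theta_r(j)|\le\sqrt d$ for it. This is not a technicality. Take all $\theta_r=d^{-1/2}\mathbf 1$, which satisfies Assumption~\ref{ass:incoherence_informal}. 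For generic $\phi$ the mean of $Z^{(2)}$ is of order $d^{-1/2}$, so the mean of $Y^{(3)}$ is of order one. The leading coefficient is then no longer $\kappa^K$, and the expansion around $N(0,1)$ breaks down. The missing input is the \emph{signed} bound $\abs{\sum_j\theta_r(j)}\le C$. It does not follow from the incoherence bounds you invoke, but it holds with probability $1-\delta$ under the uniform prior. The paper builds it into its spherical event and into the hypotheses of its weighted Edgeworth lemma, treating mean shifts as order-zero factors $P_{r,1}$. More generally, the Edgeworth coefficients are the signed sums $\sum_j\theta_r(j)^m$. Bounding them by $\norm{\theta_r}_m^m$ is harmless for $m\ge2$ in this informal statement, at the cost of a coarser hierarchy than the paper's (where $P_{r,3}$ has size $d^{-1}$), but it fails at $m=1$.

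The second problem is deletion. You apply the Edgeworth expansion as if the weights were all of $\theta_r$, with index-free main terms and an index-uniform $O(d^{-K/2})$ remainder. But $\widetilde Y^{(r)}$ sums over $j\ne i_r$, so its power sums are $\sum_j\theta_r(j)^m-\theta_r(i_r)^m$; in particular its variance is $(1-\theta_r(i_r)^2)\,\mathrm{Var}(Z^{(r-1)})$. Hence $\E g(\widetilde Y^{(r)})$ depends on the path index through $\theta_r(i_r)^2,\theta_r(i_r)^3,\ldots$, with entrywise size up to $\log d/d$. After multiplication by the base factor these produce atoms such as $\theta_r^{\odot3}$ of Frobenius norm $\asymp d^{-1}$, which cannot be absorbed into a $d^{-K/2}$ remainder once $K\ge3$. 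They must be kept as additional local coordinate powers. The Edgeworth remainder must also be tensorized like your Taylor remainder, i.e. bounded as $\norm{\theta_r^{\odot b}\odot\rho}_2\le Cd^{-(J+b-1)/2}$ rather than uniformly in the indices; this is what the paper's off-spine coefficient lemma does. Both repairs fit into your framework, but without them neither your expansion of $\E g(\widetilde Y^{(r)})$ nor the remainder bound for $\mathcal R_K$ follows.
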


This theorem thus extends the first-order approximation $\mathcal{G}_\theta = \lambda \theta_1 \otimes \dots \otimes \theta_K + W$, $\|W \|_F = O(d^{-1/2})$
to a fine-grained description. While the first-order approximation is sufficient to establish efficient recovery by spectral tensor unfolding methods (Section \ref{sec:spectral_estimation} and also \cite{oymak2021learning}), the fine-grained Edgeworth expansion is crucial for the analysis of SGD of Section \ref{seq:nonlinear_correl}.

\paragraph{Depth separation against Shallow NNs}
Next, we extend the staircase analysis of the first chaos to high-order chaos terms, revealing an interesting structure: as we show in Theorem \ref{thm:balanced-flattening-staircase},
the \(r\)-th chaos \(\mathcal G_{r,\theta}\), viewed through a balanced or near-balanced flattening, has staircase singular-value plateaus: \(d^\ell\) singular values at scale \(d^{-\ell/2}\), for
$\ell=0,\ldots,(K-1)\lfloor r/2\rfloor ~,$ as soon as a high-order generalization of the non-degeneracy assumption $\kappa \neq 0$ holds.
Combined with an energy decay $\Lambda_r:= \| \G_{r,\theta} \|_F^2$ of the form $\sum_{r \geq J} \Lambda_r \gtrsim J^{-p}$, which is controlled in turn by the hermite expansion of $\phi$, this allows us to show that the MSIM model is not efficiently approximated by shallow NNs:
\begin{theorem}[Shallow depth separation (informal)]
\label{cor:shallow-depth-separation-epsilon-informal}
Let $\epsilon>0$ be sufficiently small.
Assume the slow-tail condition and quantitative top-plateau non-cancellation uniformly over the finite block of chaos orders \(R\in\{J_\epsilon,\ldots,L_\epsilon\}\) used to capture an \(\epsilon\)-tail.
Then there exists a constant \(C_{\rm sep}>0\), independent of \(\epsilon\) and \(d\), such that, with high probability over the spherical planted directions and for all sufficiently
large \(d\),
\begin{equation}
    M\le d^{C_{\rm sep}\epsilon^{-1/p}}
    \quad\Longrightarrow\quad
    \inf_{V\in\mathcal N_M(\varrho)}
    \|f_\theta-V\|_{L^2(\gamma_D)}^2
    \ge \epsilon.
    \label{eq:epsilon-depth-separation-general-p-informal}
\end{equation}
\end{theorem}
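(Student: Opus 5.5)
The plan is to lower bound the approximation error chaos by chaos, using the orthogonality of the Wiener chaos decomposition in \(L^2(\gamma_D)\) with \(D=d^K\). For any shallow network \(V=\sum_{m\le M} a_m\sigma(\langle w_m, X\rangle)\in\mathcal N_M(\varrho)\), each neuron is a ridge function. Its \(R\)-th chaos tensor is therefore a multiple of the rank-one symmetric tensor \(w_m^{\otimes R}\). Hence \(\mathcal G_{R}(V)\) has symmetric CP rank at most \(M\). Every flattening of it, in particular the balanced flattening used in Theorem~\ref{thm:balanced-flattening-staircase}, then has matrix rank at most \(M\). We get
\begin{equation*}
  \|f_\theta-V\|^2_{L^2(\gamma_D)}\;\ge\;\sum_{R=J_\epsilon}^{L_\epsilon}\big\|\mathrm{flat}(\mathcal G_{R,\theta})-\mathrm{flat}(\mathcal G_R(V))\big\|_F^2\;\ge\;\sum_{R=J_\epsilon}^{L_\epsilon}\sum_{i>M}\sigma_i\big(\mathrm{flat}(\mathcal G_{R,\theta})\big)^2 ,
\end{equation*}
where the last step is Eckart--Young. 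The problem thus reduces to showing that, for each \(R\) in the block, the flattened \(R\)-th chaos retains a constant fraction of its energy \(\Lambda_R\) outside its top \(M\) singular values. The budget parameter \(\varrho\) is only needed so that the chaos coefficients of \(V\) are well defined, since Eckart--Young is insensitive to their size.

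Next I would invoke the staircase structure of Theorem~\ref{thm:balanced-flattening-staircase}. Under the quantitative top-plateau non-cancellation, the balanced flattening of \(\mathcal G_{R,\theta}\) has about \(d^\ell\) singular values of size about \(c_R d^{-\ell/2}\) for \(\ell=0,\dots,(K-1)\lfloor R/2\rfloor\). Each plateau therefore carries energy about \(c_R^2\), bounded below uniformly over the finite block \(R\in\{J_\epsilon,\dots,L_\epsilon\}\). If \(M\le d^{L}\), all plateaus with \(\ell> L\) survive the rank-\(M\) truncation, and they contribute a fraction of order \(1-L/((K-1)\lfloor R/2\rfloor)\) of the top-plateau normalized energy. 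Applying this with \(L=C_{\rm sep}\epsilon^{-1/p}\) and \(R\ge J_\epsilon\) gives, for each such \(R\), a surviving energy of at least \(\tfrac12\Lambda_R\), provided \(J_\epsilon\) is chosen of order \(\epsilon^{-1/p}\) with a large enough constant relative to \(C_{\rm sep}\). This is where the exponent \(\epsilon^{-1/p}\) comes from.

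Finally, the slow-tail condition \(\sum_{R\ge J}\Lambda_R\gtrsim J^{-p}\) is used to choose \(J_\epsilon\asymp\epsilon^{-1/p}\) so that \(\sum_{R\ge J_\epsilon}\Lambda_R\ge 4\epsilon\). The block is then truncated at a finite \(L_\epsilon\), so that the sum over \(J_\epsilon\le R\le L_\epsilon\) is still at least \(2\epsilon\); this is possible because \(\sum_R\Lambda_R=\|f_\theta\|^2=1\) converges. Combining with the previous step gives an error of at least \(\epsilon\). The high-probability statement over \(\theta\) follows from a union bound over the finitely many \(R\) in the block, each carrying the \(1-\delta\) guarantee of the staircase theorem, together with \(d\) large enough for the \(o_d(1)\) errors in the plateau sizes.

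The main obstacle is the second step: converting plateau scales into a lower bound on the energy surviving truncation. The proportionality constants \(c_R\) across plateaus must not degenerate relative to \(\Lambda_R\). In particular the lower plateaus (small \(\ell\)) must not carry almost all of \(\Lambda_R\), since those are exactly what \(M\) neurons can capture. I would first try to use the non-cancellation hypothesis in the quantitative form that each plateau has energy comparable to the top one, up to constants depending only on \(R\) and \(K\). Failing that, I would restrict to the largest-\(\ell\) plateau alone, whose multiplicity \(d^{(K-1)\lfloor R/2\rfloor}\) dominates \(M\) once \(R\gtrsim\epsilon^{-1/p}\).
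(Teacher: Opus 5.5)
\textbf{Step 1 is fine; Steps 2 and 3 have gaps.} Your first step is exactly the paper's: Lemma~\ref{lem:one-dimensional-chaos-shallow-neuron} plus the flattening/Eckart--Young bound of Lemma~\ref{lem:flattening-lower-bound-cp}. (Minor point: $\varrho$ is the neuron activation, not a budget. The class is norm-unconstrained apart from $V\in L^2(\gamma_D)$, which is why the scale-insensitivity of Eckart--Young matters.)

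\textbf{Step 2: the multi-plateau count is not supported.} The statement only assumes non-cancellation of the top plateau, $\chi_R=(K-1)\lfloor R/2\rfloor$. The lower plateaus of Theorem~\ref{thm:balanced-flattening-staircase} exist only where Assumption~\ref{ass:higher-chaos-noncancellation} holds, and that is not assumed here. Even where they exist, their constants $c_\rho$ depend on $(R,\rho)$, are not normalized against $\Lambda_R$, and are not uniform as $R$ grows like $\epsilon^{-1/p}$. Your surviving fraction $1-L/\chi_R$ tacitly assumes equal energy per plateau, and nothing provides that.

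Your fallback is the paper's actual argument, and it is what the hypothesis is built for. The bound \eqref{eq:top-plateau-noncancellation} is normalized by $\sqrt{\Lambda_R(\theta)}$. Hence for $M\le (a_R/2)d^{\chi_R}$ one gets $\|G_R(\theta)-A\|_F^2\ge \eta_R\Lambda_R(\theta)$ with $\eta_R=a_Rb_R^2/4$ (Lemma~\ref{lem:one-chaos-obstruction-shallow-rank}). The surviving fraction is therefore $\eta_R$, not $1/2$. ``Quantitatively uniform'' means $\eta_{J:L}\ge\eta_0>0$ on the relevant blocks. You must then choose $J_\epsilon\asymp\epsilon^{-1/p}$ with $\eta_0\bar T_K(J_\epsilon)\ge 4\epsilon$. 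The width threshold becomes $d^{\chi_{J_\epsilon}}$, with $\chi_{J_\epsilon}\asymp_K\epsilon^{-1/p}$.

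\textbf{Step 3: the block must be fixed independently of $d$.} You apply the tail condition and the truncation directly to $\Lambda_R(\theta)$, which depends on $d$ and $\theta$. Your reason for a finite $L_\epsilon$ is convergence of $\sum_R\Lambda_R$. (Incidentally, $\|f_\theta\|^2$ is only $1+O(d^{-1})$ at finite $d$.) For each $d$ this gives some cutoff, but that cutoff may grow with $d$. This breaks the argument: the block $\{J_\epsilon,\ldots,L_\epsilon\}$ must be fixed before taking $d$ large, because the non-cancellation constants and the threshold $d_0$ are only available on fixed finite blocks.

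The missing ingredient is the energy transfer of Theorem~\ref{thm:sharp-hermite-energy-transfer}. The slow-tail condition (Assumption~\ref{ass:slow-limiting-chaos-tail}) is imposed on the $d$-independent Galton--Watson profile $\bar T_K(J)=\sum_{R\ge J}[\rho^R]\mathsf s^{\circ K}(\rho)$, whose total mass is $\mathsf s^{\circ K}(1)=1$. From that profile one picks $J_\epsilon$ and then $L_\epsilon$ with $\bar T_K(J_\epsilon,L_\epsilon)\ge \bar T_K(J_\epsilon)/2$. Next, $\max_{R\le L_\epsilon}|\Lambda_R(\theta)-\bar\Lambda_R^{(K)}|\le Cd^{-1}$ holds with probability $1-\delta$ over the spherical directions. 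This converts the limiting block energy into $\sum_{R=J_\epsilon}^{L_\epsilon}\Lambda_R(\theta)$ up to an $o_d(1)$ error, giving $2\epsilon-o_d(1)\ge\epsilon$. In the paper, the high-probability quantifier over $\theta$ comes from this energy-transfer event; non-cancellation is simply assumed on the directions under consideration.
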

We thus have in the MSIM a \emph{bona-fide} extension of the class of Multi-index models, with approximation lower bounds against shallow neural networks under suitable high-order non-degeneracy assumptions.

\paragraph{SGD recovery}
We finally study the ability of gradient-based methods to efficiently learn the MSIM class, again under incoherence and non-degeneracy assumptions. We focus on the spherical online SGD from \cite{arous2020online}, where all parameters are trained simultaneously at the same timescale, thus avoiding ad-hoc layer-wise training strategies. For reasons that will become clear later, the `correct' objective turns out to be the correlation $\mathcal{L}(\tilde{\theta}) := \E[f_\theta(X) f_{\tilde{\theta}}(X)]$.
The natural order statistics of the underlying population gradient flow dynamics are the overlaps $m_j(t) := \tilde{\theta}_j(t) \cdot \theta_j$. From the first-order approximation $\G_\theta = \lambda \theta_1 \otimes \dots \otimes \theta_K + W$, $\|W \|_F = O(d^{-1/2})$, the overlaps verify $\dot{m}_j \gtrsim \prod_{i \neq j} m_i + W \times_{i \neq j} \tilde{\theta}_i \times \theta_i$, which under the crude estimate $\|W \| = O(d^{-1/2})$, only enable growth in the overlaps for $K \leq 2$.
However, thanks to the fine-grained structure in $\G_\theta$ uncovered by the Edgeworth expansion, we can overcome this limitation, and obtain instead a control of the form
$\dot{m}_j \geq C \prod_{i \neq j} m_i + O(d^{-K/2})$ in the search phase where $m_j \ll 1$ for arbitrary depth $K$.

\begin{theorem}[Online SGD for nonlinear correlation (informal)]
\label{thm:nl-online-sgd-weak-recovery-informal}
  Fix
$\delta_{\rm sgd}\in(0,1)$.  There exist constants
\[
  \eta_0,a,B,\varepsilon,S,d_0>0,
\]
depending only on $\phi,K$ and on the fixed confidence parameters, such that the
following holds for all $d\ge d_0$.  Run the nonlinear online spherical SGD
recursion \eqref{eq:nl-online-sgd-update} with raw step size
$\eta_d=\widetilde{\Theta}(d^{-K/2})$.  Assume the initialization satisfies the favorable basin conditions.
Then, with probability at least \(1-\delta_{\rm sgd}\), the weak phase reaches
$$
    \min_j m_j \ge \varepsilon
$$
within \(\widetilde{O}_{\phi,K}(d^{K-1})\) samples. After an additional
\(O_{\phi,K}(d^{K/2}\log d)\) continuation phase, the iterates satisfy
$$
    \min_j m_j \ge 1-o_d(1).
$$
\end{theorem}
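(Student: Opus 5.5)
The plan is to follow the Ben Arous--Gheissari--Jagannath template for online spherical SGD: reduce the high-dimensional recursion to a $K$-dimensional system for the overlaps $m_j(t)=\tilde\theta_j(t)\cdot\theta_j$, and control the stochastic fluctuations by a martingale argument.

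\textbf{Step 1 (population drift).} For the correlation objective, the population gradient in $\tilde\theta_j$ is $\E[f_\theta(X)\nabla_{\tilde\theta_j} f_{\tilde\theta}(X)]$. Using the chaos expansion, this becomes a contraction of the chaos tensors of $f_\theta$ against those of $f_{\tilde\theta}$, so the first-chaos contribution to $\dot m_j$ is $\langle \G_\theta, \theta_j\text{-slot}\times_{i\neq j}\tilde\theta_i\rangle$ times the student's first-chaos coefficient $\approx\kappa^K$. I will apply Theorem~\ref{thm:edgeworth-hierarchy-body-informal} to both teacher and student. The student side requires a ``propagation of incoherence'' lemma: along the dynamics, $\norm{\tilde\theta_k}_\infty\lesssim\sqrt{\log d/d}$ and $\norm{\tilde\theta_k}_p\lesssim d^{1/p-1/2}$. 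I will prove it by showing that each coordinate of the gradient is a sum of $d^{K-1}$ weakly dependent terms, and then applying a union bound over coordinates and times.

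\textbf{Step 2 (expanding the hierarchy levels).} The level $q=0$ gives the signal $\lambda^2\prod_{i\ne j}m_i$. Each level $q\ge1$ is a sum of $O(1)$ rank-one atoms built from Hadamard powers $\theta_k^{\odot a}$ and the $\theta_k$. Contracting these atoms with student vectors produces either overlaps $m_i$ or quantities like $\langle\theta_k^{\odot a},\tilde\theta_k^{\odot b}\rangle$. By incoherence, the latter are either $O(d^{-q/2})$ times the signal monomial or $O(d^{-K/2})$ uniformly in the basin. The point is that every atom in level $q$ either carries at least one overlap factor per missing index or pays $d^{-1/2}$ per missing index. I will therefore aim for the bound
\[
\dot m_j\ \ge\ C\prod_{i\neq j}m_i - C'd^{-K/2}
\]
throughout the search phase $\max_i m_i\le\varepsilon$, within the favorable basin where $m_i\ge c\,d^{-1/2}$ with consistent signs. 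Higher chaoses contribute terms of order $\prod_i m_i^{r}$, which are dominated in this regime. This bookkeeping is the step I expect to be the main obstacle. Atoms mixing Hadamard powers of teacher and student directions must be shown not to produce an $O(d^{-q/2})$ term that is independent of the overlaps, and the uniform-in-time incoherence of the student is exactly what makes this work.

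\textbf{Step 3 (stochastic control and time scales).} I will write the normalized update as drift plus a martingale increment, with retraction error $O(\eta^2\norm{\nabla}^2)$. The per-sample gradient has squared norm of order $d^{K}$ times the local $d$ factor. With $\eta=\widetilde\Theta(d^{-K/2})$, the accumulated martingale noise over $T$ steps is $\eta\sqrt{T}\,d^{(K-1)/2}$ by Doob's inequality. The retraction bias is about $\eta^2 d^{K}$ per step. Comparing against the initial scale $m\sim d^{-1/2}$ and the drift $\eta\prod m_i\sim\eta d^{-(K-1)/2}$, the ODE $\dot m\approx m^{K-1}$ escapes from $d^{-1/2}$ in about $d^{(K-2)/2}/\eta\sim d^{K-1}$ steps, up to logs. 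This gives the $\widetilde O(d^{K-1})$ count to reach $\min_j m_j\ge\varepsilon$. A bootstrap stopping-time argument will keep the trajectory inside the basin and the incoherent set.

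\textbf{Step 4 (continuation).} Once $\min_j m_j\ge\varepsilon$, the signal drift is order one, and the perturbation levels are $O(d^{-1/2})$ relative to it. The population dynamics then contract $1-m_j$ at a linear rate. With the same step size, running $O(d^{K/2}\log d)$ further steps drives $1-m_j$ to $o_d(1)$, while the martingale and retraction errors stay $o(1)$.
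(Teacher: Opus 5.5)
\textbf{The gap is in Step 2, exactly where you expect the obstacle, and the incoherence notion you propose is too weak to close it.}

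\emph{Why unsigned incoherence fails.} Your student-side bounds $\norm{\tilde\theta_k}_\infty\lesssim\sqrt{\log d/d}$ and $\norm{\tilde\theta_k}_p\lesssim d^{1/p-1/2}$ are unsigned. The quantities that compete with the signal are signed contractions $S_k^{p,q}=\sum_a\theta_k(a)^p\tilde\theta_k(a)^q$ with $p+q\ge3$, and these must be small \emph{relative to} $m_k$.

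For example, a level-one atom $\theta_1\otimes\cdots\otimes\theta_k^{\odot2}\otimes\cdots\otimes\theta_K$ contributes $\ip{\theta_k^{\odot2}}{\tilde\theta_k}$ in the slot where the signal has $m_k$. Norm bounds give only $|\ip{\theta_k^{\odot2}}{\tilde\theta_k}|\le\norm{\theta_k}_4^2\asymp d^{-1/2}$, the same order as $m_k$ at initialization. Nothing in your class excludes a delocalized student with a component along (a truncation of) $\theta_k^{\odot2}$. So the claimed $O(d^{-q/2})$-relative bound does not follow.

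For the same reason you cannot apply Theorem~\ref{thm:edgeworth-hierarchy-body} to the student. Its level estimates use signed power sums such as $|\sum_a\tilde\theta_k(a)^3|\lesssim d^{-1}$, whereas $\ell_p$ bounds only give $d^{-1/2}$. A union bound over gradient coordinates cannot supply such bounds either: they express cancellations between $\tilde\theta_k$ and $\theta_k$ that the drift itself could destroy.

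\emph{What the paper uses instead.} The missing ingredient is the class $\mathcal J_{P,B}$, which requires $|S_j^{p,q}|\le B\,\tau_{p+q}(d)\,m_j$ for all $3\le p+q\le P$, pure student and teacher moments included.
\begin{itemize}
\item It holds at the natural scale for an independent initialization.
\item It is propagated by the triangular transport identity $\frac{d}{dt}S_j^{p,q}=qA_j(m)\bigl(S_j^{p+1,q-1}-m_jS_j^{p,q}\bigr)+O\bigl(A_j(m)\tau_{p+q}(d)m_j\bigr)$.
\item The propagation closes by induction on the number of student powers, down to the fixed teacher sums $S_j^{p,0}$, using $\int_0^{\tau_\varepsilon}A_j\,dt\lesssim\int_0^{\tau_\varepsilon}\dot m_j\,dt\le1$.
\item Online, each $S_j^{p,q}$ also needs a martingale bound of order $\sqrt{\eta_0}\,\tau_{p+q}(d)\,d^{-1/2}$, which is $\ll\tau_{p+q}(d)\,m_j$ because $m_j\ge a d^{-1/2}$.
\end{itemize}
Only with this does the comparison $\norm{\nabla^S(\lambda\widetilde{\mathcal L}-\mathcal L)}\lesssim(\norm{m}_\infty+d^{-1/2})\prod_{\ell\ne i}m_\ell$ with the linear proxy hold.

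\emph{Chaos-by-chaos is not how the paper proceeds.} Your claim that higher chaoses contribute only $\prod_i m_i^r$ is true only at the Gaussian level. Each chaos also carries mixed-moment cross terms, and there are infinitely many chaoses. The paper instead expands the coupled teacher--student tree directly, which resums all chaoses into the Gaussian proxy $q_K(m)$. Its gradient $A_j(m)\theta_j$, with $A_j=M_{-j}B_j$, is the drift.

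\textbf{Step 3 scalings are off.} The one-sample mode-$j$ gradient is $f_\theta(X)$ times an essentially Gaussian vector in $\R^d$, so its squared norm is $\asymp d$, not of order $d^K$. With your numbers, the retraction bias would be $\eta^2d^K\asymp1$ per step and the noise $\eta\sqrt T\,d^{(K-1)/2}\asymp d^{(K-2)/2}$; either would swamp $m_j\asymp d^{-1/2}$.

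The correct bookkeeping is as follows. The noise projected on a fixed unit vector is $O(\eta_d)$-subexponential, so over $T\asymp\eta_0^{-1}d^{K-1}$ steps the fluctuation of $m_j$ is $O(\sqrt{\eta_0}\,d^{-1/2})$. The retraction acts as a damping of size $O(\eta_d^2 d\,m_j)$ per step. Both are dominated by the drift once $\eta_0$ is small depending on $a$.

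\textbf{Step 4 needs two more ingredients.} At macroscopic overlaps the drift coefficient is $A_j(m)=M_{-j}(m)B_j(m)$, not $\lambda^2M_{-j}$. You need $B_j\ge b_{\phi,K}>0$ on $[0,1]^K$, which the paper obtains from the nonnegativity of the Hermite coefficients of $\Psi$. You must also keep propagating $\mathcal J_{P,B}$ through the continuation phase, now using the damping term $-qA_jm_jS_j^{p,q}$. The resulting floor is $d^{-1/2}$.
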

Since \(d^{K/2}\log d\le \widetilde O(d^{K-1})\) for fixed \(K\ge2\), the total sample complexity is \(\widetilde O_{\phi,K}(d^{K-1})\). In other words, conditioned on the favorable initialization and correct sign
pattern, online SGD for the nonlinear correlation objective achieves recovery with sample complexity
$$
  N_{\rm rec}=\widetilde{O}_{\phi,K}(d^{K-1}).
$$
The formal recovery theorem (stated in Theorems \ref{thm:nl-online-sgd-weak-recovery} and \ref{thm:nl-online-sgd-strong-recovery}) combines the weak-recovery result with a continuation theorem under the macroscopic mixed-incoherence condition. The weak phase establishes both the overlap lower bound and the mixed-incoherence bounds needed to initialize the continuation theorem.
The proof
uses two key ideas: first, we introduce the linear proxy $\widetilde{\mathcal{L}}(\tilde{\theta}) = \langle f_\theta, g_{\tilde{\theta}} \rangle$, where $g_\theta(X) = \langle X, \tilde{\theta}_1 \otimes \dots \otimes \tilde{\theta}_K \rangle$, and leverage the Edgeworth hierarchy of Theorem \ref{thm:edgeworth-hierarchy-body} to obtain the desired overlap growth estimate $\dot{m}_j \gtrsim \prod_{i \neq j} m_i$. Then we study the gradient linearization errors $\nabla_{\tilde{\theta}}( \lambda \widetilde{\mathcal{L}} - \mathcal{L} )$ and show that they can also be dominated by $\prod_{j \neq i} m_i$ as long as the current directions $\tilde{\theta}$ remain incoherent. The proof is completed by then establishing a \emph{propagation-of-incoherence} under the SGD dynamics, which ultimately enables the escape from mediocrity and strong recovery at sample complexity $n = O(d^{K-1})$. Here, and throughout the nonlinear section, by ``strong recovery" we mean $\min_j m_j=1-o_d(1)$; the quantitative continuation Theorem \ref{thm:nl-online-sgd-strong-recovery} reaches a $d^{-1/2}$-scale floor.

Interestingly, the efficiency of SGD on the non-linear correlation objective is \emph{not} preserved in the Mean-Squared error objective. Indeed, the difference between these two dynamics is given by the energy fluctuations $\nabla_{\tilde{\theta}}\| f_{\tilde{\theta}}\|^2$. In the search phase, where the correlation gradient is of order $d^{-(K-1)/2}$, the uninformative energy fluctuations of order $d^{-1/2}$ dominate as soon as $K>2$.

\section{First Wiener Chaos and Spectral Recovery}
\label{sec:spectral}
In this section we consider the Chaos expansion of $f_\theta$, given by the $L^2_\gamma$ representation $f_\theta = \sum_{r \geq 0} \langle \mathcal{G}_{r,\theta}, H_r \rangle$, where $H_r$ is the $r$-th Hermite tensor and
$\mathcal{G}_{r,\theta}:= \E[f_\theta(X) H_r(X)] \in ((\R^{d})^{\otimes K})^{\otimes r}$ is the $r$-chaos Tensor of $f_\theta$.

Thanks to our structural assumptions, the first chaos $\G_{1,\theta}:= \G_\theta$ satisfies $\| \G_{\theta} \| > 0$ almost surely under the uniform prior. As in the single-scale SIM/MIM counterparts, this term provides the most precious source of information that drives the sample complexity of both spectral and gradient-based methods. In Theorem \ref{thm:edgeworth-hierarchy-body} we establish a key structural result needed for the rest of the paper, namely a hierarchical low-rank representation of $\Gc$ into a `staircase' of decreasing order in $d^{-1/2}$ and increasing dimensions. As an immediate consequence, we show in Section \ref{sec:spectral_estimation} that a balanced tensor unfolding strategy succeeds at the threshold $n = \widetilde{O}( d^{\lceil K/2 \rceil})$.

\subsection{A Fine-grained Structure Theorem in the First Chaos}
 It is instructive to consider first the linear setting
\(\phi(t)=t\).  In that case
$$
  f_\theta(X)=\langle X,\theta_1\otimes\cdots\otimes\theta_K\rangle,
$$
and therefore the first Wiener chaos directly gives
$$
  \mathcal G_\theta:= \E_X[f_\theta(X)X]
  =\E_X[\nabla f_\theta(X)]
  =\theta_1\otimes\cdots\otimes\theta_K.
$$
The goal of this section is to compute this chaos expansion
for nonlinear \(\phi\). As we shall now see, under the non-degeneracy and incoherence assumptions, the leading rank-one spike survives, but it is accompanied
by a finite-rank hierarchy of smaller spikes at different orders in $d^{-1/2}$, controlled by the harmonic expansion of $\phi$ and incoherence estimates of $\theta$.

Fix planted directions \(\theta_1,\ldots,\theta_K\).
For a
multi-index \(\mathbf i=(i_1,\ldots,i_K)\), define the derivative factors along
its path by
$$
  D_r(\mathbf i)
  :=
  \phi'\bigl(Y^{(r)}_{i_{r+1},\ldots,i_K}\bigr),
  \qquad 1\le r\le K.
$$
The chain rule gives
\begin{equation}
  \frac{\partial f_\theta(X)}{\partial X_{i_1,\ldots,i_K}}
  =
  \prod_{r=1}^K\theta_r(i_r)D_r(\mathbf i).
  \label{eq:chain-rule}
\end{equation}
Consequently,
\begin{equation}
  \E_X\left[\frac{\partial f_\theta(X)}{\partial X_{i_1,\ldots,i_K}}\right]
  =
  \left(\prod_{r=1}^K\theta_r(i_r)\right)\Lambda_{\mathbf i},
  \qquad
  \Lambda_{\mathbf i}:=\E_X\prod_{r=1}^KD_r(\mathbf i).
  \label{eq:lambda-definition}
\end{equation}
Equivalently,
\begin{equation}
  \mathcal G_\theta
  =
  (\theta_1\otimes\cdots\otimes\theta_K)\odot\Lambda.
  \label{eq:exact-factorization}
\end{equation}
The tensor \(\Lambda\) thus captures the nonlinear correction to the first-chaos
signal.

Let us first informally argue how the non-degeneracy and incoherence assumptions lead to a crude control of the form $\Lambda_{\mathbf{i}}=\lambda + O(K d^{-1/2})$.
By writing
$$Y^{(r)}_{i_{r+1}, \ldots, i_K} = \theta_r(i_r) Z^{(r-1)} + R^{(r)}_{\mathbf{i}}~,$$
observe that the residuals $(R^{(r)}_{\mathbf{i}})_r$ are independent thanks to the tree structure induced by the tensor. Moreover, since $\| \theta_r\|_3^3 \sim d^{-1/2}$, a Berry-Esseen CLT argument directly yields
$\phi'(Y^{(r)}_{i_{r+1}, \ldots, i_K}) = \phi'( G^{(r)}) + O(d^{-1/2})$, where $G^{(r)}$ is a standard Gaussian independent innovation, leading to $\Lambda_{\mathbf{i}} = \E[\phi'(G)]^K + O(K d^{-1/2}) = \lambda + O(d^{-1/2}) $.

Under the incoherence event, the first chaos of the MSIM model is thus of the form $\G_\theta = \lambda~ \theta_1 \otimes \dots \otimes \theta_K + W$, with $\|W \|_F = O(d^{-1/2})$. The parameters of interest can therefore be identified from the leading spike in this tensor. As we will confirm in Section \ref{sec:spectral_estimation}, this leading-order control is sufficient to guarantee spectral recovery using tensor unfolding. However, for our purposes, we need a finer estimate, that we now present. The main technical tool is to consider a high-order analogue of the CLT, given by Edgeworth expansions.

For \(m\ge3\), define the signed power sums
\begin{equation}
  P_{r,m}:=\sum_{j=1}^d\theta_r(j)^m,
  \qquad 1\le r\le K,
  \label{eq:power-sums}
\end{equation}
and their formal orders
\begin{equation}
  \omega_m:=2\left\lfloor\frac{m-1}{2}\right\rfloor,
  \qquad m\ge3.
  \label{eq:power-sum-order}
\end{equation}
Thus \(P_{r,3}\) and \(P_{r,4}\) have order \(2\), while \(P_{r,5}\) and
\(P_{r,6}\) have order \(4\), matching their typical scales under the spherical
prior, and so on. For a multi-index \(\mathbf{i}=(i_1,\ldots,i_K)\in\mathbb N_0^K\), write
\(|\mathbf{i}|=\sum_r i_r\) and
$$
  \Theta^{(\mathbf i)}
  :=
  \bigotimes_{r=1}^K\theta_r^{\odot(i_r+1)}.
$$

The following theorem refines the crude estimate
\(\Lambda = \lambda \one + O(d^{-1/2})\) by revealing the full finite-rank Edgeworth
hierarchy of \(\mathcal G_\theta\).
\begin{theorem}[Edgeworth finite-rank hierarchy]
\label{thm:edgeworth-hierarchy-body}
Let \(K\ge2\) be fixed and assume Assumptions~\ref{ass:nondegen_informal} and
\ref{ass:edgeworth-regularity}.  Let
\(\theta_1,\ldots,\theta_K\stackrel{\rm iid}{\sim}\Unif(S^{d-1})\), independently
of \(X\).  For every fixed \(\delta>0\), there exist constants
\(C_{\phi,K,\delta}<\infty\) such that, with probability at
least \(1-\delta\) over the directions,
\begin{equation}
  \mathcal G_\theta
  =
  \sum_{q=0}^{K-1}\mathcal G_\theta^{[q]}
  +\mathcal R_K,
  \qquad
  \norm{\mathcal R_K}_F
  \le
  \,\frac{C_{\phi,K,\delta}}{d^{K/2}}.
  \label{eq:edgeworth-hierarchy-gradient-proof}
\end{equation}
Each level \(\mathcal G_\theta^{[q]}\) is a finite linear combination of $O_{K,q}(1)$ rank-one
tensor atoms of total formal order \(q\).  More precisely, it can be written as
\begin{equation}
  \mathcal G_\theta^{[q]}
  =
  \sum_{n\in\mathcal N_{K,q}}
  a_{q,n}(\theta)
  \Theta^{(n)},
  \label{eq:collected-hierarchy}
\end{equation}
where \(a_{q,n}(\theta)\) is a scalar polynomial in the signed power sums
\(P_{r,m}\) of formal order \(q-|n|\), and
\begin{equation}
  \mathcal N_{K,q}
  :=
  \left\{
  n\in\mathbb N_0^K:
  n_1\ \text{is even},\ |n|\le q,\ q-|n|\ \text{is even}
  \right\}.
  \label{eq:sharp-index-set}
\end{equation}
Consequently,
\begin{equation}
  \rank_{\rm CP}(\mathcal G_\theta^{[q]})
  \le
  R_{K,q}^{\sharp}
  :=
  \sum_{j=0}^{\lfloor q/2\rfloor}
  (j+1)\binom{K+q-2j-2}{q-2j}.
  \label{eq:sharp-rank-bound}
\end{equation}
Finally, we have the estimate
\begin{equation}
  \norm{\mathcal G_\theta^{[q]}}_F
  \le
  C_{\phi,K,\delta}\,d^{-q/2},
  \qquad 0\le q\le K-1.
  \label{eq:level-size}
\end{equation}
\end{theorem}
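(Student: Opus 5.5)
Starting from the exact factorization \eqref{eq:exact-factorization}, everything comes down to a joint expansion of $\Lambda_{\mathbf i}=\E_X\prod_{r=1}^K D_r(\mathbf i)$ as a polynomial in the entries $\theta_r(i_r)$ and the power sums $P_{r,m}$, with a remainder of order $d^{-K/2}$. Multiplying back by $\theta_1(i_1)\cdots\theta_K(i_K)$, each monomial $\prod_r\theta_r(i_r)^{n_r}$ turns into the atom $\Theta^{(n)}$, and the coefficients $a_{q,n}$ are the $P$-polynomials. The structure of the proof follows the tree. We condition on everything except the branch through $\mathbf i$, and write, as in the heuristic above,
\[
Y^{(r)}_{i_{r+1},\dots,i_K}=\theta_r(i_r)\,Z^{(r-1)}_{i_r,\dots,i_K}+R^{(r)}_{\mathbf i},\qquad R^{(r)}_{\mathbf i}=\sum_{j\neq i_r}\theta_r(j)\,Z^{(r-1)}_{j,i_{r+1},\dots,i_K}.
\]
Here $R^{(r)}_{\mathbf i}$ is a weighted sum of $d-1$ i.i.d.\ copies of the layer-$(r-1)$ output. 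These copies are independent across $r$ and independent of the distinguished branch. The only coupling between the layers is the term $\theta_r(i_r)Z^{(r-1)}$.

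\textbf{Step 1: moment recursion and Edgeworth expansion for one layer.} By induction on $r$, the scalar output $Z^{(r-1)}$ at a single site has a law whose cumulants are explicit functions of $P_{s,m}$ for $s<r$, up to errors $O(d^{-K/2})$. Its mean is zero and its variance is $1+{}$(lower order) by \eqref{ass:normalization}. For the sum $R^{(r)}$, the $m$-th cumulant is $\kappa_m(Z^{(r-1)})\,(P_{r,m}-\theta_r(i_r)^m)$. Under the incoherence event, which has probability $\ge1-\delta$ under the spherical prior, $|P_{r,m}|\lesssim d^{-\omega_m/2}$ for $m\ge 3$. We therefore apply a weighted Edgeworth expansion of order $L=L_K$ to $\E[g(R^{(r)})]$ with $g$ smooth, for example $g=\phi'(\theta_r(i_r)z+\cdot)$ times downstream factors. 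This gives $\E g(G)$ plus Hermite corrections $\E[g^{(k)}(G)]$ weighted by products of cumulants, with remainder $O(d^{-K/2})$. The smoothness in Assumption~\ref{ass:edgeworth-regularity} (finitely many derivatives of polynomial growth) is exactly what a smooth-function Edgeworth expansion needs. Rather than relying on a Cramér condition, we control the remainder through Lindeberg/Stein replacement, using the $\ell_p$ bounds of Assumption~\ref{ass:incoherence_informal}. We then Taylor-expand in the small quantities $\theta_r(i_r)Z^{(r-1)}$, which are of size $O(\sqrt{\log d/d})$, up to order $K$. Each power $\theta_r(i_r)^{n_r}$ is matched with a moment of the upstream branch variable; the upstream branch is itself expanded recursively.

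\textbf{Step 2: bookkeeping and parity.} Next we collect the terms by formal order, counting each $\theta_r(i_r)$ as order $1$ and each $P_{r,m}$ as order $\omega_m$. Two parity constraints remain. First, $n_1$ must be even, because at the first layer $Z^{(0)}=X_{\mathbf i}$ enters only through $\phi'(\cdot)$ with a Gaussian expectation. Odd powers of $\theta_1(i_1)$ come with odd Gaussian moments multiplied by even-symmetric weights, so they cancel; I expect to have to check this carefully using Gaussian integration by parts. Second, the total order $q-|n|$ carried by the $P$'s is even by the definition of $\omega_m$. This yields \eqref{eq:collected-hierarchy} with index set $\mathcal N_{K,q}$. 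Counting the pairs ($n$, and the number $j$ of $P$-factors absorbing $2j$ of the order) then gives the bound $R^\sharp_{K,q}$ in \eqref{eq:sharp-rank-bound}.

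\textbf{Step 3: norms.} We have $\|\Theta^{(n)}\|_F=\prod_r\|\theta_r\|_{2(n_r+1)}^{n_r+1}\lesssim d^{-|n|/2}$ by incoherence, and $|a_{q,n}|\lesssim d^{-(q-|n|)/2}$ by the bounds on $P_{r,m}$. Together these give \eqref{eq:level-size}. For the remainder $\mathcal R_K$, the pointwise error in $\Lambda_{\mathbf i}$ is $O(d^{-K/2}\,\mathrm{polylog})$. Multiplying by $\prod_r\theta_r(i_r)$ and summing squares uses $\prod_r\|\theta_r\|_2=1$, which gives a Frobenius norm of $O(d^{-K/2})$ up to logarithms. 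Removing the logarithms will require the $\ell_p$ versions of the bounds rather than the $\ell_\infty$ one.

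\textbf{Main obstacle.} The hardest step is making the nested Edgeworth expansion uniform across layers. The integrand for layer $r$ depends on downstream layers, whose arguments contain $Z^{(r)}$, and these are non-Gaussian with cumulants that themselves need expanding. The remainder must be shown to be $O(d^{-K/2})$ uniformly in $\mathbf i$, with only polynomial growth. My first attempt would be to handle the layers one at a time, from the top down. At each stage I would replace $R^{(r)}$ by a Gaussian plus an Edgeworth correction via a Lindeberg swap carried to order $L_K$. The Gaussian pieces would then be absorbed into one-dimensional Hermite integrals of $\phi$. The factor $\kappa\ne0$ only sets $a_{0,0}=\lambda=\kappa^K$.
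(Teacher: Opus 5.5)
Your outline follows the paper's proof essentially step for step. It uses the independent spine and off-spine innovations $\xi,R_1,\dots,R_K$, weighted Edgeworth expansions in signed power sums with deleted weights $P_{r,m}-\theta_r(i_r)^m$, and a Taylor expansion in the path weights $\beta_r=\theta_r(i_r)$. Parity of $n_1$ comes from the symmetry $\xi\mapsto-\xi$; no integration by parts is needed, since $\beta_1$ enters only through $\beta_1\xi$ and $\operatorname{Var}R_1=1-\beta_1^2$. The norms are handled by the bookkeeping $\|\theta_r^{\odot b}\|_2\lesssim d^{-(b-1)/2}$.

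The nesting obstacle you flag disappears with the paper's ordering. It Taylor-expands $\prod_r D_r$ in $\beta$ about $0$ \emph{first}, so each coefficient is a product of functions of single innovations. Its expectation then factors into one-dimensional terms $\E h(R_{r,i_r})$, each expanded recursively only to order $K-|\alpha|$.

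One slip needs fixing. $Z^{(r-1)}$ is not centered once $r-1\ge2$: its mean is a formal-order-two Edgeworth correction, involving e.g.\ $P_{r-1,3}$ and $P_{r-1,4}$. You must therefore keep the first cumulant $\E Z^{(r-1)}\,(P_{r,1}-\theta_r(i_r))$ of $R^{(r)}$ and add $|P_{r,1}|\le C$ to your spherical event. Otherwise you drop an $O(d^{-1})$ term, which exceeds $d^{-K/2}$ for $K\ge3$. The paper handles this through the $\Delta_1$ term of Lemma~\ref{lem:appendix-weighted-edgeworth} and \eqref{eq:app-P1}.
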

In the following, we will denote by $\mathcal{E}_\theta(\delta)$ the \emph{Edgeworth event} described in Theorem \ref{thm:edgeworth-hierarchy-body}, occurring with probability at least $1-\delta$ under the uniform prior. The first two levels have a particularly simple form.  Let
\begin{equation}
  \nu:=\E\phi''(G),
  \qquad
  \gamma:=\E[\phi(G)\phi'(G)] \qquad \lambda := \kappa^K~.
  \label{eq:nugamma}
\end{equation}
Then
\begin{equation}
  \mathcal G_\theta^{[0]}
  =
  \lambda \,\theta_1\otimes\cdots\otimes\theta_K,
  \label{eq:first-level-leading}
\end{equation}
and
\begin{equation}
  \mathcal G_\theta^{[1]}
  =
  \kappa^{K-2}\nu\gamma
  \sum_{r=2}^K
  \theta_1\otimes\cdots\otimes\theta_{r-1}
  \otimes\theta_r^{\odot2}
  \otimes\theta_{r+1}\otimes\cdots\otimes\theta_K.
  \label{eq:first-order-secondary}
\end{equation}
In particular, this formalizes the first-order expansion
\begin{equation}
  \mathcal G_\theta
  =
  {\lambda}\theta_1\otimes\cdots\otimes\theta_K
  +W_\theta,
  \qquad
  \norm{W_\theta}_F
  \le
  C_{\phi,K,\delta}\,d^{-1/2},
  \label{eq:frobenius-sphere-updated}
\end{equation}
with probability at least \(1-\delta\).

\subsection{Proof of Theorem \ref{thm:edgeworth-hierarchy-body}}

We prove the hierarchy in three steps.  First, we describe a high-probability
spherical event on which all coordinate powers and signed power sums have their
expected sizes.  Second, we prove a recursive Edgeworth expansion for the
``off-spine'' expectations that appear in the derivative path.  Third, we combine
this expansion with a Taylor expansion of $\phi$ along the selected path and collect the
resulting rank-one tensor atoms.

Fix a confidence level \(\delta\in(0,1)\), as in the theorem statement.
Throughout this section, constants denoted by \(C\) may change from line to
line, but depend only on \(\phi,K\), the fixed confidence parameter \(\delta\),
and the finite smoothness order used in
Assumption~\ref{ass:edgeworth-regularity}.  They never depend on \(d\).

\paragraph{Spherical power-sum event}

For \(m\ge1\), recall that
$$
  P_{r,m}=\sum_{j=1}^d \theta_r(j)^m,
$$
and that the formal order of \(P_{r,m}\) is
$$
  \omega_1=\omega_2=0,
  \qquad
  \omega_m=2\left\lfloor \frac{m-1}{2}\right\rfloor,\quad m\ge3.
$$
Thus \(\omega_3=\omega_4=2\), \(\omega_5=\omega_6=4\), and so on.

\begin{lemma}[Spherical power sums and coordinate powers]
\label{lem:appendix-spherical-event}
Fix integers \(K\ge2\) and \(M\ge1\).  Let
\(\theta_1,\ldots,\theta_K\stackrel{\rm iid}{\sim}\Unif(S^{d-1})\).  For every
\(\delta\in(0,1)\), with probability at least \(1-\delta\), the following bounds hold
simultaneously for all \(1\le r\le K\), all \(1\le m\le M\), and all
\(1\le b\le M\):
\begin{align}
  \norm{\theta_r}_\infty
  &\le C\sqrt{\frac{\log d}{d}}, \label{eq:app-infty}\\
  \norm{\theta_r^{\odot b}}_2
  &\le C d^{-(b-1)/2}, \label{eq:app-power-vector}\\
  |P_{r,1}|
  &\le C, \label{eq:app-P1}\\
  |P_{r,m}|
  &\le C d^{-\omega_m/2},\qquad m\ge3. \label{eq:app-Pm}
\end{align}
Consequently, every scalar-tensor atom of formal order \(q\), namely
$$
  \left(\prod_{r=1}^K\prod_{m=1}^M P_{r,m}^{\ell_{r,m}}\right)
  \bigotimes_{r=1}^K \theta_r^{\odot b_r},
  \qquad
  q=\sum_{r=1}^K(b_r-1)+\sum_{r=1}^K\sum_{m=1}^M\ell_{r,m}\omega_m,
$$
satisfies
\begin{equation}
  \left\|
  \left(\prod_{r,m}P_{r,m}^{\ell_{r,m}}\right)
  \bigotimes_{r=1}^K\theta_r^{\odot b_r}
  \right\|_F
  \le C d^{-q/2}.
  \label{eq:app-atom-size}
\end{equation}
\end{lemma}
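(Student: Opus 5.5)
We represent each direction as $\theta_r = g_r/\norm{g_r}_2$ with $g_r\sim N(0,I_d)$, independently over $r\in[K]$. All bounds in the lemma concern a fixed number of functionals: $K$ vectors and $m,b\le M$. So it is enough to prove each bound for a single $r$ with failure probability $\delta/(4KM)$ and then take a union bound. The first step is to control the norm. By Gaussian concentration of $\norm{g}_2$, we have $\norm{g_r}_2^2\in[d/2,2d]$ with probability $1-e^{-cd}$. On this event every bound reduces, up to constants, to a statement about the i.i.d. Gaussian entries $g_r(j)$, rescaled by $d^{-1/2}$.

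\eqref{eq:app-infty} follows from a union bound over $d$ coordinates: $\max_j|g(j)|\le \sqrt{2\log(8KMd/\delta)}\le C\sqrt{\log d}$. For \eqref{eq:app-power-vector}, note that $\norm{\theta^{\odot b}}_2^2=\sum_j\theta(j)^{2b}=\norm{g}_2^{-2b}\sum_j g(j)^{2b}$. The sum $\sum_j g(j)^{2b}$ has mean $(2b-1)!!\,d$, so Markov's inequality gives $\sum_j g(j)^{2b}\le Cd$ with probability $1-\delta/(4KM)$, where $C$ depends on $\delta$. Therefore $\norm{\theta^{\odot b}}_2^2\le Cd^{1-b}$. \eqref{eq:app-P1} is Cauchy--Schwarz applied to $\sum_j\theta(j)$: its size is that of $\norm{g}_2^{-1}\sum_j g(j)$, and $\sum_j g(j)\sim N(0,d)$, so Chebyshev's inequality gives $|P_{r,1}|\le C$. (The crude bound $\sqrt d$ would not be enough; we need the probabilistic one.) When $m$ is even, $P_{r,m}=\norm{\theta^{\odot m/2}}_2^2\le Cd^{1-m/2}=Cd^{-\omega_m/2}$, because $\omega_m=m-2$ for even $m$. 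This follows directly from the previous bound. $P_{r,2}=1$ needs no argument.

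The main point is the odd case $m=2k+1\ge3$, where $\omega_m=m-1$. Here $\sum_j g(j)^m$ has mean zero, and we expect cancellation of order $\sqrt d$ rather than $d$. Since $\E[\sum_j g(j)^m]^2=d\,\E g^{2m}$, Chebyshev's inequality gives $|\sum_j g(j)^m|\le C\sqrt d$ with probability $1-\delta/(4KM)$. Combined with $\norm{g}_2^{m}\ge cd^{m/2}$ this yields $|P_{r,m}|\le C d^{(1-m)/2}=Cd^{-\omega_m/2}$. This step is the only one requiring the sign cancellation, and it is exactly why the formal order is $\omega_m=2\lfloor (m-1)/2\rfloor$. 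Once the normalisation event has been fixed first, no further subtlety arises.

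Finally, \eqref{eq:app-atom-size} follows from multiplicativity of the Frobenius norm on rank-one tensors, $\norm{\bigotimes_r v_r}_F=\prod_r\norm{v_r}_2$. Applying \eqref{eq:app-power-vector} to each factor $\theta_r^{\odot b_r}$ gives $d^{-(b_r-1)/2}$, and \eqref{eq:app-P1}--\eqref{eq:app-Pm} bound each scalar factor by $Cd^{-\omega_m/2}$ (with $\omega_1=\omega_2=0$). Multiplying these bounds produces $Cd^{-q/2}$. Since the exponents $\ell_{r,m}$ range over finitely many values in the atoms used later, the constant stays uniform.
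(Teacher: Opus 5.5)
Your proof is correct and follows the paper's route. The shared steps are:
\begin{itemize}
\item the Gaussian normalization $\theta_r=g_r/\norm{g_r}_2$, with chi-square and Gaussian-maximum bounds;
\item moment bounds on $\sum_j g_r(j)^{2b}$ for the absolute power sums;
\item the $\sqrt d$ cancellation of the centered odd sums $\sum_j g_r(j)^m$;
\item a union bound, and multiplicativity of the Frobenius norm on rank-one atoms.
\end{itemize}
The differences are cosmetic. You use Markov and Chebyshev where the paper invokes polynomial concentration and hypercontractivity, and for $P_{r,1}$ you use $\sum_j g_r(j)\sim N(0,d)$ instead of the spherical cap tail; both suffice at fixed confidence $\delta$.
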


\begin{proof}
Represent \(\theta_r=g_r/\norm{g_r}_2\), where \(g_r\sim N(0,I_d)\).  A standard
chi-square concentration bound gives \(\norm{g_r}_2^2\asymp d\) for all
\(r\le K\) with probability at least \(1-\delta/4\).  On the same event,
Gaussian maxima give \(\max_j|g_r(j)|\le C\sqrt{\log d}\), proving
\eqref{eq:app-infty}.

For each fixed \(p\ge2\) in the finite set needed below, concentration of the polynomial
\(\sum_j |g_r(j)|^p\), together with the lower bound on \(\norm{g_r}_2\), gives
$$
  \sum_{j=1}^d |\theta_r(j)|^p
  \le C_{p,K,\delta}d^{1-p/2}.
$$
Taking \(p=2b\) proves \eqref{eq:app-power-vector}.  Taking \(p=m\) proves
\eqref{eq:app-Pm} for even \(m\ge4\).  For odd \(m\ge3\), the signed numerator
\(\sum_j g_r(j)^m\) is a centered fixed-degree Gaussian polynomial with variance
of order \(d\).  Hypercontractivity and Markov's inequality, or equivalently a
standard tail bound for fixed-degree Gaussian chaoses, imply
$$
  \left|\sum_{j=1}^d g_r(j)^m\right|
  \le C_{m,K,\delta}d^{1/2}
$$
with probability at least \(1-\delta/(4KM)\).  Dividing by
\(\norm{g_r}_2^m\asymp d^{m/2}\) yields
\(|P_{r,m}|\le C d^{-(m-1)/2}\), which is
\eqref{eq:app-Pm} for odd \(m\).  Finally, \(P_{r,1}=\langle \theta_r,{\bf 1}\rangle\)
has the distribution of the first coordinate of a uniform vector on the sphere
multiplied by \(\sqrt d\), so \eqref{eq:app-P1} follows from the usual spherical
cap tail bound.  A union bound over the finitely many \(r,m,b\) completes the
proof, and \eqref{eq:app-atom-size} follows by multiplying the displayed bounds.
\end{proof}

\paragraph{A smooth weighted Edgeworth expansion}

We next state the smooth scalar Edgeworth decomposition used recursively. The key feature is that the expansion is expressed through \emph{signed} weight power
sums \(\sum_j w_j^m\) instead of the absolute Berry-Esseen quantity \(\sum_j |w_j|^3\). We first establish the expansion for activations with bounded derivatives (Lemma \ref{lem:appendix-weighted-edgeworth}), and then extend the result to polynomially bounded derivatives in Lemma \ref{lem:weighted-edgeworth-poly-growth} using a standard truncation argument.

\begin{lemma}[Smooth weighted Edgeworth expansion]
\label{lem:appendix-weighted-edgeworth}
Fix an integer \(J\ge1\).  Let \(V_1,\ldots,V_d\) be iid
subgaussian random variables whose cumulants $\chi_m(V)$ up to order \(2J+4\) are bounded.
Let \(h\in C^{2J+4}(\R)\) have bounded derivatives up to order \(2J+4\).  Let
\(w\in\R^d\) satisfy \(\norm w_\infty\le C\sqrt{\log d/d}\),
\(\norm w_2\le1\), \(|\sum_j w_j|\le C\), and
$$
  \sum_j |w_j|^p\le C_p d^{1-p/2},
  \qquad 3\le p\le 2J+3.
$$
Put
\(S_w=\sum_j w_jV_j\).  Then
\begin{equation}
  \E h(S_w)
  =
  \sum_{\ell\in\mathfrak E_J}
  c_{h,\ell}
  \prod_{m=1}^{2J+2}\Delta_m(w,V)^{\ell_m}
  +O_{h,J}\bigl(d^{-(J+1)/2}\bigr),
  \label{eq:app-weighted-edgeworth}
\end{equation}
where \(\mathfrak E_J\) is a finite set depending only on \(J\), the coefficients
\(c_{h,\ell}\) are deterministic Gaussian expectations of derivatives of \(h\),
and
\begin{align*}
  \Delta_1(w,V)&:=\chi_1(V)\sum_j w_j,\\
  \Delta_2(w,V)&:=\chi_2(V)\sum_j w_j^2-1,\\
  \Delta_m(w,V)&:=\chi_m(V)\sum_j w_j^m,
  \qquad 3\le m\le 2J+2.
\end{align*}
The finite sum is truncated
to cumulant monomials of formal order at most \(J\), and the remainder is uniform
over all weights satisfying the displayed bounds.
\end{lemma}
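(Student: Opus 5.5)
We plan to prove the weighted Edgeworth expansion by a Lindeberg-type interpolation against a Gaussian with matched low-order structure, organized as a Taylor expansion in cumulants. The most direct route is to compare $S_w$ with the Gaussian $G_w := \sum_j w_j G_j$, $G_j \sim N(0,1)$ iid, and expand $\E h(S_w)$ through the cumulant generating function. Write $\log \E e^{itS_w} = \sum_{m\ge1} \frac{(it)^m}{m!}\chi_m(V)\sum_j w_j^m$. Separating the $m=2$ term as $-\tfrac{t^2}{2}(1+\Delta_2)$ and the $m=1$ term as $it\,\Delta_1$ gives the formal identity
\[
\E h(S_w) \;=\; \E\Bigl[\exp\Bigl(\sum_{m\ge1}\tfrac{(-1)^m \Delta_m}{m!}\,\partial^m\Bigr)h\Bigr](G),\qquad G\sim N(0,1),
\]
where $\partial^m$ acts on $h$ (with $\Delta_2$ absorbing the variance mismatch). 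Expanding the exponential and keeping monomials $\prod_m \Delta_m^{\ell_m}$ of formal order $\sum_m \ell_m\omega_m \le J$ (with $\omega_1=\omega_2=0$ treated separately, see below) yields exactly the finite sum in \eqref{eq:app-weighted-edgeworth}. The coefficients $c_{h,\ell}$ are $\E h^{(\sum m\ell_m)}(G)$ divided by combinatorial factors. This fixes the form of $\mathfrak E_J$ and the $c_{h,\ell}$.

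To make this rigorous without characteristic-function smoothness (the $V_j$ need not be nonlattice), we avoid inverting Fourier transforms. Instead we use the smoothness of $h$ and a Lindeberg swapping scheme, replacing $w_jV_j$ by $w_jG_j'$ one coordinate at a time, where $G_j'$ is a Gaussian with mean $\chi_1 w_j$ and variance $\chi_2 w_j^2$. Each swap is Taylor expanded to order $2J+3$ around the partial sum $T_j$ of the remaining terms. The difference at step $j$ is then $\sum_{3\le m\le 2J+3} \frac{w_j^m}{m!}(\mu_m(V)-\mu_m(G'))\,\E h^{(m)}(T_j)$ plus a remainder bounded by $\|h^{(2J+4)}\|_\infty |w_j|^{2J+4}$. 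Summing over $j$, the remainders total $\sum_j |w_j|^{2J+4} \lesssim d^{-(J+1)}$, which is negligible. The moment differences reduce to cumulant combinations $\chi_m w_j^m$.

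The terms $\E h^{(m)}(T_j)$ are handled recursively. They are again expectations of smooth functions of weighted sums, now with weights $w$ missing a single coordinate. We apply the same expansion to $h^{(m)}$ at lower order $J-\omega_m/2$, and replace $T_j$ by the full sum at cost $O(|w_j|)$ per correction. Summing products $w_j^m\cdot(\text{expansion})$ over $j$ converts $\sum_j w_j^m$ into $\Delta_m/\chi_m$. Mixed sums such as $\sum_j w_j^{m}w_j^{m'}$ arise from the leave-one-out corrections; these are power sums of higher degree, and since they are always a single power sum they can be folded into $\Delta_{m+m'}$ or bounded by the remainder. Induction on $J$ then closes the argument.

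Finally, the final Gaussian $\sum_j w_j G_j'$ is exactly $N(\Delta_1, 1+\Delta_2)$, so $\E h$ of it is expanded in $\Delta_1,\Delta_2$ by Gaussian smoothing (a Taylor/heat-semigroup expansion in mean and variance). This expansion is uniform because $|\Delta_1|\le C$ is bounded, though not small. The resulting series in $\Delta_1$ is therefore not truncated by size: we keep $\Delta_1$ exactly inside the coefficients, or equivalently allow all powers of $\Delta_1$ up to order $2J+2$ with a controlled remainder. This is why $\omega_1=0$.

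We expect the bookkeeping in the recursive leave-one-out step to be the main obstacle. The task is to show that every generated term is either a product of signed power sums of total formal order at most $J$ or is $O(d^{-(J+1)/2})$, uniformly over the weight class. The subtle point is that signed odd power sums like $\sum_j w_j^3$ must be kept signed and never bounded by $\sum_j|w_j|^3$ until the truncation level. The remainder estimates use only the absolute bounds $\sum_j|w_j|^p\le C_pd^{1-p/2}$ together with $\|w\|_\infty\le C\sqrt{\log d/d}$. Any $\log d$ losses from the latter must be shown to appear only at orders already beyond $J$, with a margin of at least $d^{-1/2}$ to absorb them.
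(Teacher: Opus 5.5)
Your route (Lindeberg swapping with a Taylor expansion in moments, then a recursive expansion of $\E h^{(m)}(T_j)$) differs from the paper's. The paper expands $\log\prod_j\psi(tw_j)$, so additivity of cumulants delivers the $\Delta_m$ for free and only the analytic remainder needs work; for smooth $h$ that route needs no nonlattice condition. Your gap sits exactly at the step you flag, and the resolution you propose is wrong.

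\textbf{The main gap.} The moment differences $\mu_m(V)-\mu_m(G')$ are not pure cumulants. They equal $\chi_4+4\chi_3\chi_1$ at $m=4$ and, for centered $V$, $\chi_5+10\chi_3\chi_2$ and $\chi_6+15\chi_4\chi_2+10\chi_3^2$ at $m=5,6$. Correspondingly, the leave-one-out corrections produce diagonal sums such as $\chi_3\chi_1\sum_jw_j^4$ or $\chi_3^2\sum_jw_j^6$. These cannot be folded into $\Delta_{m+m'}$, because the prefactor is $\chi_3\chi_1$, not $\chi_4$. Nor can they be discarded: they are of size $d^{-1}$, resp.\ $d^{-2}$, hence not $O(d^{-(J+1)/2})$ once $J\ge2$, resp.\ $J\ge4$.

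What actually happens is exact cancellation. Moving $\frac{\chi_3}{6}w_j^3\E h^{(3)}(T_j)$ to the full sum creates $-\frac{\chi_3\chi_1}{6}w_j^4\E h^{(4)}(T_j)$, which kills the $\frac{4\chi_3\chi_1}{24}w_j^4$ part of the $m=4$ swap term. Moreover, $T_j$ is a hybrid sum (Gaussian for $i<j$, $V$ for $i>j$), not ``$w$ with one coordinate removed''. So the recursion outputs ordered partial sums such as $\sum_jw_j^3\sum_{i>j}w_i^3=\frac12\bigl[(\sum_jw_j^3)^2-\sum_jw_j^6\bigr]$. The $\frac12$ is the exponential weight, and the diagonal must again cancel, here against the $10\chi_3^2$ in $\mu_6$.

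Checking all such cancellations is precisely additivity of cumulants, so you must import it. One option is to identify the coefficients from the characteristic-function identity, as the paper does. The other is to replace Taylor-in-moments by the cumulant expansion
$\E[Yg(Y+T)]=\sum_{k\le p}\frac{\kappa_{k+1}(Y)}{k!}\E g^{(k)}(Y+T)+O(\E|Y|^{p+2}\|g^{(p+1)}\|_\infty)$ for $Y$ independent of $T$. This is evaluated at the full sum and, summed over $j$ with $Y=w_jV_j$, produces only $\chi_{k+1}\sum_jw_j^{k+1}$. Fed into Stein's equation, it yields the expansion in pure $\Delta_m$'s, as in Barbour's Stein-method Edgeworth expansions.

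\textbf{Further issues.}
\begin{enumerate}
\item Smoothness budget. Your recursion differentiates $h$ $m$ times per level while gaining only $m-2$ orders, so a term like $\Delta_3^J$ needs derivatives of order about $3J$ at non-Gaussian points. Already the first call (the order-$(J-1)$ expansion of $h^{(3)}$ at your Taylor depth) requires $h\in C^{2J+5}$, beyond the hypothesis $h\in C^{2J+4}$. In the Fourier route, $(it)^{3J}$ simply lands on $e^{-t^2/2}$.
\item Order counting. Only absolute bounds $|\Delta_m|\lesssim d^{1-m/2}$ are available. For example, $w$ equal to $x$ on $2d/3$ coordinates and $-2x$ on the rest, with $x=(2d)^{-1/2}$, is admissible and has $\sum_jw_j^3=-(2d)^{-1/2}$. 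So $\Delta_m$ must carry order $m-2$. Truncating by $\sum_m\ell_m\omega_m\le J$ with $\omega_3=2$ would drop $\Delta_3$ at $J=1$ at a cost of $d^{-1/2}$. Keeping odd sums ``signed'' buys nothing under these hypotheses. Your recursion orders $J-\omega_m/2$ match half-order counting; your truncation rule does not.
\item Size of $\Delta_1,\Delta_2$. Both can be of order one, since only $|\sum_jw_j|\le C$ and $\|w\|_2\le1$ are assumed. Keeping them inside the reference Gaussian, with coefficients $\E h^{(k)}\bigl(\Delta_1+(1+\Delta_2)^{1/2}G\bigr)$, is correct. A finite Taylor polynomial in $\Delta_1$, however, leaves an error of size $|\Delta_1|^{2J+3}$, which is not small, so your two options are not equivalent. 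The paper's finite $\mathfrak E_J$ with order-zero $\Delta_1,\Delta_2$ is loose on the same point; this is harmless in its applications, where $\chi_1(V)$ and $\chi_2(V)-1$ are small.
\item Sign. For test-function expectations the operator identity is $\E h(S_w)=\E\bigl[\exp\bigl(\sum_m\frac{\Delta_m}{m!}\partial^m\bigr)h\bigr](G)$, without $(-1)^m$; that sign belongs to the density form.
\end{enumerate}
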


\begin{proof}
This is a standard smooth Edgeworth expansion for triangular arrays.  We include
the argument to identify the signed quantities that enter the expansion.  Let
\(\psi(t)=\E e^{itV}\) be the characteristic function of $V$. The characteristic function of \(S_w\) is
$$
  \prod_{j=1}^d\psi(tw_j)
  =\exp\left\{\sum_{j=1}^d\log\psi(tw_j)\right\}.
$$
Expanding \(\log\psi(u)\) through order \(2J+2\) gives
$$
  \sum_{j=1}^d\log\psi(tw_j)
  =
  -\frac{t^2}{2}
  +
  \sum_{m=1}^{2J+2}\frac{(it)^m}{m!}\Delta_m(w,V)
  +\operatorname{Err}_{J}(t,w),
$$
where the term \(-t^2/2\) is the reference Gaussian exponent and where
\(\operatorname{Err}_{J}\) is bounded by
$$
  C_J |t|^{2J+3}\sum_j |w_j|^{2J+3}\E(1+|V|^{2J+3}).
$$
Thanks to the subgaussianity assumption and the moment bounds on \(w\), we have
\(\sum_j |w_j|^{2J+3}\le C d^{-(2J+1)/2}\).  Fourier inversion for
smooth test functions, after multiplying by a standard cutoff and using bounded
derivatives of \(h\), allows the characteristic-function expansion to be
integrated term by term.  Expanding the exponential of
\(\sum_m (it)^m\Delta_m/m!\) and keeping only monomials of formal order at most
\(J\) gives \eqref{eq:app-weighted-edgeworth}; all discarded terms and the
Fourier-tail contribution are bounded by
\(C_{h,J}d^{-(J+1)/2}\).  Equivalently, the coefficients can be
written as Gaussian expectations of derivatives through the formal identity
$$
  \E h(G) (it)^m\quad \longleftrightarrow\quad \E h^{(m)}(G),
$$
which proves the stated form of the constants.
\end{proof}

\begin{lemma}[Smooth weighted Edgeworth expansion: polynomial-growth tests]
\label{lem:weighted-edgeworth-poly-growth}
Fix \(J\ge 1\), and put \(L=2J+4\).  Let
\(\mathcal H\) be a finite family of functions \(h\in C^{L}(\mathbb R)\).
Assume that for some \(Q\ge 0\),
$$
  \max_{h\in\mathcal H}\max_{0\le r\le L}
  \sup_{x\in\mathbb R}
  \frac{|h^{(r)}(x)|}{(1+|x|)^Q}
  <\infty .
$$
Assume also that the bounded-derivative version of
Lemma~\ref{lem:appendix-weighted-edgeworth} holds in the following quantitative
form: for every \(g\in C_b^{L}(\mathbb R)\),
\begin{equation}
  \left|
  \mathbb E g(S_w)
  -
  \sum_{\ell\in\mathfrak E_J}
  c_{g,\ell}
  \prod_{m=1}^{2J+2}\Delta_m(w,V)^{\ell_m}
  \right|
  \le
  C_J \|g\|_{C_b^{L}} d^{-(J+1)/2},
  \label{eq:edgeworth_d1}
\end{equation}
uniformly over all admissible weights \(w\), where
$$
  \|g\|_{C_b^{L}}:=\max_{0\le r\le L}\|g^{(r)}\|_\infty .
$$
Then, uniformly over \(h\in\mathcal H\) and uniformly over all admissible
weights \(w\),
\begin{equation}
\label{eq:poly-growth-edgeworth-remainder}
  \mathbb E h(S_w)
  =
  \sum_{\ell\in\mathfrak E_J}
  c_{h,\ell}
  \prod_{m=1}^{2J+2}\Delta_m(w,V)^{\ell_m}
  +
  O_{\mathcal H,J}\!\left(
    (\log d)^{Q/2} d^{-(J+1)/2}
  \right).
\end{equation}
More generally, if the constant in \eqref{eq:edgeworth_d1} is bounded by a fixed polynomial in
\(\|g\|_{C_b^L}\), the same argument gives the same conclusion with
\((\log d)^{Q/2}\) replaced by another fixed power of \(\log d\).
\end{lemma}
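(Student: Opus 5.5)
The plan is a truncation argument. Fix a cutoff $T=T_d:=C_0\sqrt{\log d}$ with $C_0$ large, and a smooth bump $\chi\in C^\infty_c(\R)$ with $\chi\equiv1$ on $[-1,1]$ and support in $[-2,2]$. Write each $h\in\mathcal H$ as $h=g_T+b_T$, where $g_T:=h\,\chi(\cdot/T)$ and $b_T:=h(1-\chi(\cdot/T))$. By Leibniz and the polynomial-growth hypothesis, $g_T\in C_b^L$ with
\[
  \|g_T\|_{C_b^L}\le C_{\mathcal H}(1+2T)^Q\le C(\log d)^{Q/2}.
\]
Applying \eqref{eq:edgeworth_d1} to $g_T$ gives
\[
  \E g_T(S_w)=\sum_{\ell}c_{g_T,\ell}\prod_m\Delta_m^{\ell_m}+O\bigl((\log d)^{Q/2}d^{-(J+1)/2}\bigr).
\]
(If the constant in \eqref{eq:edgeworth_d1} is only a polynomial in the norm, the same step gives a different fixed power of $\log d$.)

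The remaining work is to show that the tail contributions are negligible, at order $d^{-(J+1)/2}$, in two places.

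\emph{(a) Tail of the true expectation.} We need $|\E b_T(S_w)|\le \E[(1+|S_w|)^Q\one_{|S_w|\ge T}]$. The $V_j$ are iid subgaussian and $\|w\|_2\le1$, so $S_w$ is subgaussian with a uniform constant (the mean $\chi_1\sum_jw_j$ is $O(1)$ by admissibility). Cauchy--Schwarz then gives a bound $C e^{-cT^2}=Cd^{-cC_0^2}$, which is $\le d^{-(J+1)/2}$ once $C_0$ is large.

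\emph{(b) Tail of the coefficients.} We must check $|c_{g_T,\ell}-c_{h,\ell}|$ is small. By the stated structure of Lemma~\ref{lem:appendix-weighted-edgeworth}, each $c_{h,\ell}$ is a finite linear combination of Gaussian expectations $\E h^{(k)}(G)$ with $k\le L$; equivalently, by Gaussian integration by parts, $\E[h(G)\mathrm{He}_k(G)]$. The coefficient differences are therefore $\E[b_T(G)\mathrm{He}_k(G)]$, and again these are bounded by $\E[(1+|G|)^{Q+k}\one_{|G|\ge T}]\le Cd^{-cC_0^2}$. The monomials $\prod_m\Delta_m^{\ell_m}$ are uniformly $O(1)$ for admissible $w$, since each $|\Delta_m|\le C$. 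It follows that replacing $c_{g_T,\ell}$ by $c_{h,\ell}$ costs only $O(d^{-(J+1)/2})$.

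Summing the three errors gives \eqref{eq:poly-growth-edgeworth-remainder}, uniformly over the finite family $\mathcal H$ and over all admissible $w$.

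The main obstacle is step (b). It needs the coefficients $c_{h,\ell}$ to be genuinely linear in $h$ through $\E h^{(k)}(G)$, which holds by the Fourier correspondence $(it)^m\leftrightarrow \E h^{(m)}(G)$ in the proof of the previous lemma. It also needs these coefficients to remain well defined for polynomial-growth $h$. I would make both points explicit by writing the Edgeworth terms as $\E[h(G)\,p_\ell(G)]$ with Hermite-type polynomials $p_\ell$, so that both properties are immediate and the truncation error is a plain Gaussian tail estimate.
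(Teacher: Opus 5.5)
Your proposal is correct and follows essentially the same route as the paper's proof. Both arguments use a smooth cutoff $h\,\eta(\cdot/R)$ with $R\asymp\sqrt{\log d}$, bound $\|h_R\|_{C_b^L}\lesssim(1+R)^Q$ by Leibniz and feed it into \eqref{eq:edgeworth_d1}, control the tail through a uniform subgaussian bound on $S_w$, and bound the coefficient differences by a Gaussian tail estimate using that the $\Delta_m$ are uniformly bounded. The only cosmetic difference is in the coefficient comparison: you write it as $\E[h(G)\mathrm{He}_k(G)]$ after Gaussian integration by parts, whereas the paper compares $\E h_R^{(k)}(G)$ with $\E h^{(k)}(G)$ directly, using that $h_R=h$ on $[-R,R]$.
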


\begin{proof}
Let \(\eta\in C_c^\infty(\mathbb R)\) satisfy
$$
  0\le \eta\le 1,\qquad
  \eta(x)=1 \text{ for } |x|\le 1,\qquad
  \eta(x)=0 \text{ for } |x|\ge 2.
$$
For \(R\ge 1\), define
$$
  h_R(x):=h(x)\eta(x/R).
$$
Since \(\mathcal H\) is finite and the derivatives of the functions in
\(\mathcal H\) have polynomial growth, there is a constant \(C_{\mathcal H}\)
such that
$$
  |h^{(r)}(x)|\le C_{\mathcal H}(1+|x|)^Q,
  \qquad
  h\in\mathcal H,\quad 0\le r\le L .
$$
By Leibniz' rule,
$$
  h_R^{(r)}(x)
  =
  \sum_{a=0}^r
  \binom{r}{a}
  h^{(a)}(x) R^{-(r-a)}\eta^{(r-a)}(x/R).
$$
The derivatives of \(\eta\) are supported on \(|x|\le 2R\), and \(R\ge1\).
Therefore, uniformly in \(h\in\mathcal H\),
\begin{equation}
  \|h_R\|_{C_b^L}
  \le
  C_{\mathcal H,L,\eta}(1+R)^Q .
\end{equation}

We apply the quantitative bounded-derivative Edgeworth expansion \eqref{eq:edgeworth_d1} to
\(h_R\).  This gives
\begin{equation}
\label{eq:edgeworth_d3}
  \mathbb E h_R(S_w)
  =
  \sum_{\ell\in\mathfrak E_J}
  c_{h_R,\ell}
  \prod_{m=1}^{2J+2}\Delta_m(w,V)^{\ell_m}
  +
  O_{\mathcal H,J}\!\left(
    (1+R)^Q d^{-(J+1)/2}
  \right),
\end{equation}
uniformly in \(h\in\mathcal H\) and \(w\).

It remains to compare \(h_R\) with \(h\).  First, the variables \(S_w\) have
uniform subgaussian tails.  Indeed, \(V-\mathbb EV\) is subgaussian, and
\(\|w\|_2\le1\), while
$$
  |\mathbb E S_w|
  =
  |\mathbb EV|\,\Big|\sum_j w_j\Big|
  \le C .
$$
Hence there are constants \(C,c>0\), independent of \(d,w\), such that
$$
  \mathbb P(|S_w|>t)\le C e^{-c t^2},
  \qquad t\ge0.
$$
Consequently,
\begin{equation}
  \mathbb E\left[(1+|S_w|)^Q\mathbf 1_{\{|S_w|>R\}}\right]
  \le
  C_{\mathcal H,Q} (1+R)^Q e^{-cR^2}.
  \label{eq:edgeworth_d5}
\end{equation}
Since \(h_R=h\) on \([-R,R]\), \eqref{eq:edgeworth_d5} implies
\begin{equation}
  \sup_{h\in\mathcal H}
  \left|
  \mathbb E h(S_w)-\mathbb E h_R(S_w)
  \right|
  \le
  C_{\mathcal H,Q}(1+R)^Q e^{-cR^2}.
  \label{eq:edgeworth_d6}
\end{equation}

Next we compare the Edgeworth coefficients.  Each coefficient \(c_{h,\ell}\) is a
finite linear combination, depending only on \(J\) and \(\ell\), of Gaussian
expectations of derivatives of \(h\) up to order \(2J+2\).  Therefore,
using again Leibniz' rule and the fact that \(h_R=h\) on \([-R,R]\),
\begin{equation}
  |c_{h_R,\ell}-c_{h,\ell}|
  \le
  C_{\mathcal H,J} (1+R)^Q e^{-cR^2}.
  \label{eq:edgeworth_d7}
\end{equation}
The cumulant factors are uniformly bounded over admissible weights.  Indeed,
$$
  |\Delta_1(w,V)|\le C,\qquad
  |\Delta_2(w,V)|\le C,
$$
and for \(m\ge3\),
$$
  |\Delta_m(w,V)|
  \le
  C_m\sum_j |w_j|^m
  \le
  C_m d^{1-m/2}
  \le C_m .
$$
Since \(\mathfrak E_J\) is finite, \eqref{eq:edgeworth_d7} gives
\begin{equation}
  \sum_{\ell\in\mathfrak E_J}
  |c_{h_R,\ell}-c_{h,\ell}|
  \prod_{m=1}^{2J+2}
  |\Delta_m(w,V)|^{\ell_m}
  \le
  C_{\mathcal H,J}(1+R)^Q e^{-cR^2}.
  \label{eq:edgeworth_d8}
\end{equation}

Combining \eqref{eq:edgeworth_d3}, \eqref{eq:edgeworth_d6}, and \eqref{eq:edgeworth_d8}, we obtain
$$
  \mathbb E h(S_w)
  =
  \sum_{\ell\in\mathfrak E_J}
  c_{h,\ell}
  \prod_{m=1}^{2J+2}\Delta_m(w,V)^{\ell_m}
  +
  O_{\mathcal H,J}\!\left(
    (1+R)^Q d^{-(J+1)/2}
    +(1+R)^Q e^{-cR^2}
  \right).
$$
Finally choose
$$
  R=R_d=A\sqrt{\log d},
$$
with \(A\) large enough that
$$
  (1+R_d)^Q e^{-cR_d^2}
  \le
  C_{\mathcal H,J} d^{-(J+1)/2}.
$$
Then
$$
  (1+R_d)^Q d^{-(J+1)/2}
  \le
  C_{\mathcal H,J}
  (\log d)^{Q/2}d^{-(J+1)/2},
$$
which proves the claim. The finitely many small values of \(d\) can be absorbed into the constant.
\end{proof}

\begin{remark}[Slackened Edgeworth truncation convention]
\label{rem:edgeworth-slack-convention}
We shall use Lemma~\ref{lem:weighted-edgeworth-poly-growth} with one order of
slack.  More precisely, suppose that an argument requires an expansion
retaining all terms of formal order strictly smaller than \(J\).  We
apply Lemma~\ref{lem:weighted-edgeworth-poly-growth} at order \(J\), decompose
the explicit Edgeworth polynomial into terms of formal order \(<J\) and
terms of formal order exactly \(J\), and absorb the latter into the
remainder.

On the spherical power-sum event of Lemma~\ref{lem:appendix-spherical-event},
every explicit scalar-tensor atom of formal order \(J\) is bounded by
\(C d^{-J/2}\).  On the other hand, the analytic remainder in
\eqref{eq:poly-growth-edgeworth-remainder} is
$$
    O\!\left((\log d)^A d^{-(J+1)/2}\right)
    =
    o(d^{-J/2})
$$
for fixed \(J,A\).  Hence, after increasing \(d_0\) and the constant,
we may write the slackened expansion as
$$
    \mathbb E h(S_w)
    =
    \sum_{\operatorname{ord}(\tau)<J}
    c_\tau M_\tau(w,V)
    +
    \operatorname{Err}_J(h,w),
    \qquad
    |\operatorname{Err}_J(h,w)|
    \le
    C d^{-J/2}.
$$
More generally, if this scalar expansion is multiplied by a deterministic
tensor atom of formal order \(s\), then the induced Frobenius error is
bounded by
$$
    C d^{-(s+J)/2}.
$$
All constants may depend on the fixed parameters \(\phi,K,J,\delta\)
and on the finite test-function family, but not on \(d\).
\end{remark}

\paragraph{Recursive expansion of off-spine coefficient vectors}

For a path coordinate \({\bf i}=(i_1,\ldots,i_K)\), write
$$
  \beta_r:=\theta_r(i_r),
  \qquad
  \xi:=X_{i_1,\ldots,i_K}.
$$
Along this path define
$$
  U_0:=\xi,
  \qquad
  U_r:=Z^{(r)}_{i_{r+1},\ldots,i_K},
$$
so that
\begin{equation}
  U_r=\phi(R_r+\beta_rU_{r-1}),
  \qquad
  D_r=\phi'(R_r+\beta_rU_{r-1}),
  \label{eq:app-Ur-Dr}
\end{equation}
where
\begin{equation}
  R_r=R_{r,i_r}:=
  \sum_{j\ne i_r}\theta_r(j)Z^{(r-1)}_{j,i_{r+1},\ldots,i_K}.
  \label{eq:app-Rri}
\end{equation}
The random variables \(\xi,R_1,\ldots,R_K\) are mutually independent.  Indeed,
\(R_r\) depends only on leaves whose \(r\)-th coordinate is different from
\(i_r\) and whose later coordinates equal \(i_{r+1},\ldots,i_K\); for different
values of \(r\) these leaf sets are disjoint, and they are also disjoint from the
single spine leaf \(X_{i_1,\ldots,i_K}\).

The following lemma considers a recursive coefficient expansion, showing that every
off-spine expectation \(a\mapsto \E h(R_{r,a})\) is a finite polynomial in global
signed power sums and the deleted coordinate powers \(\theta_r(a)^u\), up to the
required order.
\begin{lemma}[Off-spine coefficient expansion]
\label{lem:appendix-coeff-expansion}
Fix \(J\le K\), and let \(\mathcal H\) be any finite family of test functions whose derivatives up to the required order have at most polynomial growth. On the event of
Lemma~\ref{lem:appendix-spherical-event}, for every \(h\in\mathcal H\), every
\(1\le r\le K\), and every \(a\in[d]\),
\begin{equation}
  \E_X h(R_{r,a})
  =
  \sum_{\sigma\in\mathcal I(h,r,J)}
  c_\sigma
  \left(\prod_{u=1}^K\prod_{m=1}^{2J+2}P_{u,m}^{\ell_{\sigma,u,m}}\right)
  \theta_r(a)^{v_\sigma}
  +\rho_{h,r,J}(a).
  \label{eq:app-coeff-expansion}
\end{equation}
Moreover, for every fixed \(b\) in the finite range needed below,
\begin{equation}
  \norm{\theta_r^{\odot b}\odot \rho_{h,r,J}}_2
  \le C d^{-(J+b-1)/2}.
  \label{eq:app-coeff-remainder-tensorized}
\end{equation}
The index set \(\mathcal I(h,r,J)\) is finite and depends only on \(h,r,J,K\),
not on \(d\).  The constants \(c_\sigma\) are deterministic and depend only on
\(\phi,K,J\) and Gaussian expectations of derivatives of functions in
\(\mathcal H\).  Every displayed monomial has formal order strictly smaller than
\(J\), after counting the local coordinate factor \(\theta_r(a)^{v_\sigma}\) as
order \(v_\sigma\).
\end{lemma}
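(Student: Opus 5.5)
The plan is an induction on the layer index $r$, which tracks the nested structure of the off-spine variables. We have $R_{r,a}=\sum_{j\ne a}\theta_r(j)Z^{(r-1)}_{j,i_{r+1},\ldots,i_K}$. For fixed later coordinates the summands $Z^{(r-1)}_{j,\cdot}$, $j\in[d]$, are i.i.d. copies of a scalar $V^{(r-1)}$: this is the output of an $(r-1)$-layer MSIM applied to an independent Gaussian block, with $V^{(0)}=N(0,1)$. The variable $V^{(r-1)}$ is subgaussian, because $\phi$ is Lipschitz and each layer's preactivation has bounded $\ell_2$ weights. So $R_{r,a}=S_w$ with $w=\theta_r\odot(\one-e_a)$. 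This weight vector is admissible for Lemma~\ref{lem:appendix-weighted-edgeworth} on the event of Lemma~\ref{lem:appendix-spherical-event}, since deleting one coordinate preserves all the bounds.

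\textbf{Step 1: apply the Edgeworth expansion at the current layer.} We apply Lemma~\ref{lem:weighted-edgeworth-poly-growth} at order $J$, with the slack convention of Remark~\ref{rem:edgeworth-slack-convention}. This gives $\E h(R_{r,a})$ as a polynomial in
$$\Delta_m(w,V^{(r-1)})=\chi_m(V^{(r-1)})\Bigl(P_{r,m}-\theta_r(a)^m\Bigr),\qquad \Delta_2=\chi_2\Bigl(1-\theta_r(a)^2\Bigr)-1,$$
plus a uniform error $O((\log d)^{A}d^{-J/2})$.

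\textbf{Step 2: expand the cumulants of the previous layer.} The cumulants $\chi_m(V^{(r-1)})$ are finite polynomial combinations of the moments $\E\phi(R_{r-1,\cdot}+\ldots)^k$ of the previous layer. More precisely, $V^{(r-1)}=\phi(S)$ with $S=\sum_j\theta_{r-1}(j)V^{(r-2)}_j$, and no coordinate is deleted here. These moments are of the form $\E h'(S_{\theta_{r-1}})$ with $h'=\phi^k$, which has polynomially bounded derivatives. By the induction hypothesis they therefore expand as polynomials in $P_{u,m}$, $u<r$, up to formal order $<J$, with remainder $O(d^{-J/2})$. For $r=1$ the cumulants are exact: $\chi_1=0$, $\chi_2=1$, and $\chi_m=0$ for $m\ge3$.

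\textbf{Step 3: substitute and sort by formal order.} Substituting Step 2 into the Edgeworth polynomial of Step 1 and multiplying out the binomials $(P_{r,m}-\theta_r(a)^m)^{\ell}$ produces monomials of the stated form. Here $\theta_r(a)^{v}$ carries order $v$ and $P_{u,m}$ carries order $\omega_m$. On the event of Lemma~\ref{lem:appendix-spherical-event}, every monomial of order $\ge J$ is absorbed into $\rho$. This yields \eqref{eq:app-coeff-expansion}, with a finite index set independent of $d$. One detail needs care. $\Delta_2$ is not small: $\chi_2(V^{(r-1)})-1$ is $O(d^{-1})$ only after the normalization $\E\phi^2=1$, so it enters with order $\ge2$ from the induction, plus the local term $-\chi_2\theta_r(a)^2$ of order $2$. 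Similarly, $\Delta_1=\chi_1(V)(P_{r,1}-\theta_r(a))$ has $\chi_1=\E\phi(S)=O(d^{-1})$ by the centering $\E\phi(G)=0$ and the induction. The order bookkeeping must therefore use these expansions and not the crude $O(1)$ bounds.

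\textbf{Step 4: bound the remainder.} The remainder $\rho_{h,r,J}(a)$ collects three pieces: the analytic Edgeworth error, the induction remainders multiplied by bounded polynomials, and the discarded monomials. Each piece is either uniform in $a$, of size $Cd^{-J/2}$ (up to logs, removed via slack), or has the form $c\,\theta_r(a)^{v}$ with $v+(\text{global order})\ge J$. For \eqref{eq:app-coeff-remainder-tensorized}, multiply by $\theta_r(a)^b$ and take the $\ell_2$ norm over $a$, using \eqref{eq:app-power-vector}. The uniform pieces give $\|\theta_r^{\odot b}\|_2\,d^{-J/2}\le Cd^{-(J+b-1)/2}$. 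The local pieces give $\|\theta_r^{\odot(b+v)}\|_2\,d^{-(J-v)/2}\le Cd^{-(J+b-1)/2}$.

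\textbf{Main obstacle.} The hard part is this uniform-in-$a$ remainder control, combined with the need to apply Edgeworth with one order of slack. The analytic error carries a $(\log d)^{Q/2}$ factor, so Remark~\ref{rem:edgeworth-slack-convention} is needed to get clean powers. In addition, the induction must be run with $J$ increased at each of the at most $K$ levels to absorb those logarithms. Since $K$ is fixed, this only enlarges the smoothness order $L_K$.
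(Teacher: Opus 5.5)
Your proposal is correct and follows essentially the same route as the paper: induction on $r$, expansion of the cumulants of $V=\phi(Y_{r-1})$ through the moments $\E\phi(Y_{r-1})^k$ using the inductive step, the weighted Edgeworth expansion (with the slack convention) applied to the deleted weights whose power sums are $P_{r,m}-\theta_r(a)^m$, and the tensorized remainder bound via $\norm{\theta_r^{\odot(b+v)}}_2$. You go slightly beyond the paper in two small ways: your explicit check that $\chi_1(V)$ and $\chi_2(V)-1$ have positive formal order (because $\E\phi(G)=0$ and $\E\phi(G)^2=1$) states a point the paper leaves implicit, and your increase of $J$ at each layer is harmless but unnecessary, since the slack convention already gives log-free remainders at every level.
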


\begin{proof}
The proof proceeds by induction on \(r\).  For \(r=1\), the summands in \(R_{1,a}\)
are independent standard Gaussians, and therefore
$$
  R_{1,a}\sim N(0,1-\theta_1(a)^2).
$$
Expanding the map \(\sigma^2\mapsto \E h(\sigma G)\) around \(\sigma^2=1\) gives
$$
  \E h(R_{1,a})
  =
  \sum_{q=0}^{\lfloor(J-1)/2\rfloor} c_{h,q}\theta_1(a)^{2q}
  +O\bigl(|\theta_1(a)|^{J}\bigr),
$$
which has the required tensorized form because
\(\norm{\theta_1^{\odot(b+J)}}_2\le C d^{-(J+b-1)/2}\).

Assume the claim holds up to layer \(r-1\) for all test functions needed below.
A generic input to layer \(r\) has the form
$$
  V=\phi(Y_{r-1}),
$$
where \(Y_{r-1}\) is a generic coordinate of the previous preactivation.  The
cumulants \(\chi_m(V)\), for \(m\le 2J+4\), are polynomials in moments
\(\E \phi(Y_{r-1})^a\) of bounded degree.  Applying the induction hypothesis to
the finite family of functions \(x\mapsto \phi(x)^a\) shows that each cumulant has
a finite expansion in signed power sums of the previous directions, with the
tensorized remainder bound \eqref{eq:app-coeff-remainder-tensorized}.

Now write
$$
  R_{r,a}=\sum_{j\ne a}\theta_r(j)V_j,
$$
where \(V_j\) are iid copies of \(V\).
We then apply Lemma~\ref{lem:weighted-edgeworth-poly-growth} together with the slackened truncation convention of Remark~\ref{rem:edgeworth-slack-convention} to the deleted weights \(w_j=\theta_r(j)\mathbf 1_{\{j\ne a\}}\). Their signed power sums are exactly
\begin{equation}
  \sum_j w_j^m=P_{r,m}-\theta_r(a)^m.
  \label{eq:app-deleted-power-sum}
\end{equation}
Substituting the inductive cumulant expansions and
\eqref{eq:app-deleted-power-sum} into \eqref{eq:app-weighted-edgeworth}, and then
expanding the finite polynomial, produces precisely the finite sum in
\eqref{eq:app-coeff-expansion}.  The factors \(P_{u,1}\) that appear from mean
shifts have formal order zero and are bounded by \(C\) on the spherical event.
The Edgeworth remainder is \(O(d^{-J/2})\), and all discarded local coordinate
monomials have coordinate degree at least \(J\).  Multiplying by
\(\theta_r^{\odot b}\) and using Lemma~\ref{lem:appendix-spherical-event}
therefore gives \eqref{eq:app-coeff-remainder-tensorized}.  This proves the
induction step.
\end{proof}

\paragraph{Path Taylor expansion and proof of the hierarchy}
From the chain rule, we have
\begin{equation}
  \mathcal G_{\theta,{\bf i}}
  =
  \left(\prod_{r=1}^K\theta_r(i_r)\right)\Lambda_{\bf i},
  \qquad
  \Lambda_{\bf i}:=\E_X\prod_{r=1}^K D_r.
  \label{eq:app-G-Lambda}
\end{equation}

\begin{lemma}[Separated path Taylor expansion]
\label{lem:appendix-path-taylor}
For every multi-index \({\bf i}\),
\begin{equation}
  \Lambda_{\bf i}
  =
  \sum_{|\alpha|\le K-1}
  A_\alpha(i_1,\ldots,i_K)
  \prod_{r=1}^K\theta_r(i_r)^{\alpha_r}
  +\operatorname{Rem}^{\rm path}_{\bf i},
  \label{eq:app-path-taylor}
\end{equation}
where each coefficient is a finite separated sum
\begin{equation}
  A_\alpha(i_1,\ldots,i_K)
  =
  \sum_{a=1}^{N_\alpha} c_{\alpha,a}
  \prod_{r=1}^K \E_X h_{\alpha,a,r}(R_{r,i_r}),
  \label{eq:app-separated-coefficients}
\end{equation}
with \(N_\alpha=O_K(1)\) and with \(h_{\alpha,a,r}\) belonging to a finite smooth
family depending only on \(K\) and \(\phi\).  Moreover, after multiplying by the
base factor \(\prod_r\theta_r(i_r)\), the path remainder has Frobenius norm at
most
\begin{equation}
  \left\|
  \left(\prod_{r=1}^K\theta_r(i_r)\right)_{\bf i}
  \odot \operatorname{Rem}^{\rm path}
  \right\|_F
  \le C d^{-K/2}.
  \label{eq:app-path-remainder-F}
\end{equation}
\end{lemma}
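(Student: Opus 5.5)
We expand the spine dependence of $\Lambda_{\bf i}=\E_X\prod_{r=1}^K D_r$ by Taylor expansion. The key structural input is the independence of $\xi,R_1,\ldots,R_K$ established above. By \eqref{eq:app-Ur-Dr}, each $D_r=\phi'(R_r+\beta_rU_{r-1})$ depends on the spine through the small perturbation $\beta_rU_{r-1}$, with $|\beta_r|\le C\sqrt{\log d/d}$ on the spherical event.

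\textbf{Step 1: recursive expansion of the spine variables.} For each layer, Taylor-expand both $D_r$ and $U_r=\phi(R_r+\beta_rU_{r-1})$ in $\beta_rU_{r-1}$ around $R_r$, up to order $K-1$, with integral-form remainder:
\[
D_r=\sum_{k<K}\frac{\beta_r^k}{k!}\phi^{(k+1)}(R_r)U_{r-1}^k+\beta_r^{K}E_r,
\qquad
U_r=\sum_{k<K}\frac{\beta_r^k}{k!}\phi^{(k)}(R_r)U_{r-1}^k+\beta_r^{K}E'_r .
\]
Substituting recursively from $U_0=\xi$, each $U_{r-1}^k$ becomes a polynomial in $\beta_1,\ldots,\beta_{r-1}$. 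Its coefficients are products of functions of single $R_s$ ($s<r$) times powers of $\xi$. Multiplying out $\prod_r D_r$ and keeping the monomials $\prod_r\beta_r^{\alpha_r}$ with $|\alpha|\le K-1$ gives, for each such $\alpha$, a finite sum of terms of the form $\prod_{r}g_r(R_r)\cdot\xi^{e}$.

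Independence then factorizes the expectation as $\E\xi^e\prod_r\E_X g_r(R_{r,i_r})$, which is exactly the separated form \eqref{eq:app-separated-coefficients}. The functions $g_r$ are products of derivatives of $\phi$ up to order $K$ and their powers. They therefore form a finite family determined by $K,\phi$ whose derivatives have polynomial growth by Assumption~\ref{ass:edgeworth-regularity}, and $N_\alpha=O_K(1)$ by counting.

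\textbf{Step 2: remainder bound.} Every term collected in $\operatorname{Rem}^{\rm path}_{\bf i}$ carries a factor $\prod_r\beta_r^{\alpha_r}$ with $|\alpha|\ge K$. Its other factor is an expectation of a polynomial-growth function of $(\xi,R_1,\ldots,R_K)$, which may involve intermediate-point evaluations from the integral remainders. Since $\xi,R_r$ are uniformly subgaussian (they are weighted sums with $\|w\|_2\le1$ of subgaussian variables, as in the proof of Lemma~\ref{lem:weighted-edgeworth-poly-growth}) and $|\beta_r U_{r-1}|$ has all moments bounded, these expectations are bounded by a constant $C$ uniformly in ${\bf i}$. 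Hence $|\operatorname{Rem}^{\rm path}_{\bf i}|\le C\sum_{|\alpha|\ge K,\ \alpha\le K\text{ entrywise}}\prod_r|\theta_r(i_r)|^{\alpha_r}$.

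Multiplying by the base factor and using the product structure of the Frobenius norm,
\[
\Bigl\|\bigotimes_r\theta_r^{\odot(\alpha_r+1)}\Bigr\|_F=\prod_r\|\theta_r^{\odot(\alpha_r+1)}\|_2\le C d^{-|\alpha|/2}\le Cd^{-K/2}
\]
by \eqref{eq:app-power-vector}. The absolute values here are harmless, since \eqref{eq:app-power-vector} controls $\|\,|\theta_r|^{\odot b}\|_2$. Summing over the finitely many $\alpha$ yields \eqref{eq:app-path-remainder-F}.

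\textbf{Main obstacle.} The delicate point is the remainder: the Taylor remainders are evaluated at random intermediate points mixing $R_r$ and spine variables, so they do not factorize. We handle this by bounding them only in absolute value through uniform moment bounds, for which we need polynomial growth of $\phi^{(k)}$ up to order $K+1\le L_K$ and a Lipschitz $\phi$. The latter keeps $U_r$ subgaussian uniformly in $d$. Separation is required only for the retained terms, and there it holds exactly.
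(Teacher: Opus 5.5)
Your proposal is correct and follows the paper's argument. You Taylor-expand the dependence of $\prod_r D_r$ on $\beta=(\theta_r(i_r))_r$ to total degree $K-1$; you do this layer by layer with recursive substitution, whereas the paper expands the $C^K$ map $\beta\mapsto\prod_r D_r$ jointly around $\beta=0$. You then factor the retained coefficients using independence of $\xi,R_1,\ldots,R_K$, absorbing $\E\xi^e$ into the constants. Finally you bound the remainder by uniform moment bounds times $\prod_r\|\theta_r^{\odot(\alpha_r+1)}\|_2\le Cd^{-K/2}$.

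One harmless correction: the recursively generated remainder monomials need not satisfy $\alpha\le K$ entrywise, since the powers $U_{r-1}^k$ produce $\beta_{r-1}$-exponents of order $K^2$. But $|\beta_r|\le 1$, so each such monomial is dominated by a sub-monomial with $|\alpha'|=K$, and the $d^{-K/2}$ bound is unaffected.
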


\begin{proof}
The variables \(U_r\) and \(D_r\) are generated from the independent innovations
\(\xi,R_1,\ldots,R_K\) by the smooth recursion \eqref{eq:app-Ur-Dr}.  For fixed
innovations, the map
$$
  \beta=(\beta_1,\ldots,\beta_K)
  \mapsto
  \prod_{r=1}^K D_r
$$
is \(C^K\).  Taylor expansion around \(\beta=0\) to total degree \(K-1\) gives
$$
  \prod_{r=1}^K D_r
  =
  \sum_{|\alpha|\le K-1} \beta^\alpha H_\alpha(\xi,R_1,\ldots,R_K)
  +\widetilde R_K(\beta,\xi,R_1,\ldots,R_K),
$$
where the derivatives defining \(H_\alpha\) are finite sums of products of
functions of the single variables \(\xi,R_1,\ldots,R_K\).  This follows by
induction from the recursion: differentiating \(U_r=\phi(R_r+\beta_rU_{r-1})\)
or \(D_r=\phi'(R_r+\beta_rU_{r-1})\) only produces products of derivatives of
\(\phi\) evaluated at one innovation and lower-order derivatives of previous
\(U_s\)'s.  Since the innovations are independent, taking expectation factors
each product, yielding \eqref{eq:app-separated-coefficients}.

The Taylor remainder is bounded by a finite sum of terms of the form
$$
  C\prod_{r=1}^K |\beta_r|^{\alpha_r} M_{\bf i},
  \qquad |\alpha|=K,
$$
where \(M_{\bf i}\) has bounded second moment uniformly in \(d\).  This uses the
boundedness, or polynomial growth together with subgaussianity, of the required
derivatives of \(\phi\).  After multiplication by the base factor, a term with
\(|\alpha|=K\) has tensor norm
$$
  \prod_{r=1}^K \norm{\theta_r^{\odot(\alpha_r+1)}}_2
  \le C d^{-K/2}
$$
by Lemma~\ref{lem:appendix-spherical-event}.  Summing over the finitely many
\(\alpha\)'s proves \eqref{eq:app-path-remainder-F}.
\end{proof}

We can now prove Theorem~\ref{thm:edgeworth-hierarchy-body}.

\begin{proof}[Proof of Theorem~\ref{thm:edgeworth-hierarchy-body}]
Work on the high-probability event of Lemma~\ref{lem:appendix-spherical-event}
with \(M=2K+2\), and also on the event where the recursive Edgeworth expansions
of Lemma~\ref{lem:appendix-coeff-expansion} hold for the finite family of test
functions generated by Lemma~\ref{lem:appendix-path-taylor}.  This combined event
has probability at least \(1-\delta\), after adjusting constants.

We now insert the path expansion \eqref{eq:app-path-taylor} into
\eqref{eq:app-G-Lambda}.  Fix a path degree \(\alpha\) with \(|\alpha|\le K-1\).
The corresponding tensor factor from the base spike and the path monomial is
$$
  \bigotimes_{r=1}^K \theta_r^{\odot(\alpha_r+1)},
$$
which has formal tensor order \(|\alpha|\).

Set
$$
    J_\alpha := K-|\alpha|.
$$
We invoke the slackened Edgeworth convention of
Remark~\ref{rem:edgeworth-slack-convention}: for each coefficient
factor in the separated product, we retain only coefficient terms of
formal order strictly smaller than \(J_\alpha\).  The explicit
coefficient terms of formal order exactly \(J_\alpha\), together with
the analytic polynomial-growth Edgeworth remainder from
Lemma~\ref{lem:weighted-edgeworth-poly-growth}, are placed in the remainder.
After multiplication by the path tensor of formal order \(|\alpha|\),
this contributes
$$
    O\!\left(d^{-(|\alpha|+J_\alpha)/2}\right)
    =
    O(d^{-K/2}).
$$
Thus the retained atoms have total formal order \(q<K\), and hence
belong to the levels \(G_\theta^{[0]},\ldots,G_\theta^{[K-1]}\), while
all order-\(K\) and smaller analytic remainders are absorbed into
\(\mathcal R_K\).

Applying Lemma~\ref{lem:appendix-coeff-expansion} with \(J=J_\alpha\) gives, for
each factor \(\E h_{\alpha,a,r}(R_{r,i_r})\), a finite expansion in scalar power
sums and local coordinate powers \(\theta_r(i_r)^v\).  Multiplying the finitely
many factors in the separated product and expanding produces terms of the form
$$
  C_\tau
  \left(\prod_{u=1}^K\prod_{m=1}^{2K+2}P_{u,m}^{\ell_{\tau,u,m}}\right)
  \prod_{r=1}^K\theta_r(i_r)^{v_{\tau,r}},
$$
with coefficient formal order strictly smaller than \(J_\alpha\).  Multiplying
by the base factor and by the path monomial gives the rank-one tensor atom
$$
  C_\tau
  \left(\prod_{u=1}^K\prod_{m=1}^{2K+2}P_{u,m}^{\ell_{\tau,u,m}}\right)
  \bigotimes_{r=1}^K
  \theta_r^{\odot(1+
  \alpha_r+v_{\tau,r})}.
$$
Its formal order is
$$
  |\alpha|+
  \sum_{r=1}^K v_{\tau,r}+
  \sum_{u=1}^K\sum_{m=1}^{2K+2}\ell_{\tau,u,m}\omega_m,
$$
which is strictly less than \(K\).  Grouping all atoms with the same formal order
\(q\in\{0,\ldots,K-1\}\) defines \(\mathcal G_\theta^{[q]}\) and yields
\eqref{eq:collected-hierarchy}.

It remains to bound the accumulated errors.  The path remainder is controlled by
Lemma~\ref{lem:appendix-path-taylor}.  For the coefficient errors, note that a
coefficient error in the \(r\)-th factor is multiplied in Frobenius norm by
$$
  \prod_{s\ne r}\norm{\theta_s^{\odot(\alpha_s+1)}}_2
  \cdot
  \norm{\theta_r^{\odot(\alpha_r+1)}
        \odot\rho_{h_{\alpha,a,r},r,J_\alpha}}_2
  \le C d^{-(|\alpha|+J_\alpha)/2}.
$$
Since \(J_\alpha=K-|\alpha|\), each such error is
\(O(d^{-K/2})\).  There are only \(O_K(1)\) path terms, separated
products, and coefficient factors, so the total remainder satisfies
$$
  \norm{\mathcal R_K}_F
  \le C_{\phi,K,\delta}d^{-K/2}.
$$
The level-size estimate \eqref{eq:level-size} follows immediately from
Lemma~\ref{lem:appendix-spherical-event}, because every atom in level \(q\) has
formal order \(q\).

We next prove the collected representation and the dictionary-rank bound.
For any atom, set
$$
  n_r:=b_{\tau,r}-1,
  \qquad n=(n_1,\ldots,n_K).
$$
The tensor-power part has formal order \(|n|\).  The remaining order
\(q-|n|\) is supplied by scalar Edgeworth power sums.  Since all nonzero scalar
orders \(\omega_m\), \(m\ge3\), are even, while \(P_{r,1}\) and \(P_{r,2}\) have
order zero, we must have \(q-|n|\) even.  There is also a parity constraint in
mode one.  The first path influence enters only through the symmetric product
\(\beta_1\xi\), with \(\xi\sim N(0,1)\).  Therefore
\(\Lambda_{\bf i}\) is an even function of \(\beta_1\), so only even powers of
\(\theta_1(i_1)\) beyond the base factor survive.  Hence \(n_1\) is even.
Thus all tensor profiles belong to
$$
  \mathcal N_{K,q}
  =
  \{n\in\mathbb N_0^K:n_1\text{ even},\ |n|\le q,
  \ q-|n|\text{ even}\}.
$$
Collecting scalar coefficients with the same profile yields
\eqref{eq:collected-hierarchy}.  Since each profile gives one rank-one tensor
atom, the CP rank is at most \(|\mathcal N_{K,q}|\).

Finally, compute this cardinality.  Write \(|n|=p=q-2k\).  If \(n_1=2a\), then
the remaining \(K-1\) coordinates sum to \(p-2a\), giving
\(\binom{K+p-2a-2}{p-2a}\) possibilities.  Therefore
$$
  |\mathcal N_{K,q}|
  =
  \sum_{k=0}^{\lfloor q/2\rfloor}
  \sum_{a=0}^{\lfloor(q-2k)/2\rfloor}
  \binom{K+q-2k-2a-2}{q-2k-2a}.
$$
Setting \(j=k+a\) gives
$$
  |\mathcal N_{K,q}|
  =
  \sum_{j=0}^{\lfloor q/2\rfloor}
  (j+1)\binom{K+q-2j-2}{q-2j},
$$
which is \eqref{eq:sharp-rank-bound}.  This completes the proof of the theorem.
\end{proof}

\subsection{Identification of the first two levels}

For completeness, we also justify the explicit formulas
\eqref{eq:first-level-leading} and \eqref{eq:first-order-secondary}.  At order
zero, all path influences are set to zero and all off-spine innovations are
replaced by standard Gaussians.  Hence
$$
  \Lambda_{\bf i}^{[0]}=\bigl(\E\phi'(G)\bigr)^K={\lambda},
$$
which gives
$$
  \mathcal G_\theta^{[0]}=
  {\lambda}~\theta_1\otimes\cdots\otimes\theta_K.
$$

At first order, write
$$
  A_r:=\phi'(R_r),
  \qquad
  B_r:=\phi''(R_r).
$$
A Taylor expansion gives
$$
  D_r=A_r+\beta_r B_r U_{r-1}+O(\beta_r^2U_{r-1}^2).
$$
At zeroth order in the earlier path influences,
$$
  U_0=\xi,
  \qquad
  U_{r-1}=\phi(R_{r-1})\quad (r\ge2).
$$
Thus the first-order part of \(\Lambda_{\bf i}\) is
$$
  \sum_{r=1}^K \beta_r
  \E\left[B_rU_{r-1}\prod_{q\ne r}A_q\right].
$$
The \(r=1\) term vanishes because \(U_0=\xi\) and \(\E\xi=0\).  For
\(r\ge2\), independence of the innovations gives, at Gaussian leading order,
$$
  \E\left[B_r\phi(R_{r-1})\prod_{q\ne r}A_q\right]
  =
  \E\phi''(G)\,\E[\phi(G)\phi'(G)]\,\kappa^{K-2}
  =\nu\gamma\kappa^{K-2}.
$$
All Edgeworth and deletion corrections to this coefficient have formal order at
least two, and hence do not contribute to level \(q=1\).  Multiplying by the base
factor \(\prod_q\theta_q(i_q)\) gives exactly
$$
  \mathcal G_\theta^{[1]}
  =
  \kappa^{K-2}\nu\gamma
  \sum_{r=2}^K
  \theta_1\otimes\cdots\otimes\theta_{r-1}
  \otimes \theta_r^{\odot2}
  \otimes \theta_{r+1}\otimes\cdots\otimes\theta_K.
$$
The remainder after subtracting the first two levels is the sum of all levels
\(q\ge2\) plus \(\mathcal R_K\), and therefore has Frobenius norm
\(O(d^{-1})\) by \eqref{eq:level-size} and
\eqref{eq:edgeworth-hierarchy-gradient-proof}.

\subsection{Tensor Unfolding Estimator}
\label{sec:spectral_estimation}
We conclude this section by studying the ability of spectral methods to efficiently estimate the
MSIM model. As it turns out, it is sufficient to consider the expansion at first-order
in Theorem \ref{thm:edgeworth-hierarchy-body} to obtain strong recovery guarantees that match the sample complexity of the Tensor PCA analogue.
We note that \cite{oymak2021learning} provided a spectral recovery algorithm very similar to ours based on a low-rank Tensor Decomposition, which is not directly amenable to computationally efficient implementation. Here we complete their analysis by focusing on the computationally-efficient tensor unfolding strategy.

Theorem \ref{thm:edgeworth-hierarchy-body} establishes a strong correspondence between the MSIM model and Tensor PCA \cite{montanari2014statistical}. As a result, we can now deploy a fairly standard tensor unfolding strategy \cite{NEURIPS2023_b14d76c7} that leads to a sample complexity of order $n\gg O(d^{\lceil K/2 \rceil}\log d)$ for strong recovery of the planted directions.

Let \((X_\ell,y_\ell)_{\ell=1}^n\) be iid samples from the model, \(y_\ell=f_\theta(X_\ell),\quad X_\ell\in(\R^d)^{\ot K}.\)
Define the empirical first Stein tensor
\begin{equation}
  T_n:=\frac1n\sum_{\ell=1}^n y_\ell X_\ell.
  \label{eq:empirical-stein}
\end{equation}
Since \(f_\theta\) is Lipschitz as a function of the Gaussian vector \(X\), Gaussian integration by parts gives
\begin{equation}
  \E_X[T_n\mid\theta]
  =\E_X[Xf_\theta(X)]
  =\E_X[\nabla f_\theta(X)].
  \label{eq:stein-identity}
\end{equation}
Indeed, each contraction has operator norm one and \(\phi\) is globally Lipschitz, so the weak Gaussian Stein identity applies by standard smooth approximation.
Let
\begin{equation}
  a:=\lfloor K/2\rfloor,
  \quad
  b:=K-a=\lceil K/2\rceil,
  \quad
  D_L:=d^a,
  \quad
  D_R:=d^b,
  \quad
  D_*:=\max(D_L,D_R)=d^b.
  \label{eq:balanced-dimensions}
\end{equation}
Let $\Mat_a:(\R^d)^{\ot K}\to\R^{D_L\times D_R}$ be the matricization that groups the first \(a\) tensor modes as rows and the remaining \(b\) modes as columns.

\paragraph{Recursive unfolding}
We consider the following estimator:
\begin{enumerate}[label=\textbf{Step \arabic*.}, leftmargin=1.5cm]
\item Define $\widehat M:=\Mat_a(T_n)\in\R^{D_L\times D_R}.$
\item Let \(\widehat u,\widehat v\) be top left and right singular vectors of \(\widehat M\).
\item Apply the following recursive rank-one vector factorization to \(\widehat u\) and \(\widehat v\). Given a unit vector \(q\in\R^{d^m}\), if \(m=1\), return \(q\). If \(m\ge2\), set \(m_1=\lfloor m/2\rfloor\), \(m_2=m-m_1\), reshape \(q\) as a \(d^{m_1}\times d^{m_2}\) matrix, take its top left and right singular vectors \(q_L,q_R\), and recurse on \(q_L\) and \(q_R\).
\end{enumerate}

We consider the following mild regularity assumption on $\phi$:
\begin{equation}
  \phi\in C^3(\R),
  \qquad
  \norm{\phi'}_\infty+
  \norm{\phi''}_\infty+
  \norm{\phi'''}_\infty<\infty.
  \tag{A2}\label{ass:regularity}
\end{equation}

\begin{theorem}[Moment-SVD estimation guarantee]\label{thm:main-estimation}
Assume \eqref{ass:normalization} and \eqref{ass:regularity}.
Fix directions \(\theta_1,\ldots,\theta_K\in S^{d-1}\) for which the population decomposition \eqref{eq:frobenius-sphere-updated} holds with \(\norm{W_\theta}_F\le b_d\). Define
\begin{equation}
  r_{n,d}:=b_d+C_{\phi,K}\left[
  \sqrt{\frac{D_*\log((D_L+D_R)/\delta)}{n}}
  +
  \frac{\sqrt{D_*}\,\log^{3/2}(n/\delta)\log((D_L+D_R)/\delta)}{n}
  \right].
  \label{eq:rnd}
\end{equation}
Conditional on these directions, with probability at least \(1-\delta\) over the samples, if  $r_{n,d}\le c_K\abs\lambda$,
then the recursive SVD estimator satisfies
\begin{equation}
  \max_{1\le k\le K}\distpm(\widehat\theta_k,\theta_k)
  \le
  C_K\frac{r_{n,d}}{\abs\lambda}.
  \label{eq:main-estimation-bound}
\end{equation}
The constants \(c_K,C_K\) depend only on \(K\).
\end{theorem}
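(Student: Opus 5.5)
Our plan is to split $\widehat M-\lambda\,\Mat_a(\theta_1\otimes\cdots\otimes\theta_K)$ into a deterministic bias and a sampling fluctuation, and then propagate the resulting operator-norm bound through the recursive SVD using Wedin's theorem. We write
\[
\widehat M=\lambda\, u_\star v_\star^\top+\Mat_a(W_\theta)+E_n,\qquad u_\star=\theta_1\otimes\cdots\otimes\theta_a,\ v_\star=\theta_{a+1}\otimes\cdots\otimes\theta_K,
\]
where $E_n:=\Mat_a(T_n-\E_X T_n)$. This uses the Stein identity \eqref{eq:stein-identity} and the decomposition \eqref{eq:frobenius-sphere-updated}. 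The bias satisfies $\norm{\Mat_a(W_\theta)}_\op\le\norm{W_\theta}_F\le b_d$. The main task is therefore to show that, with probability at least $1-\delta$, $\norm{E_n}_\op$ is at most the bracketed term in \eqref{eq:rnd}.

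For the fluctuation we will use a truncated matrix Bernstein inequality. We write $E_n=\frac1n\sum_\ell (Y_\ell-\E Y_\ell)$ with $Y_\ell=f_\theta(X_\ell)\Mat_a(X_\ell)$. Because $\E\phi(G)=0$ and $\phi$ is Lipschitz, $f_\theta$ is an $O_{\phi,K}(1)$-Lipschitz function of the Gaussian vector $X$ (each contraction has operator norm $1$). Its mean is $O(1)$, since at each layer the incoherence makes the preactivation approximately standard Gaussian; alternatively one can use $\E f_\theta^2\le C$ together with Gaussian concentration. Hence $f_\theta(X)$ is subgaussian with $O(1)$ constant.

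The variance proxy is
\[
\norm{\E Y Y^\top}_\op=\sup_{\norm{x}=1}\E f^2\norm{\Mat_a(X)^\top x}^2 .
\]
Here $\norm{\Mat_a(X)^\top x}^2$ is a $\chi^2$ variable with $D_R$ degrees of freedom. By Cauchy--Schwarz with subgaussian moments this expectation is $\lesssim D_R$, and symmetrically the other side is $\lesssim D_L$, so the proxy is $\lesssim D_*$. For the almost-sure bound we truncate on the event $|f_\theta(X_\ell)|\le C\sqrt{\log(n/\delta)}$ and $\norm{\Mat_a(X_\ell)}_\op\le C(\sqrt{D_*}+\sqrt{\log(n/\delta)})$ for all $\ell$, which gives $L\lesssim\sqrt{D_*}\log(n/\delta)$ up to logs. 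The bias that the truncation introduces in the mean is exponentially small. Matrix Bernstein then yields both terms of \eqref{eq:rnd}, with the extra $\log^{1/2}$ factor absorbing the truncation levels.

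We expect the recursive factorization to be the main obstacle. By Wedin's theorem, if $\norm{\widehat M-\lambda u_\star v_\star^\top}_\op\le r_{n,d}\le c_K|\lambda|$, then $\distpm(\widehat u,u_\star)$ and $\distpm(\widehat v,v_\star)$ are both $\le C r_{n,d}/|\lambda|$. At each recursion step, $q$ is close to a rank-one Kronecker product $q_\star=x\otimes y$, and $\Mat(q)$ differs from $xy^\top$ by at most $\norm{q-q_\star}_2$ in Frobenius norm, hence in operator norm. Applying Wedin again with unit spectral gap gives $\distpm(q_L,x)+\distpm(q_R,y)\le C\norm{q-q_\star}$. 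Since the recursion depth is $O(\log K)$, the constant grows only as a function of $K$, and the smallness condition $r_{n,d}\le c_K|\lambda|$ keeps every intermediate error below the gap threshold. Signs are handled throughout by $\distpm$. Together these bounds give \eqref{eq:main-estimation-bound}.
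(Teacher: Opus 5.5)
Your proposal follows the paper's proof essentially step for step. The shared steps are the bias/fluctuation split with $\norm{\Mat_a(W_\theta)}_\op\le\norm{W_\theta}_F\le b_d$, a truncated rectangular matrix Bernstein bound with variance proxy $\lesssim D_*$ obtained by Cauchy--Schwarz against the $\chi^2_{D_R}$ (resp.\ $\chi^2_{D_L}$) factor, rank-one Wedin on the balanced unfolding, and induction over the recursive Kronecker factorization using Wedin with unit gap at each level. Your caveat that bounding $\E f_\theta(X)$, hence $\norm{f_\theta(X)}_{\psi_2}$, needs an incoherence-type input is well placed: the paper's concentration lemma asserts this for arbitrary directions, where a later direction aligned with $\mathbf 1$ can amplify a nonzero intermediate mean by up to $\sqrt d$. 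In the intended incoherent setting, both arguments go through.
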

\begin{proof}
We use a standard matrix Bernstein concentration argument combined with a recursive application of Wedin's sine theorem, as in \cite{montanari2014statistical}.

On an event where the population bias satisfies \(\norm{W_\theta}_F\le b_d\), equations \eqref{eq:frobenius-sphere-updated} and \eqref{eq:stein-identity} imply
\begin{equation}
  M_*:=\Mat_a(\E_X[T_n\mid\theta])
  =\lambda uv^\top+\Mat_a(W_\theta),
  \qquad
  \norm{\Mat_a(W_\theta)}_\op\le b_d.
  \label{eq:population-matrix}
\end{equation}

The matrices in the empirical average are independent across samples, but their entries are not independent within a sample. The following lemma is the needed matrix Bernstein bound, with truncation to handle the unbounded Lipschitz activation.

\begin{lemma}[Matricized empirical Stein concentration]\label{lem:matrix-conc-app}
Assume \eqref{ass:regularity}. Conditional on any fixed directions \(\theta_1,\ldots,\theta_K\in S^{d-1}\), there is a constant \(C_{\phi,K}<\infty\) such that
$$
  \norm{f_\theta(X)}_{\psi_2}\le C_{\phi,K}.
$$
Consequently, for every \(\delta\in(0,1)\), with probability at least \(1-\delta\) over the samples,
\begin{equation}
  \norm{\Mat_a(T_n-\E_XT_n)}_\op
  \le
  C_{\phi,K}\left[
  \sqrt{\frac{D_*\log((D_L+D_R)/\delta)}{n}}
  +
  \frac{\sqrt{D_*}\,\log^{3/2}(n/\delta)\log((D_L+D_R)/\delta)}{n}
  \right].
  \label{eq:matrix-conc-full-app}
\end{equation}
In particular, if
$$
  n\ge C_{\phi,K}D_*\log^4((D_L+D_R)n/\delta),
$$
then the first term dominates and
\begin{equation}
  \norm{\Mat_a(T_n-\E_XT_n)}_\op
  \le
  C_{\phi,K}
  \sqrt{\frac{D_*\log((D_L+D_R)/\delta)}{n}}.
  \label{eq:matrix-conc-simple-app}
\end{equation}
\end{lemma}

We now consider two elementary perturbation facts.

\begin{lemma}[Rank-one Wedin bound]\label{lem:wedin-app}
Let
$$
  \widehat M=\lambda uv^\top+E,
  \qquad
  \norm u_2=\norm v_2=1,
  \qquad
  \lambda\ne0.
$$
If \(\norm E_\op\le\abs\lambda/4\), then any top left and right singular vectors \(\widehat u,\widehat v\) of \(\widehat M\) satisfy
\begin{equation}
  \distpm(\widehat u,u)
  \le C\frac{\norm E_\op}{\abs\lambda},
  \qquad
  \distpm(\widehat v,v)
  \le C\frac{\norm E_\op}{\abs\lambda}.
  \label{eq:wedin-app}
\end{equation}
\end{lemma}

\begin{lemma}[Recursive Kronecker factor extraction]\label{lem:recursive-app}
Fix \(m\ge1\) and unit vectors \(q_1,\ldots,q_m\in S^{d-1}\). Let
$$
  q=q_1\ot\cdots\ot q_m\in\R^{d^m}.
$$
Suppose \(\widehat q\in\R^{d^m}\) is unit norm and
$$
  \distpm(\widehat q,q)\le\eps\le c_m
$$
for a sufficiently small constant \(c_m>0\). Apply the recursive SVD factorization from Section 3.1 to \(\widehat q\). Then the returned vectors \(\widehat q_1,\ldots,\widehat q_m\) satisfy
\begin{equation}
  \max_{1\le r\le m}\distpm(\widehat q_r,q_r)
  \le C_m\eps,
  \label{eq:recursive-factor-bound-app}
\end{equation}
where \(C_m\) depends only on \(m\).
\end{lemma}

By \eqref{eq:population-matrix},
$$
  \widehat M=\Mat_a(T_n)=\lambda uv^\top+E,
$$
where
$$
  E:=\Mat_a(W_\theta)+\Mat_a(T_n-\E_XT_n).
$$
Thus
$$
  \norm E_\op
  \le
  \norm{W_\theta}_F+
  \norm{\Mat_a(T_n-\E_XT_n)}_\op
  \le r_{n,d}
$$
with probability at least \(1-\delta\) over the samples, by Lemma \ref{lem:matrix-conc}. If \(r_{n,d}\le c\abs\lambda\), Lemma \ref{lem:wedin} gives
$$
  \distpm(\widehat u,u)+\distpm(\widehat v,v)
  \le C\frac{r_{n,d}}{\abs\lambda}.
$$
Applying Lemma \ref{lem:recursive} to \(\widehat u\) and to \(\widehat v\) yields, for every factor in the left and right blocks,
$$
  \distpm(\widehat\theta_k,\theta_k)
  \le C_K\frac{r_{n,d}}{\abs\lambda}.
$$
This proves \eqref{eq:main-estimation-bound}.
\end{proof}

Combining this pointwise (in the planted direction) control with the uniform prior leads to

\begin{corollary}[Uniform planted prior]\label{cor:sample-size_body}
Let
 $ \theta_1,\ldots,\theta_K\stackrel{\mathrm{iid}}{\sim}\Unif(S^{d-1}).$
Assume fixed \(K\) and fixed activation \(\phi\) satisfying \eqref{ass:normalization}--\eqref{ass:regularity}.
Then, with probability $1-o_d(1)$ over both the random directions and the samples,
\begin{equation}
  \max_{1\le k\le K}\distpm(\widehat\theta_k,\theta_k)
  \le
  C_{\phi,K,\kappa}
  \left(
  \sqrt{\frac{\log d}{d}}
  +
  \sqrt{\frac{d^{\lceil K/2\rceil}\log d}{n}}
  \right).
  \label{eq:sharp-statistical-form}
\end{equation}
Consequently, if  $n\gg d^{\lceil K/2\rceil} \log d$,
then  $\max_{1\le k\le K}\distpm(\widehat\theta_k,\theta_k)
  =o_{d}\left(1\right)$.
\end{corollary}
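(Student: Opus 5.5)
The plan is to combine Theorem~\ref{thm:main-estimation}, applied conditionally on the directions, with Theorem~\ref{thm:edgeworth-hierarchy-body}, which supplies the bias level $b_d$ on a high-probability event over the uniform prior.

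First I would control the population bias. Theorem~\ref{thm:edgeworth-hierarchy-body} gives, for any fixed $\delta$, $\norm{W_\theta}_F\le C_{\phi,K,\delta}d^{-1/2}$ with probability at least $1-\delta$. To obtain a $1-o_d(1)$ statement I would instead fix $b_d:=C\sqrt{\log d/d}$. There are two routes to showing this bound holds with probability $1-o_d(1)$:
\begin{itemize}
\item Take $\delta=\delta_d\to0$ slowly and track the dependence of $C_{\phi,K,\delta}$ on $\delta$.
\item Redo the first-order estimate directly, using only the polynomially-high-probability bounds $\norm{\theta_r}_\infty\le C\sqrt{\log d/d}$ and $\norm{\theta_r^{\odot 2}}_2\le Cd^{-1/2}$ from Lemma~\ref{lem:appendix-spherical-event}.
\end{itemize}
The second route is cleaner. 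The level $\mathcal G^{[1]}_\theta$ is explicit, with norm $O(d^{-1/2})$ on this event. For the remainder, the crude bound $\Lambda_{\mathbf i}=\lambda+O(\sum_r|\theta_r(i_r)|+\text{CLT error})$ under \eqref{ass:regularity} yields Frobenius bias $O(\sqrt{\log d/d})$. The $\log$ factor absorbs the slack from the $\ell_\infty$ bound. This accounts for the first term in \eqref{eq:sharp-statistical-form}.

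Next, conditionally on $\theta$ in this event, I would apply Theorem~\ref{thm:main-estimation} with sample-level confidence $\delta=d^{-1}$. Since $D_*=d^{\lceil K/2\rceil}$ and $\log((D_L+D_R)/\delta)=O_K(\log d)$, the leading stochastic term of $r_{n,d}$ is $C\sqrt{d^{\lceil K/2\rceil}\log d/n}$. For the second term, if $n\ge d^{\lceil K/2\rceil}\operatorname{polylog}(d)$ it is dominated by the first. Otherwise the right-hand side of \eqref{eq:sharp-statistical-form} is at least a constant, and the bound holds trivially because $\distpm\le\sqrt2$ once $C_{\phi,K,\kappa}$ is large. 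In the regime $r_{n,d}\le c_K|\lambda|$, \eqref{eq:main-estimation-bound} gives $\distpm\le C_K r_{n,d}/|\lambda|$. In the complementary regime the same trivial bound applies after enlarging the constant, which is where the dependence on $\kappa$ (through $\lambda=\kappa^K$) enters.

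Finally, a union bound over the direction event and the sample event gives total probability $1-o_d(1)$. The "consequently" clause follows because $n\gg d^{\lceil K/2\rceil}\log d$ makes both terms $o_d(1)$.

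The main obstacle is making the bias bound hold with probability $1-o_d(1)$ rather than $1-\delta$ for fixed $\delta$, and verifying the polylog regime of the second concentration term. I would handle the first through the direct first-order Lindeberg estimate, at the cost of the $\sqrt{\log d}$ factor.
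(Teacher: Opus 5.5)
Your skeleton matches the paper's proof. Apply Theorem~\ref{thm:main-estimation} conditionally on the directions with $b_d\asymp\sqrt{\log d/d}$, take polynomially small sample confidence so that $\log((D_L+D_R)/\delta)=O_K(\log d)$, and union bound. Your first bias route is exactly what the paper does. It sets $\delta_\theta=1/\log d$ and notes that the spherical bounds degrade only polylogarithmically in $1/\delta$; in particular $|\sum_j\theta_r(j)|\le C\sqrt{\log(8K/\delta)}$ from \eqref{eq:sphere-s}. This gives $\|W_\theta\|_F\le C(\log\log d)^{a_{K,\phi}/2}d^{-1/2}\le C\sqrt{\log d/d}$.

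Your preferred second route is a worthwhile alternative. Done with $W_1$ Berry--Esseen bounds (using that $\phi'$ is Lipschitz), it needs only \eqref{ass:regularity}. By contrast, \eqref{eq:frobenius-sphere-updated} was derived under the stronger Assumption~\ref{ass:edgeworth-regularity}, which the corollary does not assume. However, as written the route omits an input, and the $\sqrt{\log d}$ is not spent where you say.

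The bounds on $\|\theta_r\|_\infty$ and $\|\theta_r^{\odot2}\|_2$ do not control the means of the off-spine sums $R_{r,a}$. Only $Y^{(1)}$ is exactly Gaussian, so $\E\,\phi(Y^{(1)})=0$. For $r\ge3$ the inputs $Z^{(r-1)}=\phi(Y^{(r-1)})$ have a finite-$d$ mean $\bar z_{r-1}:=\E\,\phi(Y^{(r-1)})\neq0$. Hence $R_{r,a}$ has mean $\bar z_{r-1}(P_{r,1}-\theta_r(a))$, and $\E\phi'(R_{r,a})-\kappa$ contains a term of order $|\bar z_{r-1}|\,|P_{r,1}|$ that multiplies the entire spike.

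Since $P_{r,1}=\sum_j\theta_r(j)$ is only $O_P(1)$, at probability $1-o_d(1)$ you can bound it only by a slowly diverging $\Omega_d$. The effect compounds through the layers, giving
\[
\|W_\theta\|_F\lesssim d^{-1/2}+\varepsilon_{\rm BE}(1+\Omega_d)^{K-2},\qquad \varepsilon_{\rm BE}:=\max_r\sum_j|\theta_r(j)|^3.
\]
If you bound $\varepsilon_{\rm BE}$ through $\|\theta_r\|_\infty$, as you propose, then $\varepsilon_{\rm BE}\asymp\sqrt{\log d/d}$ and you overshoot the claimed rate for every $K\ge3$.

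The repair has two parts. First, use $\sum_j|\theta_r(j)|^3\le Cd^{-1/2}$, which holds with probability $1-O(1/d)$ by concentration. Second, use $|P_{r,1}|\le\Omega_d$ with $\Omega_d\to\infty$ slowly enough that $\Omega_d^{K-2}\le\sqrt{\log d}$. The $\sqrt{\log d}$ then pays for $P_{r,1}$, exactly as the $(\log\log d)^{a_{K,\phi}/2}$ factor does in the paper.

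On the sample side, your case split leaves a window. Suppose ``polylog'' means the $\log^4$ of Lemma~\ref{lem:matrix-conc}. Then for $d^{\lceil K/2\rceil}\log d\ll n\ll d^{\lceil K/2\rceil}\log^4 d$ the right-hand side of \eqref{eq:sharp-statistical-form} is $o_d(1)$, so the trivial bound $\sqrt2$ does not help. Yet you have not shown the envelope term is dominated there.

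Compute the ratio of the two stochastic terms in $r_{n,d}$ directly. It equals $\log^{3/2}(n/\delta)\sqrt{\log((D_L+D_R)/\delta)}/\sqrt n$, which is $o_d(1)$ uniformly over $n\ge d^{\lceil K/2\rceil}\log d$ when $\delta=d^{-1}$. Splitting at $n=d^{\lceil K/2\rceil}\log d$ closes the argument: below this threshold the right-hand side is at least $C_{\phi,K,\kappa}\ge\sqrt2$. The paper simply assumes the envelope term is dominated, so once corrected your treatment of this step is more complete than the paper's.
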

This rate coincides, up to logarithmic factors, with the balanced-unfolding computational threshold in Tensor PCA \cite{dudeja2021statistical}. Whether this threshold reflects a genuine computational barrier for MSIM remains an interesting open question.

\begin{proof}[Proof of Corollary \ref{cor:sample-size_body}]
Let us consider the confidence levels $\delta_\theta = 1/\log d$ over directions and $\delta_{X} = d^{-c}$ over samples.
Write \(\theta=g/\norm g_2\), where \(g\sim N(0,I_d)\). Standard Gaussian concentration gives, after a union bound over \(r\le K\),
\begin{equation}
  \max_{1\le r\le K}\norm{\theta_r}_\infty
  \le C\sqrt{\frac{\log(8Kd/\delta)}{d}}
  \label{eq:sphere-infty}
\end{equation}
with probability at least \(1-\delta/4\). Similarly,
$$
  \sum_{j=1}^d\theta(j)=\frac{\sum_{j=1}^d g_j}{\norm g_2},
$$
with \(\sum_jg_j\sim N(0,d)\), so
\begin{equation}
  \max_{1\le r\le K}\abs{\sum_{j=1}^d\theta_r(j)}
  \le C\sqrt{\log(8K/\delta)}
  \label{eq:sphere-s}
\end{equation}
with probability at least \(1-\delta/4\).
Standard moment concentration for \(g\sim N(0,I_d)\) gives, again after a union bound over \(r\le K\),
\begin{equation}
  \max_{r\le K}\sum_{j=1}^d\abs{\theta_r(j)}^3
  \le C_{K,\delta}d^{-1/2},
  \qquad
  \max_{r\le K}\norm{\theta_r}_4^4
  \le C_{K,\delta}d^{-1}
  \label{eq:sphere-zeta-l4}
\end{equation}
with probability at least \(1-\delta/2\).

Then using \eqref{eq:frobenius-sphere-updated} and the bounds on $s$, $\zeta$ and $\|\theta\|_4^4$ from \eqref{eq:sphere-s}, \eqref{eq:sphere-zeta-l4}, we have, with probability $1- \delta_{\theta}$,
\begin{align}
\label{eq:frob_bound}
    \|W_\theta\|_F & \leq C_{\phi, K} \frac{(\log \log d)^{a_{K, \phi}/2}}{\sqrt{d}} \leq C_{\phi, K} \sqrt{\frac{\log d}{d}}~,
\end{align}
where the second inequality holds for sufficiently large $d$.
Concerning the empirical term, for \(\delta_X=d^{-c}\),
$$
  \log((D_L+D_R)/\delta_X)=O_{K,c}(\log d),
  \qquad
  D_*=d^{\lceil K/2\rceil}.
$$
The simplified concentration bound \eqref{eq:matrix-conc-simple} gives
\begin{align}
\label{eq:bernbound}
  \norm{\Mat_a(T_n-\E_XT_n)}_\op
  &=O_{\phi,K,c}\left(
  \sqrt{\frac{d^{\lceil K/2\rceil}\log d}{n}}
  \right)
\end{align}
whenever the lower-order Bernstein envelope term is dominated.

Substituting \eqref{eq:frob_bound} and \eqref{eq:bernbound} into Theorem \ref{thm:main-estimation} proves \eqref{eq:sharp-statistical-form}.

\end{proof}

\begin{remark}[On necessity of non-degeneracy and incoherence assumptions]
A natural question is to what extent our assumptions are necessary for efficient recovery of the planted directions. First, if $\phi$ has information-exponent $s>1$, the first chaos expansion $\langle f(X), X\rangle$ may fail to be informative, at least uniformly in the directions $(\theta_k)_k$. Indeed, its chaos expansion depends on the intricate interleaving of compositions of $\phi$ alongside linear combinations with the planted directions; we leave this extension as an interesting follow-up.
Next, it is also clear that our delocalization hypothesis is not strictly necessary: for instance, if $\|\theta_k \|_0 \leq C = O_d(1)$ for each $k \in [K]$, then we easily verify that $f(X)$ is a multi-index model with index dimension $\leq C^{K-1}$, and therefore can be efficiently learnt at the polynomial scale. Another interesting question is thus to understand the intermediate regime between delocalization and sparsity.
\end{remark}

\section{Higher chaos and Depth Separation}
\label{sec:higher-chaos-cp-incompressibility}

In this Section, we show that the staircase spectral profile of the first chaos is also present in high-order chaos terms.
As a consequence, we obtain in Theorem \ref{thm:tail-depth-separation} a \emph{depth separation} result, which verifies that the MSIM model cannot be efficiently approximated by Shallow NNs as soon as $K>1$ under a natural extension of our non-degeneracy assumptions.

The main idea of the proof, dating back to \cite{eldan2016power}, and used closer to our setting in \cite{NEURIPS2022_d65befe6, ICLR2025_9c537882}, is to show that the high-order chaos of $f_\theta$ are all sufficiently spread, in the sense that they do not admit a low-rank CP factorization. Combined with a slow decay of the harmonic tails, the resulting spectral distribution cannot be efficiently captured by sums of ridge functions.

\subsection{Towards CP incompressibility}
\label{sec:CPincompress}
The first-chaos hierarchy of Theorem~\ref{thm:edgeworth-hierarchy-body} 
gives a constructive CP approximation of the first Wiener chaos in terms of a finite number of atoms. 
For higher chaoses, we will be interested in the opposite direction, namely certifying non-compressibility.  
In this subsection we give a useful
certificate for such non-compressibility, based on balanced matrix flattenings.  The main
takeaway is that, under the natural higher-chaos non-cancellation condition,
the balanced flattening of the $R$-th chaos has a staircase singular-value profile with values of order $d^{-\rho/2}$ and multiplicity of order $d^\rho$, for $\rho = 0, \ldots, (K-1)R/2$. Consequently, any low-CP-rank approximation leaves a constant Frobenius error until the CP rank reaches the size of the corresponding staircase plateau.

Let
$$
    V_d := (\R^d)^{\otimes K},
    \qquad D:=\dim(V_d)=d^K .
$$
For an integer $R\ge2$, define the $R$-th population Stein tensor by
\begin{equation}
    \mathcal C_\theta^{(R)}
    := \E_X\big[\nabla_X^R f_\theta(X)\big]
    \in \operatorname{Sym}^R(V_d).
    \label{eq:R-chaos-definition}
\end{equation}
Equivalently, by Gaussian integration by parts, \eqref{eq:R-chaos-definition}
is the $R$-th Wiener/Hermite chaos coefficient of the target $f_\theta$.
For a subset $S\subset[R]$, write
$$
    \operatorname{Mat}_S(T)
    := \operatorname{Mat}_{S\mid S^c}(T)
    \in \R^{D^{|S|}\times D^{R-|S|}}
$$
for the flattening of $T\in V_d^{\otimes R}$ with the tensor copies in $S$ on
the row side and those in $S^c$ on the column side.  We shall mostly take a
balanced cut, i.e. $|S|=\lfloor R/2\rfloor$.  When $R$ is even and
$|S|=R/2$, the matrix is symmetric after the canonical identification of the
row and column spaces, and its singular values are the absolute values of its
eigenvalues.

For $m\ge0$, let
$$
    e_m^{\rm CP}(T)
    := \inf_{\operatorname{rank}_{\rm CP}(A)\le m}\|T-A\|_F
$$
be the best CP-rank-$m$ approximation error.

\begin{lemma}[Flattening lower bound for CP approximation]
\label{lem:flattening-lower-bound-cp}
For every $T\in V_d^{\otimes R}$, every $S\subset[R]$, and every $m\ge0$,
\begin{equation}
    e_m^{\rm CP}(T)
    \ge
    \left(\sum_{j>m}
    \sigma_j\big(\operatorname{Mat}_S(T)\big)^2\right)^{1/2},
    \label{eq:cp-error-lower-bound-flattening}
\end{equation}
where $\sigma_1\ge\sigma_2\ge\cdots$ denote singular values.
\end{lemma}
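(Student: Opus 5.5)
The plan is to reduce the CP approximation problem to a matrix low-rank approximation problem through the flattening map, and then invoke Eckart--Young--Mirsky. Two observations drive this.

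\emph{First, flattening is an isometry.} The map $\operatorname{Mat}_S$ is a linear isometry from $V_d^{\otimes R}$, equipped with the Frobenius norm, onto $\R^{D^{|S|}\times D^{R-|S|}}$, equipped with the Frobenius norm. It merely rearranges entries, so for every $A$,
$$\|T-A\|_F=\|\operatorname{Mat}_S(T)-\operatorname{Mat}_S(A)\|_F.$$

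\emph{Second, flattening does not increase rank.} Take a CP rank-one tensor $a^{(1)}\otimes\cdots\otimes a^{(R)}$ with $a^{(i)}\in V_d$. Its flattening is the outer product
$$\Big(\bigotimes_{i\in S}a^{(i)}\Big)\Big(\bigotimes_{i\in S^c}a^{(i)}\Big)^\top,$$
which has matrix rank at most one. Flattening is linear, so if $\operatorname{rank}_{\rm CP}(A)\le m$, then $\operatorname{Mat}_S(A)$ is a sum of at most $m$ rank-one matrices. By subadditivity of rank, $\operatorname{rank}\operatorname{Mat}_S(A)\le m$.

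\emph{Conclusion via Eckart--Young--Mirsky.} Combining the two observations, for any $A$ with CP rank at most $m$,
$$\|T-A\|_F\ \ge\ \inf_{\operatorname{rank}(B)\le m}\|\operatorname{Mat}_S(T)-B\|_F.$$
Eckart--Young--Mirsky in the Frobenius norm evaluates the right-hand side exactly as
$$\Big(\sum_{j>m}\sigma_j(\operatorname{Mat}_S(T))^2\Big)^{1/2}.$$
Taking the infimum over $A$ then yields \eqref{eq:cp-error-lower-bound-flattening}.

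Two edge cases need no separate treatment. For $m=0$ the bound reads $\|T\|_F\ge\|\operatorname{Mat}_S(T)\|_F$, which holds with equality. If $S$ is empty or equals $[R]$, the flattening is a vector, and the bound holds trivially.

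I expect no real obstacle here. The only point requiring care is to state the rank-one-to-rank-one property for CP atoms in $V_d^{\otimes R}$, where each factor is an element of $V_d$, as opposed to atoms in $(\R^d)^{\otimes KR}$. The argument covers both choices, since a rank-one atom in the finer space is in particular rank one after any grouping of modes.
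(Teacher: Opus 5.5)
Your proposal is correct and follows the paper's proof: CP rank-one tensors flatten to rank-one matrices, so every CP-rank-$m$ tensor has flattening rank at most $m$, and Eckart--Young then gives the bound. You also make explicit the Frobenius isometry of $\operatorname{Mat}_S$ and the infimum-over-$A$ step, which the paper leaves implicit.
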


\begin{proof}
A CP rank-one tensor $u_1\otimes\cdots\otimes u_R$ becomes the rank-one matrix
$$
    \left(\bigotimes_{a\in S}u_a\right)
    \left(\bigotimes_{a\in S^c}u_a\right)^\top
$$
under the flattening $\operatorname{Mat}_S$.  Hence every CP-rank-$m$ tensor
has flattening rank at most $m$.  Applying the Eckart--Young theorem to the
matrix $\operatorname{Mat}_S(T)$ gives \eqref{eq:cp-error-lower-bound-flattening}.
\end{proof}

\subsection{The leading $R$-spine Gaussian expansion}
\label{subsec:R-spine-leading-expansion}

We now compute the leading Gaussian approximation of the $R$-th chaos, generalizing Theorem \ref{thm:edgeworth-hierarchy-body}. The new feature of the high-order expansion is that now it is naturally a \emph{two-parameter} expansion, where the previous Edgeworth order from signed power sums is now supplemented by additional local coordinate powers coming from high-order derivative coalescence structure.
Let us start with additional definitions needed in the following.

Let
$\Pi_R$ be the lattice of partitions of $[R]$, ordered by refinement:
$\pi\preceq\pi'$ means that $\pi$ refines $\pi'$.  Let $\widehat 0_R$ denote
the discrete partition and $\widehat 1_R$ the one-block partition.  An
$R$-spine coalescence chain is a sequence
\begin{equation}
    \boldsymbol\pi=(\pi_1,\ldots,\pi_{K+1})
    \quad\text{such that}\quad
    \pi_1=\widehat 0_R,
    \quad \pi_\ell\preceq\pi_{\ell+1}\ (1\le\ell\le K),
    \quad \pi_{K+1}=\widehat 1_R .
    \label{eq:R-spine-chain-definition}
\end{equation}
We denote by $\mathfrak P_{K,R}$ the finite set of such coalescence chains.
The partition $\pi_\ell$ encodes which derivative spines have already
coalesced before choosing the physical coordinate at scale $\ell$.  Thus a
non-singleton block of $\pi_\ell$ produces a copy-diagonal constraint at the
$\ell$-th physical coordinate.  The final partition $\pi_{K+1}=\widehat 1_R$
encodes the fact that all spines eventually merge at the scalar output, but it does not
itself create an additional physical-coordinate diagonal.

For a unit vector $u\in S^{d-1}$ and a nonempty block $B\subset[R]$, define the
copy-diagonal tensor
\begin{equation}
    \Delta_{u,B}
    := \sum_{i=1}^d u(i)\bigotimes_{r\in B}e_i^{(r)}
    \in \bigotimes_{r\in B}\R^d .
    \label{eq:copy-diagonal-block}
\end{equation}
When $B=\{r\}$ this is simply the vector $u$ placed in copy $r$.  For a
partition $\pi\in\Pi_R$, set
\begin{equation}
    \Delta_{u,\pi}
    := \bigotimes_{B\in\pi}\Delta_{u,B}
    \in (\R^d)^{\otimes R},
    \label{eq:copy-diagonal-partition-tensor}
\end{equation}
where the tensor factors are placed in the natural copy modes $1,\ldots,R$.
Given a coalescence chain $\boldsymbol\pi$, define the associated multiscale
copy-diagonal atom
\begin{equation}
    \mathcal A_{\boldsymbol\pi}(\theta)
    := \bigotimes_{\ell=1}^K \Delta_{\theta_\ell,\pi_\ell}
    \in \big((\R^d)^{\otimes K}\big)^{\otimes R}=V_d^{\otimes R},
    \label{eq:multiscale-copy-diagonal-atom}
\end{equation}
with the obvious reordering from layer-major to copy-major tensor indices.

Let
\begin{equation}
    \mu_a := \E_{G\sim N(0,1)}\big[\phi^{(a)}(G)\big],
    \qquad a\ge1,
    \label{eq:mu-a-definition}
\end{equation}
so that $\mu_1=\kappa$.  For $B\in\pi_{\ell+1}$, let
\begin{equation}
    n_\ell(B)
    := \#\{C\in\pi_\ell: C\subseteq B\}
    \label{eq:transition-child-count}
\end{equation}
be the number of child blocks of $B$ in the transition
$\pi_\ell\preceq\pi_{\ell+1}$.  Define the Gaussian coalescence coefficient
\begin{equation}
    c_\phi(\boldsymbol\pi)
    := \prod_{\ell=1}^{K}\prod_{B\in\pi_{\ell+1}}
    \mu_{n_\ell(B)} .
    \label{eq:coalescence-coefficient}
\end{equation}
Thus each node of the $R$-spine chain contributes the Gaussian derivative
moment corresponding to the number of incoming child branches at that node.

For the flattening estimates we fix a cut $S\subset[R]$.  For a block
$B\subset[R]$, say that $B$ crosses the cut if
\begin{equation}
    B\cap S\ne\varnothing
    \qquad\text{and}\qquad
    B\cap S^c\ne\varnothing .
    \label{eq:cut-crossing-block-definition}
\end{equation}
For a partition $\pi\in\Pi_R$, write
\begin{equation}
    \chi_S(\pi)
    :=\{B\in\pi: B\cap S\ne\varnothing,\ B\cap S^c\ne\varnothing\},
    \qquad
    c_S(\pi):=|\chi_S(\pi)| .
    \label{eq:local-crossing-number}
\end{equation}
For a chain $\boldsymbol\pi$, define its crossing complexity by
\begin{equation}
    \rho_S(\boldsymbol\pi)
    := \sum_{\ell=1}^K c_S(\pi_\ell).
    \label{eq:crossing-complexity-chain}
\end{equation}
Since $\pi_1=\widehat 0_R$, the first layer contributes no crossing blocks.  If
$|S|=s\le R-s$, then
\begin{equation}
    0\le \rho_S(\boldsymbol\pi)\le (K-1)s.
    \label{eq:rho-max-balanced}
\end{equation}
For a balanced cut this gives
$$
    \rho_{\max}=(K-1)\lfloor R/2\rfloor .
$$

We now make explicit the notation used in the two-parameter $(K,R)$ hierarchy.
Fix a truncation order $A$. The $R$-spine Fa\`a di Bruno
expansion, followed by the same weighted Edgeworth expansion used in Theorem~\ref{thm:edgeworth-hierarchy-body}, produces a finite set $\mathfrak T_{K,R,A}$ of decorated spine symbols.  A symbol $\tau\in\mathfrak T_{K,R,A}$ consists of:
\begin{itemize}
    \item a coalescence chain $\boldsymbol\pi(\tau)\in\mathfrak P_{K,R}$;
    \item a scalar coefficient $c_\tau(\phi)$ depending only on finitely many
    Gaussian derivative moments of $\phi$;
    \item nonnegative integers $p_{\tau,u,m}$, encoding the power-sum monomial
    \begin{equation}
        P_\tau(\theta)
        :=\prod_{u=1}^K\prod_{m=3}^{m_A}
        P_{u,m}^{p_{\tau,u,m}},
        \qquad
        P_{u,m}=\sum_{a=1}^d\theta_u(a)^m;
        \label{eq:decorated-power-sum-monomial}
    \end{equation}
    \item integers $b_{\tau,\ell,B}\ge1$, for
    $1\le\ell\le K$ and $B\in\pi_\ell(\tau)$, encoding the local coordinate
    powers.
\end{itemize}
Here $m_A$ and all exponents are bounded by constants depending only on
$(K,R,A)$.  We use the notation $\Delta_{v,B}$ from
\eqref{eq:copy-diagonal-block} also when $v\in\R^d$ is not a unit vector.  The
matrix atom associated with $\tau$ is
\begin{equation}
    \mathsf A_{S,\tau}(\theta)
    :=
    \operatorname{Mat}_S\left(
    \bigotimes_{\ell=1}^K
    \bigotimes_{B\in\pi_\ell(\tau)}
    \Delta_{\theta_\ell^{\odot b_{\tau,\ell,B}},B}
    \right).
    \label{eq:decorated-R-spine-matrix-atom}
\end{equation}
Define its two orders by
\begin{align}
    q(\tau)
    &:=
    \sum_{u=1}^K\sum_{m=3}^{m_A}p_{\tau,u,m}\omega_m
    +\sum_{\ell=1}^K\sum_{B\in\pi_\ell(\tau)}(b_{\tau,\ell,B}-1),
    \label{eq:decorated-edgeworth-order}\\
    \rho_S(\tau)
    &:=\rho_S(\boldsymbol\pi(\tau))
    =\sum_{\ell=1}^K c_S(\pi_\ell(\tau)).
    \label{eq:decorated-crossing-order}
\end{align}
The first term in $q(\tau)$ is the usual scalar Edgeworth order coming from
signed power sums, while the second term encodes the extra local coordinate
powers beyond the base copy-diagonal factor.  With this convention, one
crossing block contributes the matrix scale $d^{-1/2}$ through $\rho_S(\tau)$;
all other small factors are counted in $q(\tau)$.
For fixed $A$ and $S$ we define, suppressing the harmless dependence on $A$,
the sum of all flattened
Edgeworth atoms whose non-Gaussian, non-diagonal order is $q$ and whose matrix
crossing complexity is $\rho$, as
\begin{equation}
    \mathcal M_S^{[q,\rho]}
    :=
    \sum_{\substack{\tau\in\mathfrak T_{K,R,A}:\ q(\tau)=q,\ \rho_S(\tau)=\rho}}
    c_\tau(\phi)P_\tau(\theta)\mathsf A_{S,\tau}(\theta).
    \label{eq:M-S-q-rho-definition}
\end{equation}
When $q=0$, all power-sum decorations are absent and all
$b_{\tau,\ell,B}=1$; hence
\begin{equation}
    \mathcal M_S^{[0,\rho]}
    =
    \sum_{\boldsymbol\pi\in\mathfrak P_{K,R}:\rho_S(\boldsymbol\pi)=\rho}
    c_\phi(\boldsymbol\pi)
    \operatorname{Mat}_S\big(\mathcal A_{\boldsymbol\pi}(\theta)\big).
    \label{eq:M-S-zero-rho-explicit}
\end{equation}

\begin{proposition}[Leading Gaussian $R$-spine expansion]
\label{prop:leading-R-spine-expansion}
Assume the smoothness condition of Assumption~\ref{ass:edgeworth-regularity} at an order
large enough depending only on $(K,R,A)$.  The leading, zeroth-Edgeworth level of
\eqref{eq:R-chaos-definition} is
\begin{equation}
    \mathcal C_\theta^{(R),[0]}
    = \sum_{\boldsymbol\pi\in\mathfrak P_{K,R}}
      c_\phi(\boldsymbol\pi)\mathcal A_{\boldsymbol\pi}(\theta),
    \label{eq:leading-R-spine-expansion}
\end{equation}
Moreover, for every fixed $A$ and every
cut $S\subset[R]$, the full $R$-th chaos admits the two-parameter Edgeworth
refinement
\begin{equation}
    \operatorname{Mat}_S\big(\mathcal C_\theta^{(R)}\big)
    = \sum_{a=0}^{A}\mathcal M_{S}^{\langle a\rangle}+\mathcal E_{S,A+1},
    \qquad
    \mathcal M_{S}^{\langle a\rangle}
    = \sum_{q+\rho=a}\mathcal M_{S}^{[q,\rho]},
    \label{eq:R-chaos-two-parameter-hierarchy}
\end{equation}
where $\mathcal M_S^{[q,\rho]}$ is defined explicitly in
\eqref{eq:M-S-q-rho-definition}.  On the spherical incoherence event of
Lemma~\ref{lem:appendix-spherical-event},
\begin{equation}
    \operatorname{rank}\big(\mathcal M_S^{[q,\rho]}\big)
    \le C_{K,R,A}d^\rho,
    \qquad
    \big\|\mathcal M_S^{[q,\rho]}\big\|_{\rm op}
    \le C_{\phi,K,R,A}(\log d)^{C_{K,R,A}}d^{-(q+\rho)/2},
    \label{eq:R-chaos-rank-op-bound}
\end{equation}
while
\begin{equation}
    \big\|\mathcal E_{S,A+1}\big\|_{\rm op}
    \le C_{\phi,K,R,A}(\log d)^{C_{K,R,A}}d^{-(A+1)/2}.
    \label{eq:R-chaos-remainder-op-bound}
\end{equation}
The level $q=0$ in \eqref{eq:R-chaos-two-parameter-hierarchy} is exactly the
flattening of \eqref{eq:leading-R-spine-expansion} grouped by crossing
complexity.
\end{proposition}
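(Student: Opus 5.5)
We follow the proof of Theorem~\ref{thm:edgeworth-hierarchy-body}, now tracking $R$ derivative spines. The starting point is the $R$-fold chain rule (Fa\`a di Bruno) applied to $f_\theta=Z^{(K)}$. An $R$-th derivative $\partial^R f/\partial X_{\mathbf i^{(1)}}\cdots\partial X_{\mathbf i^{(R)}}$ is nonzero only if the $R$ leaf paths are consistent with the tree. Two spines $r,r'$ that pass through the same node at scale $\ell$ must share all coordinates $i_\ell,\ldots,i_K$. The set of spines that have merged before the physical coordinate at scale $\ell$ is selected is therefore a partition $\pi_\ell$, and these partitions coarsen as $\ell$ increases, ending in $\widehat 1_R$ at the output. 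This is exactly a chain in $\mathfrak P_{K,R}$.

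For a fixed chain, the derivative factorizes as follows:
\[
\prod_{\ell}\prod_{B\in\pi_\ell}\theta_\ell(i_\ell^B)
\quad\times\quad
\prod_{\ell}\prod_{B'\in\pi_{\ell+1}}\phi^{(n_\ell(B'))}\bigl(Y^{(\ell)}_{B'}\bigr),
\]
where the second product is evaluated at the nodes where blocks merge. Here $n_\ell(B')$ counts the children of $B'$, and for $n_\ell(B')=1$ the factor is a plain $\phi'$ along the spine. The first product is precisely the coordinate structure of $\Delta_{\theta_\ell,\pi_\ell}$. Taking $\E_X$ gives the identity $\mathcal C^{(R)}_\theta=\sum_{\boldsymbol\pi}\mathcal A_{\boldsymbol\pi}(\theta)\odot\Lambda^{\boldsymbol\pi}$, which is the analogue of \eqref{eq:exact-factorization}. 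Here $\Lambda^{\boldsymbol\pi}$ is the expectation of the product of derivative factors at the nodes of the merge tree.

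Next we expand $\Lambda^{\boldsymbol\pi}$ exactly as in Lemmas~\ref{lem:appendix-path-taylor} and \ref{lem:appendix-coeff-expansion}. At each node we decompose the preactivation as innovation plus $\theta_\ell(i)\times(\text{child outputs})$. The innovations at distinct nodes of the merge tree depend on disjoint leaf sets, so they are independent. We Taylor expand in the local coordinates $\beta$ up to total degree $A$. Each separated innovation expectation $\E h(R)$ is then expanded through the weighted Edgeworth expansion of Lemma~\ref{lem:weighted-edgeworth-poly-growth}, using the slack convention of Remark~\ref{rem:edgeworth-slack-convention}.

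At zeroth order every innovation is replaced by $G\sim N(0,1)$ and all $\beta$'s are set to $0$. Each node then contributes $\mu_{n_\ell(B')}$, which gives $c_\phi(\boldsymbol\pi)$ and hence \eqref{eq:leading-R-spine-expansion}. The higher terms are the decorated symbols $\tau$. Taylor powers and deletion powers raise the exponents $b_{\tau,\ell,B}$, and the Edgeworth cumulants contribute the monomials $P_\tau$. Grouping them by $(q(\tau),\rho_S(\tau))$ defines $\mathcal M^{[q,\rho]}_S$. Everything discarded has total order above $A$ and goes into $\mathcal E_{S,A+1}$.

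The rank and norm bounds come from the structure of a single flattened atom. Non-crossing blocks factor into either the row side or the column side, so they contribute a rank-one tensor product. A crossing block $\Delta_{v,B}$ flattens to $\sum_i v(i)\,(e_i^{\otimes |B\cap S|})(e_i^{\otimes|B\cap S^c|})^\top$. This has rank at most $d$, and its operator norm is $\|v\|_\infty\lesssim\sqrt{\log d/d}\cdot\|\theta_\ell^{\odot (b-1)}\|_\infty$. Meanwhile a non-crossing block $\Delta_{v,B}$ has Frobenius norm $\|v\|_2$.

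Taking Kronecker products over blocks, a flattened atom has rank at most $d^{\rho_S(\tau)}$. Its operator norm is at most $(\log d)^{C}d^{-\rho/2}$ times $\prod\|\theta^{\odot b}\|\cdot|P_\tau|\lesssim d^{-q/2}$, by Lemma~\ref{lem:appendix-spherical-event}. The powers beyond the base factor count toward $q$, and the crossing diagonal itself costs $d^{-1/2}$ up to logarithmic factors. There are $O_{K,R,A}(1)$ atoms, which gives \eqref{eq:R-chaos-rank-op-bound}.

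The main obstacle is the remainder $\mathcal E_{S,A+1}$ in operator norm. Its entries are non-explicit remainder functions multiplied by copy-diagonal atoms, so we cannot simply factor it. We will bound it entrywise in the same way as in Lemma~\ref{lem:appendix-coeff-expansion}. The remainder is a Hadamard product of a structured atom with a bounded coefficient array whose tensorized size is controlled as in \eqref{eq:app-coeff-remainder-tensorized}.

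For the non-crossing parts we bound the operator norm by the Frobenius norm. For a crossing diagonal we use the fact that a block-diagonal matrix, with blocks indexed by the shared coordinate $i$, has operator norm equal to the maximum over $i$ of the block norms. This costs the sup-norm $\sqrt{\log d/d}$ per crossing, and at worst it loses logarithmic factors, which are absorbed into $(\log d)^{C}$. Finally, the Taylor and Edgeworth truncation orders are chosen large enough depending on $(K,R,A)$, so that every discarded piece has combined order at least $A+1$.
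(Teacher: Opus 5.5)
Your proposal is correct and follows essentially the same route as the paper: the $R$-fold Fa\`a di Bruno expansion organized by coalescence chains, Gaussian replacement of the independent node innovations to obtain $c_\phi(\boldsymbol\pi)$, the separated Taylor/weighted-Edgeworth expansion that produces the decorated atoms grouped by $(q,\rho)$, and the crossing-block rank bound together with sup-norm (crossing) and $\ell_2$-norm (non-crossing) operator bounds for a single flattened atom. Your block-diagonal treatment of $\mathcal E_{S,A+1}$ spells out what the paper only asserts by ``repeating the bound''; note that at the crossing coordinates you need the coordinate-uniform (pointwise) remainder bound that the proof of Lemma~\ref{lem:appendix-coeff-expansion} supplies, not just its tensorized $\ell_2$ form \eqref{eq:app-coeff-remainder-tensorized}.
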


\begin{proof}
Write an input coordinate in the $a$-th
copy as
$\mathbf i^{(a)}=(i^{(a)}_1,\ldots,i^{(a)}_K)$.  Applying
$\partial_{\mathbf i^{(1)}}\cdots\partial_{\mathbf i^{(R)}}$ to the computation
tree of the MSIM produces $R$ derivative spines.  At physical layer $\ell$, two
spines are in the same block precisely when, before the coordinate
$i_\ell^{(a)}$ is chosen, they have coalesced into the same upper-layer node.
Thus the possible coalescence histories are exactly the chains
$\boldsymbol\pi=(\pi_1,\ldots,\pi_{K+1})$ in
\eqref{eq:R-spine-chain-definition}.

Fix such a chain.  During the transition
$\pi_\ell\preceq\pi_{\ell+1}$, each block $B\in\pi_{\ell+1}$ is formed from
$n_\ell(B)$ child blocks of $\pi_\ell$.  Applying the Fa\`a di Bruno formula therefore contributes one
local derivative factor
$\phi^{(n_\ell(B))}$ at the corresponding activation node.  The linear
contraction at scale $\ell$ contributes one factor of $\theta_\ell$ for each
block of $\pi_\ell$, with all spines in the block sharing the same physical
coordinate.  Consequently the coordinate tensor associated with the chain is
exactly
$$
    \bigotimes_{\ell=1}^K \Delta_{\theta_\ell,\pi_\ell}
    =\mathcal A_{\boldsymbol\pi}(\theta).
$$
At zeroth Edgeworth order, the finitely many preactivations appearing along
this genealogy are replaced by independent standard Gaussians.  Hence the
expectation of the derivative factors factorizes as
$$
    \prod_{\ell=1}^{K}\prod_{B\in\pi_{\ell+1}}
    \E\phi^{(n_\ell(B))}(G)
    =c_\phi(\boldsymbol\pi),
$$
which proves \eqref{eq:leading-R-spine-expansion} after summing over all chains.

We now derive the full two-parameter expansion.  For each fixed chain, delete
the finitely many coordinates lying on its $R$ spines.  The remaining
preactivation variables entering the derivative factors are weighted sums of
off-spine descendants with weights given by the coordinates of
$\theta_1,\ldots,\theta_K$.  Applying the weighted Edgeworth expansion from the
proof of Theorem~\ref{thm:edgeworth-hierarchy-body} to this finite family of
smooth test functions gives a finite expansion, up to total order $A$, in
monomials of the signed power sums $P_{u,m}$ and in additional local coordinate
powers.  Every term is therefore of the decorated form
\begin{equation}
    c_\tau(\phi)P_\tau(\theta)\mathsf A_{S,\tau}(\theta),
    \qquad \tau\in\mathfrak T_{K,R,A},
    \label{eq:decorated-term-form-in-proof}
\end{equation}
with $P_\tau$, $\mathsf A_{S,\tau}$, $q(\tau)$ and $\rho_S(\tau)$ defined in
\eqref{eq:decorated-power-sum-monomial}--\eqref{eq:decorated-crossing-order}.
Grouping the terms in \eqref{eq:decorated-term-form-in-proof} by the value of
$q(\tau)+\rho_S(\tau)=a$ gives exactly
\eqref{eq:R-chaos-two-parameter-hierarchy}, with
$\mathcal M_S^{[q,\rho]}$ defined by \eqref{eq:M-S-q-rho-definition}.  The
terms with $q(\tau)+\rho_S(\tau)>A$, together with the analytic Edgeworth
remainders, are absorbed into $\mathcal E_{S,A+1}$.

It remains to prove the rank and operator bounds.  We first show the local
matrix estimate.  Let $v_B\in\R^d$ be arbitrary vectors and set
$T_\pi:=\bigotimes_{B\in\pi}\Delta_{v_B,B}$.  If $B$ crosses the cut, then
$$
    \operatorname{Mat}_S(\Delta_{v_B,B})
    =\sum_{a=1}^d v_B(a)
    \left(\bigotimes_{r\in B\cap S}e_a^{(r)}\right)
    \left(\bigotimes_{r\in B\cap S^c}e_a^{(r)}\right)^\top,
$$
so its nonzero singular values are $\{|v_B(a)|:a\in[d]\}$.  If $B$ does not
cross the cut, the flattening is a row or column vector of Euclidean norm
$\|v_B\|_2$.  Since singular values multiply under tensor products,
\begin{equation}
    \operatorname{rank}\big(\operatorname{Mat}_S(T_\pi)\big)
    \le d^{c_S(\pi)},
    \qquad
    \big\|\operatorname{Mat}_S(T_\pi)\big\|_{\rm op}
    \le
    \prod_{B\in\chi_S(\pi)}\|v_B\|_\infty
    \prod_{B\notin\chi_S(\pi)}\|v_B\|_2 .
    \label{eq:decorated-local-rank-op}
\end{equation}
We now apply this estimate with $v_B=\theta_\ell^{\odot b_{\tau,\ell,B}}$ at each layer and
multiply over $\ell$.  On the event of Lemma~\ref{lem:appendix-spherical-event},
$$
    \|\theta_\ell^{\odot b}\|_2\le C d^{-(b-1)/2},
    \qquad
    \|\theta_\ell^{\odot b}\|_\infty
    \le C(\log d)^{b/2}d^{-b/2}
    \le C(\log d)^{C_{K,R,A}}d^{-1/2}d^{-(b-1)/2}.
$$
Therefore
\begin{equation}
    \operatorname{rank}\big(\mathsf A_{S,\tau}(\theta)\big)
    \le d^{\rho_S(\tau)},
    \qquad
    \|\mathsf A_{S,\tau}(\theta)\|_{\rm op}
    \le C(\log d)^{C_{K,R,A}}
    d^{-\rho_S(\tau)/2}
    d^{-\frac12\sum_{\ell,B}(b_{\tau,\ell,B}-1)} .
    \label{eq:decorated-atom-rank-op}
\end{equation}
The scalar power-sum part satisfies, again by
Lemma~\ref{lem:appendix-spherical-event},
\begin{equation}
    |P_\tau(\theta)|
    \le
    C d^{-\frac12\sum_{u,m}p_{\tau,u,m}\omega_m} .
    \label{eq:decorated-power-sum-bound}
\end{equation}
Combining \eqref{eq:decorated-atom-rank-op} and
\eqref{eq:decorated-power-sum-bound} with the definition of $q(\tau)$ gives
\begin{equation}
    \operatorname{rank}\big(P_\tau(\theta)\mathsf A_{S,\tau}(\theta)\big)
    \le d^{\rho_S(\tau)},
    \qquad
    \big\|P_\tau(\theta)\mathsf A_{S,\tau}(\theta)\big\|_{\rm op}
    \le C(\log d)^{C_{K,R,A}}d^{-(q(\tau)+\rho_S(\tau))/2}.
    \label{eq:single-decorated-term-bound}
\end{equation}
There are only $O_{K,R,A}(1)$ decorated symbols, and the coefficients
$c_\tau(\phi)$ are bounded in terms of $(\phi,K,R,A)$.  Summing the terms with
$q(\tau)=q$ and $\rho_S(\tau)=\rho$ proves
\eqref{eq:R-chaos-rank-op-bound}.

Finally, every term placed in $\mathcal E_{S,A+1}$ has
$q(\tau)+\rho_S(\tau)\ge A+1$, and the analytic Edgeworth remainder has the same
formal order by the choice of smoothness/truncation.  Repeating the bound
\eqref{eq:single-decorated-term-bound} for these terms gives
\eqref{eq:R-chaos-remainder-op-bound}.  For $q=0$ there are no power-sum
corrections and no extra coordinate powers, so
\eqref{eq:M-S-zero-rho-explicit} is exactly the flattening of the leading
Gaussian expansion grouped by crossing complexity.  This completes the proof.
\end{proof}

\subsection{Singular values of one multiscale atom}
\label{subsec:singular-values-one-atom}

Proposition \ref{prop:leading-R-spine-expansion} provides a representation of the high-order chaos in terms of tensor products of the form $\Delta_{u, \pi}$. Recall that we are ultimately interested in controlling the spectral decay of the associated matrix flattenings. A first step is to understand the singular values of each of these atoms:

\begin{lemma}[Exact singular values of copy-diagonal atoms]
\label{lem:copy-diagonal-atom-singular-values}
Let $u\in S^d$ be a unit vector and let $\pi\in\Pi_R$.  The nonzero singular
values of $\operatorname{Mat}_S(\Delta_{u,\pi})$ are
\begin{equation}
    \left\{
    \prod_{B\in\chi_S(\pi)} |u(a_B)|:
    (a_B)_{B\in\chi_S(\pi)}\in[d]^{\chi_S(\pi)}
    \right\},
    \label{eq:local-atom-singular-values}
\end{equation}
where $\chi_S(\pi)$ is the set of crossing blocks of $\pi$.  In particular,
\begin{equation} \operatorname{rank}\big(\operatorname{Mat}_S(\Delta_{u,\pi})\big)
    = d^{c_S(\pi)} .
    \label{eq:local-atom-rank}
\end{equation}
Consequently, for a coalescence chain $\boldsymbol\pi$, the nonzero singular
values of $\operatorname{Mat}_S(\mathcal A_{\boldsymbol\pi}(\theta))$ are
\begin{equation}
    \left\{
    \prod_{\ell=1}^K\prod_{B\in\chi_S(\pi_\ell)}
    |\theta_\ell(a_{\ell,B})|:
    a_{\ell,B}\in[d]
    \right\},
    \label{eq:multiscale-atom-singular-values}
\end{equation}
and its rank is $d^{\rho_S(\boldsymbol\pi)}$.
\end{lemma}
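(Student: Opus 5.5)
The plan is to reduce everything to the elementary fact that singular values multiply under Kronecker products, after identifying $\operatorname{Mat}_S(\Delta_{u,\pi})$ as a Kronecker product of per-block flattenings. Concretely, since $\Delta_{u,\pi}=\bigotimes_{B\in\pi}\Delta_{u,B}$ and the blocks of $\pi$ partition the copy modes $[R]$, the flattening $\operatorname{Mat}_S(\Delta_{u,\pi})$ coincides, up to fixed permutations of row indices and of column indices (which preserve singular values), with the Kronecker product $\bigotimes_{B\in\pi}\operatorname{Mat}_{B\cap S\mid B\cap S^c}(\Delta_{u,B})$. I would check this entrywise: an entry indexed by row multi-index $(i^{(r)})_{r\in S}$ and column multi-index $(i^{(r)})_{r\in S^c}$ factorizes as a product over blocks of entries of the block tensors, each depending only on the coordinates with $r\in B$.

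Next I compute the singular values of each block factor, essentially as in the local estimate in the proof of Proposition~\ref{prop:leading-R-spine-expansion}. If $B$ crosses the cut, $\operatorname{Mat}(\Delta_{u,B})=\sum_{a=1}^d u(a)\,x_a y_a^\top$ with $x_a=\bigotimes_{r\in B\cap S}e_a^{(r)}$ and $y_a=\bigotimes_{r\in B\cap S^c}e_a^{(r)}$; the families $(x_a)_a$ and $(y_a)_a$ are orthonormal since distinct $a$ give distinct basis tensors. This is therefore an SVD, and the nonzero singular values are $\{|u(a)|:u(a)\neq0\}$. If $B\subset S$ or $B\subset S^c$, the factor is a single column or row vector with unique nonzero singular value $\|\Delta_{u,B}\|_F=\|u\|_2=1$, using $u\in S^{d-1}$. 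The Kronecker product rule then gives the set in \eqref{eq:local-atom-singular-values}, with the non-crossing blocks contributing factors of one.

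For \eqref{eq:local-atom-rank}, the rank is the product of the block ranks, namely $d^{c_S(\pi)}$ provided every coordinate of $u$ is nonzero. This is the one point needing care, and I would handle it explicitly. For general $u$ the rank is $|\mathrm{supp}(u)|^{c_S(\pi)}$, so the equality should be read under the convention that $u$ has full support; this holds almost surely for $\theta_\ell\sim\Unif(S^{d-1})$. Otherwise one interprets the listed multiset as including zeros and keeps only the upper bound $\le d^{c_S(\pi)}$, which is all that is used in \eqref{eq:R-chaos-rank-op-bound}.

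Finally, for a chain, $\mathcal A_{\boldsymbol\pi}(\theta)=\bigotimes_{\ell}\Delta_{\theta_\ell,\pi_\ell}$ after the layer-major to copy-major reordering. The cut $S$ acts on copies, so it induces the same cut on each layer factor, and $\operatorname{Mat}_S(\mathcal A_{\boldsymbol\pi})$ is again, up to row and column permutations, the Kronecker product over $\ell$ of $\operatorname{Mat}_S(\Delta_{\theta_\ell,\pi_\ell})$. Multiplying the singular value sets over $\ell$ yields \eqref{eq:multiscale-atom-singular-values}, and multiplying the ranks yields $d^{\sum_\ell c_S(\pi_\ell)}=d^{\rho_S(\boldsymbol\pi)}$. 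The main obstacle is purely bookkeeping: verifying that the index reordering really is a pair of separate row and column permutations, which holds because the cut is defined on copy indices and each copy index belongs to exactly one block at each layer.
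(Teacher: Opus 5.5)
Your proposal is correct and follows essentially the same route as the paper: flatten each block separately (orthonormal diagonal families for a crossing block, a unit row or column vector otherwise), multiply singular values over the blocks of $\pi$, and then multiply again over the layers $\ell$. Your caveat that the rank equality $d^{c_S(\pi)}$ holds only when $u$ has no zero coordinates is a valid refinement that the paper omits. It holds almost surely under the spherical prior, and only the upper bound is used downstream.
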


\begin{proof}
For a single block $B$, the flattening of \eqref{eq:copy-diagonal-block} is
$$
    \operatorname{Mat}_S(\Delta_{u,B})
    = \sum_{a=1}^d u(a)
      \left(\bigotimes_{r\in B\cap S}e_a^{(r)}\right)
      \left(\bigotimes_{r\in B\cap S^c}e_a^{(r)}\right)^\top .
$$
If $B$ does not cross the cut, this is a row or column vector of norm
$\|u\|_2=1$, hence contributes one singular value equal to one.  If $B$ crosses
the cut, the row vectors
$\{\otimes_{r\in B\cap S}e_a^{(r)}\}_{a=1}^d$ and the column vectors
$\{\otimes_{r\in B\cap S^c}e_a^{(r)}\}_{a=1}^d$ are orthonormal families, so the
singular values are exactly $|u(1)|,\ldots,|u(d)|$.  Since
$\Delta_{u,\pi}$ is the tensor product over blocks $B\in\pi$, the singular
values multiply over crossing blocks.  This proves \eqref{eq:local-atom-singular-values}.
The multiscale atom is the tensor product of the local layer atoms, so its
singular values multiply over $\ell=1,\ldots,K$, giving
\eqref{eq:multiscale-atom-singular-values}.
\end{proof}

\begin{lemma}[Bulk coordinate event for copy-diagonal atoms]
\label{lem:bulk-coordinate-event}
Fix $K,R\ge1$, a cut $S\subset[R]$, and $\delta\in(0,1)$.  If
$\theta_1,\ldots,\theta_K\stackrel{\rm iid}{\sim}\operatorname{Unif}(\Sph^{d-1})$,
then, with probability at least $1-\delta$, there are constants
$0<c_0<C_0<\infty$ and $\gamma>0$, depending only on $(K,R,\delta)$, such that
for every $\ell\in[K]$ the good-coordinate set
\begin{equation}
    \mathsf G_\ell
    :=\left\{a\in[d]:
    c_0 d^{-1/2}\le |\theta_\ell(a)|\le C_0 d^{-1/2}
    \right\}
    \label{eq:bulk-good-coordinate-set}
\end{equation}
satisfies
\begin{equation}
    |\mathsf G_\ell|\ge \gamma d,
    \qquad
    \|\theta_\ell\|_\infty\le C_0\sqrt{\frac{\log d}{d}}.
    \label{eq:bulk-coordinate-event}
\end{equation}
On this event, the following holds uniformly over every coalescence chain
$\boldsymbol\pi\in\mathfrak P_{K,R}$.  Let
\begin{equation}
    \mathsf C_S(\boldsymbol\pi)
    :=\{(\ell,B):\ell\in[K],\ B\in\chi_S(\pi_\ell)\},
    \qquad
    \rho_S(\boldsymbol\pi)=|\mathsf C_S(\boldsymbol\pi)|.
    \label{eq:crossing-block-multiset}
\end{equation}
Then the copy-diagonal multiscale atom
$\operatorname{Mat}_S(\mathcal A_{\boldsymbol\pi}(\theta))$ has at least
$c d^{\rho_S(\boldsymbol\pi)}$ singular values, counted with multiplicity, larger
than $c d^{-\rho_S(\boldsymbol\pi)/2}$, and all of its singular values are at most
\begin{equation}
    C(\log d)^{\rho_S(\boldsymbol\pi)/2}d^{-\rho_S(\boldsymbol\pi)/2}.
    \label{eq:atom-singular-bulk-bounds}
\end{equation}
Here $c,C>0$ depend only on $(K,R,\delta)$.
\end{lemma}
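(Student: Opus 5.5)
We prove the statement in two parts: first a probabilistic claim about the coordinates of a single spherical vector, then a deterministic consequence via Lemma~\ref{lem:copy-diagonal-atom-singular-values}.

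\emph{Bulk event.} Represent $\theta_\ell=g_\ell/\norm{g_\ell}_2$ with $g_\ell\sim N(0,I_d)$.
\begin{itemize}
\item By chi-square concentration, $\norm{g_\ell}_2^2\in[d/2,2d]$ for all $\ell\le K$ with probability at least $1-\delta/3$.
\item Fix absolute constants $0<c_1<C_1$, say $c_1=1/2$ and $C_1=2$. The indicators $\one\{c_1\le |g_\ell(a)|\le C_1\}$ are iid Bernoulli with parameter $p_0=\PP(c_1\le|G|\le C_1)>0$.
\item Hoeffding's inequality then gives $\#\{a: c_1\le|g_\ell(a)|\le C_1\}\ge p_0 d/2$ with probability $1-e^{-cd}$.
\item On the norm event, each such $a$ satisfies $c_1/\sqrt2\, d^{-1/2}\le|\theta_\ell(a)|\le \sqrt2 C_1 d^{-1/2}$. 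So we may take $c_0=c_1/\sqrt2$, $C_0\ge\sqrt2 C_1$ and $\gamma=p_0/2$.
\item The $\ell^\infty$ bound follows from the Gaussian maximum $\max_a|g_\ell(a)|\le C\sqrt{\log d}$, as in Lemma~\ref{lem:appendix-spherical-event}.
\item A union bound over $\ell\in[K]$ finishes this part for $d\ge d_0(\delta)$. The small values of $d$ are absorbed by shrinking $c$ and enlarging $C$, since for bounded $d$ all quantities are bounded by constants on a probability-$(1-\delta)$ event.
\end{itemize}
The constants depend only on $\delta$ and $K$. The dependence on $R$ and on the cut $S$ enters only through the finite family of chains.

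\emph{Singular values of an atom.} On this event, fix any chain $\boldsymbol\pi$. By \eqref{eq:multiscale-atom-singular-values}, the nonzero singular values of $\operatorname{Mat}_S(\mathcal A_{\boldsymbol\pi}(\theta))$ are indexed by tuples $(a_{\ell,B})_{(\ell,B)\in\mathsf C_S(\boldsymbol\pi)}\in[d]^{\mathsf C_S(\boldsymbol\pi)}$. The value attached to a tuple is the product $\prod_{(\ell,B)}|\theta_\ell(a_{\ell,B})|$.

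\emph{Lower bound.} Restrict to tuples with $a_{\ell,B}\in\mathsf G_\ell$ for every crossing block.
\begin{itemize}
\item There are at least $\prod_{(\ell,B)}|\mathsf G_\ell|\ge(\gamma d)^{\rho_S(\boldsymbol\pi)}$ such tuples.
\item Each gives a singular value at least $c_0^{\rho_S}d^{-\rho_S/2}$.
\item Distinct tuples are distinct singular directions, being products of orthonormal families, so multiplicities are counted correctly.
\item Since $\rho_S(\boldsymbol\pi)\le (K-1)R$ by \eqref{eq:rho-max-balanced}, we may take $c=\min(\gamma,c_0)^{(K-1)R}$, uniform over the finite set $\mathfrak P_{K,R}$.
\end{itemize}

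\emph{Upper bound.} Every factor satisfies $|\theta_\ell(a)|\le\norm{\theta_\ell}_\infty\le C_0\sqrt{\log d/d}$. Hence every singular value is at most $C_0^{\rho_S}(\log d)^{\rho_S/2}d^{-\rho_S/2}$, which is \eqref{eq:atom-singular-bulk-bounds}.

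When $\rho_S(\boldsymbol\pi)=0$, the atom has a single singular value, equal to $1$, and both claims are trivial.

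The only mildly delicate step is ensuring the good sets are large simultaneously with the norm normalization. This is handled by working with the unnormalized Gaussian coordinates and transferring through $\norm{g_\ell}_2\asymp\sqrt d$. Everything else is bookkeeping from Lemma~\ref{lem:copy-diagonal-atom-singular-values}.
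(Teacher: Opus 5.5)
Your proof is correct and follows the paper's argument essentially step for step. Both write $\theta_\ell=g_\ell/\norm{g_\ell}_2$, use binomial concentration for the coordinates of $g_\ell$ lying in a fixed band, transfer to $\theta_\ell$ via $\norm{g_\ell}_2\asymp\sqrt d$ and the Gaussian maximum, take a union bound over $\ell$, and then apply Lemma~\ref{lem:copy-diagonal-atom-singular-values}, restricting to good coordinates for the lower bound and using the $\ell^\infty$ envelope for the upper bound. Your cruder bound $\rho_S\le (K-1)R$ in place of $(K-1)\lfloor R/2\rfloor$ is harmless.
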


\begin{proof}
Write $\theta=g/\|g\|_2$ with $g\sim N(0,I_d)$.  Choose constants
$0<a_0<a_1<\infty$ such that
$p_0:=\mathbb P(a_0\le |G|\le a_1)>0$ for $G\sim N(0,1)$.  By a binomial
concentration bound,
$$
    \#\{a\in[d]:a_0\le |g(a)|\le a_1\}\ge \frac{p_0}{2}d
$$
with probability at least $1-e^{-c d}$.  On the same event, after increasing
constants, $\|g\|_2\asymp\sqrt d$ and
$\|g\|_\infty\le C\sqrt{\log d}$ with probability at least $1-\delta/K$ for all
large $d$.  Union bounding over $\ell\in[K]$ gives
\eqref{eq:bulk-good-coordinate-set}--\eqref{eq:bulk-coordinate-event}.

We now prove the singular-value statement on this event.  Fix a coalescence chain
$\boldsymbol\pi$.  By Lemma~\ref{lem:copy-diagonal-atom-singular-values}, the
nonzero singular values of
$\operatorname{Mat}_S(\mathcal A_{\boldsymbol\pi}(\theta))$ are exactly
\begin{equation}
    \sigma(\mathbf a)
    =\prod_{(\ell,B)\in\mathsf C_S(\boldsymbol\pi)}
      |\theta_\ell(a_{\ell,B})|,
    \qquad
    \mathbf a=(a_{\ell,B})_{(\ell,B)\in\mathsf C_S(\boldsymbol\pi)}
    \in[d]^{\mathsf C_S(\boldsymbol\pi)} .
    \label{eq:atom-singular-indexed-products}
\end{equation}
Here the coordinate choices are made independently for each crossing block
$(\ell,B)$; if several crossing blocks occur at the same physical layer, they
produce independent tensor-product factors and hence independent indices in
\eqref{eq:atom-singular-indexed-products}.

Restricting in \eqref{eq:atom-singular-indexed-products} to assignments with
$a_{\ell,B}\in\mathsf G_\ell$ for every crossing block gives at least
$$
    \prod_{\ell=1}^K |\mathsf G_\ell|^{|\chi_S(\pi_\ell)|}
    \ge (\gamma d)^{\rho_S(\boldsymbol\pi)}
$$
singular values, counted with multiplicity.  Each of these satisfies
$$
    \sigma(\mathbf a)
    \ge (c_0d^{-1/2})^{\rho_S(\boldsymbol\pi)}
    =c_0^{\rho_S(\boldsymbol\pi)}d^{-\rho_S(\boldsymbol\pi)/2}.
$$
Since $\rho_S(\boldsymbol\pi)\le (K-1)\lfloor R/2\rfloor$ is bounded in terms of
$(K,R)$, the constants can be absorbed into a single $c>0$.  The upper bound
follows similarly from
$$
    \sigma(\mathbf a)
    \le \prod_{(\ell,B)\in\mathsf C_S(\boldsymbol\pi)}
      \|\theta_\ell\|_\infty
    \le C_0^{\rho_S(\boldsymbol\pi)}
       (\log d)^{\rho_S(\boldsymbol\pi)/2}d^{-\rho_S(\boldsymbol\pi)/2},
$$
and the boundedness of $\rho_S(\boldsymbol\pi)$ again absorbs constants into
$C$.
\end{proof}

\subsection{Higher-chaos non-cancellation}
\label{subsec:higher-chaos-noncancellation}

We have just seen in Lemma \ref{lem:bulk-coordinate-event} that an individual atom $\operatorname{Mat}_S(\mathcal A_{\boldsymbol\pi}(\theta))$ of crossing complexity $\rho \in\{0,\ldots,\rho_{\max}\}$ has $\sim d^{\rho}$ singular values of order $\sim d^{-\rho/2}$.
Recall from (\eqref{eq:M-S-zero-rho-explicit}) that the zeroth-Edgeworth,
complexity-$\rho$ balanced flattening is given by
\begin{equation}
    \mathcal M_S^{[0,\rho]}
    = \sum_{\boldsymbol\pi\in\mathfrak P_{K,R}:\rho_S(\boldsymbol\pi)=\rho}
    c_\phi(\boldsymbol\pi)
    \operatorname{Mat}_S\big(\mathcal A_{\boldsymbol\pi}(\theta)\big).
    \label{eq:zeroth-edgeworth-rho-component}
\end{equation}

The following assumption asks that the spectral profile of singular values of $\mathcal M_S^{[0,\rho]}$ follows the same staircase power law.

\begin{assumption}[Higher-chaos non-degeneracy]
\label{ass:higher-chaos-noncancellation}
Fix $K,R$ and a balanced cut $S\subset[R]$.  For each plateau
$\rho$ under consideration, there exist constants
$a_\rho,b_\rho>0$, depending only on $(\phi,K,R,S,\rho)$, such that on the
bulk coordinate event \eqref{eq:bulk-coordinate-event},
\begin{equation}
    \sigma_{\lfloor a_\rho d^\rho\rfloor}
    \big(\mathcal M_S^{[0,\rho]}\big)
    \ge b_\rho d^{-\rho/2}
    \label{eq:higher-chaos-noncancellation}
\end{equation}
for all sufficiently large $d$.
\end{assumption}

This condition thus rules out cancellations among the finitely many leading
coalescence patterns of the same crossing complexity, and is the natural analogue of the non-degeneracy $\kappa\ne0$: when $R=1$ there is only one spine and
\eqref{eq:coalescence-coefficient} reduces to $\kappa^K$.  For $R\ge2$, the
coefficients $c_\phi(\boldsymbol\pi)$ are explicit polynomials in the Gaussian
derivative moments
$\mu_a=\E\phi^{(a)}(G)$.  Failure of
\eqref{eq:higher-chaos-noncancellation} is therefore a finite algebraic
cancellation among these coefficients.  In particular, it is non-generic in
$\phi$ once at least one admissible coalescence symbol at the desired
complexity is nonzero.

\begin{example}[Hessian]
\label{ex:hessian-noncancellation}
Let $R=2$ and $S=\{1\}$.  A chain is determined by the merge layer
$s\in\{1,\ldots,K\}$.  The coefficient is
$$
    \mu_2\mu_1^{K+s-2}=\nu\kappa^{K+s-2},
    \qquad \nu:=\E\phi''(G),
$$
and the crossing complexity is $\rho=K-s$.  Thus the leading Hessian flattening
has plateaus
$$
    1,d,d^2,\ldots,d^{K-1}
    \quad\text{at scales}\quad
    1,d^{-1/2},d^{-1},\ldots,d^{-(K-1)/2},
$$
provided $\nu\ne0$ in addition to $\kappa\ne0$.
\end{example}

\begin{example}[Top balanced plateau for even $R$]
\label{ex:top-balanced-plateau-even-R}
Let $R=2s$ and $|S|=s$.  The maximum crossing complexity is
$\rho_{\max}=s(K-1)$.  One way to realize it is to merge the $2s$ singleton
spines at the first layer into $s$ row-column crossing pairs, keep these pairs
unchanged through layers $2,\ldots,K$, and merge the $s$ pairs at the scalar
output.  Each such chain has coefficient
\begin{equation}
    \mu_2^s\,\mu_1^{s(K-2)}\,\mu_s .
    \label{eq:top-plateau-coefficient-even-R}
\end{equation}
Thus a simple sufficient nonvanishing condition for the top balanced plateau is
$\mu_2\mu_s\ne0$ together with the absence of cancellation among the finitely
many row-column pairings.  The latter is exactly the plateau
non-cancellation condition \eqref{eq:higher-chaos-noncancellation} at
$\rho=\rho_{\max}$.
\end{example}

\subsection{The staircase and the CP lower bound}
\label{subsec:higher-chaos-staircase-cp-bound}

We now state the CP compressibility consequence of the staircase spectral profile of the previous matrix flattening, by leveraging the lower bound of Section \ref{sec:CPincompress}. For ease of notation,
write
$$
    M_S^{(R)} := \operatorname{Mat}_S\big(\mathcal C_\theta^{(R)}\big),
    \qquad
    \rho_{\max}:=(K-1)\lfloor R/2\rfloor
$$
for a balanced cut.

\begin{theorem}[Balanced-flattening staircase]
\label{thm:balanced-flattening-staircase}
Fix $K,R\ge2$ and a balanced cut $S\subset[R]$.  Assume
Assumptions~\ref{ass:nondegen_informal}--\ref{ass:edgeworth-regularity}, with the smoothness
order large enough depending on $(K,R)$, and let
$\theta_1,\ldots,\theta_K\stackrel{\rm iid}{\sim}\operatorname{Unif}(\Sph^{d-1})$.
Assume also the higher-chaos non-cancellation condition
\eqref{eq:higher-chaos-noncancellation} for the plateaus under consideration.
Then, for every fixed $\delta\in(0,1)$, with probability at least $1-\delta$
over the directions, the following holds for all sufficiently large $d$.
For each $0\le\rho\le\rho_{\max}$ satisfying
Assumption~\ref{ass:higher-chaos-noncancellation}, there are constants
$c_\rho,C_\rho,A_\rho>0$, depending only on $(\phi,K,R,S,\rho,\delta)$, such that
\begin{equation}
    \sigma_{\lfloor c_\rho d^\rho\rfloor}
    \big(M_S^{(R)}\big)
    \ge c_\rho d^{-\rho/2},
    \label{eq:staircase-lower-bound}
\end{equation}
and
\begin{equation}
    \sigma_{\lceil C_\rho d^\rho\rceil}
    \big(M_S^{(R)}\big)
    \le C_\rho(\log d)^{A_\rho}d^{-(\rho+1)/2}.
    \label{eq:staircase-upper-bound-next-level}
\end{equation}
Equivalently, the balanced flattening has a plateau of order $d^\rho$
singular values at scale $d^{-\rho/2}$, up to logarithmic factors at the upper
edge.  If $R$ is even, the same statement holds for the absolute eigenvalues of
the symmetric balanced flattening.
\end{theorem}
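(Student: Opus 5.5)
The plan is to apply Weyl's inequalities for singular values to the two-parameter decomposition of Proposition~\ref{prop:leading-R-spine-expansion}, with truncation order $A=\rho_{\max}+1$ (or larger). Work on the intersection of the spherical event of Lemma~\ref{lem:appendix-spherical-event}, the bulk coordinate event of Lemma~\ref{lem:bulk-coordinate-event}, and the event on which Assumption~\ref{ass:higher-chaos-noncancellation} applies; this intersection has probability at least $1-\delta$ after adjusting constants. Fix a plateau $\rho$ and split
\[
M_S^{(R)} = \mathcal M_S^{[0,\rho]} + L_\rho + H_\rho,
\]
where $L_\rho$ collects all $\mathcal M_S^{[q,\rho']}$ with $q+\rho'\le\rho$ and $(q,\rho')\ne(0,\rho)$, and $H_\rho$ collects all terms with $q+\rho'\ge\rho+1$ together with $\mathcal E_{S,A+1}$. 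By \eqref{eq:R-chaos-rank-op-bound}, every term in $L_\rho$ has $\rho'\le\rho$, and any term with $\rho'=\rho$ has $q\ge1$. Let $L_\rho^{<}$ be the part of $L_\rho$ with $\rho'<\rho$, and $L_\rho^{=}$ the part with $\rho'=\rho$, $q\ge1$. Then
\[
\operatorname{rank}(L_\rho^{<})\le C d^{\rho-1}=o(d^\rho), \qquad \|L_\rho^{=}\|_{\rm op}\le C(\log d)^{C}d^{-(\rho+1)/2}.
\]
The remainder satisfies $\|H_\rho\|_{\rm op}\le C(\log d)^C d^{-(\rho+1)/2}$.

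\textbf{Lower bound \eqref{eq:staircase-lower-bound}.} Use Weyl's inequality in the forms $\sigma_{i+j-1}(X+Y)\ge \sigma_{i+j-1}(X)-\sigma_1(Y)$ and $\sigma_{i+j-1}(X)\le \sigma_i(X+Y)+\sigma_j(-Y)$. Removing the low-rank piece $L_\rho^{<}$ shifts indices by at most $Cd^{\rho-1}$:
\[
\sigma_{k}\big(M_S^{(R)}\big)\ \ge\ \sigma_{k+Cd^{\rho-1}}\big(\mathcal M_S^{[0,\rho]}\big) - \|L_\rho^{=}\|_{\rm op}-\|H_\rho\|_{\rm op}.
\]
Take $k=\lfloor c_\rho d^\rho\rfloor$ with $c_\rho<a_\rho/2$. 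For large $d$ we have $k+Cd^{\rho-1}\le a_\rho d^\rho$, so Assumption~\ref{ass:higher-chaos-noncancellation} gives $\sigma\ge b_\rho d^{-\rho/2}$. The two error terms are $O((\log d)^C d^{-(\rho+1)/2})=o(d^{-\rho/2})$, and the bound follows with $c_\rho\le b_\rho/2$.

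\textbf{Upper bound \eqref{eq:staircase-upper-bound-next-level}.} Apply the same decomposition at level $\rho$, but now treat all of $\mathcal M_S^{[0,\rho]}+L_\rho$ as a low-rank piece. Every term with $q+\rho'\le\rho$ has $\rho'\le\rho$, so this piece has rank at most $C_{K,R,A}d^\rho$; set $C_\rho$ to be that constant plus one. Since $\sigma_{r+1}(X+Y)\le\|Y\|_{\rm op}$ whenever $\operatorname{rank}X\le r$, we get
\[
\sigma_{\lceil C_\rho d^\rho\rceil}\big(M_S^{(R)}\big)\le \|H_\rho\|_{\rm op}\le C_\rho(\log d)^{A_\rho}d^{-(\rho+1)/2}.
\]
When $\rho=\rho_{\max}$, the terms with $q\ge1$ still have $\rho'\le\rho_{\max}$, so the rank count is unaffected. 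For even $R$, the balanced flattening is symmetric, so its singular values coincide with its absolute eigenvalues and the last sentence of the theorem is immediate.

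\textbf{Main obstacle.} The delicate point is the uniformity of the Proposition's bounds. It must be applied at a fixed truncation order $A$ depending only on $(K,R)$, with the smoothness order $L_K$ chosen large enough. More importantly, the rank bound $\operatorname{rank}\le Cd^{\rho'}$ has to hold for the \emph{sum} of decorated terms at crossing level $\rho'$, including the Edgeworth-decorated atoms with $q\ge1$, and not only for the $q=0$ atoms. This follows from \eqref{eq:decorated-atom-rank-op}, because the decorations change only the vectors $v_B$ and not the block structure. I would check this carefully, since the index shift in the lower bound depends on it.
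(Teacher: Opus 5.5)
Your proposal is correct and follows the paper's proof essentially verbatim. It uses the same split of $M_S^{(R)}$ from Proposition~\ref{prop:leading-R-spine-expansion} into the leading piece, a lower-order low-rank piece and a small-operator-norm remainder, then a rank-shifted Weyl inequality against the non-cancellation assumption for the lower bound, and the rank-$Cd^\rho$ truncation plus remainder bound for the upper bound. One simplification: your piece $L_\rho^{=}$ is empty, because $\rho'=\rho$ with $q\ge1$ forces $q+\rho'>\rho$, so such terms already lie in $H_\rho$; hence every term of $L_\rho$ has $\rho'\le\rho-1$, which is exactly the paper's bound $\operatorname{rank}(L_{<\rho})\le Cd^{\rho-1}$.
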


\begin{proof}
We prove the lower bound first.  Fix $\rho$.  Decompose
$M_S^{(R)}$ using \eqref{eq:R-chaos-two-parameter-hierarchy} with
$A\ge \rho+1$:
$$
    M_S^{(R)}
    = \mathcal M_S^{[0,\rho]} + L_{<\rho}+E_{>\rho},
$$
where $L_{<\rho}$ is the sum of all components $\mathcal M_S^{[q,\rho']}$ with
$q+\rho'\le\rho$ except the leading piece $(q,\rho')=(0,\rho)$, and
$E_{>\rho}$ is the sum of all components of total order $q+\rho'>\rho$ plus the
Edgeworth remainder.  Since $q\ge1$ in every term of $L_{<\rho}$ or
$\rho'<\rho$, the rank bound \eqref{eq:R-chaos-rank-op-bound} gives
\begin{equation}
    \operatorname{rank}(L_{<\rho})
    \le C d^{\rho-1}
    \label{eq:lower-order-rank-bound}
\end{equation}
with the convention that this rank is zero when $\rho=0$.  The operator bounds
\eqref{eq:R-chaos-rank-op-bound}--\eqref{eq:R-chaos-remainder-op-bound} give
\begin{equation}
    \|E_{>\rho}\|_{\rm op}
    \le C(\log d)^A d^{-(\rho+1)/2}
    = o(d^{-\rho/2}).
    \label{eq:higher-order-op-bound}
\end{equation}
By the non-cancellation assumption,
$$
    \sigma_{\lfloor a_\rho d^\rho\rfloor}
    (\mathcal M_S^{[0,\rho]})
    \ge b_\rho d^{-\rho/2}.
$$
The singular-value rank inequality
$$
    \sigma_{j+r}(A+B)
    \ge \sigma_j(A)-\sigma_{r+1}(B)
$$
with $A=\mathcal M_S^{[0,\rho]}$, $B=L_{<\rho}+E_{>\rho}$, and
$r=\operatorname{rank}(L_{<\rho})$ gives, after decreasing constants,
$$
    \sigma_{\lfloor c_\rho d^\rho\rfloor}(M_S^{(R)})
    \ge c_\rho d^{-\rho/2},
$$
because $d^{\rho-1}=o(d^\rho)$ and $\|E_{>\rho}\|_{\rm op}=o(d^{-\rho/2})$.
This proves \eqref{eq:staircase-lower-bound}.

For the upper bound, write $M_S^{(R)}=L_{\le\rho}+E_{>\rho}$, where
$L_{\le\rho}$ is the sum of all terms of total order $q+\rho'\le\rho$.  By
\eqref{eq:R-chaos-rank-op-bound},
$$
    \operatorname{rank}(L_{\le\rho})\le C d^\rho,
$$
and by \eqref{eq:R-chaos-remainder-op-bound},
$$
    \|E_{>\rho}\|_{\rm op}
    \le C(\log d)^A d^{-(\rho+1)/2}.
$$
Therefore all singular values after the first $C d^\rho$ are bounded by the
operator norm of $E_{>\rho}$, which proves
\eqref{eq:staircase-upper-bound-next-level}.
\end{proof}

\begin{corollary}[CP-rank lower bound from the staircase]
\label{cor:cp-incompressibility-staircase}
Under the assumptions of Theorem~\ref{thm:balanced-flattening-staircase}, for
any plateau $\rho$ satisfying Assumption~\ref{ass:higher-chaos-noncancellation},
there is a constant $c_\rho>0$ such that
\begin{equation}
    e_m^{\rm CP}\big(\mathcal C_\theta^{(R)}\big)
    \ge
    c_\rho d^{-\rho/2}
    \left(\lfloor c_\rho d^\rho\rfloor-m\right)_+^{1/2}.
    \label{eq:cp-error-lower-bound-staircase}
\end{equation}
In particular, if $m\le c_\rho d^\rho/2$, then
\begin{equation}
    e_m^{\rm CP}\big(\mathcal C_\theta^{(R)}\big)
    \ge c_\rho .
    \label{eq:constant-cp-error-until-plateau-rank}
\end{equation}
Thus, whenever the top balanced plateau is non-canceling, any constant-error CP
approximation requires rank at least
\begin{equation}
    m\gtrsim d^{(K-1)\lfloor R/2\rfloor}.
    \label{eq:top-plateau-cp-rank-lower-bound}
\end{equation}
\end{corollary}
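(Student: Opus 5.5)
The corollary follows by combining the flattening lower bound of Lemma~\ref{lem:flattening-lower-bound-cp} with the plateau lower bound \eqref{eq:staircase-lower-bound} of Theorem~\ref{thm:balanced-flattening-staircase}. Throughout, we work on the probability-$(1-\delta)$ event of that theorem, with $d$ large enough, and take $S$ to be the balanced cut.

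First we apply Lemma~\ref{lem:flattening-lower-bound-cp} with $T=\mathcal C_\theta^{(R)}$ and this cut $S$. For every $m\ge0$ it gives
\[
e_m^{\rm CP}(\mathcal C_\theta^{(R)})^2\ \ge\ \sum_{j>m}\sigma_j(M_S^{(R)})^2 .
\]
Set $N:=\lfloor c_\rho d^\rho\rfloor$. The singular values are non-increasing, and by \eqref{eq:staircase-lower-bound} we have $\sigma_N(M_S^{(R)})\ge c_\rho d^{-\rho/2}$. Hence every index $j$ with $m<j\le N$ satisfies $\sigma_j\ge c_\rho d^{-\rho/2}$. There are $(N-m)_+$ such indices, so
\[
e_m^{\rm CP}(\mathcal C_\theta^{(R)})^2\ \ge\ (N-m)_+\,c_\rho^2d^{-\rho}.
\]
Taking square roots gives \eqref{eq:cp-error-lower-bound-staircase}. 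If the constant there must be renamed, we take the minimum of the two constants, which does not affect the argument.

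For \eqref{eq:constant-cp-error-until-plateau-rank}, suppose $m\le c_\rho d^\rho/2$. Then for large $d$ we have $N-m\ge c_\rho d^\rho/2-1\ge c_\rho d^\rho/4$. Plugging this into the previous bound gives an error of at least $c_\rho d^{-\rho/2}(c_\rho d^\rho/4)^{1/2}=c_\rho^{3/2}/2$. This is a positive constant independent of $d$, and we rename it $c_\rho$.

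Finally, for \eqref{eq:top-plateau-cp-rank-lower-bound} we specialize to the top plateau $\rho=\rho_{\max}=(K-1)\lfloor R/2\rfloor$, which is non-cancelling by hypothesis. The contrapositive of \eqref{eq:constant-cp-error-until-plateau-rank} says that any approximation with error below $c_{\rho_{\max}}$ must have $m>c_{\rho_{\max}}d^{\rho_{\max}}/2$.

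There is no real obstacle here; the only point needing care is the bookkeeping of constants and the integer part for large $d$. All of the substantive work lies in Theorem~\ref{thm:balanced-flattening-staircase}.
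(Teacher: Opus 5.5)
Your proposal is correct and follows the same route as the paper. You combine the flattening bound of Lemma~\ref{lem:flattening-lower-bound-cp} with the plateau estimate \eqref{eq:staircase-lower-bound}, and count the $(\lfloor c_\rho d^\rho\rfloor-m)_+$ singular values of size at least $c_\rho d^{-\rho/2}$. Your explicit constant bookkeeping for \eqref{eq:constant-cp-error-until-plateau-rank} (the count naturally gives $c_\rho^{3/2}/2$ rather than $c_\rho$, so the constant must be shrunk to a common minimum) is a detail the paper's one-line proof glosses over, and you handle it correctly.
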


\begin{proof}
Combine Lemma~\ref{lem:flattening-lower-bound-cp} with the lower bound
\eqref{eq:staircase-lower-bound}.  The plateau contains order $d^\rho$ singular
values of size at least order $d^{-\rho/2}$, so its squared Frobenius mass is
order one:
$$
    d^\rho\cdot d^{-\rho}=1.
$$
This gives \eqref{eq:cp-error-lower-bound-staircase} and
\eqref{eq:constant-cp-error-until-plateau-rank}.  Taking
$\rho=\rho_{\max}$ gives \eqref{eq:top-plateau-cp-rank-lower-bound}.
\end{proof}

We thus obtain for the high-order chaos a counterpart to the constructive Edgeworth CP expansion initiated in Theorem \ref{thm:edgeworth-hierarchy-body}. The balanced flattening gives a lower bound in the opposite direction: if the flattening has $d^\rho$ singular
values at scale $d^{-\rho/2}$, then no arbitrary CP decomposition with
$o(d^\rho)$ atoms can capture that plateau in Frobenius norm.

\subsection{Hermite-energy transfer through the multiscale hierarchy}
\label{sec:hermite-energy-transfer}

The preceding subsection gives lower bounds on the CP compressibility of a fixed
higher chaos once that chaos carries non-negligible energy.  We now identify the
energy profile of the chaoses of the multiscale target, and characterize the setting in which this decay is `slow'.
The conclusion is that,
up to the same incoherence error appearing in the Edgeworth hierarchy, the
Wiener-chaos energies of \(f_\theta\) are obtained by composing the univariate
Hermite-energy generating function of \(\phi\) exactly \(K\) times.

Throughout this subsection \((h_j)_{j\ge0}\) denotes the orthonormal Hermite
basis of \(L^2(\gamma_1)\), where \(\gamma_1\) is the standard Gaussian measure.  We
write
\begin{equation}
    \phi(x)=\sum_{j\ge0}\alpha_j h_j(x),
    \qquad
    \lambda_j:=\alpha_j^2.
    \label{eq:phi-hermite-energy-expansion}
\end{equation}
Assumption~\ref{ass:nondegen_informal} gives \(\lambda_0=0\),
\(\sum_{j\ge0}\lambda_j=1\), and \(\lambda_1=\kappa^2>0\).  Let
\begin{equation}
    \mathsf{s}(q)
    := \E\big[\phi(G_1)\phi(G_2)\big]
    = \sum_{j\ge0}\lambda_j q^j,
    \qquad -1\le q\le1,
    \label{eq:phi-noise-stability-generating-function}
\end{equation}
where \((G_1,G_2)\) is a centered Gaussian pair with unit variances and
correlation \(q\).  Thus \(\mathsf{s}\) is the univariate noise-stability function of
\(\phi\), ie $\mathsf{s}(q) = \langle \phi, \mathsf{T}_q \phi \rangle$, where $\mathsf{T}_q$ is the Ornstein-Ulhenbeck operator
$\mathsf{T}_q f(x) = \E_G[ f ( q x + \sqrt{1-q^2} G)]$.

Let \(V_d=(\R^d)^{\otimes K}\).  We use the orthonormal Wiener-chaos convention
\begin{equation}
    f_\theta(X)
    = \sum_{r\ge0}\big\langle \G_r(\theta),H_r(X)\big\rangle,
    \qquad
    \Lambda_r(\theta):=\|\G_r(\theta)\|_F^2,
    \label{eq:f-wiener-chaos-energy-expansion}
\end{equation}
where \(H_r(X)\) is the order-\(r\) orthonormal Hermite tensor of the Gaussian
vector \(X\in V_d\).  With this convention
\begin{equation}
    \|f_\theta\|_{L^2(\gamma_{d^K})}^2
    = \sum_{r\ge0}\Lambda_r(\theta).
    \label{eq:f-total-chaos-energy}
\end{equation}
The raw Stein tensors \(\E[\nabla_X^r f_\theta(X)]\) differ from the normalized
coefficients \(\G_r(\theta)\) only by deterministic symmetrization and factorial
factors depending on \(r\). Since \(r\) is fixed throughout this section, these
normalization choices do not affect the polynomial rank or incompressibility
conclusions of this Section.

For \(\rho\in[-1,1]\), let
\begin{equation}
    X^\rho := \rho X+\sqrt{1-\rho^2}\,X',
    \label{eq:rho-correlated-input}
\end{equation}
where \(X'\) is an independent copy of \(X\). We now define the output noise stability analogue of $\mathsf{s}$:
\begin{equation}
    \mathsf S_\theta(\rho)
    := \E\big[f_\theta(X)f_\theta(X^\rho)\big].
    \label{eq:output-noise-stability}
\end{equation}
We again have the Ornstein-Ulhenbeck representation $\mathsf{S}(\rho) = \langle f_\theta, \mathsf{T}_\rho f_\theta \rangle$. Using the fact that $\mathsf{T}_\rho H_r = \rho^r H_r$ of the $r$-th Wiener chaos, we have, for every fixed $\theta$,
\begin{equation}
    \mathsf S_\theta(\rho)
    = \sum_{r\ge0}\Lambda_r(\theta)\rho^r,
    \qquad -1\le \rho\le1.
    \label{eq:noise-stability-chaos-generating-function}
\end{equation}
Consequently, for every fixed \(r\ge0\),
\begin{equation}
    \Lambda_r(\theta)
    = \frac{1}{r!}\frac{\mathrm{d}^r}{\mathrm{d}\rho^r}\mathsf S_\theta(0).
    \label{eq:chaos-energy-as-derivative}
\end{equation}

Consider now the iterates
\begin{equation}
    \mathsf{s}_0(\rho):=\rho,
    \qquad
    \mathsf{s}_{\ell+1}(\rho):=\mathsf{s}(\mathsf{s}_\ell(\rho)),
    \qquad 0\le \ell\le K-1,
    \label{eq:iterated-hermite-energy-generating-functions}
\end{equation}
and their associated coefficients
\begin{equation}
    \bar\Lambda_r^{(K)}:=[\rho^r]\mathsf{s}_K(\rho).
    \label{eq:limiting-chaos-energy-profile}
\end{equation}
Equivalently, if \(Z_0=1\) and \((Z_\ell)_{\ell\ge0}\) is a Galton--Watson process with offspring law \(\PP(N=j)=\lambda_j\), then
\begin{equation}
    \bar\Lambda_r^{(K)} = \PP(Z_K=r).
    \label{eq:galton-watson-energy-profile}
\end{equation}
Indeed, \(\mathsf{s}\) is the offspring probability-generating function.  In
coefficient form, \(\bar\Lambda^{(0)}_r=\one_{\{r=1\}}\) and
\begin{equation}
    \bar\Lambda_r^{(\ell+1)}
    = \sum_{m\ge0}\lambda_m
      \sum_{\substack{r_1+\cdots+r_m=r\\ r_a\ge0}}
      \prod_{a=1}^m \bar\Lambda_{r_a}^{(\ell)},
    \qquad 0\le \ell\le K-1,
    \label{eq:limiting-energy-coefficient-recursion}
\end{equation}
with the usual convention that the inner sum equals \(\one_{\{r=0\}}\) when
\(m=0\).

Our goal now is to relate the harmonic spectrum of $f_\theta$, given by $\Lambda_r(\theta)$, with the scalar energy profile $\bar\Lambda_r^{(K)}$, obtained by $K$-fold convolution described in (\ref{eq:limiting-energy-coefficient-recursion}). As we shall now see, for large $d$ these two sequences converge to each other.
For a unit vector \(u\in\Sph^{d-1}\),
set
\begin{equation}
    \eta(u):=\bigg|\sum_{i=1}^d u(i)^3\bigg|+\sum_{i=1}^d u(i)^4.
    \label{eq:eta-vector-incoherence}
\end{equation}
For planted directions \(\theta=(\theta_1,\ldots,\theta_K)\), define
\begin{equation}
    \eta_\star(\theta):=\max_{1\le \ell\le K}\eta(\theta_\ell),
    \qquad
    B_1(\theta):=1+\max_{1\le \ell\le K}
    \left|\sum_{i=1}^d\theta_\ell(i)\right| .
    \label{eq:eta-star-theta}
\end{equation}
The quantity \(B_1(\theta)\) controls possible amplification of the small
finite-dimensional mean errors by the next contraction.  On the spherical event
of Lemma~\ref{lem:appendix-spherical-event}, \(B_1(\theta)=O_{K,\delta}(1)\).
The term \(\sum_i u(i)^4\) is the usual fourth-cumulant scale, while
\(\sum_i u(i)^3\) is the signed cubic Edgeworth scale.  The signed nature of the
cubic term is what improves the random spherical prior from the deterministic
\(d^{-1/2}\) scale to the \(d^{-1}\) scale.

\begin{lemma}[Weighted bivariate Edgeworth step]
\label{lem:weighted-bivariate-edgeworth-step}
Fix integers \(J\ge0\) and \(M\ge4\).  Let \((U,V)\) be a centered pair with
\(\E U^2=\E V^2=1\), \(\E UV=q\), and with moments up to order \(M\) bounded by a
constant \(B\).  Let \((U_i,V_i)_{i=1}^d\) be iid copies of \((U,V)\), let
\(u\in\Sph^{d-1}\), and put
\begin{equation}
    S_u:=\sum_{i=1}^d u(i)U_i,
    \qquad
    T_u:=\sum_{i=1}^d u(i)V_i.
    \label{eq:weighted-bivariate-sum}
\end{equation}
If \((G_1,G_2)\) is Gaussian with covariance
\(\E G_1^2=\E G_2^2=1\), \(\E G_1G_2=q\), then for every
\(F\in C^4(\R^2)\) whose derivatives up to order four have polynomial growth,
\begin{equation}
    \E F(S_u,T_u)
    = \E F(G_1,G_2)
      + \left(\sum_{i=1}^d u(i)^3\right)\mathfrak B_F(U,V)
      + O_{F,B}\left(\sum_{i=1}^d u(i)^4\right),
    \label{eq:weighted-bivariate-edgeworth-expansion}
\end{equation}
where the cubic coefficient is explicit:
\begin{equation}
    \mathfrak B_F(U,V)
    := \frac{1}{6}\sum_{a,b,c\in\{1,2\}}
       \operatorname{cum}(W_a,W_b,W_c)
       \E\big[\partial_{abc}F(G_1,G_2)\big],
    \qquad W=(U,V).
    \label{eq:cubic-edgeworth-coefficient-explicit}
\end{equation}
Consequently,
\begin{equation}
    \left|\E F(S_u,T_u)-\E F(G_1,G_2)\right|
    \le C_{F,B}\eta(u).
    \label{eq:weighted-bivariate-edgeworth-bound}
\end{equation}
If the law of \((U,V)=(U(\rho),V(\rho))\) depends \(C^J\)-smoothly on a
parameter \(\rho\in I\subset(-1,1)\), with all moments and cumulants up to the
required order having bounded \(\rho\)-derivatives, then
\begin{equation}
    \left\|\E F(S_u(\rho),T_u(\rho))
      -\E F(G_1(\rho),G_2(\rho))\right\|_{C^J(I)}
    \le C_{F,B,J,I}\eta(u).
    \label{eq:weighted-bivariate-edgeworth-CJ-bound}
\end{equation}
\end{lemma}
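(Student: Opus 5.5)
The plan is a Lindeberg-type replacement argument with a Taylor expansion to fourth order; this is the bivariate analogue of the scalar smooth Edgeworth step of Lemma~\ref{lem:appendix-weighted-edgeworth}. Let $(G_{1,i},G_{2,i})_{i=1}^d$ be iid Gaussian pairs with the covariance of $(U,V)$, independent of everything else. Then $\sum_i u(i)(G_{1,i},G_{2,i})\overset{d}{=}(G_1,G_2)$ because $\norm{u}_2=1$. Define the hybrid sums
\[
W^{(k)}:=\sum_{i<k}u(i)(U_i,V_i)+\sum_{i>k}u(i)(G_{1,i},G_{2,i}),
\]
so that we can telescope:
\[
\E F(S_u,T_u)-\E F(G_1,G_2)=\sum_{k=1}^d\Big(\E F\big(W^{(k)}+u(k)(U_k,V_k)\big)-\E F\big(W^{(k)}+u(k)(G_{1,k},G_{2,k})\big)\Big).
\]
In each summand we Taylor-expand $F$ around $W^{(k)}$ to third order, with a fourth-order integral remainder. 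The key facts are as follows.
\begin{itemize}
\item The zeroth-, first- and second-order terms cancel, since the means and covariances of $(U,V)$ and $(G_1,G_2)$ coincide and both are independent of $W^{(k)}$.
\item The third-order term equals $\frac{u(k)^3}{6}\sum_{a,b,c}\big(\E W_aW_bW_c-\E G_aG_bG_c\big)\,\E\partial_{abc}F(W^{(k)})$. Since the Gaussian third moments vanish and the variables are centered, $\E W_aW_bW_c=\operatorname{cum}(W_a,W_b,W_c)$.
\item The remainder is bounded by $C u(k)^4\,\E\big[(1+|W^{(k)}|+|u(k)|(|U_k|+|V_k|+|G_k|))^{Q}\,(|W_k|^4+|G_k|^4)\big]$. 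By polynomial growth, the moment bound $B$ with $M\ge4$ (plus, if needed, a uniform bound on the moments of $W^{(k)}$, e.g.\ via Rosenthal, since $\norm{u}_2=1$), this is $O(u(k)^4)$. One caveat: the polynomial weight has degree $Q$, so the moment bound must cover order $4+Q$. I would either assume $M$ is large enough relative to the growth of $F$, or truncate as in Lemma~\ref{lem:weighted-edgeworth-poly-growth}. I expect this to be a minor bookkeeping point.
\end{itemize}
Summing the remainders over $k$ gives $O(\sum_k u(k)^4)$.

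For the cubic term, I would replace $\E\partial_{abc}F(W^{(k)})$ by $\E\partial_{abc}F(G_1,G_2)$. The Lindeberg telescoping, applied to $\partial_{abc}F$ at first order, gives
\[
\big|\E\partial_{abc}F(W^{(k)})-\E\partial_{abc}F(G_1,G_2)\big|\le C\Big(\sum_{i}|u(i)|^3+u(k)^2\Big),
\]
where the $u(k)^2$ accounts for removing the $k$-th coordinate. This crude version is not enough, because the total error is then $\sum_k|u(k)|^3\cdot\sum_i|u(i)|^3$. This is at most $\norm{u}_\infty^2$-type but not obviously $O(\sum u^4)$ in general.

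A better route is to replace the hybrid $W^{(k)}$ by the fully Gaussian $G^{(-k)}=\sum_{i\ne k}u(i)G_i$. A second-order Lindeberg comparison (only $C^4$ is needed, since $\partial_{abc}F\in C^1$) makes the cubic differences cancel only up to signed sums. The cleanest way to do this is to \emph{interpolate} rather than swap one coordinate at a time. Set
\[
\Phi(t):=\E F\big(\sqrt t\,(S_u,T_u)+\sqrt{1-t}\,(G_1,G_2)\big),
\]
and use the Stein/cumulant identity $\Phi'(t)=\frac{1}{2\sqrt t}\sum_i u(i)^3(\cdots)+\dots$. Expanding by the leave-one-out method gives $\Phi'(t)=\frac{1}{2}\sqrt{t}\,\big(\sum_iu(i)^3\big)\sum_{abc}\operatorname{cum}_{abc}\,\E\partial_{abc}F(\text{interp}_{-i})+O(\sum u^4)$. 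The leave-one-out point replaced by the full interpolation costs $O(|u(i)|)$ times a derivative of order $4$, i.e.\ a total of $O(\sum|u|^4)$. I would then compare $\E\partial_{abc}F$ at the interpolation point with its Gaussian value. This is where I anticipate the main obstacle: it again costs $\eta(u)$, so the total cubic error is $|\sum u^3|\,\eta(u)$, which is smaller than $\eta(u)$. Integrating $\int_0^1\frac12\sqrt t\,dt\cdot 2=\frac{1}{6}\cdot$(normalization) recovers the explicit coefficient~\eqref{eq:cubic-edgeworth-coefficient-explicit}. I expect matching the factor $1/6$ and handling the leave-one-out step uniformly in $i$ to be the delicate point. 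Note that $|\sum u^3|\le\norm{u}_4^2$ by Cauchy–Schwarz, so the cross term is $O(\eta(u))$ and is absorbed.

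The bound~\eqref{eq:weighted-bivariate-edgeworth-bound} follows immediately, since $|\mathfrak B_F|\le C_{F,B}$. For the $C^J$ statement, I would differentiate in $\rho$ under the expectation. All quantities above (the Taylor coefficients, the cumulants and the Gaussian covariance through $q(\rho)$) are $C^J$ in $\rho$ by assumption. Each $\rho$-derivative produces expressions of the same structure: the Gaussian side is differentiated via the heat equation $\partial_q\E F(G_1,G_2)=\E\partial_{12}F$, and the law of $(U,V)$ is differentiated via its moments. Repeating the argument with these finitely many test functions on the compact $I$ yields~\eqref{eq:weighted-bivariate-edgeworth-CJ-bound}. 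I would require $F$ of correspondingly higher smoothness if needed, as in Assumption~\ref{ass:edgeworth-regularity}.
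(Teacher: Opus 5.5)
Your argument is correct in substance and begins exactly as the paper's proof does. Both use one-at-a-time Lindeberg replacement, a Taylor expansion around the hybrid leave-one-out sum, cancellation of orders one and two, and the identification of centered third moments with third cumulants. You diverge in how you evaluate the cubic coefficient, and your stated reason for diverging is mistaken: the crude bound you discard already suffices. Since $\norm{u}_2=1$, Cauchy--Schwarz gives
\[
\Big(\sum_k |u(k)|^3\Big)^2=\Big(\sum_k |u(k)|\,u(k)^2\Big)^2\le \norm{u}_2^2\,\norm{u}_4^4=\sum_k u(k)^4 .
\]
This is the same inequality you invoke at the end. Hence the total replacement error $\sum_k|u(k)|^3\big(u(k)^2+\sum_i|u(i)|^3\big)$ is $O(\sum_k u(k)^4)$.

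The paper's proof relies on this step implicitly. It asserts that the $i$-th replacement contributes $\frac{u(i)^3}{6}\sum_{a,b,c}\operatorname{cum}(W_a,W_b,W_c)\,\E\partial_{abc}F(G_1,G_2)$ up to $O(|u(i)|^4)$. That is not justified term by term, but it holds after summing over $i$ and applying the inequality above. So your suspicion of the per-term claim was justified, but the repair is one line, and the smart-path interpolation is a valid but unnecessary detour.

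If you keep the interpolation, fix the bookkeeping. Set $Z_t:=\sqrt t\,(S_u,T_u)+\sqrt{1-t}\,(G_1,G_2)$ and $Z_t^{(-i)}:=Z_t-\sqrt t\,u(i)(U_i,V_i)$. The Hessian terms cancel against Gaussian integration by parts up to $O(\sum_i u(i)^4)$, and
\[
\Phi'(t)=\frac{\sqrt t}{4}\sum_{i}u(i)^3\sum_{a,b,c}\operatorname{cum}(W_a,W_b,W_c)\,\E\partial_{abc}F\big(Z_t^{(-i)}\big)+O\Big(\sum_i u(i)^4\Big).
\]
The prefactor is therefore $\frac{\sqrt t}{4}$, not $\frac12\sqrt t$, and $\int_0^1\frac{\sqrt t}{4}\,dt=\frac16$ gives the stated coefficient. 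The only advantage the interpolation could have over direct Lindeberg is Gaussian smoothing of $\partial_{abc}F$ at interior times, and your sketch does not use it.

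One caveat applies to both your route and the paper's. Comparing $\E\partial_{abc}F$ at the hybrid (or interpolated) point with its Gaussian value at rate $\sum_i|u(i)|^3$ by Lindeberg requires $\partial_{abc}F\in C^3$, i.e.\ $F\in C^6$. That is more than the $C^4$ in the statement. Applying the lemma itself to $\partial_{abc}F$, as your phrase ``it again costs $\eta(u)$'' suggests, would need $C^7$. With only $C^4$ you would need a Wasserstein-1 CLT bound for the Lipschitz function $\partial_{abc}F$.

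Your remark that the moment order must exceed $4$ plus the growth degree of the fourth derivatives of $F$ is likewise a genuine requirement that the paper does not address. Neither point matters for the paper's application, where $F(x,y)=\phi(x)\phi(y)$ with $\phi$ of high smoothness. Your treatment of the $C^J$ statement matches the paper's.
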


\begin{proof}
This is the two-dimensional weighted Edgeworth expansion used throughout the
proof of Theorem~\ref{thm:edgeworth-hierarchy-body}.  We recall the calculation
because the cancellation scale matters here.  Replace the variables
\((U_i,V_i)\) one at a time by Gaussian pairs \((G_{1,i},G_{2,i})\) with the same
covariance.  At the \(i\)-th replacement, Taylor-expand the test function around
the partial sum excluding the \(i\)-th variable to order three.  The order-one and
order-two terms cancel because the means and covariance matrices match.  The
third-order contribution is
$$
    \frac{u(i)^3}{6}\sum_{a,b,c\in\{1,2\}}
    \operatorname{cum}(W_a,W_b,W_c)
    \E\big[\partial_{abc}F(G_1,G_2)\big],
$$
up to an error of order \(O_{F,B}(|u(i)|^4)\).  Summing over \(i\) gives the
signed cubic factor \(\sum_i u(i)^3\) and the fourth-order remainder
\(\sum_i u(i)^4\), proving \eqref{eq:weighted-bivariate-edgeworth-expansion}.
The displayed coefficient \eqref{eq:cubic-edgeworth-coefficient-explicit} is the
standard third-cumulant Edgeworth coefficient.  Differentiating the same finite
Taylor/Lindeberg expansion in \(\rho\) gives
\eqref{eq:weighted-bivariate-edgeworth-CJ-bound}; the assumed moment and
smoothness bounds justify differentiating under the expectation and control the
remainders uniformly.
\end{proof}

For \(0\le \ell\le K\), let \(Q_{\ell,\theta}(\rho)\) denote the covariance of a
single pair of corresponding features after \(\ell\) layers when the two inputs
are \(\rho\)-correlated:
\begin{equation}
    Q_{\ell,\theta}(\rho)
    := \E\Big[Z^{(\ell)}_{i_{\ell+1},\ldots,i_K}(X)
               Z^{(\ell)}_{i_{\ell+1},\ldots,i_K}(X^\rho)\Big].
    \label{eq:finite-d-layer-stability}
\end{equation}
By translation invariance over patches, the right-hand side does not depend on
the coordinate \((i_{\ell+1},\ldots,i_K)\).  We have
\begin{equation}
    Q_{0,\theta}(\rho)=\rho,
    \qquad
    Q_{K,\theta}(\rho)=\mathsf S_\theta(\rho).
    \label{eq:finite-d-stability-boundary}
\end{equation}

\begin{proposition}[Finite-dimensional stability recursion]
\label{prop:finite-d-stability-recursion}
Fix \(K\ge1\), \(J\ge0\), and \(\rho_0\in(0,1)\).  Assume
Assumption~\ref{ass:edgeworth-regularity} at a smoothness order
\(L=L(K,J)\) sufficiently large.  Then, for all \(1\le\ell\le K\),
\begin{equation}
    Q_{\ell,\theta}(\rho)
    = \mathsf{s}\big(Q_{\ell-1,\theta}(\rho)\big)
      + \varepsilon_{\ell,\theta}(\rho),
    \qquad |\rho|\le \rho_0,
    \label{eq:finite-d-stability-recursion}
\end{equation}
with
\begin{equation}
    \|\varepsilon_{\ell,\theta}\|_{C^J([-\rho_0,\rho_0])}
    \le C_{\phi,K,J,\rho_0,B_1(\theta)}
    \sum_{r=1}^{\ell}\eta(\theta_r).
    \label{eq:finite-d-stability-recursion-error}
\end{equation}
Consequently,
\begin{equation}
    \left\|\mathsf S_\theta-\mathsf{s}_K\right\|_{C^J([-\rho_0,\rho_0])}
    \le C_{\phi,K,J,\rho_0,B_1(\theta)}
    \sum_{\ell=1}^K\eta(\theta_\ell).
    \label{eq:output-stability-composition-error}
\end{equation}
The same estimate with \(J=0\) holds uniformly on \([-1,1]\).
\end{proposition}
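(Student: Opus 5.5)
The recursion comes from writing the layer-$\ell$ preactivation pair as a weighted sum and applying Lemma~\ref{lem:weighted-bivariate-edgeworth-step}. Fix $\ell\ge1$ and a patch index $(i_{\ell+1},\ldots,i_K)$. Then
\[
Y^{(\ell)}(X)=\sum_{j}\theta_\ell(j)\,Z^{(\ell-1)}_{j,i_{\ell+1},\ldots}(X),\qquad Y^{(\ell)}(X^\rho)=\sum_j\theta_\ell(j)\,Z^{(\ell-1)}_{j,i_{\ell+1},\ldots}(X^\rho).
\]
The pairs $(U_j,V_j):=(Z^{(\ell-1)}_{j,\cdot}(X),Z^{(\ell-1)}_{j,\cdot}(X^\rho))$ are iid across $j$, because they depend on disjoint blocks of leaves of the tree and $(X,X^\rho)$ is iid across coordinates. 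Hence $Q_{\ell,\theta}(\rho)=\E F(S_{\theta_\ell},T_{\theta_\ell})$ with $F(x,y)=\phi(x)\phi(y)$, which is $C^4$ with polynomially growing derivatives by Assumption~\ref{ass:edgeworth-regularity}.

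The pair $(U,V)$ is not exactly centered with unit variance. Its mean is $m_{\ell-1}:=\E Z^{(\ell-1)}$ and its variance is $v_{\ell-1}:=Q_{\ell-1,\theta}(1)$, while its covariance is $Q_{\ell-1,\theta}(\rho)$. I will therefore use the general form of the weighted Edgeworth step, with the Gaussian comparison pair having the matching mean and covariance. The Lindeberg swap in the proof of Lemma~\ref{lem:weighted-bivariate-edgeworth-step} works verbatim.

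1. Compare $(S,T)$ to the Gaussian pair with mean $m_{\ell-1}\sum_j\theta_\ell(j)$ in each coordinate, variance $v_{\ell-1}$ and covariance $Q_{\ell-1,\theta}(\rho)$. The lemma gives an error at most $C\eta(\theta_\ell)$, in $C^J$ in $\rho$ via \eqref{eq:weighted-bivariate-edgeworth-CJ-bound}. This needs the moments of $(U,V)$ up to order $M$, and their $\rho$-derivatives up to order $J$, to be bounded uniformly in $d$. I get this inductively from subgaussianity of the Lipschitz features, using the $\psi_2$ bound as in Lemma~\ref{lem:matrix-conc-app}, and from differentiating the Mehler coupling.

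2. Compare that Gaussian pair to the standard one $(G_1,G_2)$ with correlation $Q_{\ell-1,\theta}(\rho)$. The Gaussian expectation is smooth in its mean and covariance parameters, so this costs $C\bigl(|m_{\ell-1}|\,|\sum_j\theta_\ell(j)|+|v_{\ell-1}-1|\bigr)$. The factor $\sum_j\theta_\ell(j)$ is at most $B_1(\theta)$, which is exactly why $B_1$ appears in the constant. Using $\E G_1G_2=q$ in the definition of $\mathsf s$ then yields $\mathsf s(Q_{\ell-1,\theta}(\rho))$.

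3. Bound the mean and variance defects inductively. At $\ell-1=0$ the features are exact Gaussians, so $m_0=0$ and $v_0=1$. For $\ell-1\ge1$, apply the same one-dimensional step to $F=\phi$ and $F=\phi^2$ and use $\E\phi(G)=0$ and $\E\phi(G)^2=1$. This gives $|m_{\ell-1}|+|v_{\ell-1}-1|\le C_{B_1}\sum_{r<\ell}\eta(\theta_r)$, where the constants are inflated by $B_1$ at each of the $K$ steps. Combining steps 1–3 gives \eqref{eq:finite-d-stability-recursion} with the error bound \eqref{eq:finite-d-stability-recursion-error}.

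4. Telescope to get \eqref{eq:output-stability-composition-error}. Write
\[
Q_K-\mathsf s_K=\sum_\ell \bigl(\mathsf s^{\circ(K-\ell)}(Q_\ell)-\mathsf s^{\circ(K-\ell)}(\mathsf s(Q_{\ell-1}))\bigr)+\ldots,
\]
and use that $\mathsf s$ is real-analytic on $(-1,1)$ with $|\mathsf s(q)|\le|q|$. Then the iterates $\mathsf s^{\circ k}$ map $[-\rho_0',\rho_0']$ into itself for some $\rho_0'<1$ and are $C^J$-Lipschitz there. The $C^J$ norm of a composition is controlled by Faà di Bruno, given $C^J$ bounds on the $Q_\ell$, which follow inductively. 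For $J=0$ on all of $[-1,1]$ I only need that $\mathsf s$ is $1$-Lipschitz, since $\sum_j j\lambda_j<\infty$ under smoothness. The endpoint $\rho=\pm1$ needs the Edgeworth bound with degenerate covariance, which the Lindeberg argument tolerates.

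The main obstacle is steps 1 and 3 in $C^J$ norm: I must show that the $\rho$-derivatives of the non-Gaussian input law stay bounded uniformly in $d$ and that Lemma~\ref{lem:weighted-bivariate-edgeworth-step} applies to a non-centered, non-normalized pair. I would first handle this by rescaling $(U,V)$ to standardized variables and absorbing the mean and variance shifts into the test function $F$, which keeps the lemma's hypotheses intact with $F$ depending smoothly on $O(1)$ parameters.
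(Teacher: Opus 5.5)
Your proposal follows the paper's proof essentially step for step. Both apply Lemma~\ref{lem:weighted-bivariate-edgeworth-step} once per layer, with $F=\phi\otimes\phi$ and the iid pairs $(Z^{(\ell-1)}_j(X),Z^{(\ell-1)}_j(X^\rho))$. Both control the mean and variance defects inductively through $\E\phi(G)=0$, $\E\phi(G)^2=1$ and the amplification factor $|P_{\ell,1}|\le B_1(\theta)$. Both then propagate through the $K$ layers; your telescoping sum is the paper's $E_\ell\le CE_{\ell-1}+C\eta(\theta_\ell)$.

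On $[-\rho_0,\rho_0]$ this works, provided $\sum_r\eta(\theta_r)$ is below a fixed threshold so that $|Q_{\ell-1,\theta}(\rho)|\le\rho_0'<1$. One slip: $\mathsf s$ is not $1$-Lipschitz on $[-1,1]$. Its Lipschitz constant is $\mathsf s'(1)=\sum_j j\lambda_j=\E\phi'(G)^2$, which is $\ge1$ by the Gaussian Poincar\'e inequality. Any finite constant suffices for fixed $K$, so this is harmless.

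The real gap is the endpoint clause $\rho=\pm1$, and the paper's argument shares it, since it uses $\Phi(1,q)=\mathsf s(q)$ in exactly the same way. At $\rho=1$ you have $Q_{\ell-1,\theta}(1)=v_{\ell-1}$. For $\ell-1\ge2$ this equals $1+O(\eta)$ with no sign constraint; for instance, the leading Edgeworth term of $\E\phi(Y^{(2)})^2$ is proportional to $P_{2,3}$. When $v_{\ell-1}>1$, two things break:
\begin{itemize}
\item Your step~2 has no standard Gaussian pair with correlation $v_{\ell-1}$ to compare against.
\item $\mathsf s(v_{\ell-1})=\sum_j\lambda_j v_{\ell-1}^j=+\infty$ unless the $\lambda_j$ decay geometrically. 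Geometric decay fails for every non-entire $\phi$, e.g.\ a normalized $\tanh$.
\end{itemize}
The same happens at $\rho=-1$. So for $K\ge3$ the recursion \eqref{eq:finite-d-stability-recursion} cannot be used on $[-1,1]$ as written.

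The fix is to compare with the normalized correlation. Write $\Phi(v,q)$ for the Gaussian expectation of $\phi\otimes\phi$ with variances $v$ and covariance $q$. Then $\Phi(v,q)=\mathsf s_v(q/v)$, where $\mathsf s_v$ is the noise stability of $\phi(\sqrt v\,\cdot)$. Since $\phi$ is Lipschitz, $\sup_{|c|\le1}|\mathsf s_v(c)-\mathsf s(c)|\le C|v-1|$. This gives, uniformly on $[-1,1]$,
\[
Q_{\ell,\theta}=\mathsf s\bigl(Q_{\ell-1,\theta}/v_{\ell-1}\bigr)+O\bigl(\textstyle\sum_{r\le\ell}\eta(\theta_r)\bigr).
\]
Because $|Q_{\ell-1,\theta}|\le v_{\ell-1}$, you have $|Q_{\ell-1,\theta}/v_{\ell-1}-\mathsf s_{\ell-1}|\le|Q_{\ell-1,\theta}-\mathsf s_{\ell-1}|+|v_{\ell-1}-1|$. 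With this, the Lipschitz propagation still yields the $J=0$ bound on $\|\mathsf S_\theta-\mathsf s_K\|_{C^0([-1,1])}$.
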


\begin{proof}
Fix \(\ell\ge1\).  Conditional on the lower layers, the corresponding features
entering the \(\ell\)-th contraction form iid pairs
\((U_i,V_i)_{i=1}^d\), where
\((U_i,V_i)\) has the law of
$$
    \left(
    Z^{(\ell-1)}_{i,i_{\ell+1},\ldots,i_K}(X),
    Z^{(\ell-1)}_{i,i_{\ell+1},\ldots,i_K}(X^\rho)
    \right).
$$
Their means may be nonzero at finite \(d\), but by the same induction used
below they satisfy
$$
    |\E U_i|+|\E V_i|\le
    C_{\phi,K,B_1(\theta)}\sum_{r<\ell}\eta(\theta_r).
$$
Their second moments are
$$
    \E U_i^2=\E V_i^2=V_{\ell-1,\theta},
    \qquad
    \E U_iV_i=Q_{\ell-1,\theta}(\rho),
    \qquad
    V_{\ell-1,\theta}:=Q_{\ell-1,\theta}(1).
$$
After centering the pairs, the induced preactivation mean is bounded by
\(C_{\phi,K,B_1(\theta)}\sum_{r<\ell}\eta(\theta_r)\), because
\(|P_{\ell,1}|\le B_1(\theta)\).  This mean perturbation is absorbed into the
same error budget by smoothness of \(\phi\).  The two preactivations at layer
\(\ell\) are
$$
    S_\ell=\sum_{i=1}^d\theta_\ell(i)U_i,
    \qquad
    T_\ell=\sum_{i=1}^d\theta_\ell(i)V_i.
$$
Applying Lemma~\ref{lem:weighted-bivariate-edgeworth-step} with
\(F(x,y)=\phi(x)\phi(y)\), after the harmless rescaling that matches the
variance \(V_{\ell-1,\theta}\), gives
$$
    Q_{\ell,\theta}(\rho)
    = \E\phi(S_\ell)\phi(T_\ell)
    = \Phi
      \big(V_{\ell-1,\theta},Q_{\ell-1,\theta}(\rho)\big)
      +\widetilde\varepsilon_{\ell,\theta}(\rho),
$$
where \(\Phi(v,q)\) denotes the Gaussian expectation
$$
    \Phi(v,q):=\E\phi(G_1)\phi(G_2),
    \qquad
    \E G_1^2=\E G_2^2=v,
    \quad
    \E G_1G_2=q,
$$
and
\(\|\widetilde\varepsilon_{\ell,\theta}\|_{C^J}\le
C_{\phi,K,J,\rho_0,B_1(\theta)}\eta(\theta_\ell)\).  By induction from the same estimate at
\(\rho=1\),
\(\abs{V_{\ell-1,\theta}-1}\le
C_{\phi,K,B_1(\theta)}\sum_{r<\ell}\eta(\theta_r)\).  Since
\(\Phi\) is smooth near \(v=1\) and \(\Phi(1,q)=\mathsf{s}(q)\), the variance
perturbation is absorbed into the same error budget.  This yields
\eqref{eq:finite-d-stability-recursion} and
\eqref{eq:finite-d-stability-recursion-error}.  Moment and derivative bounds for
the lower-layer pair distributions follow recursively from the polynomial growth
bounds on the derivatives of \(\phi\) and the fact that \(K,J\) are fixed.

It remains to compare the recursion to its Gaussian limit.  Let
\(E_\ell:=\|Q_{\ell,\theta}-\mathsf{s}_\ell\|_{C^J([-\rho_0,\rho_0])}\).  Since \(\mathsf{s}\) is
\(C^J\) on \([-1,1]\), the chain rule gives
$$
    E_\ell\le C_{\phi,J,\rho_0,B_1(\theta)}E_{\ell-1}
    + C_{\phi,K,J,\rho_0,B_1(\theta)}\eta(\theta_\ell),
    \qquad E_0=0.
$$
Iterating for the fixed number \(K\) of layers gives
\eqref{eq:output-stability-composition-error}.  The uniform \(J=0\) bound on
\([-1,1]\) follows from the same argument without differentiating in \(\rho\).
\end{proof}
This $C^J$-control on the generating function automatically yields the desired control at the coefficient level:
\begin{theorem}[Sharp fixed-degree energy transfer under incoherence]
\label{thm:sharp-hermite-energy-transfer}
Fix \(K\ge1\), \(J\ge0\), and \(\rho_0\in(0,1)\).  Assume
Assumption~\ref{ass:edgeworth-regularity} at a sufficiently high fixed order
\(L=L(K,J)\).  Then, deterministically for every collection of unit directions,
\begin{equation}
    \max_{0\le r\le J}
    \left|\Lambda_r(\theta)-\bar\Lambda_r^{(K)}\right|
    \le C_{\phi,K,J,\rho_0,B_1(\theta)}
    \sum_{\ell=1}^K\eta(\theta_\ell).
    \label{eq:fixed-degree-energy-transfer-deterministic}
\end{equation}
In particular, under the spherical prior
\(\theta_1,\ldots,\theta_K\stackrel{\rm iid}{\sim}\Unif(\Sph^{d-1})\), for every
fixed \(\delta\in(0,1)\), with probability at least \(1-\delta\),
\begin{equation}
    \max_{0\le r\le J}
    \left|\Lambda_r(\theta)-\bar\Lambda_r^{(K)}\right|
    \le C_{\phi,K,J,\delta}d^{-1}.
    \label{eq:fixed-degree-energy-transfer-random-sphere}
\end{equation}
\end{theorem}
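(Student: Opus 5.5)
The plan is to read off the coefficient estimate from the generating-function estimate of Proposition~\ref{prop:finite-d-stability-recursion}. By \eqref{eq:chaos-energy-as-derivative}, $\Lambda_r(\theta)=\mathsf S_\theta^{(r)}(0)/r!$. Likewise, $\bar\Lambda_r^{(K)}=\mathsf{s}_K^{(r)}(0)/r!$, by \eqref{eq:limiting-chaos-energy-profile}. Here $\mathsf{s}_K$ is analytic on the open unit disc: it is a composition of probability generating functions with nonnegative coefficients summing to one, since $\sum_j\lambda_j=1$. Hence for $0\le r\le J$,
\begin{equation*}
  \left|\Lambda_r(\theta)-\bar\Lambda_r^{(K)}\right|
  \le \frac{1}{r!}\left\|\mathsf S_\theta-\mathsf{s}_K\right\|_{C^J([-\rho_0,\rho_0])},
\end{equation*}
and \eqref{eq:output-stability-composition-error} gives \eqref{eq:fixed-degree-energy-transfer-deterministic} immediately. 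There is one point to check. $\mathsf S_\theta$ is given by the power series \eqref{eq:noise-stability-chaos-generating-function} with nonnegative coefficients $\Lambda_r$ summing to $\|f_\theta\|^2<\infty$, so it is real-analytic on $(-1,1)$. Its derivatives at $0$ are therefore legitimate and coincide with the derivatives of the $C^J$ function used in the proposition. We apply the proposition with the smoothness order $L(K,J)$ prescribed there. The constant depends on $B_1(\theta)$ only through the proposition.

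For the random-sphere statement \eqref{eq:fixed-degree-energy-transfer-random-sphere}, two facts are needed on a probability-$(1-\delta)$ event. The first is $B_1(\theta)\le C_\delta$, which is \eqref{eq:app-P1} of Lemma~\ref{lem:appendix-spherical-event}. The second is $\eta(\theta_\ell)\le C_\delta d^{-1}$ for every $\ell$. Its quartic part $\sum_i\theta_\ell(i)^4\le Cd^{-1}$ is \eqref{eq:app-Pm} with $m=4$. The signed cubic part needs $|P_{\ell,3}|\le C d^{-\omega_3/2}=Cd^{-1}$, which is again \eqref{eq:app-Pm}, since $\omega_3=2$. This uses the signed cancellation $\sum_j g(j)^3=O_\delta(\sqrt d)$ divided by $\|g\|_2^3\asymp d^{3/2}$. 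A union bound over the $K$ directions gives the claim, because the constant $C_{\phi,K,J,\rho_0,B_1}$ is monotone in $B_1$. We take $\rho_0=1/2$, so that it is absorbed into $C_{\phi,K,J,\delta}$.

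The main obstacle is entirely inside the earlier Proposition, namely the $C^J$ control in $\rho$ of the Edgeworth errors. Taking that as given, the only delicate point here is the last step. We must verify that the signed cubic sum, and not the absolute one, enters the error. This is what upgrades the naive $d^{-1/2}$ Berry--Esseen scale to $d^{-1}$, and it is precisely why the Proposition is stated in terms of $\eta$.
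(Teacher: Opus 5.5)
Your proposal is correct and is essentially the paper's proof. Like the paper, you identify $\Lambda_r(\theta)$ and $\bar\Lambda_r^{(K)}$ as the $r$-th Taylor coefficients at $0$ of $\mathsf S_\theta$ and $\mathsf{s}_K$, bound their difference by the $C^J([-\rho_0,\rho_0])$ estimate of Proposition~\ref{prop:finite-d-stability-recursion}, and then insert the spherical-event bounds $|P_{\ell,3}|+P_{\ell,4}\le C_{K,\delta}d^{-1}$ from Lemma~\ref{lem:appendix-spherical-event}. Your extra checks (analyticity of both generating functions, and $B_1(\theta)=O_{K,\delta}(1)$ on the same event so that the constant is uniform) are details the paper leaves implicit, and they are consistent with its argument.
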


\begin{proof}
By \eqref{eq:chaos-energy-as-derivative},
\(\Lambda_r(\theta)=\mathsf S_\theta^{(r)}(0)/r!\).  By definition,
\(\bar\Lambda_r^{(K)}=\mathsf{s}_K^{(r)}(0)/r!\).  Applying
Proposition~\ref{prop:finite-d-stability-recursion} and extracting derivatives
at zero proves \eqref{eq:fixed-degree-energy-transfer-deterministic}.

Under the spherical prior, Lemma~\ref{lem:appendix-spherical-event} gives, with
probability at least \(1-\delta\),
$$
    \max_{1\le\ell\le K}\left|\sum_{i=1}^d\theta_\ell(i)^3\right|
    \le C_{K,\delta}d^{-1},
    \qquad
    \max_{1\le\ell\le K}\sum_{i=1}^d\theta_\ell(i)^4
    \le C_{K,\delta}d^{-1},
$$
after adjusting constants for the fixed-confidence event.  Substituting into
\eqref{eq:fixed-degree-energy-transfer-deterministic} gives
\eqref{eq:fixed-degree-energy-transfer-random-sphere}.
\end{proof}

We conclude this section with a particularly useful consequence of Theorem \ref{thm:sharp-hermite-energy-transfer} with regards to the depth separation property.  Since all
coefficients of \(\mathsf{s}\) are nonnegative, the limiting \(R\)-th energy is bounded
from below by trajectories in which exactly one layer uses an \(R\)-th Hermite
component and every other branch propagates linearly.

\begin{corollary}[Constant energy in a fixed higher chaos]
\label{cor:fixed-higher-chaos-energy-lower-bound}
Fix an integer \(R\ge2\).  Suppose \(\lambda_R>0\) and
\(\lambda_1=\kappa^2>0\).  Then
\begin{equation}
    \bar\Lambda_R^{(K)}
    \ge
    \lambda_R\sum_{a=0}^{K-1}
    \lambda_1^{K-1-a+Ra}
    =
    \lambda_R\sum_{a=0}^{K-1}
    \kappa^{2(K-1-a+Ra)}.
    \label{eq:fixed-chaos-limiting-energy-lower-bound}
\end{equation}
Consequently, under the spherical prior, for every fixed \(\delta\in(0,1)\),
with probability at least \(1-\delta\),
\begin{equation}
    \Lambda_R(\theta)
    \ge
    \lambda_R\sum_{a=0}^{K-1}
    \kappa^{2(K-1-a+Ra)}
    - C_{\phi,K,R,\delta}d^{-1}.
    \label{eq:fixed-chaos-finite-d-energy-lower-bound}
\end{equation}
In particular, if \(\lambda_R>0\), then \(\Lambda_R(\theta)\ge c_{\phi,K,R}>0\)
for all sufficiently large \(d\), with high probability over the planted
directions.
\end{corollary}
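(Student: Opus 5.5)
The plan is to work entirely with the Galton--Watson representation \eqref{eq:galton-watson-energy-profile}, or equivalently the coefficient recursion \eqref{eq:limiting-energy-coefficient-recursion}, using that every $\lambda_j\ge0$. Then $\bar\Lambda_R^{(K)}=\PP(Z_K=R)$ is a sum of nonnegative contributions over genealogical trees of depth $K$ with $R$ leaves at generation $K$. It is therefore enough to exhibit $K$ disjoint families of trees (one per $a\in\{0,\ldots,K-1\}$) and compute the probability of each family exactly.

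For fixed $a$, let $E_a$ be the event that the process grows as a single line of descent with exactly one offspring per generation except at generation $K-1-a$. There the single individual has exactly $R$ children, and afterwards each of these $R$ individuals has exactly one child in each of the remaining $a$ generations. The count of single-child reproduction events is $(K-1-a)+Ra$, each with probability $\lambda_1$, together with one $R$-split of probability $\lambda_R$. Hence
$$\PP(E_a)=\lambda_R\lambda_1^{K-1-a+Ra}.$$
The events $E_a$ are pairwise disjoint, because the generation at which the population first exceeds one differs between them. On each $E_a$ we have $Z_K=R$. Summing over $a$ and using $\lambda_1=\kappa^2$ gives \eqref{eq:fixed-chaos-limiting-energy-lower-bound}.

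If one prefers the recursion, the same argument reads as follows. Set $g_\ell:=\bar\Lambda^{(\ell)}_1=\lambda_1^\ell$. Keep only the $m=1$ term of \eqref{eq:limiting-energy-coefficient-recursion}, which gives $\bar\Lambda_R^{(\ell+1)}\ge\lambda_1\bar\Lambda_R^{(\ell)}$. Also keep only the $m=R$ term with all $r_a=1$, which gives $\bar\Lambda_R^{(\ell+1)}\ge\lambda_R\lambda_1^{R\ell}$. Combining these by induction, and using that the two kept terms are distinct nonnegative summands (so they can be added rather than maximized), yields
$$\bar\Lambda_R^{(\ell+1)}\ge \lambda_1\bar\Lambda_R^{(\ell)}+\lambda_R\lambda_1^{R\ell},$$
which unrolls to the stated sum. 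This additivity is the only point needing care, and it holds because $R\ge2$ makes $m=1$ and $m=R$ different terms.

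For the finite-$d$ statement, apply Theorem~\ref{thm:sharp-hermite-energy-transfer} with $J=R$. On the spherical event of probability at least $1-\delta$, it gives $|\Lambda_R(\theta)-\bar\Lambda_R^{(K)}|\le C_{\phi,K,R,\delta}d^{-1}$, which yields \eqref{eq:fixed-chaos-finite-d-energy-lower-bound}. The right-hand side is bounded below by the $a=0$ term $\lambda_R\kappa^{2(K-1)}>0$ minus $O(d^{-1})$, so $\Lambda_R(\theta)\ge c_{\phi,K,R}>0$ for large $d$, for instance with $c=\tfrac12\lambda_R\kappa^{2(K-1)}$. There is no real obstacle in this argument. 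The only assumption to check is that the smoothness order required by Theorem~\ref{thm:sharp-hermite-energy-transfer} at $J=R$ is available, which Assumption~\ref{ass:edgeworth-regularity} grants for fixed $R$.
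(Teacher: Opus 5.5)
Your proposal is correct and follows essentially the paper's argument: for each $a$, the paper selects the term of the expansion of $\mathsf{s}^{\circ K}$ with a single $\lambda_R q^R$ node at depth $a$ and linear $\lambda_1 q$ nodes everywhere else, uses nonnegativity of the coefficients to sum the $K$ contributions, and then invokes Theorem~\ref{thm:sharp-hermite-energy-transfer} with $J=R$. Your Galton--Watson events $E_a$ (and the recursion variant) are the same computation, and they also spell out why the $K$ contributions are distinct nonnegative summands, a point the paper leaves implicit.
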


\begin{proof}
Expand \(\mathsf{s}_K=\mathsf{s}\circ\cdots\circ\mathsf{s}\).  Fix \(a\in\{0,\ldots,K-1\}\).  Choose
one occurrence of the monomial \(\lambda_R q^R\) at depth \(a\) from the input
side, and choose the linear monomial \(\lambda_1 q\) at every other node of the
resulting composition tree.  Before the \(R\)-branching event there are
\(K-1-a\) linear edges; after it, the \(R\) descendants each traverse \(a\)
linear edges.  The resulting contribution to the coefficient of \(\rho^R\) is
\(\lambda_R\lambda_1^{K-1-a+Ra}\).  All coefficients in \(\mathsf{s}\) are nonnegative,
so summing these \(K\) contributions gives
\eqref{eq:fixed-chaos-limiting-energy-lower-bound}.  The finite-\(d\) estimate
\eqref{eq:fixed-chaos-finite-d-energy-lower-bound} follows from
Theorem~\ref{thm:sharp-hermite-energy-transfer} with \(J=R\).
\end{proof}

\subsection{Depth separation against shallow single-index networks}
\label{sec:shallow-depth-separation}

We now examine the approximation consequence of the two previous structural
statements. Subsection~\ref{subsec:higher-chaos-staircase-cp-bound} showed that,
inside a fixed higher Wiener chaos, the balanced flattening has
a staircase spectrum, which supplies a lower bound for CP approximation.
Subsection~\ref{sec:hermite-energy-transfer} showed that
the amount of energy carried by each chaos order is, up to $O(d^{-1})$ errors, determined by the iterated Hermite-energy generating function
\(\mathsf{s}^{\circ K}\).  Combining the two gives a separation from shallow networks
whose neurons are single-index functions of the vectorized input.

Let
$$
    V_d := (\R^d)^{\otimes K}, \qquad D:=\dim(V_d)=d^K,
$$
and view the input tensor as a vector in \(V_d\).  For \(R\ge 0\), let
\(\Pi_R\) denote the orthogonal projection in \(L^2(\gamma_D)\) onto the
\(R\)-th Wiener chaos over \(V_d\).  We use the normalization
$$
    \Pi_R f_\theta(X)
    = \langle G_R(\theta), H_R(X)\rangle,
    \qquad
    \Lambda_R(\theta):=\|G_R(\theta)\|_F^2,
$$
so that Parseval gives
$$
    \|f_\theta\|_{L^2(\gamma_D)}^2
    = \sum_{R\ge0}\Lambda_R(\theta).
$$
Here \(H_R(X)\) denotes the orthonormal degree-\(R\) Hermite tensor.  This
normalization differs from the unnormalized Stein tensor
\(\E[\nabla_X^R f_\theta(X)]\) only by constants depending on \(R\), which are
irrelevant throughout this fixed-order discussion.

Fix a measurable activation \(\varrho:\R\to\R\).  For a width parameter \(M\),
write \(\mathcal N_M(\varrho)\) for the class of shallow single-index networks
\begin{equation}
    V(X)=\sum_{m=1}^M \alpha_m\,\varrho\big(\langle X,W_m\rangle-b_m\big),
    \qquad
    \alpha_m,b_m\in\R,\quad W_m\in V_d,
    \label{eq:shallow-vectorized-network-class}
\end{equation}
with the convention that only choices for which \(V\in L^2(\gamma_D)\) are
admissible.  No norm constraint is imposed on the weights \(W_m, \alpha_m, b_m\).  The only
property of the class used below is that each neuron is a one-dimensional
function of a Gaussian projection of \(X\).

\begin{lemma}[One-dimensional chaos of a shallow neuron]
\label{lem:one-dimensional-chaos-shallow-neuron}
Let
$$
    g_{W,b}(X):=\varrho(\langle X,W\rangle-b)\in L^2(\gamma_D).
$$
Then, for every \(R\ge1\), the degree-\(R\) Hermite coefficient tensor of
\(g_{W,b}\) has symmetric CP rank at most one.  Consequently, for every
\(V\in\mathcal N_M(\varrho)\),
\begin{equation}
    \operatorname{rank}_{\rm CP}\big(G_R(V)\big)\le M,
    \qquad R\ge1,
    \label{eq:shallow-chaos-cp-rank-M}
\end{equation}
where \(G_R(V)\) is the degree-\(R\) Hermite coefficient tensor of \(V\).
\end{lemma}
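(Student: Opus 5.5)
The plan is to compute the Hermite coefficients of a single ridge function explicitly. The case \(W=0\) is trivial: \(g_{W,b}\) is then constant, and all coefficients of degree \(R\ge1\) vanish. Otherwise, write \(W=\|W\|\,w\) with \(w\in V_d\) a unit vector and set \(s:=\|W\|\), so that \(g_{W,b}(X)=\tilde\varrho(\langle X,w\rangle)\), where \(\tilde\varrho(t):=\varrho(st-b)\). Since \(\langle X,w\rangle\sim N(0,1)\) under \(\gamma_D\), the condition \(g_{W,b}\in L^2(\gamma_D)\) is equivalent to \(\tilde\varrho\in L^2(\gamma_1)\). Hence \(\tilde\varrho\) admits a univariate orthonormal Hermite expansion \(\tilde\varrho=\sum_{j\ge0}\beta_j h_j\), with \(\sum_j\beta_j^2<\infty\) and convergence in \(L^2(\gamma_1)\).

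The key identity is that, for a unit vector \(w\), the function \(X\mapsto h_R(\langle X,w\rangle)\) lies in the \(R\)-th Wiener chaos over \(V_d\) and equals \(\langle w^{\otimes R},H_R(X)\rangle\) with the orthonormal Hermite tensor convention, up to the fixed normalization constant relating \(h_R\) to \(H_R\). I would prove this by rotation invariance. Choose an orthonormal basis of \(V_d\) whose first element is \(w\). In these coordinates \(\langle X,w\rangle=X_1\), and \(h_R(X_1)\) is a single multivariate Hermite polynomial of degree \(R\) in the first coordinate. The tensor representation of that polynomial is \(e_1^{\otimes R}\), which transforms to \(w^{\otimes R}\) under the rotation because \(H_R\) is equivariant under orthogonal maps.

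Two consequences follow. First, the map \(\tilde\varrho\mapsto\tilde\varrho(\langle\cdot,w\rangle)\) is an isometry \(L^2(\gamma_1)\to L^2(\gamma_D)\), so the univariate expansion transfers to \(L^2(\gamma_D)\). Second, projecting onto the \(R\)-th chaos gives \(\Pi_R g_{W,b}=\beta_R\,h_R(\langle X,w\rangle)\). Therefore \(G_R(g_{W,b})=c_R\,\beta_R\,w^{\otimes R}\), which is a symmetric rank-one tensor (or zero). One could also obtain this by Stein's identity when \(\varrho\) is smooth, since \(\E[\nabla^R g]=\E[\tilde\varrho^{(R)}]\,w^{\otimes R}\) up to scaling; but the projection argument avoids any regularity assumption on \(\varrho\), so it is the route I would use.

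For the network bound, \(V=\sum_{m=1}^M\alpha_m g_{W_m,b_m}\). The map \(V\mapsto G_R(V)\) is linear, as the coefficient of an orthogonal projection, so \(G_R(V)=\sum_m\alpha_m c_R\beta_R^{(m)}w_m^{\otimes R}\), a sum of at most \(M\) rank-one terms. This gives \eqref{eq:shallow-chaos-cp-rank-M}.

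The one real subtlety is admissibility. The class convention only requires \(V\in L^2(\gamma_D)\), not each individual neuron, and one neuron could be non-square-integrable while cancellations make the sum square-integrable. I would handle this by reading the convention as requiring every neuron to be in \(L^2\), which is what the lemma's hypothesis \(g_{W,b}\in L^2(\gamma_D)\) presupposes. If necessary, I would also note that neurons sharing the same direction \(w\) can be merged into a single ridge function without increasing \(M\).
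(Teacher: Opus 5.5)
Your proposal is correct and follows the paper's own proof: write $W=su$ with $u$ a unit vector, expand the univariate function $z\mapsto\varrho(sz-b)$ in Hermite polynomials, use $h_R(\langle X,u\rangle)=\langle H_R(X),u^{\otimes R}\rangle$ to read off a coefficient tensor proportional to $u^{\otimes R}$, and then sum over the $M$ neurons. Your rotation-equivariance justification of that Hermite identity and your remark on per-neuron admissibility are refinements the paper takes for granted rather than a different route.
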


\begin{proof}
If \(W=0\), then \(g_{W,b}\) is constant and its chaoses of order \(R\ge1\)
vanish.  Otherwise write \(W=s u\), where \(s=\|W\|_2>0\) and
\(u\in V_d\) is a unit vector.  Since \(\langle X,u\rangle\sim N(0,1)\), the
univariate function \(z\mapsto \varrho(sz-b)\) has an \(L^2(\gamma_1)\) Hermite
expansion
$$
    \varrho(sz-b)=\sum_{R\ge0}\beta_R(s,b)h_R(z).
$$
Therefore
$$
    g_{W,b}(X)=\sum_{R\ge0}\beta_R(s,b)h_R(\langle X,u\rangle).
$$
By the defining property of the multivariate Hermite tensors,
$$
    h_R(\langle X,u\rangle)
    =\langle H_R(X),u^{\otimes R}\rangle.
$$
Thus the degree-\(R\) coefficient tensor of the neuron is
\(\beta_R(s,b)u^{\otimes R}\), which has symmetric CP rank at most one.
Summing over \(m=1,\ldots,M\) proves \eqref{eq:shallow-chaos-cp-rank-M}.
\end{proof}

For the CP lower bound we use the top plateau of the balanced-flattening
staircase.  For \(R\ge2\), set
\begin{equation}
    \chi_R:= (K-1)\big\lfloor R/2\big\rfloor.
    \label{eq:top-balanced-crossing-complexity}
\end{equation}
For even \(R\), this is \(R(K-1)/2\), matching the top balanced plateau; for
odd \(R\), it is the corresponding near-balanced value.  Let
\(\operatorname{Mat}_{\rm bal,R}\) denote a balanced, or near-balanced when
\(R\) is odd, flattening of a tensor in \(V_d^{\otimes R}\).

\begin{assumption}[Quantitative top-plateau non-cancellation]
\label{ass:top-plateau-noncancellation}
Fix a finite set of chaos orders \(\mathcal R\subset\{2,3,\ldots\}\).  We say
that the MSIM target satisfies top-plateau non-cancellation on \(\mathcal R\) if
there exist constants \(a_R,b_R>0\), independent of \(d\), such that for every
\(R\in\mathcal R\), all sufficiently large \(d\), and the planted directions
under consideration,
\begin{equation}
    \sigma_{\lfloor a_R d^{\chi_R}\rfloor}
    \left(\operatorname{Mat}_{\rm bal,R}(G_R(\theta))\right)
    \ge b_R\sqrt{\Lambda_R(\theta)}\,d^{-\chi_R/2}.
    \label{eq:top-plateau-noncancellation}
\end{equation}
\end{assumption}

This is the normalized version of the highest staircase plateau non-degeneracy Assumption \ref{ass:higher-chaos-noncancellation} used in
Theorem~\ref{thm:balanced-flattening-staircase}.  The role of
Assumption~\ref{ass:top-plateau-noncancellation} is exactly analogous to the
role of the first-Hermite non-degeneracy condition \(\kappa\ne0\) for the first
chaos: it rules out finite cancellations of the coefficients generated by the
\(R\)-spine coalescence expansion.  Since \(\mathcal R\) is fixed, this is a
finite collection of nonvanishing algebraic conditions on the Gaussian derivative
moments of \(\phi\).

\begin{lemma}[One-chaos obstruction]
\label{lem:one-chaos-obstruction-shallow-rank}
Assume \eqref{eq:top-plateau-noncancellation} holds for a fixed order \(R\ge2\).
Then, for all sufficiently large \(d\), every tensor \(A\in V_d^{\otimes R}\)
with \(\operatorname{rank}_{\rm CP}(A)\le M\) and
$$
    M\le \frac{a_R}{2}d^{\chi_R}
$$
satisfies
\begin{equation}
    \|G_R(\theta)-A\|_F^2
    \ge \eta_R\Lambda_R(\theta),
    \qquad
    \eta_R:=\frac{a_Rb_R^2}{4}.
    \label{eq:one-chaos-obstruction}
\end{equation}
\end{lemma}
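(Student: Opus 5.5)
The plan is to combine the flattening lower bound of Lemma~\ref{lem:flattening-lower-bound-cp} with the quantitative top-plateau condition \eqref{eq:top-plateau-noncancellation}; no new analytic input should be required.

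First, we fix the balanced (or near-balanced) cut $S$ used in $\operatorname{Mat}_{\rm bal,R}$. Since $\operatorname{rank}_{\rm CP}(A)\le M$, Lemma~\ref{lem:flattening-lower-bound-cp}, applied with $T=G_R(\theta)$ and $m=M$, gives
$$
    \|G_R(\theta)-A\|_F^2\ \ge\ e_M^{\rm CP}(G_R(\theta))^2\ \ge\ \sum_{j>M}\sigma_j\big(\operatorname{Mat}_{\rm bal,R}(G_R(\theta))\big)^2 .
$$
If one prefers to avoid the infimum formulation, the same bound follows directly. The flattening of $A$ has matrix rank at most $M$ because each CP atom flattens to a rank-one matrix. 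Eckart--Young then applies to $\operatorname{Mat}_{\rm bal,R}(G_R(\theta))-\operatorname{Mat}_{\rm bal,R}(A)$, and flattening is an isometry for the Frobenius norm.

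Next, we lower bound the tail sum. Set $N:=\lfloor a_R d^{\chi_R}\rfloor$. By monotonicity of singular values, every index $j\le N$ satisfies $\sigma_j\ge\sigma_N\ge b_R\sqrt{\Lambda_R(\theta)}\,d^{-\chi_R/2}$, by \eqref{eq:top-plateau-noncancellation}. The indices $M<j\le N$ therefore contribute at least $(N-M)\,b_R^2\Lambda_R(\theta)\,d^{-\chi_R}$. Since $M\le \tfrac{a_R}{2}d^{\chi_R}$, we have $N-M\ge \tfrac{a_R}{2}d^{\chi_R}-1\ge \tfrac{a_R}{4}d^{\chi_R}$ as soon as $d^{\chi_R}\ge 4/a_R$. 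This is where "sufficiently large $d$" enters. It uses $\chi_R\ge K-1\ge1$, which holds whenever $K\ge2$ and $R\ge2$. Multiplying out gives $\|G_R(\theta)-A\|_F^2\ge \tfrac{a_Rb_R^2}{4}\Lambda_R(\theta)=\eta_R\Lambda_R(\theta)$.

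The only point needing care is the floor/integer bookkeeping, namely that $N-M$ really is of order $d^{\chi_R}$. Here the factor $1/2$ in the hypothesis on $M$ absorbs the floor and yields the constant $1/4$ in $\eta_R$. One should also confirm that the flattening appearing in Lemma~\ref{lem:flattening-lower-bound-cp} is the same cut used in Assumption~\ref{ass:top-plateau-noncancellation}. This is immediate, since that lemma holds for every $S\subset[R]$.
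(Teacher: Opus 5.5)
Your proposal is correct and follows the paper's own proof essentially step for step. The paper also bounds $\|G_R(\theta)-A\|_F^2$ below by the tail sum over $j>M$ via the flattening lemma and Eckart--Young, then applies the plateau bound to the $N_R-M\ge \tfrac{a_R}{4}d^{\chi_R}$ indices in $(M,N_R]$. Your explicit threshold $d^{\chi_R}\ge 4/a_R$, which uses $\chi_R\ge1$ for $K\ge2$, simply makes precise the paper's phrase ``for all sufficiently large $d$''.
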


\begin{proof}
Let \(N_R=\lfloor a_Rd^{\chi_R}\rfloor\) and
\(s_R=b_R\sqrt{\Lambda_R(\theta)}d^{-\chi_R/2}\).  By
\eqref{eq:top-plateau-noncancellation}, the balanced flattening of
\(G_R(\theta)\) has at least \(N_R\) singular values larger than \(s_R\).  Every
CP-rank-\(M\) tensor has balanced-flattening rank at most \(M\).  Hence, by
Eckart--Young and Lemma~\ref{lem:flattening-lower-bound-cp},
$$
    \|G_R(\theta)-A\|_F^2
    \ge \sum_{j>M}\sigma_j\!
    \left(\operatorname{Mat}_{\rm bal,R}(G_R(\theta))\right)^2
    \ge (N_R-M)s_R^2.
$$
For all sufficiently large \(d\), \(N_R-M\ge (a_R/4)d^{\chi_R}\), and therefore
$$
    (N_R-M)s_R^2
    \ge \frac{a_R}{4}d^{\chi_R}\cdot
    b_R^2\Lambda_R(\theta)d^{-\chi_R}
    =\frac{a_Rb_R^2}{4}\Lambda_R(\theta).
$$
\end{proof}

\begin{theorem}[Shallow networks miss the high-chaos tail]
\label{thm:shallow-misses-high-chaos-tail}
Fix integers \(2\le J\le L\), and suppose that
Assumption~\ref{ass:top-plateau-noncancellation} holds on the block
\(\mathcal R=\{J,J+1,\ldots,L\}\).  Define
$$
    a_{J:L}:=\min_{J\le R\le L}a_R,
    \qquad
    \eta_{J:L}:=\min_{J\le R\le L}\frac{a_Rb_R^2}{4}.
$$
Then, for all sufficiently large \(d\), if
\begin{equation}
    M\le \frac{a_{J:L}}{2}d^{\chi_J},
    \label{eq:width-below-block-threshold}
\end{equation}
then
\begin{equation}
    \inf_{V\in\mathcal N_M(\varrho)}
    \|f_\theta-V\|_{L^2(\gamma_D)}^2
    \ge
    \eta_{J:L}\sum_{R=J}^L\Lambda_R(\theta).
    \label{eq:shallow-block-tail-inf-bound}
\end{equation}
\end{theorem}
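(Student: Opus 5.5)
The plan is to reduce the function-space approximation problem to a finite collection of independent tensor approximation problems, one per chaos order in the block, using Parseval. Fix any \(V\in\mathcal N_M(\varrho)\) with \(V\in L^2(\gamma_D)\). Orthogonality of the Wiener chaoses \(\Pi_R\) in \(L^2(\gamma_D)\) gives
\[
    \|f_\theta-V\|_{L^2(\gamma_D)}^2
    =\sum_{R\ge0}\|G_R(\theta)-G_R(V)\|_F^2
    \ge\sum_{R=J}^{L}\|G_R(\theta)-G_R(V)\|_F^2,
\]
where we simply discard the nonnegative contributions of the orders \(R\notin\{J,\ldots,L\}\). This uses the orthonormal Hermite-tensor normalization fixed above, in which \(\|\langle T,H_R\rangle\|_{L^2}^2=\|T\|_F^2\) for symmetric \(T\). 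Both \(G_R(\theta)\) and \(G_R(V)\) are symmetric, so no symmetrization issue arises.

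Next, I would control each summand. By Lemma~\ref{lem:one-dimensional-chaos-shallow-neuron}, \(\operatorname{rank}_{\rm CP}(G_R(V))\le M\) for every \(R\ge1\), whatever the weights \(\alpha_m,W_m,b_m\) are. For each \(R\in\{J,\ldots,L\}\) we need \(M\le \frac{a_R}{2}d^{\chi_R}\) in order to apply Lemma~\ref{lem:one-chaos-obstruction-shallow-rank}. This follows from \eqref{eq:width-below-block-threshold}, because \(a_{J:L}\le a_R\) and \(\chi_R=(K-1)\lfloor R/2\rfloor\) is nondecreasing in \(R\), so \(d^{\chi_J}\le d^{\chi_R}\) for \(d\ge1\). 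The lemma then yields
\[
    \|G_R(\theta)-G_R(V)\|_F^2\ge \frac{a_Rb_R^2}{4}\Lambda_R(\theta)\ge \eta_{J:L}\Lambda_R(\theta).
\]
Summing over \(R\) and then taking the infimum over \(V\) gives \eqref{eq:shallow-block-tail-inf-bound}. The infimum is harmless because the bound is uniform in \(V\).

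The only delicate point is the phrase ``for all sufficiently large \(d\)''. Lemma~\ref{lem:one-chaos-obstruction-shallow-rank} needs \(d\) large enough that \(N_R-M\ge (a_R/4)d^{\chi_R}\), and Assumption~\ref{ass:top-plateau-noncancellation} itself holds only for \(d\) beyond some threshold depending on \(R\). Since the block \(\{J,\ldots,L\}\) is finite, I take \(d_0\) to be the maximum of these finitely many thresholds, and the argument is then uniform. I do not expect a real obstacle here. The main thing to verify carefully is that the Hermite coefficient tensors \(G_R(V)\) are well defined for every admissible \(V\in L^2(\gamma_D)\), and that the rank bound survives possible infinite Hermite expansions of \(\varrho\). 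Both hold because each chaos projection is taken separately and each neuron contributes exactly one symmetric rank-one term \(\beta_R u^{\otimes R}\) in order \(R\).
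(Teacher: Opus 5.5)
Your proposal is correct and follows the paper's proof essentially step for step. Both arguments use Parseval over the Wiener chaoses restricted to the block \(\{J,\ldots,L\}\), the CP-rank bound of Lemma~\ref{lem:one-dimensional-chaos-shallow-neuron}, monotonicity of \(\chi_R\) to transfer the width condition to each order, and Lemma~\ref{lem:one-chaos-obstruction-shallow-rank} summed over \(R\); your explicit choice of a single threshold \(d_0\) as the maximum over the finitely many orders is bookkeeping the paper leaves implicit.
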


\begin{proof}
By orthogonality of Wiener chaoses,
$$
    \|f_\theta-V\|_{L^2(\gamma_D)}^2
    =\sum_{R\ge0}\|G_R(\theta)-G_R(V)\|_F^2
    \ge \sum_{R=J}^L\|G_R(\theta)-G_R(V)\|_F^2.
$$
By Lemma~\ref{lem:one-dimensional-chaos-shallow-neuron},
\(\operatorname{rank}_{\rm CP}(G_R(V))\le M\) for each \(R\ge1\).  Since
\(\chi_R\) is nondecreasing in \(R\), condition
\eqref{eq:width-below-block-threshold} implies
\(M\le (a_R/2)d^{\chi_R}\) for every \(R\in[J,L]\).  Applying
Lemma~\ref{lem:one-chaos-obstruction-shallow-rank} to each \(R\in[J,L]\) gives
$$
    \|G_R(\theta)-G_R(V)\|_F^2
    \ge \eta_R\Lambda_R(\theta)
    \ge \eta_{J:L}\Lambda_R(\theta),
$$
and summing over \(R=J,\ldots,L\) proves the claim.
\end{proof}

Theorem~\ref{thm:shallow-misses-high-chaos-tail} is a deterministic statement:
once the planted directions obey the staircase lower bounds and have nontrivial
energy in a block of chaos orders, any shallow network with fewer than the
corresponding plateau dimension must miss a constant fraction of that block.  We
now insert the dimension-independent energy profile from
Theorem~\ref{thm:sharp-hermite-energy-transfer}.

Let
\begin{equation}
    \bar T_K(J)
    :=\sum_{R\ge J}\bar\Lambda_R^{(K)},
    \qquad
    \bar\Lambda_R^{(K)}=[\rho^R]\mathsf{s}^{\circ K}(\rho),
    \label{eq:limiting-chaos-tail-for-separation}
\end{equation}
where \(\mathsf{s}(q)=\sum_{r\ge0}\lambda_rq^r\) is the univariate Hermite-energy
generating function of \(\phi\).  For a finite block we similarly write
$$
    \bar T_K(J,L):=\sum_{R=J}^L\bar\Lambda_R^{(K)}.
$$

\begin{theorem}[Tail form of the shallow lower bound]
\label{thm:tail-depth-separation}
Fix \(J\ge2\).  Suppose that, for every finite \(L\ge J\), the top-plateau
non-cancellation assumption ~\ref{ass:top-plateau-noncancellation} holds on \(\{J,\ldots,L\}\).  Then for every
\(\delta\in(0,1)\) and every \(\xi>0\), there exists a finite \(L=L(J,\xi)\) and
constants \(c_{J,\xi},\eta_{J,\xi}>0\), independent of \(d\), such that with
probability at least \(1-\delta\) over the planted directions, for all sufficiently
large \(d\),
\begin{equation}
    M\le c_{J,\xi}d^{(K-1) \lfloor J/2\rfloor }
    \quad\Longrightarrow\quad
    \inf_{V\in\mathcal N_M(\varrho)}
    \|f_\theta-V\|_{L^2(\gamma_D)}^2
    \ge
    \eta_{J,\xi}\big(\bar T_K(J)-\xi\big)-o_d(1).
    \label{eq:tail-depth-separation}
\end{equation}
In particular, up to constants depending on the fixed tail window, a width
\(M\) shallow network can only begin to capture chaos orders \(R\) for which
\begin{equation}
    d^{(K-1)\lfloor R/2\rfloor }\lesssim M~,
    \label{eq:rank-threshold-chaos-order}
\end{equation}
or equivalently, it misses a constant fraction of the target energy in all orders
above
\begin{equation}
    R_M \simeq \frac{2}{K-1}\log_d M.
    \label{eq:critical-chaos-order-logM}
\end{equation}
\end{theorem}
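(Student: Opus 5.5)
The plan is to reduce the tail statement to the finite-block bound of Theorem~\ref{thm:shallow-misses-high-chaos-tail}, feeding in the dimension-free energy profile of Theorem~\ref{thm:sharp-hermite-energy-transfer}.

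\textbf{Step 1: choose a finite window.} Since $\mathsf{s}_K(1)=1$ and all coefficients $\bar\Lambda_R^{(K)}$ are nonnegative, the series $\sum_R\bar\Lambda_R^{(K)}$ converges (its terms are Galton--Watson probabilities $\PP(Z_K=R)$). Hence for every $\xi>0$ there is a finite $L=L(J,\xi)\ge J$ such that $\sum_{R>L}\bar\Lambda_R^{(K)}\le\xi$, and therefore
\[
  \bar T_K(J,L)\ge \bar T_K(J)-\xi .
\]
This $L$ depends only on $\phi$, $K$, $J$ and $\xi$, not on $d$.

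\textbf{Step 2: apply the block bound.} Assumption~\ref{ass:top-plateau-noncancellation} holds on $\{J,\ldots,L\}$ by hypothesis. Set
\[
  c_{J,\xi}:=a_{J:L}/2,
  \qquad
  \eta_{J,\xi}:=\eta_{J:L}.
\]
Then the width condition $M\le c_{J,\xi}d^{(K-1)\lfloor J/2\rfloor}=c_{J,\xi}d^{\chi_J}$ is exactly \eqref{eq:width-below-block-threshold}. Theorem~\ref{thm:shallow-misses-high-chaos-tail} then gives, for $d$ large,
\[
  \inf_{V\in\mathcal N_M(\varrho)}\|f_\theta-V\|_{L^2(\gamma_D)}^2
  \ge \eta_{J,\xi}\sum_{R=J}^L\Lambda_R(\theta).
\]

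\textbf{Step 3: replace $\Lambda_R(\theta)$ by $\bar\Lambda_R^{(K)}$.} Invoke Theorem~\ref{thm:sharp-hermite-energy-transfer} with degree $L$ and confidence $\delta$. With probability at least $1-\delta$ over the spherical directions,
\[
  |\Lambda_R(\theta)-\bar\Lambda_R^{(K)}|\le C_{\phi,K,L,\delta}\,d^{-1}
  \qquad\text{for all } R\le L .
\]
Summing over the $L-J+1$ orders in the window gives
\[
  \sum_{R=J}^L\Lambda_R(\theta)\ge \bar T_K(J,L)-(L-J+1)\,C\,d^{-1}.
\]
Since $L$ is fixed, the error term is $o_d(1)$. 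Combining with Step~1 yields \eqref{eq:tail-depth-separation}, with the $o_d(1)$ term absorbing $\eta_{J,\xi}(L-J+1)C d^{-1}$.

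\textbf{Step 4: the two consequences.} The threshold \eqref{eq:rank-threshold-chaos-order} follows by applying the result with $J=R$. For any order with $d^{(K-1)\lfloor R/2\rfloor}\gg M$, a constant fraction of the tail energy from $R$ onward is missed. Solving $(K-1)R/2\approx\log_d M$ for $R$ gives $R_M$ in \eqref{eq:critical-chaos-order-logM}.

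\textbf{Main obstacle.} The delicate point is the order of quantifiers and events. The event of Theorem~\ref{thm:sharp-hermite-energy-transfer} must hold simultaneously with the event on which Assumption~\ref{ass:top-plateau-noncancellation} is asserted. For the latter, I would read the assumption as holding on the spherical incoherence event and intersect the two events, splitting $\delta$ between them. Because $L$ is chosen before $d\to\infty$, all constants stay independent of $d$, so this causes no further difficulty. Note also that the bound is nontrivial only when $\bar T_K(J)>\xi$, which is the regime where $\lambda_R>0$ for some $R\ge J$, as in Corollary~\ref{cor:fixed-higher-chaos-energy-lower-bound}.
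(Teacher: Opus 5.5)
Your proposal is correct and is essentially the paper's proof: you choose $L$ from summability of $\bar\Lambda^{(K)}_R$ so that $\bar T_K(J,L)\ge\bar T_K(J)-\xi$, apply Theorem~\ref{thm:shallow-misses-high-chaos-tail} with $c_{J,\xi}=a_{J:L}/2$ and $\eta_{J,\xi}=\eta_{J:L}$, and replace $\Lambda_R(\theta)$ by $\bar\Lambda_R^{(K)}$ on the fixed block via Theorem~\ref{thm:sharp-hermite-energy-transfer}, absorbing the $O(d^{-1})$ error into the $o_d(1)$ term. You explicitly intersect the energy-transfer event with the event on which the non-cancellation assumption holds; the paper leaves this implicit, and it does not change the argument.
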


\begin{proof}
Choose \(L\ge J\) so that
\(\bar T_K(J,L)\ge \bar T_K(J)-\xi\).  This is possible because
\(\sum_R\bar\Lambda_R^{(K)}<\infty\).
Under the spherical prior on the planted directions,
for every fixed \(\delta\in(0,1)\), we claim that with probability at least \(1-\delta\), for
all sufficiently large \(d\), every width
$$
    M\le \frac{a_{J:L}}{2}d^{\chi_J}
$$
satisfies
\begin{equation}
    \inf_{V\in\mathcal N_M(\varrho)}
    \|f_\theta-V\|_{L^2(\gamma_D)}^2
    \ge
    \eta_{J:L}\bar T_K(J,L)-C_{\phi,K,J,L,\delta}d^{-1}.
    \label{eq:fixed-block-depth-separation}
\end{equation}
Indeed, by combining Theorem~\ref{thm:shallow-misses-high-chaos-tail} with the fixed-degree
energy transfer estimate of Theorem~\ref{thm:sharp-hermite-energy-transfer},
we obtain
$$
    \max_{0\le R\le L}|\Lambda_R(\theta)-\bar\Lambda_R^{(K)}|
    \le C_{\phi,K,L,\delta}d^{-1}
$$
with probability at least \(1-\delta\).  Summing over the fixed block
\(R=J,\ldots,L\) gives \eqref{eq:fixed-block-depth-separation}.
Now, absorbing the fixed constants into
\(c_{J,\xi}:=a_{J:L}/2\) and \(\eta_{J,\xi}:=\eta_{J:L}\), \eqref{eq:tail-depth-separation} follows.
The final
interpretation follows by solving the threshold relation
\(M\asymp d^{\chi_R}\) for \(R\).
\end{proof}

Finally, we can make the lower bound more explicit by imposing
a power-law tail condition in the Hermite spectrum.
\begin{assumption}[Slow chaos-energy tail]
\label{ass:slow-limiting-chaos-tail}
There exist constants \(A>0\), \(p>0\), and \(J_0\ge2\), independent of \(d\),
such that
\begin{equation}
    \bar T_K(J)\ge A J^{-p},
    \qquad J\ge J_0.
    \label{eq:slow-limiting-chaos-tail}
\end{equation}
\end{assumption}

\begin{corollary}[Shallow depth separation]
\label{cor:shallow-depth-separation-epsilon}
Assume that the top-plateau non-cancellation constants are available on each
fixed finite range of chaos orders.  For every sufficiently small fixed
\(\epsilon>0\), choose a finite order \(J_\epsilon\ge2\) and a finite block
endpoint \(L_\epsilon\ge J_\epsilon\) such that
\begin{equation}
    \eta_{J_\epsilon:L_\epsilon}\,
    \bar T_K(J_\epsilon,L_\epsilon) \ge 2\epsilon.
    \label{eq:epsilon-block-choice}
\end{equation}
If the slow-tail condition \eqref{eq:slow-limiting-chaos-tail}
holds and the top-plateau non-cancellation is quantitatively uniform over the
orders $\{ J_\epsilon, L_\epsilon\}$ needed to capture an \(\epsilon\)-tail, then there exists a constant
\(C_{\rm sep}>0\), independent of \(\epsilon\) and \(d\), such that
\begin{equation}
    M\le d^{C_{\rm sep}\epsilon^{-1/p}}
    \quad\Longrightarrow\quad
    \inf_{V\in\mathcal N_M(\varrho)}
    \|f_\theta-V\|_{L^2(\gamma_D)}^2
    \ge \epsilon.
    \label{eq:epsilon-depth-separation-general-p}
\end{equation}
\end{corollary}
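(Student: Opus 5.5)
The plan is to apply Theorem~\ref{thm:shallow-misses-high-chaos-tail}, through its probabilistic form Theorem~\ref{thm:tail-depth-separation}, with the tail window chosen as a function of $\epsilon$. The resulting width threshold $d^{\chi_{J_\epsilon}}$ will then be converted into the stated bound $d^{C_{\rm sep}\epsilon^{-1/p}}$ using the slow-tail Assumption~\ref{ass:slow-limiting-chaos-tail}.

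First I fix the window. Let $\eta_\star>0$ denote the uniform lower bound on the non-cancellation constants $\eta_{J:L}$, which is what ``quantitatively uniform'' provides. By \eqref{eq:slow-limiting-chaos-tail}, choosing $J_\epsilon$ as the largest integer with $A J_\epsilon^{-p}\ge 4\epsilon/\eta_\star$ gives $\bar T_K(J_\epsilon)\ge 4\epsilon/\eta_\star$ and $J_\epsilon\ge c\,(\eta_\star A/\epsilon)^{1/p}$. For $\epsilon$ small enough this also ensures $J_\epsilon\ge J_0\vee 2$.

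Since $\sum_R\bar\Lambda^{(K)}_R\le \mathsf s_K(1)=1$, the tail is summable. Hence there is a finite $L_\epsilon$ with $\bar T_K(J_\epsilon,L_\epsilon)\ge \bar T_K(J_\epsilon)/2\ge 2\epsilon/\eta_\star$. This gives \eqref{eq:epsilon-block-choice}, namely $\eta_{J_\epsilon:L_\epsilon}\bar T_K(J_\epsilon,L_\epsilon)\ge 2\epsilon$.

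Second, I apply Theorem~\ref{thm:shallow-misses-high-chaos-tail} on the block $\{J_\epsilon,\dots,L_\epsilon\}$. I combine it with Theorem~\ref{thm:sharp-hermite-energy-transfer} at $J=L_\epsilon$, which replaces $\Lambda_R(\theta)$ by $\bar\Lambda_R^{(K)}$ up to $C d^{-1}$ with probability $1-\delta$. This yields
\[
\inf_{V\in\mathcal N_M(\varrho)}\|f_\theta-V\|^2\ge 2\epsilon-C_{\epsilon}d^{-1}\ge\epsilon
\]
for $d$ large, whenever $M\le (a_\star/2)\,d^{\chi_{J_\epsilon}}$. Here $a_\star$ is the uniform lower bound on $a_R$.

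Finally, $\chi_{J_\epsilon}=(K-1)\lfloor J_\epsilon/2\rfloor\ge (K-1)J_\epsilon/4$ because $J_\epsilon\ge2$. Therefore
\[
(a_\star/2)\,d^{\chi_{J_\epsilon}}\ge d^{(K-1)c(\eta_\star A)^{1/p}\epsilon^{-1/p}/4-o(1)},
\]
and the constant factor $a_\star/2$ is absorbed by slightly shrinking the exponent for large $d$. Setting $C_{\rm sep}:=(K-1)c(\eta_\star A)^{1/p}/8$ gives \eqref{eq:epsilon-depth-separation-general-p}. This constant is independent of $\epsilon$ and $d$ because $\eta_\star$, $a_\star$, $A$ and $p$ are.

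The main obstacle is the uniformity. Without $\epsilon$-independent lower bounds on $a_R$ and $b_R$ across all orders $R$ that the windows reach, $\eta_{J:L}$ could decay with $J$ and spoil the $\epsilon^{-1/p}$ scaling. I would therefore read the hypothesis as supplying exactly such constants $a_\star,\eta_\star$, and state this explicitly. A secondary point is that ``sufficiently large $d$'' depends on $\epsilon$ through $C_\epsilon$ and $L_\epsilon$. This is acceptable because $\epsilon$ is fixed.
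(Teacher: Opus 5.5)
Your proposal is correct and follows essentially the same route as the paper: use a uniform lower bound on $\eta_{J:L}$, pick $J_\epsilon\asymp\epsilon^{-1/p}$ from the slow-tail condition so that $\eta_\star\bar T_K(J_\epsilon)\ge 4\epsilon$, take $L_\epsilon$ capturing half of that tail, invoke the fixed-block bound \eqref{eq:fixed-block-depth-separation} (Theorem~\ref{thm:shallow-misses-high-chaos-tail} combined with the energy transfer of Theorem~\ref{thm:sharp-hermite-energy-transfer}), and convert $\chi_{J_\epsilon}\asymp_K\epsilon^{-1/p}$ into $C_{\rm sep}$. One minor point: an $\epsilon$-independent lower bound $a_\star$ on the $a_R$ is not actually needed, because for fixed $\epsilon$ the prefactor $a_{J_\epsilon:L_\epsilon}/2$ is absorbed once $d$ is large; only the uniformity of $\eta_{J:L}$ matters for the $\epsilon^{-1/p}$ scaling.
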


\begin{proof}
Observe first that we can find a constant \(c_\epsilon>0\), independent of \(d\), such that, with
high probability over the spherical planted directions and for all sufficiently
large \(d\),
\begin{equation}
    M\le c_\epsilon d^{\chi_{J_\epsilon}}
    \quad\Longrightarrow\quad
    \inf_{V\in\mathcal N_M(\varrho)}
    \|f_\theta-V\|_{L^2(\gamma_D)}^2
    \ge \epsilon.
    \label{eq:epsilon-depth-separation-block}
\end{equation}
Indeed, we apply \eqref{eq:fixed-block-depth-separation} to the block
\(J_\epsilon,\ldots,L_\epsilon\).  Namely, if
\(M\le (a_{J_\epsilon:L_\epsilon}/2)d^{\chi_{J_\epsilon}}\), then
$$
    \inf_{V\in\mathcal N_M(\varrho)}
    \|f_\theta-V\|_{L^2(\gamma_D)}^2
    \ge
    \eta_{J_\epsilon:L_\epsilon}\bar T_K(J_\epsilon,L_\epsilon)-o_d(1)
    \ge 2\epsilon-o_d(1),
$$
which is at least \(\epsilon\) for all sufficiently large \(d\).  Thus
\(c_\epsilon=a_{J_\epsilon:L_\epsilon}/2\) is admissible.

Assume now the slow-tail condition and quantitative uniform non-cancellation.
Then there is a constant \(\eta_0>0\) such that the block constants can be chosen
with \(\eta_{J:L}\ge\eta_0\) throughout the relevant ranges.  Pick
\(J_\epsilon\asymp \epsilon^{-1/p}\) so that
\(\eta_0\bar T_K(J_\epsilon)\ge4\epsilon\), possible by
\eqref{eq:slow-limiting-chaos-tail}, and then choose
\(L_\epsilon\ge J_\epsilon\) so that
\(\bar T_K(J_\epsilon,L_\epsilon)\ge\bar T_K(J_\epsilon)/2\).  The first part of
the corollary gives the lower bound for widths below a constant multiple of
\(d^{\chi_{J_\epsilon}}\).  Since
\(\chi_{J_\epsilon}=(K-1)\lfloor J_\epsilon/2\rfloor\asymp_K
\epsilon^{-1/p}\), one may choose \(C_{\rm sep}>0\), independent of
\(\epsilon\), such that \(d^{C_{\rm sep}\epsilon^{-1/p}}\) lies below this
threshold for all sufficiently large \(d\).  This proves
\eqref{eq:epsilon-depth-separation-general-p}.
\end{proof}

We emphasize that this approximation lower bound holds for fixed error $\epsilon > 0$ and in the high-dimensional regime $d \to \infty$. In this respect, it does not cover the regime where the approximation error is allowed to vanish $\epsilon= \epsilon(d) \to 0$ arbitrarily with dimension.

\section{Warmup: SGD over linear correlation}
\label{sec:sgd_linear}

We now turn our attention to the SGD recovery of the planted parameters. As discussed in Section \ref{sec:mainresults}, our analysis will center in the correlation loss $\mathcal{L}(\tilde{\theta}) = \langle f_\theta, f_{\tilde{\theta}}\rangle$. Our approach will first consider a simpler baseline $\widetilde{\mathcal{L}}(\tilde{\theta})$, and then proceed perturbatively.
In light of the structural result of Section \ref{sec:spectral}, the natural baseline is to consider the linear correlation loss
\begin{equation}
\tilde{\mathcal{L}}(\tilde{\theta}_1, \ldots, \tilde{\theta}_K) := \E\left[f_{\theta}(X) \left\langle X, \bigotimes_{k=1}^K \tilde{\theta}_k \right\rangle\right] = \left\langle \mathcal{G}_\theta, \bigotimes_{k=1}^K \tilde{\theta}_k \right\rangle~.
\label{eq:objective}
\end{equation}

The goal of this section is thus to first establish recovery guarantees for the SGD linear correlation dynamics, which will be then leveraged in Section \ref{seq:nonlinear_correl} to establish analogue guarantees for the non-linear SGD dynamics.
To establish the spectral recovery guarantees, it was sufficient to consider the first order expansion $\mathcal{G}_\theta = \lambda \bigotimes \theta_k + W$, with $\| W\|_F \simeq d^{-1/2}$.
Consider now the spherical gradient-flow dynamics associated with \eqref{eq:objective}, given by
\begin{equation}
  \dot{\tilde{\theta}}_j
  =
  \nabla_{{\tilde{\theta}}_j}^{S^{d-1}}\tilde{\mathcal{L}}
  =
  (I-{\tilde{\theta}}_j{\tilde{\theta}}_j^\top)
  \left(
  \mathcal G_\theta\times_{\ell\ne j}{\tilde{\theta}}_\ell
  \right),
  \qquad j=1,\ldots,K.
  \label{eq:population-flow}
\end{equation}
Here $\mathcal G_\theta\times_{\ell\ne j}{\tilde{\theta}}_\ell$ denotes contraction of the tensor against ${\tilde{\theta}}_\ell$ in all modes except the $j$-th mode.
Under the first-order expansion, the corresponding overlaps $m_k := \theta_k \cdot \theta_k^*$ verify
$\dot{m}_k = \lambda (1-m_k^2) \prod_{j \neq k} m_j + \mathsf{P}_{\theta_k^\perp}W[\bigotimes_{j \neq k} \theta_j] \cdot \theta_k^*~,$
where $W[\bigotimes_{j \neq k} \theta_j] \in \R^d$ is the mode-$k$ unfolding of $W$ times $\text{vec}(\bigotimes_{j \neq k} \theta_j)$.
Therefore, at the `mediocrity' of initialization, the first `signal' term is of order $d^{-(K-1)/2}$, while the perturbative term is $O(d^{-1/2})$, which dominates the signal as soon as $K>2$.

The solution, as expected, is to exploit the full power of Theorem \ref{thm:edgeworth-hierarchy-body}, which further decomposes the perturbation tensor $W$ into a finite-rank hierarchy. The goal of this section is to establish a recovery guarantee for online SGD at the scale $n = O( d^{K-1})$, again tracing a parallel with the Tensor PCA model \cite{arous2024high}.
We assume throughout that $\phi$ is sufficiently smooth with controlled derivative growth for the high-order Edgeworth expansion through order $K$. A convenient sufficient condition is that, for some sufficiently large integer $L=L(K)$, $\phi\in C^L(\R)$, $\phi$ is globally Lipschitz, and all derivatives of $\phi$ up to order $L$ have at most polynomial growth.

\subsection{Population Gradient Flow}
\label{sec:popgflow}

We will follow the blueprint from \cite{arous2020online} to analyze SGD dynamics. After identifying the relevant dimension-independent summary statistics, SGD can be seen as diffusive dynamics driven by a deterministic drift, given by the population gradient flow. Our first step is thus to analyze this deterministic gradient flow.

For $p\ge1$, write
$$
  u_{j,p}:=\theta_j^{\odot p}
  :=\bigl(\theta_j(1)^p,\ldots,\theta_j(d)^p\bigr),
  \qquad j=1,\ldots,K.
$$
In our setting, as seen in similar works \cite{10.1214/19-AOP1415, arous2024high, bietti2023learning, abbe2023sgd},
we track the planted overlaps $m_j(t)$
\begin{equation}
  m_j(t):=\ip{{\tilde{\theta}}_j(t)}{\theta_j}
  =\ip{{\tilde{\theta}}_j(t)}{u_{j,1}},
  \qquad j=1,\ldots,K,
  \label{eq:planted-overlaps}
\end{equation}
but we include also the auxiliary high-order Edgeworth overlaps
\begin{equation}
  h_{j,p}(t):=\ip{{\tilde{\theta}}_j(t)}{u_{j,p}},
  \qquad j=1,\ldots,K,
  \qquad 2\le p\le K.
  \label{eq:aux-overlaps}
\end{equation}
By flipping one planted direction if necessary, or equivalently by studying signed overlaps, we shall assume throughout the main theorem that $\lambda>0$
\footnote{for arbitrary $\lambda\ne0$, choose signs $s_1,\ldots,s_K\in\{\pm1\}$ with
$\prod_{j=1}^K s_j=\operatorname{sgn}(\lambda)$ and replace $m_j$ by
$s_j\ip{{\tilde{\theta}}_j}{\theta_j}$.}.

Under the uniform prior, observe that, with probability greater than $1-\delta$ over the planted directions, for all $1\le j\le K$ and all $1\le p,b\le K$,
\begin{equation}
  \abs{\ip{u_{j,p}}{u_{j,b}}}
  =
  \left|\sum_{a=1}^d \theta_j(a)^{p+b}\right|
  \le
  C_{K,\delta}
  \begin{cases}
  1, & p=b=1,\\
  d^{-1}, & p\ge2\text{ or }b\ge2.
  \end{cases}
  \label{eq:powersum-event}
\end{equation}
Moreover,
\begin{equation}
  \norm{u_{j,p}}_2\le C_{K,\delta}d^{-(p-1)/2},
  \qquad 1\le p\le K.
  \label{eq:power-norms}
\end{equation}
The next theorem establishes strong recovery for the gradient flow dynamics, confirming that the fine-grained Edgeworth expansion from Theorem \ref{thm:edgeworth-hierarchy-body} overcomes the norm obstruction of the first order expansion $\G_\theta = \lambda \theta_1 \otimes \dots \otimes \theta_K + W$.
\begin{theorem}[Population gradient flow recovery]
\label{thm:general-k-pop-flow}
Assume the activation satisfies the smoothness hypotheses of Theorem~\ref{thm:edgeworth-hierarchy-body}.
Work on the event $\mathcal E_\theta(\delta)$ supplied by Theorem~\ref{thm:edgeworth-hierarchy-body}, and assume the signs have been chosen so that $\lambda>0$.

There exist constants $a,B,\varepsilon,C,d_0>0$ depending only on $\phi,K,\delta$, such that for all $d\ge d_0$ the following holds. Assume the initialization lies in the favorable basin
\begin{equation}
  m_j(0)=\ip{{\tilde{\theta}}_j(0)}{\theta_j}
  \ge
  \frac{a}{\sqrt d},
  \qquad j=1,\ldots,K,
  \label{eq:favorable-m}
\end{equation}
and
\begin{equation}
  \abs{h_{j,p}(0)}
  \le
  \frac{B}{4d},
  \qquad j=1,\ldots,K,
  \qquad 2\le p\le K.
  \label{eq:favorable-h}
\end{equation}
Let
\begin{equation}
  \tau_\varepsilon
  :=
  \inf\left\{t\ge0:\min_{1\le j\le K}m_j(t)\ge\varepsilon\right\}.
  \label{eq:hitting-time}
\end{equation}
and
\begin{equation}
  T_{\rm str}:=C\log d,
  \label{eq:population-strong-extra-time}
\end{equation}
Then we have:
\begin{itemize}
    \item Weak recovery:
$  \tau_\varepsilon  \le  C d^{K/2-1}$ when $K\geq 3$, and $\tau_\varepsilon = O( \log d)$ when $K=2$.
\item Strong recovery: $ \min_{j\le K}m_j(\tau_\varepsilon+T_{\rm str})
  \ge
  1-\frac{C}{d}.$
\end{itemize}
\end{theorem}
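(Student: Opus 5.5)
We will analyze the gradient flow \eqref{eq:population-flow} through the finite system of overlaps $(m_j,h_{j,p})$, closing it with the hierarchy of Theorem~\ref{thm:edgeworth-hierarchy-body}. Write $\mathcal G_\theta=\sum_{q<K}\mathcal G^{[q]}_\theta+\mathcal R_K$ with $\mathcal G^{[q]}_\theta=\sum_{n\in\mathcal N_{K,q}}a_{q,n}\Theta^{(n)}$. Every contraction $\Theta^{(n)}\times_{\ell\ne j}\tilde\theta_\ell$ equals $\prod_{\ell\ne j}h_{\ell,n_\ell+1}\,u_{j,n_j+1}$, with the convention $h_{\ell,1}=m_\ell$. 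Since $q\le K-1$ forces $n_\ell+1\le K$, only overlaps in our tracked family appear.

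Hence
\[
\dot m_j=\lambda(1-m_j^2)\prod_{\ell\ne j}m_\ell+\sum_{q\ge1}\sum_n a_{q,n}\prod_{\ell\ne j}h_{\ell,n_\ell+1}\bigl(\langle u_{j,n_j+1},\theta_j\rangle-m_jh_{j,n_j+1}\bigr)+\langle \mathsf P_{\tilde\theta_j^\perp}(\mathcal R_K\times_{\ell\ne j}\tilde\theta_\ell),\theta_j\rangle .
\]
The equation for $\dot h_{j,p}$ has the same form, with $\theta_j$ replaced by $u_{j,p}$. The remainder contributes at most $\|\mathcal R_K\|_F\le Cd^{-K/2}$. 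By \eqref{eq:powersum-event}, $\langle u_{j,b},u_{j,p}\rangle=O(d^{-1})$ whenever $p\ge2$ or $b\ge2$.

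\textbf{Bootstrap.} We run a bootstrap on the region $\mathcal B=\{m_j\ge a/(2\sqrt d),\ |h_{j,p}|\le B/d\}$ up to the exit time. The goal is to show that each correction term in $\dot m_j$ is dominated by a small constant times $\lambda\prod_{\ell\ne j}m_\ell$, plus an error $O(d^{-K/2})$.

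Consider an atom $n$ with $s$ indices $\ell\ne j$ having $n_\ell\ge1$. It carries at most $K-1-s$ factors $m_\ell$ and $s$ factors $h$, each $O(d^{-1})$. The coefficient satisfies $|a_{q,n}|\lesssim d^{-(q-|n|)/2}$, and the $\theta_j$-pairing is $O(d^{-1})$ if $n_j\ge1$. Comparing with $\prod_{\ell\ne j}m_\ell\gtrsim d^{-(K-1)/2}$ in the worst case shows that each replacement of an $m_\ell$ by an $h$ costs a factor at least $d^{-1}/m_\ell\lesssim d^{-1/2}$. Since $q\ge1$, each correction term is thus a relative $O(d^{-1/2})$ perturbation, or else absolutely $O(d^{-K/2})$.

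The atoms with $n_{\ell}=0$ for all $\ell\ne j$ and $n_j\ge1$ need separate treatment. They give $a_{q,n}\prod_{\ell\ne j}m_\ell\,O(d^{-1})$, which is again a relative small perturbation. The key point is that no $O(d^{-1/2})$ term survives without being multiplied by the product of the other overlaps.

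\textbf{Weak recovery.} With these bounds, $\dot m_j\ge\tfrac{\lambda}{2}\prod_{\ell\ne j}m_\ell-Cd^{-K/2}$. Starting from $m_j\ge a/\sqrt d$ with $a$ large, the error term is absorbed. Comparison with the symmetric ODE $\dot m=\tfrac{\lambda}{2}m^{K-1}$ then gives the hitting times. For $K\ge3$, blow-up from $a/\sqrt d$ occurs in time $\asymp (a/\sqrt d)^{-(K-2)}=O(d^{K/2-1})$. For $K=2$, the growth is exponential, so the hitting time is $O(\log d)$.

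\textbf{Main obstacle: controlling the $h$'s.} The delicate step is to keep $|h_{j,p}|\le B/d$ over this long horizon. The drift of $h_{j,p}$ is $\prod_{\ell\ne j}m_\ell\,(\lambda\langle\theta_j,u_{j,p}\rangle-\lambda m_jh_{j,p})$ plus smaller terms, that is, $O(d^{-1})\prod_{\ell\ne j}m_\ell$ plus a $-\lambda m_j\prod m_\ell\,h_{j,p}$ contraction.

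The plan is to integrate this drift against the time change $\int\prod_{\ell\ne j}m_\ell\,dt$, which is comparable to $\int \dot m_j\,dt/\lambda\le 2/\lambda$. This gives a total increment of $O(d^{-1})$, plus $O(d^{-K/2}\tau_\varepsilon)=O(d^{-1})$. Choosing $B$ large relative to these constants closes the bootstrap, with the initial value $B/(4d)$ leaving room.

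\textbf{Strong recovery.} After $\tau_\varepsilon$, all $m_\ell\ge\varepsilon$, so $\dot m_j\ge c(1-m_j^2)-Cd^{-1}$. The relevant errors here are level-one terms $O(d^{-1})$ after pairing with $\theta_j$, plus $O(d^{-K/2})$. This drives $1-m_j$ down exponentially to the $O(d^{-1})$ floor within time $C\log d$.
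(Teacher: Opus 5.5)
Your weak-recovery argument is essentially the paper's. It uses the same bootstrap region and the same rank-one contraction formula. It relies on the same observation that replacing a planted contraction $m_\ell$ by an auxiliary one $h_{\ell,b}$ costs a relative factor $(B/d)/m_\ell\lesssim d^{-1/2}$. It controls the $h$'s with the same time change $\int_0^\tau\prod_{\ell\ne j}m_\ell\,dt\le 2/\lambda$, and bounds the hitting time with the same Dini comparison for $m_*$.

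There is one slip in the weak phase. You charge the residual to $h_{j,p}$ as $O(d^{-K/2}\tau_\varepsilon)$. For $K=2$ this is $O(d^{-1}\log d)$, which breaks $|h_{j,p}|\le B/d$. Two fixes work:
\begin{itemize}
\item use the extra factor $\|u_{j,p}\|_2\le Cd^{-1/2}$ in the residual bound; or
\item as the paper does, absorb the residual pointwise into $d^{-1}\prod_{\ell\ne j}m_\ell$ via the bootstrap lower bound $\prod_{\ell\ne j}m_\ell\ge (a/2)^{K-1}d^{-(K-1)/2}$, so it is paid for by the same $2/\lambda$ budget regardless of $\tau_\varepsilon$.
\end{itemize}

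\textbf{The genuine gap is in the strong-recovery paragraph.} Your inequality $\dot m_j\ge c(1-m_j^2)-Cd^{-1}$ after $\tau_\varepsilon$ presupposes that $|h_{j,p}|=O(1/d)$ still holds. Your bootstrap only delivers this up to $\tau_\varepsilon$, and the bound is really needed. For $j\ge2$, the level-one atom of $\mathcal G^{[1]}_\theta$ with $\theta_j^{\odot2}$ in mode $j$ has an $O(1)$ coefficient $\kappa^{K-2}\nu\gamma$. It contributes $\kappa^{K-2}\nu\gamma\prod_{\ell\ne j}m_\ell\,(P_{j,3}-m_jh_{j,2})$ to $\dot m_j$. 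A priori one only has $|h_{j,2}|\le\|u_{j,2}\|_2=O(d^{-1/2})$, which gives a $d^{-1/2}$ floor rather than $C/d$.

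The weak-phase device does not extend to this phase. After $\tau_\varepsilon$ one has $\prod_{\ell\ne j}m_\ell\gtrsim\varepsilon^{K-1}$. So $\int\prod_{\ell\ne j}m_\ell\,dt$ grows linearly over a window of length $C\log d$ and no longer controls the accumulated drift of $h_{j,p}$. If you bound the $h$-terms by absolute value, Gronwall over time $C\log d$ even loses a polynomial factor in $d$.

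What is missing is the second half of Lemma~\ref{lem:strong-basin-vector-field}:
\begin{itemize}
\item In the strong basin, the leading atom's term $-\lambda m_j\prod_{\ell\ne j}m_\ell\,h_{j,p}$, which you already identified, is an order-one damping whose sign must be used.
\item Non-leading atoms contribute only $O(d^{-1})+O(d^{-1/2})|h_{j,p}|$.
\item Together these give $\frac{d}{dt}|h_{j,p}|\le -c_1|h_{j,p}|+C_1d^{-1}$.
\end{itemize}
You then need a joint bootstrap on $\{\min_jm_j\ge\varepsilon/2,\ \max_{j,p}|h_{j,p}|\le A/d\}$. In it, this inequality and $\dot\Delta_j\le-c_0\Delta_j+C_0d^{-1}$, with $\Delta_j=1-m_j^2$, rule out both exits. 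Only then does Gronwall give $\Delta_j\le C/d$ after time $C\log d$, which is the stated $1-C/d$ floor.
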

\begin{proof}

Recall from Theorem \ref{thm:edgeworth-hierarchy-body} the following structure in the population gradient tensor.
The population gradient tensor admits the expansion
\begin{equation}
  \mathcal G_\theta
  =
  \sum_{q=0}^{K-1}\mathcal G_\theta^{[q]}
  +
  \mathcal R_K,
  \qquad
  \norm{\mathcal R_K}_F
  \le
  C_{\phi,K,\delta}d^{-K/2}.
  \label{eq:edgeworth-hierarchy-gflow}
\end{equation}
The leading level is
\begin{equation}
  \mathcal G_\theta^{[0]}
  =
  \lambda\,u_{1,1}\ot\cdots\ot u_{K,1}.
  \label{eq:leading-level}
\end{equation}
For $q=1,\ldots,K-1$, the level-$q$ term has a finite-rank representation
\begin{equation}
  \mathcal G_\theta^{[q]}
  =
  \sum_{\tau\in\mathcal T_q}
  a_\tau(\theta)
  \bigotimes_{r=1}^K u_{r,b_{\tau,r}},
  \label{eq:level-q-representation}
\end{equation}
where $b_{\tau,r}\in\{1,\ldots,K\}$ and the atom has formal order $q$, meaning that
\begin{equation}
  \sigma_\tau+\sum_{r=1}^K(b_{\tau,r}-1)=q
  \label{eq:formal-order}
\end{equation}
for some scalar Edgeworth order $\sigma_\tau\ge0$, where its coefficient satisfies
\begin{equation}
  \abs{a_\tau(\theta)}
  \le
  C_{\phi,K,\delta}d^{-\sigma_\tau/2}.
  \label{eq:coefficient-bound}
\end{equation}
After collecting equal tensor-power profiles, we have
\begin{equation}
  \abs{\mathcal T_q}
  \le
  R_{K,q}^{\sharp}
  :=
  \sum_{j=0}^{\lfloor q/2\rfloor}
  (j+1)\binom{K+q-2j-2}{q-2j} = O_{K,q}(1).
  \label{eq:sharp-rank}
\end{equation}

\noindent { \textbf{Proof of Weak recovery}}
We prove the result using a standard bootstrap argument.

For a rank-one tensor atom
$ T=a\,v_1\ot\cdots\ot v_K,$ its contribution to the Euclidean gradient in mode $j$ is
$$
  a\left(\prod_{\ell\ne j}\ip{{\tilde{\theta}}_\ell}{v_\ell}\right)v_j.
$$
Therefore its contribution to the spherical derivative of
$h_{j,p}(t)=\ip{{\tilde{\theta}}_j(t)}{u_{j,p}}$
is
\begin{equation}
  a\left(\prod_{\ell\ne j}\ip{{\tilde{\theta}}_\ell}{v_\ell}\right)
  \left[
  \ip{u_{j,p}}{v_j}
  -
  h_{j,p}(t)\ip{{\tilde{\theta}}_j(t)}{v_j}
  \right].
  \label{eq:rank-one-derivative}
\end{equation}
This identity follows directly from
$\dot{\tilde{\theta}}_j=(I-{\tilde{\theta}}_j{\tilde{\theta}}_j^\top)\nabla_{{\tilde{\theta}}_j}L$.

Now define
\begin{equation}
  \tau
  :=
  \inf\left\{
  t\ge0:
  \min_j m_j(t)\ge\varepsilon
  \text{ or }
  \min_j m_j(t)\le \frac{a}{2\sqrt d}
  \text{ or }
  \max_{j,2\le p\le K}\abs{h_{j,p}(t)}\ge \frac{B}{d}
  \right\}.
  \label{eq:bootstrap-time}
\end{equation}
On $[0,\tau]$ we have
\begin{equation}
  m_j(t)\ge \frac{a}{2\sqrt d},
  \qquad
  \abs{h_{j,p}(t)}\le \frac{B}{d},
  \qquad 2\le p\le K.
  \label{eq:bootstrap-bounds}
\end{equation}
We will show that the second and third alternatives in the definition of $\tau$ cannot occur before weak recovery.

The leading atom
$\lambda u_{1,1}\ot\cdots\ot u_{K,1}$
gives exactly
$ \lambda\left(\prod_{\ell\ne j}m_\ell\right)(1-m_j^2)$ to $\dot m_j$.
Now consider a non-leading atom from \eqref{eq:level-q-representation},
$  a_\tau(\theta)\bigotimes_{r=1}^K u_{r,b_{\tau,r}}$.
For $\ell\ne j$, if $b_{\tau,\ell}=1$, then
$ \ip{{\tilde{\theta}}_\ell}{u_{\ell,b_{\tau,\ell}}}=m_\ell$.
If $b_{\tau,\ell}\ge2$, the bootstrap bound gives
\begin{equation}
  \abs{\ip{{\tilde{\theta}}_\ell}{u_{\ell,b_{\tau,\ell}}}}
  \le
  \frac{B}{d}
  \le
  \frac{2B}{a\sqrt d}m_\ell.
  \label{eq:aux-contraction-replace}
\end{equation}
In the $j$-th mode, the bracket in \eqref{eq:rank-one-derivative} with $p=1$ is
$$
  \ip{u_{j,1}}{u_{j,b_{\tau,j}}}
  -
  m_j\ip{{\tilde{\theta}}_j}{u_{j,b_{\tau,j}}}.
$$
If $b_{\tau,j}=1$, this is $1-m_j^2$, hence bounded by one. If $b_{\tau,j}\ge2$, then by \eqref{eq:powersum-event} and \eqref{eq:bootstrap-bounds},
\begin{equation}
  \left|
  \ip{u_{j,1}}{u_{j,b_{\tau,j}}}
  -
  m_j\ip{{\tilde{\theta}}_j}{u_{j,b_{\tau,j}}}
  \right|
  \le
  C_{K,\delta,B}d^{-1}.
  \label{eq:small-j-bracket}
\end{equation}
Since the atom is non-leading, its formal order is at least one:
\[
  \sigma_\tau+\sum_{r=1}^K(b_{\tau,r}-1)\ge1.
\]
Combining the coefficient bound \eqref{eq:coefficient-bound}, the auxiliary replacement bound \eqref{eq:aux-contraction-replace}, and the small bracket bound \eqref{eq:small-j-bracket}, its contribution to $\dot m_j$ is at most
\begin{equation}
  C_{\phi,K,\delta,B}d^{-1/2}
  \prod_{\ell\ne j}m_\ell.
  \label{eq:nonleading-m-contribution}
\end{equation}
Indeed, the smallest possible loss relative to the leading drift is one half-power of $d$: either from a scalar Edgeworth coefficient $d^{-\sigma_\tau/2}$ with $\sigma_\tau\ge1$, or from replacing one planted contraction by an auxiliary contraction, or from a small $j$-mode bracket.

The residual contributes at most
$$
  \norm{\mathcal R_K}_F
  \le
  C_{\phi,K,\delta}d^{-K/2}.
$$
On the bootstrap region,
$$
  \prod_{\ell\ne j}m_\ell
  \ge
  \left(\frac{a}{2\sqrt d}\right)^{K-1}
  =
  C_a d^{-(K-1)/2}.
$$
Thus
$$
  d^{-K/2}
  \le
  C_a d^{-1/2}\prod_{\ell\ne j}m_\ell.
$$
Summing over the $O_K(1)$ atoms in the hierarchy proves that, for every $j$, the planted-overlap dynamics satisfy, for $t \in [0,\tau]$,
\begin{equation}
  \dot m_j(t)
  =
  \lambda
  \left(\prod_{\ell\ne j}m_\ell(t)\right)
  (1-m_j(t)^2)
  +
  e_j(t),
  \label{eq:overlap-dynamics}
\end{equation}
where, uniformly for $0\le t\le\tau_\varepsilon$,
\begin{equation}
  \abs{e_j(t)}
  \le
  C_{\phi,K,\delta}d^{-1/2}
  \prod_{\ell\ne j}m_\ell(t).
  \label{eq:overlap-error}
\end{equation}

We now choose $\varepsilon\le1/2$. Before weak recovery, we have

$$  1-m_j^2\ge 1-\varepsilon^2\ge \frac34 ~.$$
Taking $d$ sufficiently large so that
$$
  C_{\phi,K,\delta}d^{-1/2}\le \frac{\lambda}{4},
$$
we obtain
$$
  \dot m_j(t)
  \ge
  \frac{\lambda}{2}\prod_{\ell\ne j}m_\ell(t)
$$
on $[0,\tau]$. In particular, no planted overlap can decrease below $a/(2\sqrt d)$, so the second stopping alternative cannot occur.

Let us now establish a uniform control of the auxiliary, high-order overlaps.
Fix $j\in\{1,\ldots,K\}$ and $p\in\{2,\ldots,K\}$. For any atom in the hierarchy, the bracket in \eqref{eq:rank-one-derivative} is
$$
  \ip{u_{j,p}}{u_{j,b_{\tau,j}}}
  -
  h_{j,p}\ip{{\tilde{\theta}}_j}{u_{j,b_{\tau,j}}}.
$$
Since $p\ge2$, the power-sum bound \eqref{eq:powersum-event} gives
$$
  \abs{\ip{u_{j,p}}{u_{j,b_{\tau,j}}}}
  \le
  C_{K,\delta}d^{-1}.
$$
Also, on the bootstrap region,
$$
  \abs{h_{j,p}\ip{{\tilde{\theta}}_j}{u_{j,b_{\tau,j}}}}
  \le
  C_{K,\delta,B}d^{-1}.
$$
Therefore the bracket is $O_{K,\delta,B}(d^{-1})$.

The contractions in the remaining $K-1$ modes are controlled as before: a mode with $b_{\tau,\ell}=1$ gives $m_\ell$, and a mode with $b_{\tau,\ell}\ge2$ gives at most $B/d\le C d^{-1/2}m_\ell$ on the bootstrap region. Hence every atom contributes at most
$$
  C_{\phi,K,\delta,B}d^{-1}
  \prod_{\ell\ne j}m_\ell
$$
to $\dot h_{j,p}$. The residual contributes at most
$$
  \norm{u_{j,p}}_2\norm{\mathcal R_K}_F
  \le
  C_{\phi,K,\delta}d^{-1/2}d^{-K/2}
  =
  C_{\phi,K,\delta}d^{-(K+1)/2}.
$$
On the bootstrap region,
$$
  d^{-1}\prod_{\ell\ne j}m_\ell
  \ge
  c_a d^{-1}d^{-(K-1)/2}
  =
  c_a d^{-(K+1)/2},
$$
so the residual is absorbed. Thus
\begin{equation}
  \abs{\dot h_{j,p}(t)}
  \le
  C_{\phi,K,\delta,B}d^{-1}
  \prod_{\ell\ne j}m_\ell(t),
  \qquad 2\le p\le K.
  \label{eq:aux-derivative}
\end{equation}

We integrate this estimate. By the positive drift bound,
$$
  \dot m_j(t)
  \ge
  \frac{\lambda}{2}\prod_{\ell\ne j}m_\ell(t),
$$
so
\begin{equation}
  \int_0^\tau \prod_{\ell\ne j}m_\ell(t)\,dt
  \le
  \frac{2}{\lambda}\bigl(m_j(\tau)-m_j(0)\bigr)
  \le
  \frac{2}{\lambda}.
  \label{eq:product-integral}
\end{equation}
Using \eqref{eq:favorable-h}, \eqref{eq:aux-derivative}, and \eqref{eq:product-integral}, we obtain
$$
  \abs{h_{j,p}(t)}
  \le
  \frac{B}{4d}
  +
  \frac{C_{\phi,K,\delta,B}}{d},
  \qquad 0\le t\le\tau.
$$
Choosing $B$ sufficiently large, depending only on $\phi,K,\delta$, gives
$$
  \abs{h_{j,p}(t)}\le \frac{B}{2d},
  \qquad 0\le t\le\tau.
$$
Hence the auxiliary stopping alternative cannot occur.

Let us finally control the hitting-time bound.
Let
$  m_*(t):=\min_{1\le j\le K}m_j(t)$.
The lower Dini derivative satisfies
$$
  D^+m_*(t)
  \ge
  \frac{\lambda}{2}m_*(t)^{K-1}
$$
as long as $m_*(t)<\varepsilon$.
Let us first consider the setting $K\ge3$. Then
$$
  \frac{d}{dt}m_*(t)^{-(K-2)}
  \le
  -\frac{\lambda}{2}(K-2)
$$
in the Dini sense. Therefore
$$
  m_*(t)^{-(K-2)}
  \le
  m_*(0)^{-(K-2)}
  -
  \frac{\lambda}{2}(K-2)t.
$$
Since $m_*(0)\ge a/\sqrt d$, the time needed to reach $\varepsilon$ is at most
$$
  T_\varepsilon
  \le
  \frac{2}{\lambda(K-2)}
  \left[
  \left(\frac{\sqrt d}{a}\right)^{K-2}
  -
  \varepsilon^{-(K-2)}
  \right]
  \le
  C d^{(K-2)/2}.
$$
Finally, when $K=2$, we have
$$ \frac{d}{dt} \log m_*(t) \geq \frac{\lambda}{2}~,$$
and thus
$m_*(t) \geq m_*(0) e^{\lambda t /2}$ and the time
to reach overlap $\varepsilon$ is
$$T_\varepsilon \leq \frac{2}{\lambda}\log \frac{\varepsilon \sqrt{d}}{a} = O(\log d).$$

Because neither the lower-overlap nor auxiliary stopping alternatives can occur first, this $T_\varepsilon$ equals the weak-recovery hitting time up to the stated constant. This concludes the weak recovery proof.

\noindent \textbf{Proof of Strong Recovery:}
After the weak recovery phase, the gradient flow dynamics enter a linear contraction phase. The key estimate is provided in Lemma \ref{lem:strong-basin-vector-field} below.

For $\epsilon_0\in(0,1)$ and $A\ge1$, define the strong basin
\begin{equation}
  \mathcal B(\epsilon_0,A)
  :=
  \left\{
  \{\tilde{\theta}\in(\mathcal{S}^{d-1})^K:
  \min_{j\le K}m_j\ge\epsilon_0,
  \quad
  \max_{j\le K,\,2\le p\le K}|h_{j,p}|
  \le \frac{A}{d}
  \right\}.
  \label{eq:strong-basin}
\end{equation}
We shall use the notation
\begin{equation}
  \Delta_j:=1-m_j^2,
  \qquad
  \Delta_*:=\max_{1\le j\le K}\Delta_j.
  \label{eq:delta-j-strong}
\end{equation}

\begin{lemma}[Vector field in the strong basin]
\label{lem:strong-basin-vector-field}
Fix $\epsilon_0\in(0,1)$ and $A\ge1$.  On $\mathcal E_\theta(\delta)$, for every
$\tilde{\theta}\in\mathcal B(\epsilon_0,A)$ and all sufficiently large $d$, the population
vector field satisfies the following estimates.

First, for each $j\le K$,
\begin{equation}
  \dot m_j
  =
  \Bigl(\lambda\prod_{\ell\ne j}m_\ell+r^{\rm mult}_j\Bigr)(1-m_j^2)
  +r^{\rm add}_j,
  \label{eq:strong-m-decomposition}
\end{equation}
where
\begin{equation}
  |r^{\rm mult}_j|
  \le
  C_{\phi,K,\delta,A}d^{-1/2},
  \qquad
  |r^{\rm add}_j|
  \le
  C_{\phi,K,\delta,A}d^{-1}.
  \label{eq:strong-m-errors}
\end{equation}
Consequently, increasing $d$ if necessary,
\begin{equation}
  \dot \Delta_j
  \le
  -c_0\Delta_j+C_0d^{-1},
  \qquad j=1,\ldots,K,
  \label{eq:delta-differential-ineq}
\end{equation}
where $c_0,C_0>0$ depend only on $\phi,K,\delta,\epsilon_0,A$.

Second, for every $j\le K$ and every $2\le p\le K$,
\begin{equation}
  \frac{d}{dt}|h_{j,p}|
  \le
  -c_1|h_{j,p}|+C_1d^{-1},
  \label{eq:aux-strong-differential-ineq}
\end{equation}
for constants $c_1,C_1>0$ depending only on
$\phi,K,\delta,\epsilon_0,A$.
\end{lemma}

\begin{proof}[Proof of Lemma \ref{lem:strong-basin-vector-field}]
We prove the estimates atom by atom. Consider a rank-one tensor atom
\begin{equation}
  T_\tau
  :=
  a_\tau(\theta)
  \bigotimes_{r=1}^K u_{r,b_{\tau,r}}
\end{equation}
from the Edgeworth hierarchy. Following our earlier calculations for weak recovery, its contribution to the spherical derivative of
$h_{j,p}=\ip{\tilde{\theta}_j}{u_{j,p}}$ is
\begin{equation}
  a_\tau(\theta)
  \left(\prod_{\ell\ne j}\ip{\tilde{\theta}_\ell}{u_{\ell,b_{\tau,\ell}}}\right)
  \left(
  \ip{u_{j,p}}{u_{j,b_{\tau,j}}}
  -h_{j,p}\ip{\tilde{\theta}_j}{u_{j,b_{\tau,j}}}
  \right).
  \label{eq:strong-rank-one-derivative}
\end{equation}

We first take $p=1$.  The leading atom
$\lambda u_{1,1}\ot\cdots\ot u_{K,1}$ gives exactly
\begin{equation}
  \lambda\prod_{\ell\ne j}m_\ell(1-m_j^2).
\end{equation}
Now consider a non-leading atom.  If $b_{\tau,j}=1$, then the last bracket in
\eqref{eq:strong-rank-one-derivative} is $1-m_j^2$.  The remaining contractions
are bounded by one when $b_{\tau,\ell}=1$ and by $A/d$ when
$b_{\tau,\ell}\ge2$.  Since the atom is non-leading, either its scalar Edgeworth
order is at least one, giving the coefficient bound $d^{-1/2}$, or one of the
remaining modes contains a non-linear power vector, giving a factor $A/d$.  Hence
all such terms contribute to the multiplicative coefficient of $1-m_j^2$ by at most
$C_{\phi,K,\delta,A}d^{-1/2}$.

If $b_{\tau,j}\ge2$, then the power-sum estimates in \eqref{eq:powersum-event} and \eqref{eq:power-norms}, together with the strong-basin bound give
\begin{equation}
  \left|
  \ip{u_{j,1}}{u_{j,b_{\tau,j}}}
  -m_j\ip{\tilde{\theta}_j}{u_{j,b_{\tau,j}}}
  \right|
  \le
  C_{K,\delta}d^{-1}+A d^{-1}
  \le
  C_{K,\delta,A}d^{-1}.
  \label{eq:strong-additive-bracket}
\end{equation}
All other contractions are bounded by one, and the coefficient is bounded by a
constant depending only on $\phi,K,\delta$.  These terms are therefore absorbed into
$r^{\rm add}_j$.  Finally, the residual $\mathcal R_K$ contributes at most
$\|\mathcal R_K\|_F\le C_{\phi,K,\delta}d^{-K/2}\le C_{\phi,K,\delta}d^{-1}$,
since $K\ge2$.  This proves \eqref{eq:strong-m-decomposition} and
\eqref{eq:strong-m-errors}.

Because $m_j\ge\epsilon_0$ in the strong basin and
$\prod_{\ell\ne j}m_\ell\ge\epsilon_0^{K-1}$, the coefficient of
$1-m_j^2$ in \eqref{eq:strong-m-decomposition} is at least
$\frac12\lambda\epsilon_0^{K-1}$ for all large $d$.  Since
$\dot\Delta_j=-2m_j\dot m_j$, this gives
\eqref{eq:delta-differential-ineq}.

We now take $p\ge2$.  For the leading atom, \eqref{eq:strong-rank-one-derivative}
gives
\begin{equation}
  \lambda\prod_{\ell\ne j}m_\ell
  \left(\ip{u_{j,p}}{u_{j,1}}-h_{j,p}m_j\right).
\end{equation}
The first term is $O_{K,\delta}(d^{-1})$ by the power-sum estimates, while the
second term provides a damping coefficient at least
$\lambda\epsilon_0^K$ in front of $h_{j,p}$.

For a non-leading atom, the bracket in \eqref{eq:strong-rank-one-derivative} is
bounded by
\begin{equation}
  \left|
  \ip{u_{j,p}}{u_{j,b_{\tau,j}}}
  -h_{j,p}\ip{\tilde{\theta}_j}{u_{j,b_{\tau,j}}}
  \right|
  \le
  C_{K,\delta}d^{-1}+|h_{j,p}|,
  \label{eq:aux-bracket-nonleading}
\end{equation}
where the $|h_{j,p}|$ term can only occur without an additional $d^{-1}$ factor when
$b_{\tau,j}=1$.  But in that case the atom is non-leading, so, as above, either its
coefficient has a factor $d^{-1/2}$ or one of the remaining contractions has a
factor $A/d$.  Thus the total non-leading contribution is bounded by
\begin{equation}
  C_{\phi,K,\delta,A}d^{-1}
  +C_{\phi,K,\delta,A}d^{-1/2}|h_{j,p}|.
\end{equation}
For large $d$, the $d^{-1/2}|h_{j,p}|$ term is absorbed into the leading damping.
The residual contributes at most
\begin{equation}
  \|u_{j,p}\|_2\|\mathcal R_K\|_F
  \le
  C_{K,\delta}d^{-1/2}C_{\phi,K,\delta}d^{-K/2}
  \le C_{\phi,K,\delta}d^{-1}.
\end{equation}
Combining these bounds proves \eqref{eq:aux-strong-differential-ineq}.
\end{proof}

Let us now go back to the proof of Theorem \ref{thm:general-k-pop-flow}.
Choose $A\ge 2B_0$ large enough so that the conclusion of
Lemma~\ref{lem:strong-basin-vector-field} holds with constants $c_0,C_0,c_1,C_1$.
Let $\tau$ be the first time after $t_0$ at which either
$\min_jm_j(t)<\epsilon_0/2$ or $\max_{j,p}|h_{j,p}(t)|>A/d$.  On
$[t_0,\tau]$ the trajectory lies in $\mathcal B(\epsilon_0/2,A)$, so
Lemma~\ref{lem:strong-basin-vector-field} applies.

For the auxiliary overlaps, Gronwall's inequality gives
\begin{equation}
  |h_{j,p}(t)|
  \le
  e^{-c_1(t-t_0)}|h_{j,p}(t_0)|
  +\frac{C_1}{c_1d}
  \le
  \frac{B_0+C_1/c_1}{d},
  \qquad t\le\tau.
\end{equation}
Choosing $A>2(B_0+C_1/c_1)$ shows that the auxiliary stopping alternative cannot
occur.

Similarly, \eqref{eq:delta-differential-ineq} gives
\begin{equation}
  \Delta_j(t)
  \le
  e^{-c_0(t-t_0)}\Delta_j(t_0)
  +\frac{C_0}{c_0d},
  \qquad t\le\tau.
  \label{eq:delta-gronwall}
\end{equation}
Since $\Delta_j(t_0)\le 1-\epsilon_0^2$, the right-hand side is never large enough
to force $m_j(t)$ below $\epsilon_0/2$, for all sufficiently large $d$.  Hence the
lower-overlap stopping alternative cannot occur either, and $\tau=\infty$.
Taking the maximum over $j$ in \eqref{eq:delta-gronwall} proves
\begin{equation}
  \max_{j\le K}\bigl(1-m_j(t)^2\bigr)
  \le
  e^{-c(t-t_0)}
  \max_{j\le K}\bigl(1-m_j(t_0)^2\bigr)
  +\frac{C}{d}.
  \label{eq:strong-logistic-bound}
\end{equation}
Choosing $T_{\rm str}\ge c_0^{-1}\log d$ gives
$\Delta_j(t_0+T_{\rm str})\le C/d$.  Since $m_j\ge0$,
$1-m_j\le 1-m_j^2$, which proves
\begin{equation}
  \min_{j\le K}m_j(t_0+T_{\rm str})
  \ge
  1-\frac{C}{d}.
  \label{eq:population-strong-overlap}
\end{equation}

\end{proof}

Thus, conditioned on the favorable basin and correct sign pattern, the population flow
achieves strong recovery in total time
\begin{equation}
  O_{\phi,K,\delta}\left(d^{K/2-1}+\log d\right)
  =O_{\phi,K,\delta}\left(d^{K/2-1}\right).
  \label{eq:population-strong-total-time}
\end{equation}
This theorem shows that the behavior of SGD under the linear correlation in the MSIM model has a quantitatively similar behavior as in Tensor PCA, in the sense that we can establish a comparison of the form $\dot{m}_j \geq c \prod_{l \neq j} m_l$ in the search phase. As a result,
once the planted overlaps are of order
$d^{-1/2}$ with the correct signs, the leading rank-one part of the Edgeworth hierarchy
pushes them to a fixed positive constant, while the remaining terms of the hierarchy remain higher-order in $d^{-q/2}$.

We now verify that the initialization event in Theorem \ref{thm:general-k-pop-flow} occurs with constant probability in the high-dimensional regime:
\begin{corollary}[Constant-probability weak recovery]
\label{cor:random-init}
Assume the setting of Theorem~\ref{thm:general-k-pop-flow}. Let
\[
  {\tilde{\theta}}_1(0),\ldots,{\tilde{\theta}}_K(0)\stackrel{\mathrm{iid}}{\sim}\Unif(S^{d-1}),
\]
independently of the planted directions. Conditional on the correct sign pattern and on the event $\mathcal E_\theta(\delta)$ from Theorem \ref{thm:edgeworth-hierarchy-body}, there is a constant $p_0=p_0(a,B,K)>0$, independent of $d$, such that with probability at least $p_0$ over the initialization the hypotheses \eqref{eq:favorable-m} and \eqref{eq:favorable-h} hold. On this event, the population flow weakly recovers all planted directions in time
$$
  T=O_{\phi,K,\delta}\bigl(d^{K/2-1}\bigr).
$$
\end{corollary}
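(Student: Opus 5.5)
The plan is to reduce the corollary to a small-ball estimate for independent random overlaps, and then invoke Theorem~\ref{thm:general-k-pop-flow} on the resulting event. Fix the planted directions on $\mathcal E_\theta(\delta)$. The initial vectors $\tilde\theta_j(0)$ are independent across $j$, so it suffices to show, for a single $j$, that with probability bounded below by a constant $p_1>0$ independent of $d$, both $m_j(0)\ge a/\sqrt d$ and $\max_{2\le p\le K}|h_{j,p}(0)|\le B/(4d)$ hold. Then $p_0:=p_1^K$ works.

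Write $\tilde\theta_j(0)=g/\|g\|_2$ with $g\sim N(0,I_d)$. The quantities are $\sqrt d\,m_j(0)=\ip{g}{\theta_j}\sqrt d/\|g\|_2$ and $d\,h_{j,p}(0)=\ip{g}{u_{j,p}}\sqrt d\,/\,(\sqrt d\,\|g\|_2/d)$. The scalar $\xi_1:=\ip{g}{\theta_j}$ is exactly $N(0,1)$. Each $\ip{g}{u_{j,p}}$ is Gaussian with variance $\|u_{j,p}\|_2^2\le C_{K,\delta}d^{-(p-1)}\le C d^{-1}$ for $p\ge2$, by \eqref{eq:power-norms}.

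Condition on the event $\|g\|_2\in[\sqrt d/2,2\sqrt d]$, which has probability $1-o_d(1)$. On this event:
\begin{itemize}
\item $m_j(0)\ge a/\sqrt d$ follows from $\xi_1\ge 2a$. This has probability $\bar\Phi(2a)>0$, where $\bar\Phi$ denotes the Gaussian upper tail.
\item $|h_{j,p}(0)|\le B/(4d)$ follows from $|\ip{g}{u_{j,p}}|\le B/(8\sqrt d)$.
\end{itemize}
By Chebyshev and a union bound over the $K-1$ values of $p$, the second condition fails with probability at most $64(K-1)C/B^2$. This is at most $\bar\Phi(2a)/2$ once $B$ is large. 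Since $B$ is only required to be large in Theorem~\ref{thm:general-k-pop-flow}, we may enlarge it, at the cost of adjusting its constants. Intersecting the events gives $p_1\ge\bar\Phi(2a)/2-o_d(1)\ge \bar\Phi(2a)/4$ for large $d$.

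\textbf{Main obstacle: $\theta_j$ is correlated with $u_{j,p}$.} The only subtle point is that $\xi_1$ is not independent of the $\ip{g}{u_{j,p}}$. However, the argument above does not need independence: it only uses $\PP(A\cap B^c)\ge\PP(A)-\PP(B)$. So the correlation, which by \eqref{eq:powersum-event} is $O(d^{-1})$ anyway, is harmless.

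\textbf{Sign pattern.} If one instead wants the signed overlaps to be positive under a given sign convention, the same computation applies with $\xi_1\ge 2a$ replaced by $s_j\xi_1\ge 2a$. This has the same probability.

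\textbf{Conclusion.} On the intersection of these events, which has probability at least $p_0=(\bar\Phi(2a)/4)^K$, hypotheses \eqref{eq:favorable-m} and \eqref{eq:favorable-h} hold. Theorem~\ref{thm:general-k-pop-flow} then yields $\tau_\varepsilon\le Cd^{K/2-1}$ for $K\ge3$, and $\tau_\varepsilon=O(\log d)$ for $K=2$. In both cases this is $O_{\phi,K,\delta}(d^{K/2-1})$, up to the logarithmic factor when $K=2$.
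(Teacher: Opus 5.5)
Your proof is correct and follows the same route as the paper's: an exactly Gaussian planted overlap for $m_j(0)$, small variance of $\langle\tilde\theta_j(0),u_{j,p}\rangle$ from $\|u_{j,p}\|_2\lesssim d^{-1/2}$, enlarging $B$, and intersecting the events. Your use of $g/\|g\|_2$, Chebyshev, $\PP(A\cap B^c)\ge \PP(A)-\PP(B)$ and independence across $j$ makes rigorous the intersection step that the paper only asserts, and your caveat that the $K=2$ hitting time is $O(\log d)$ rather than $O(d^{K/2-1})=O(1)$ is accurate.
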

\begin{proof}[Proof of Corollary \ref{cor:random-init}]
For fixed planted directions,
$$
  \sqrt d\,\ip{{\tilde{\theta}}_j(0)}{\theta_j}
  \Rightarrow N(0,1),
$$
and the convergence is uniform for fixed $K$. Thus, after conditioning on the correct sign pattern, the event
$$
  \ip{{\tilde{\theta}}_j(0)}{\theta_j}\ge \frac{a}{\sqrt d},
  \qquad j=1,\ldots,K,
$$
has probability bounded below by a positive constant depending only on $a$ and $K$.

For $p\ge2$,
$$
  \norm{u_{j,p}}_2\le C_{K,\delta}d^{-(p-1)/2}\le C_{K,\delta}d^{-1/2}.
$$
A random unit vector has one-dimensional projection of size $\norm{u_{j,p}}_2/\sqrt d$, hence
$$
  d\,\ip{{\tilde{\theta}}_j(0)}{u_{j,p}}
$$
is tight for $p=2$ and smaller for $p\ge3$. By taking $B$ sufficiently large, the bounds
$$
  \abs{h_{j,p}(0)}\le \frac{B}{4d},
  \qquad j=1,\ldots,K,
  \quad 2\le p\le K,
$$
hold with probability bounded below by another positive constant. Intersecting finitely many fixed-probability events gives the result. The recovery time then follows from Theorem~\ref{thm:general-k-pop-flow}.
\end{proof}

\subsection{Online spherical SGD}
\label{sec:online-sgd-general-k}

We now turn the population-flow guarantee into an online stochastic-gradient guarantee.  The analysis follows the high-dimensional online-SGD setting of \cite{arous2020online}: the stochastic gradient has Euclidean norm of order $\sqrt d$, so the raw step size must be smaller than the population-flow time step.
This online SGD analysis in the linear correlation setting will then be upgraded to the non-linear case in Sections \ref{sec:online-sgd-nonlinear-correlation} and \ref{sec:nl-online-sgd-strong-recovery}.

We define
\begin{align}
\upsilon_{K,d} &:= \begin{cases}
    1 & \quad K>2 ~,\\
    (\log d)^{-1} & \quad K=2~,
\end{cases}
\end{align}
and set the learning rate $\eta_d = \eta_0 d^{-K/2} \upsilon_{K,d}$, where $\eta_0>0$ is a sufficiently small constant.
Let $X_0,X_1,\ldots$ be iid copies of $X$, and let
$$
  y_n=f_\theta(X_n),
  \qquad
  T_n:=X_n y_n,
$$
and note that Stein's identity gives
$$
  \E[T_n\mid \theta]=\mathcal G_\theta.
$$
For ${\tilde{\theta}}=({\tilde{\theta}}_1,\ldots,{\tilde{\theta}}_K)\in(\mathcal{S}^{d-1})^K$, define the stochastic mode-gradient
\begin{equation}
  \widehat G_{j,n}({\tilde{\theta}})
  :=
  (I-{\tilde{\theta}}_j{\tilde{\theta}}_j^\top)
  \left(T_n\times_{\ell\ne j}{\tilde{\theta}}_\ell\right),
  \qquad j=1,\ldots,K.
  \label{eq:stochastic-mode-gradient}
\end{equation}
The online spherical SGD recursion is
\begin{equation}
  \overline{\theta}_{j,n+1}
  =
  {\tilde{\theta}}_{j,n}+\eta_d\widehat G_{j,n}({\tilde{\theta}}_n),
  \qquad
  {\tilde{\theta}}_{j,n+1}
  =
  \frac{\overline{\theta}_{j,n+1}}
       {\norm{\overline{\theta}_{j,n+1}}_2},
  \qquad j=1,\ldots,K.
  \label{eq:online-spherical-sgd}
\end{equation}

\begin{theorem}[Online spherical SGD for linear correlation loss]
\label{thm:online-sgd-general-k}
Assume the setting of Theorem~\ref{thm:general-k-pop-flow}, and work on the Edgeworth event $\mathcal E_\theta(\delta)$ of Theorem \ref{thm:edgeworth-hierarchy-body}.  Assume the signs have been chosen so that $\lambda=\kappa^K>0$.  Fix a sample-probability parameter $\delta_{\rm sgd}\in(0,1)$.
There exist constants
$ \eta_0,a,B,\varepsilon,S,d_0>0$, depending only on $\phi,K,\delta,\delta_{\rm sgd}$, such that for all $d\ge d_0$ the following holds. We run online spherical SGD \eqref{eq:online-spherical-sgd} with raw step size $\eta_d = \eta_0 d^{-K/2} \upsilon_{K,d}$
We assume the initialization lies in the favorable basin
\begin{equation}
  m_j(0)  \ge a d^{-1/2},
  \qquad j=1,\ldots,K,
  \label{eq:sgd-favorable-x}
\end{equation}
and
\begin{equation}
  |h_{j,p}(0)|
  \le
  \frac{B}{4d},
  \qquad j=1,\ldots,K,
  \quad 2\le p\le K.
  \label{eq:sgd-favorable-y}
\end{equation}
Then, we have:
\begin{itemize}
\item \emph{Weak Recovery:} with probability at least $1-\delta_{\rm sgd}$ over the online samples, there exists an iteration
\begin{equation}
  n_\varepsilon
  \le
  \begin{cases}
          S\eta_0^{-1}d^{K-1} & \quad K > 2 ~, \\
          S \eta_0^{-1} d (\log d)^2 & \quad K=2 ~,
  \end{cases}
  \label{eq:sgd-hitting-time}
\end{equation}
such that
\begin{equation}
  \min_{1\le j\le K}\ip{{\tilde{\theta}}_{j,n_\varepsilon}}{\theta_j}
  \ge
  \varepsilon.
  \label{eq:sgd-weak-recovery}
\end{equation}
Moreover, uniformly for all $0\le n\le n_\varepsilon$,
\begin{equation}
  |h_{j,p}(n)|\le B/d,
  \qquad j=1,\ldots,K,
  \quad 2\le p\le K.
  \label{eq:sgd-aux-stays-small}
\end{equation}
\item \emph{Strong Recovery:} For
every fixed $\delta_{\rm str}\in(0,1)$, there are constants
$C,c,d_0<\infty$, depending only on
$\phi,K,\delta,\delta_{\rm sgd},\delta_{\rm str},\epsilon_0,B_0$, such that, for all
$d\ge d_0$, with probability at least $1-\delta_{\rm str}$ over the additional
samples, after
\begin{equation}
  N_{\rm str}
  :=
  \left\lceil C\eta_0^{-1}d^{K/2}\log d\right\rceil
  \label{eq:online-strong-extra-samples}
\end{equation}
additional iterations we have
\begin{equation}
  \min_{j\le K}m_j(n_\epsilon+N_{\rm str})
  \ge
  1-\rho_d,
  \label{eq:online-strong-overlap}
\end{equation}
where
\begin{equation}
  \rho_d
  :=
  C\left(
  d^{-1}
  +\eta_0 d^{1-K/2} \upsilon_{K,d} \log d
  +d^{-K/4}\upsilon_{K,d} \sqrt{\log d}
  \right)
  =o_d(1).
  \label{eq:online-strong-rhod}
\end{equation}
\end{itemize}
\end{theorem}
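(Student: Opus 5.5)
The plan follows the Ben Arous--Gheissari--Jagannath blueprint: write the online recursion as a discretized version of the population flow of Theorem~\ref{thm:general-k-pop-flow} plus a martingale and a retraction error. We track the same summary statistics $m_j(n)$ and $h_{j,p}(n)$, $2\le p\le K$. For any fixed unit test vector $v$ (namely $v=\theta_j$ or $v=u_{j,p}$), expanding the normalization in \eqref{eq:online-spherical-sgd} gives
\[
\ip{\tilde\theta_{j,n+1}}{v}=\ip{\tilde\theta_{j,n}}{v}+\eta_d\ip{\E\widehat G_{j,n}}{v}+\eta_d\ip{\widehat G_{j,n}-\E\widehat G_{j,n}}{v}+O\bigl(\eta_d^2\norm{\widehat G_{j,n}}_2^2\,|\ip{\tilde\theta_{j,n}}{v}|+\eta_d^2\norm{\widehat G_{j,n}}_2\,|\ip{\widehat G_{j,n}}{v}|\bigr).
\]
Since $\E[T_n\mid\theta]=\mathcal G_\theta$, the drift term is exactly the population vector field analysed in Theorem~\ref{thm:general-k-pop-flow}. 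The noise bounds come from Lemma~\ref{lem:matrix-conc-app}: $y_n$ is subgaussian and $X_n$ contracted against $K-1$ unit vectors is a standard Gaussian vector in $\R^d$. Hence $\norm{\widehat G_{j,n}}_2^2=O(d)$ in expectation, with subexponential tails, while the projection $\ip{\widehat G_{j,n}}{v}$ is $O(\norm v_2)$ subgaussian.

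\textbf{Weak phase.} Use the same bootstrap stopping time as \eqref{eq:bootstrap-time}, with thresholds $a/(4\sqrt d)$ and $B/d$. On this region, \eqref{eq:overlap-dynamics}--\eqref{eq:overlap-error} give a drift of at least $\frac{\lambda}{2}\eta_d\prod_{\ell\ne j}m_\ell$. The second-order retraction term contributes $-O(\eta_d^2 d)\,m_j$, which is negligible compared with $\eta_d m_*^{K-1}\gtrsim \eta_d d^{-(K-1)/2}$ provided $\eta_d d\ll d^{-(K-2)/2}$, i.e.\ $\eta_d\ll d^{-K/2}$. This is why $\eta_d=\eta_0 d^{-K/2}\upsilon_{K,d}$ with $\eta_0$ small. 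The martingale $M_n=\eta_d\sum_{k<n}\ip{\widehat G_{j,k}-\E\widehat G_{j,k}}{\theta_j}$ has quadratic variation $O(n\eta_d^2)$. Over the horizon $n\le S\eta_0^{-1}d^{K-1}$ this is $O(\eta_0 S)\,d^{-1}$, so Doob/Freedman gives $\sup|M_n|\le \frac{a}{4\sqrt d}$ with probability $1-\delta_{\rm sgd}/4$ once $\eta_0$ is small relative to $S$. For $K=2$ the horizon carries an extra $(\log d)^2$, which the factor $\upsilon_{K,d}$ compensates.

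One technical refinement is needed here. A fixed-scale martingale bound only protects the region where $m_j\approx a/\sqrt d$. I would therefore compare $m_*(n)$ with the discrete ODE $m_*(n+1)\ge m_*(n)+\frac{\lambda}{4}\eta_d m_*^{K-1}$ on dyadic scales $m\in[2^k a/\sqrt d,2^{k+1}a/\sqrt d]$. On each scale the required number of steps is $\asymp (2^k/\sqrt d)^{2-K}/\eta_d$, and the martingale fluctuation over that many steps is much smaller than $2^k/\sqrt d$. Summing the step counts over scales reproduces the ODE time $d^{(K-2)/2}/\eta_d=\eta_0^{-1}d^{K-1}$.

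For the auxiliary overlaps, \eqref{eq:aux-derivative} gives drift bounded by $C\eta_d d^{-1}\prod_{\ell\ne j}m_\ell$. Summing and using the discrete analogue of \eqref{eq:product-integral} bounds the total drift by $C/d$. The noise in $h_{j,p}$ has variance $O(\eta_d^2\norm{u_{j,p}}^2)=O(\eta_d^2 d^{-1})$ per step, which over the horizon is $O(\eta_0 d^{-2})$. The retraction term is $O(\eta_d^2 d\,|h|)$ per step, hence $O(\eta_0 d^{-1}|h|)$ in total. Choosing $B$ large then closes the bootstrap, yielding \eqref{eq:sgd-hitting-time}--\eqref{eq:sgd-aux-stays-small}.

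\textbf{Strong phase.} Start at $n_\varepsilon$ in $\mathcal B(\varepsilon,B)$ and use Lemma~\ref{lem:strong-basin-vector-field}. The discrete form of \eqref{eq:delta-differential-ineq} reads $\Delta_j(n+1)\le(1-c_0\eta_d)\Delta_j(n)+C_0\eta_d d^{-1}+O(\eta_d^2 d)+\text{martingale}$, and similarly for $|h_{j,p}|$. Iterating for $N_{\rm str}\asymp \eta_d^{-1}\log d$ steps contracts the initial value to $d^{-1}$. The stationary retraction bias is of order $\eta_d d$, giving the term $\eta_0 d^{1-K/2}\upsilon\log d$ once the logarithmic factor from union bounds is included. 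The exponentially weighted martingale has variance $O(\eta_d)$, hence fluctuations of order $\sqrt{\eta_d\log d}\approx d^{-K/4}\sqrt{\log d}$. This produces exactly the three terms of $\rho_d$ in \eqref{eq:online-strong-rhod}. A stopping argument keeps the iterates inside $\mathcal B(\varepsilon/2,A)$ throughout.

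\textbf{Main obstacle.} I expect the hardest step to be the weak-phase martingale control uniformly across scales, i.e.\ showing that noise never pushes $m_j$ below $a/(4\sqrt d)$ during the long initial plateau. The second delicate point is the simultaneous $B/d$ control of $h_{j,p}$, whose drift bound only holds relative to $\prod m_\ell$. I would first attempt the dyadic-scale Freedman argument described above, together with careful bookkeeping of the $\eta_d^2 d$ retraction drift.
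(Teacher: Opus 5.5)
Your proposal is correct and follows the paper's route: a stopped bootstrap on $(m_j,h_{j,p})$ with drift taken from the population-flow estimates, Freedman bounds for the planted and auxiliary martingales, retraction damping of size $\eta_d^2 d$ kept subdominant by taking $\eta_0$ small, and a discrete Gronwall argument in the strong basin via Lemma~\ref{lem:strong-basin-vector-field}. The only divergence is your dyadic-scale martingale refinement, which the paper does not need: your own quadratic-variation computation already gives $\sup_{n\le N_S}|M_n|\le C\sqrt{\eta_0}\,d^{-1/2}$ uniformly over the whole horizon, and since the drift lower bound only increases with $m_*$, this fixed additive perturbation is harmless at every scale.
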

This result is established by following the standard route of controlling retraction and martingale terms pioneered in \cite{arous2020online}, and deployed in the similar setup of Tensor PCA in \cite{10.1214/19-AOP1415,arous2024high}.
Consequently,
conditioned on
the favorable initialization basin and correct sign pattern, online spherical SGD
achieves strong recovery with total sample complexity $  O_{\phi,K,\delta,\delta_{\rm sgd}}(d^{K-1}).$
\begin{proof}

    \noindent \underline{\textbf{One-step Stochastic Estimates:}} We start by establishing standard stochastic estimates by exploiting the regularity of the link function and Gaussian concentration.

\begin{lemma}[Conditional stochastic-gradient moments]
\label{lem:sgd-moment-bounds}
Assume the activation is globally Lipschitz and satisfies the smoothness conditions from Assumption \ref{ass:edgeworth-regularity}. There is a constant $C_{\phi,K}<\infty$ such that the following holds.  Let $\mathcal F_n$ be the sigma-field generated by the planted directions, the initialization, and all samples used before time $n$.  Let $v\in\R^d$ be any $\mathcal F_n$-measurable vector and let ${\tilde{\theta}}_{1,n},\ldots,{\tilde{\theta}}_{K,n}\in \mathcal{S}^{d-1}$ be $\mathcal F_n$-measurable.  Define
\begin{equation}
  S_{j,n}(v)
  :=
  f_\theta(X_n)
  \left\langle
  X_n,
  {\tilde{\theta}}_{1,n}\ot\cdots\ot{\tilde{\theta}}_{j-1,n}\ot v
  \ot{\tilde{\theta}}_{j+1,n}\ot\cdots\ot{\tilde{\theta}}_{K,n}
  \right\rangle.
  \label{eq:scalar-stoch-proj}
\end{equation}
Then, conditionally on $\mathcal F_n$,
\begin{equation}
  \norm{S_{j,n}(v)}_{\psi_1\mid\mathcal F_n}
  \le
  C_{\phi,K}\norm{v}_2,
  \qquad
  \E\left[S_{j,n}(v)^2\mid\mathcal F_n\right]
  \le
  C_{\phi,K}\norm{v}_2^2.
  \label{eq:scalar-stoch-proj-bound}
\end{equation}
Moreover, if
\[
  H_{j,n}:=T_n\times_{\ell\ne j}{\tilde{\theta}}_{\ell,n},
\]
then
\begin{equation}
  \E\left[\norm{H_{j,n}}_2^2\mid\mathcal F_n\right]
  \le C_{\phi,K}d,
  \qquad
  \E\left[\norm{H_{j,n}}_2^3\mid\mathcal F_n\right]
  \le C_{\phi,K}d^{3/2}.
  \label{eq:vector-stoch-moments}
\end{equation}
\end{lemma}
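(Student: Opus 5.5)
The plan is to treat the two factors in $S_{j,n}(v)$ separately and combine them by Hölder's inequality for Orlicz norms. Conditionally on $\mathcal F_n$, the sample $X_n$ is a fresh standard Gaussian tensor, independent of $v$ and of the current iterates ${\tilde{\theta}}_{\ell,n}$, so we may regard these vectors as deterministic. The first factor is $f_\theta(X_n)$. We will show $\norm{f_\theta(X_n)}_{\psi_2}\le C_{\phi,K}$, which is the bound already asserted in Lemma~\ref{lem:matrix-conc-app}. The argument is Lipschitz concentration. Each contraction $Z^{(r-1)}\mapsto Z^{(r-1)}\theta_r$ has operator norm $\norm{\theta_r}_2=1$ from Frobenius to Frobenius, and $\phi$ acts entrywise with Lipschitz constant $\norm{\phi'}_\infty$. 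Hence $X\mapsto f_\theta(X)$ is $\norm{\phi'}_\infty^K$-Lipschitz with respect to the Frobenius norm, and Gaussian concentration gives subgaussian fluctuations around $\E f_\theta$. The mean is bounded because $|\E f_\theta|\le \norm{f_\theta}_{L^2}$, and $\norm{f_\theta}_{L^2}$ is controlled by the same Lipschitz bound together with $|f_\theta(0)|\le C$. The quantity $f_\theta(0)$ is just $\phi(\phi(\cdots\phi(0)\cdots)\sum\theta)$ iterated, which is bounded on the event $|P_{r,1}|\le C$. To avoid needing that event, we will instead bound $\mathrm{Var}$ by Poincaré and $(\E f_\theta)^2\le \E f_\theta^2$, and use the normalization $\E\phi(G)^2=1$ propagated layerwise.

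The second factor is $\langle X_n, {\tilde{\theta}}_{1,n}\ot\cdots\ot v\ot\cdots\ot{\tilde{\theta}}_{K,n}\rangle$. The tensor it is paired with has Frobenius norm $\norm{v}_2$, since the other factors are unit vectors. This factor is therefore exactly $N(0,\norm v_2^2)$, with $\psi_2$ norm $\le C\norm v_2$. The product of two subgaussians is subexponential with $\norm{AB}_{\psi_1}\le\norm A_{\psi_2}\norm B_{\psi_2}$, which gives the first bound in \eqref{eq:scalar-stoch-proj-bound}. The second-moment bound follows from $\E[S^2]\le C\norm{S}_{\psi_1}^2$.

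For \eqref{eq:vector-stoch-moments}, we write $H_{j,n}=\sum_{i=1}^d S_{j,n}(e_i)e_i$. This gives
$$\E\norm{H_{j,n}}_2^2=\sum_i \E S_{j,n}(e_i)^2\le C d.$$
For the third moment, Jensen (or Hölder) yields $\E\norm H^3\le(\E\norm H^4)^{3/4}$. We then expand
$$\E\norm H_2^4=\sum_{i,k}\E[S(e_i)^2S(e_k)^2].$$
Cauchy–Schwarz bounds each term by $\sqrt{\E S(e_i)^4\,\E S(e_k)^4}\le C$, using $\psi_1$ control of fourth moments. This gives $\E\norm H^4\le Cd^2$, and hence the $d^{3/2}$ bound.

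The only mildly delicate step is the uniform $\psi_2$ bound on $f_\theta$, specifically the mean term. We handle it with the $L^2$ normalization rather than pointwise evaluation, so that the constant depends only on $(\phi,K)$ and not on $\theta$.
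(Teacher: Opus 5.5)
Your route is the paper's: Lipschitz concentration for $f_\theta(X_n)$, exact Gaussianity $N(0,\norm{v}_2^2)$ of the contraction, $\norm{AB}_{\psi_1}\le\norm{A}_{\psi_2}\norm{B}_{\psi_2}$, and $H_{j,n}=\sum_i S_{j,n}(e_i)e_i$. Your passage through $\E\norm{H_{j,n}}_2^4\le Cd^2$ for the cubic moment is correct, and cleaner than the paper's ``sum the third moments over coordinates''.

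The gap is exactly the step you flag, the mean of $f_\theta$, and neither of your two fixes works. First, a Lipschitz bound plus $|f_\theta(0)|\le C$ says nothing about $\norm{f_\theta}_{L^2}$ in dimension $d^K$: the $1$-Lipschitz map $X\mapsto\norm{X}_F$ vanishes at $0$ but has mean of order $d^{K/2}$. Second, the normalization $\E\phi(G)=0$, $\E\phi(G)^2=1$ does not propagate past the second layer. It makes $Z^{(1)}$ centered with unit variance because $Y^{(1)}\sim N(0,1)$ exactly, hence $\E(Y^{(2)})^2=1$. But $Y^{(2)}$ is not Gaussian, so $\mu_2:=\E Z^{(2)}$ is in general nonzero. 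Since the entries of $Z^{(2)}$ entering $Y^{(3)}$ are i.i.d.,
\[
  \E\bigl(Y^{(3)}\bigr)^2=\operatorname{Var}\bigl(Z^{(2)}\bigr)+\mu_2^2P_{3,1}^2,
  \qquad P_{3,1}=\sum_{j}\theta_3(j).
\]
For $K=3$, $\theta_2=e_1$ and $\theta_3=\pm\mathbf 1/\sqrt d$, this gives $\E Y^{(3)}=\pm\sqrt d\,\E\phi(\phi(G))$. So for any $\phi$ growing linearly at $+\infty$ with $\E\phi(\phi(G))\ne0$ (sign of $\theta_3$ chosen appropriately), $\E f_\theta$ is of order $\sqrt d$. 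Hence no bound uniform over all unit directions exists once $K\ge3$. The quantity that matters, $\sum_j\theta_r(j)$, is not controlled by incoherence alone: $\mathbf 1/\sqrt d$ is perfectly delocalized.

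The missing ingredient is control of $\max_r|P_{r,1}|$ (the quantity $B_1(\theta)$ of Section~\ref{sec:hermite-energy-transfer}) together with a layerwise induction on means. Every entry of $Z^{(r)}$ is an $L^r$-Lipschitz function of $X$, with $L=\norm{\phi'}_\infty$, so its centered part has $\psi_2$-norm at most $CL^r$. Moreover $\E Y^{(r+1)}=P_{r+1,1}\,\E Z^{(r)}$, which gives
\[
  \bigl|\E Z^{(r+1)}\bigr|\le|\phi(0)|+L\,|P_{r+1,1}|\,\bigl|\E Z^{(r)}\bigr|+CL^{r+1},
\]
where $|\phi(0)|\le L$ because $\E\phi(G)=0$. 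On the spherical power-sum event of Lemma~\ref{lem:appendix-spherical-event} one has $|P_{r,1}|\le C$; this event is part of $\mathcal E_\theta(\delta)$, where the lemma is actually invoked. There the induction yields $\norm{f_\theta(X_n)}_{\psi_2}\le C_{\phi,K,\delta}$, and the rest of your argument goes through unchanged. The paper's one-line justification (unit contractions followed by Lipschitz maps) likewise controls only the fluctuation and not the mean. So the constant $C_{\phi,K}$ in the statement should be read as $C_{\phi,K,\delta}$ on that event.
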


\begin{proof}
Conditionally on $\mathcal F_n$, the Gaussian contraction
$$
  \left\langle
  X_n,
  {\tilde{\theta}}_{1,n}\ot\cdots\ot{\tilde{\theta}}_{j-1,n}\ot v
  \ot{\tilde{\theta}}_{j+1,n}\ot\cdots\ot{\tilde{\theta}}_{K,n}
  \right\rangle
$$
is centered Gaussian with variance $\norm{v}_2^2$.  Since each layer of the multiscale SIM is a contraction by a unit vector followed by a globally Lipschitz nonlinearity, $f_\theta(X_n)$ is a subgaussian random variable with subgaussian norm bounded by a constant depending only on $\phi$ and $K$.  The product of two subgaussian random variables is subexponential, proving the $\psi_1$ bound and the second-moment bound in \eqref{eq:scalar-stoch-proj-bound}.  Taking $v=e_a$ and summing the second and third moment bounds over coordinates gives \eqref{eq:vector-stoch-moments}.  Equivalently, $H_{j,n}$ is $f_\theta(X_n)$ times a Gaussian vector with identity covariance, up to harmless dependence; the displayed moment bounds then follow from Cauchy--Schwarz and Gaussian hypercontractivity.
\end{proof}

We will also use the following martingale consequence.
\begin{lemma}[Martingale and retraction controls]
\label{lem:sgd-martingale-controls}
Fix $S<\infty$ and $\delta_{\rm sgd}\in(0,1)$.  Let
\begin{equation}
  N_S:=\begin{cases}
      \left\lceil \frac{S}{\eta_0}d^{K-1}\right\rceil & \quad K>2~, \\
      \left\lceil \frac{S}{\eta_0}d (\log d)^2\right\rceil & \quad K=2~.
  \end{cases}
  \label{eq:NS}
\end{equation}
On an event of probability at least $1-\delta_{\rm sgd}$ over the online samples, the following estimates hold simultaneously for all $n\le N_S$, all $j\le K$, and all $2\le p\le K$.

First, for the planted-overlap martingales
\begin{equation}
  M^x_{j,n}
  :=
  \sum_{t=0}^{n-1}
  \sqrt d\,\eta_d
  \left\langle
  \widehat G_{j,t}({\tilde{\theta}}_t)-\E[\widehat G_{j,t}({\tilde{\theta}}_t)\mid\mathcal F_t],
  \theta_j
  \right\rangle,
  \label{eq:Mx}
\end{equation}
one has
\begin{equation}
  \sup_{n\le N_S}|M^x_{j,n}|
  \le
  C_{\phi,K,S,\delta_{\rm sgd}}\sqrt{\eta_0}.
  \label{eq:Mx-bound}
\end{equation}
Second, for the auxiliary-overlap martingales
\begin{equation}
  M^y_{j,p,n}
  :=
  \sum_{t=0}^{n-1}
  d\,\eta_d
  \left\langle
  \widehat G_{j,t}({\tilde{\theta}}_t)-\E[\widehat G_{j,t}({\tilde{\theta}}_t)\mid\mathcal F_t],
  u_{j,p}
  \right\rangle,
  \label{eq:My}
\end{equation}
one has
\begin{equation}
  \sup_{n\le N_S}|M^y_{j,p,n}|
  \le
  C_{\phi,K,S,\delta_{\rm sgd}}\sqrt{\eta_0}.
  \label{eq:My-bound}
\end{equation}
Finally, the normalization terms in the retraction satisfy the following estimate along the stopped process considered in Theorem~\ref{thm:online-sgd-general-k}: their contribution to the rescaled overlap $x_j(n):=\sqrt d\,m_j(n)$ recursion is bounded below by
\begin{equation}
  -C_{\phi,K,S,\delta_{\rm sgd}}\eta_0
  \sum_{t=0}^{n-1} h_d\,x_j(t)
  -C_{\phi,K,S,\delta_{\rm sgd}}\sqrt{\eta_0},
  \label{eq:retraction-control-x}
\end{equation}
and their contribution to each rescaled auxiliary overlap $y_{j,p}(n):=d\,h_{j,p}(n)$ recursion is bounded in absolute value by
\begin{equation}
  C_{\phi,K,S,\delta_{\rm sgd}}\eta_0
  \sum_{t=0}^{n-1} h_d\,(1+|y_{j,p}(t)|)
  +C_{\phi,K,S,\delta_{\rm sgd}}\sqrt{\eta_0}.
  \label{eq:retraction-control-y}
\end{equation}
\end{lemma}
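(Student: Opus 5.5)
The plan is to treat both martingale bounds as maximal inequalities for martingales with conditionally subexponential increments, using Freedman's inequality after a truncation. The retraction term is handled by an exact expansion of the normalization factor. Both rest on the per-step bounds of Lemma~\ref{lem:sgd-moment-bounds}.

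\emph{Martingale bounds.} Since $\widehat G_{j,t}=(I-\tilde\theta_{j,t}\tilde\theta_{j,t}^\top)H_{j,t}$, we have
\[
\langle \widehat G_{j,t},\theta_j\rangle=S_{j,t}(v_t),\qquad v_t:=(I-\tilde\theta_{j,t}\tilde\theta_{j,t}^\top)\theta_j ,
\]
where $v_t$ is $\mathcal F_t$-measurable with $\norm{v_t}_2\le1$. By \eqref{eq:scalar-stoch-proj-bound}, the increment $\xi_t$ of $M^x_{j,\cdot}$ satisfies
\[
\norm{\xi_t}_{\psi_1\mid\mathcal F_t}\le C\sqrt d\,\eta_d,\qquad \E[\xi_t^2\mid\mathcal F_t]\le C d\,\eta_d^2 .
\]
The accumulated conditional variance up to $N_S$ is therefore at most
\[
C N_S d\,\eta_d^2 = C S\eta_0 \quad\text{in both regimes:}
\]
for $K>2$ it equals $(S/\eta_0)d^{K-1}\cdot d\cdot\eta_0^2d^{-K}$, and for $K=2$ the factors $(\log d)^2$ and $\upsilon_{2,d}^2=(\log d)^{-2}$ cancel.

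To apply Freedman, I would truncate each increment at level $C\sqrt d\,\eta_d\log(N_S/\delta_{\rm sgd})$. A union bound over the $N_S$ steps, using the $\psi_1$ tail, shows the truncation is inactive with probability $1-\delta_{\rm sgd}/4$. The recentering bias created by truncation is superpolynomially small. Freedman's inequality then gives
\[
\sup_{n\le N_S}|M^x_{j,n}|\le C\bigl(\sqrt{S\eta_0\log(1/\delta_{\rm sgd})}+\sqrt d\,\eta_d\log^2 d\bigr),
\]
and the second term is $o(1)$, smaller than $\sqrt{\eta_0}$, for large $d$. This gives \eqref{eq:Mx-bound}.

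For $M^y_{j,p,\cdot}$ the argument is identical with $v_t=(I-\tilde\theta_{j,t}\tilde\theta_{j,t}^\top)u_{j,p}$. Now $\norm{v_t}_2\le\norm{u_{j,p}}_2\le Cd^{-1/2}$ by \eqref{eq:power-norms}, since $p\ge2$. This exactly compensates the prefactor $d$ in place of $\sqrt d$, giving \eqref{eq:My-bound}. A final union bound over the finitely many $(j,p)$ completes this part.

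\emph{Retraction.} Because $\widehat G_{j,t}\perp\tilde\theta_{j,t}$, we have exactly
\[
\norm{\overline\theta_{j,t+1}}_2^{-1}=(1+\eta_d^2\norm{\widehat G_{j,t}}^2)^{-1/2}=:1-\epsilon_t,
\qquad 0\le\epsilon_t\le\tfrac12\eta_d^2\norm{H_{j,t}}_2^2 .
\]
The normalization contribution to the rescaled overlap $x_j(t+1)$ is
\[
-\epsilon_t\bigl(x_j(t)+\sqrt d\,\eta_d\langle\widehat G_{j,t},\theta_j\rangle\bigr).
\]
For the first piece I would write $\epsilon_t=\E[\epsilon_t\mid\mathcal F_t]+(\epsilon_t-\E[\epsilon_t\mid\mathcal F_t])$. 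By \eqref{eq:vector-stoch-moments}, $\E[\epsilon_t\mid\mathcal F_t]\le C\eta_d^2 d$, and $h_d$ is read as the per-step time unit normalized so that $\eta_d^2d\le\eta_0 h_d$. This yields the drift term $-C\eta_0\sum_t h_d\,x_j(t)$.

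The centered part multiplied by $x_j(t)$ is a martingale. Along the stopped process $x_j$ is bounded (at most $\sqrt d\,\varepsilon$), and its conditional variance per step is at most $\eta_d^4\,\E\norm{H}^4\,x_j^2\lesssim\eta_d^4d^2\cdot d$. Summed over $N_S$ steps this is $o(1)$, so Freedman (after the same truncation) bounds it by $\sqrt{\eta_0}$.

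The cross piece $\epsilon_t\sqrt d\,\eta_d\langle\widehat G_{j,t},\theta_j\rangle$ is handled by Cauchy--Schwarz together with the third- and higher-moment bounds of Lemma~\ref{lem:sgd-moment-bounds}. It is of order $\eta_d^3d^{3/2}$ per step, which summed over $N_S$ steps is $o(1)$. Because $\epsilon_t\ge0$ and $x_j\ge0$ on the stopped process, only a lower bound is needed, giving \eqref{eq:retraction-control-x}. The same decomposition applied to $y_{j,p}=d\,h_{j,p}$, with $\norm{u_{j,p}}\le Cd^{-1/2}$ absorbing the extra $\sqrt d$, gives the two-sided bound \eqref{eq:retraction-control-y}. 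The constant $1$ in $(1+|y|)$ covers the cross term.

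\emph{Main obstacle.} The main difficulty I expect is the unboundedness of $\norm{H_{j,t}}_2^2$, which is only subexponential and of size $d$. A naive uniform bound over $N_S\approx d^{K-1}$ steps loses $\log$ factors. I would first try truncating $\norm{H_{j,t}}_2^2\le Cd\log(N_S/\delta_{\rm sgd})$ via Gaussian concentration of $f_\theta(X)$ times a Gaussian vector. I would then check that these logarithms enter only the negligible higher-order terms, never the leading $\eta_0$ drift or the $\sqrt{\eta_0}$ martingale scale. For $K=2$ this is exactly where the factor $\upsilon_{2,d}$ is needed.
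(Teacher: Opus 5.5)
Your proposal is correct and follows essentially the same route as the paper. You use Freedman/Bernstein for both martingales with predictable quadratic variation $N_S\,d\,\eta_d^2 = S\eta_0$, where $\|u_{j,p}\|_2\le Cd^{-1/2}$ offsets the extra $\sqrt d$ in $M^y$, and you treat the normalization at second order so that its conditional mean $\eta_d^2 d\le \eta_0 h_d$ gives the drift term while the centered fluctuation and the cubic/cross pieces are $o(1)$. Your exact bound $0\le\epsilon_t\le\tfrac12\eta_d^2\|H_{j,t}\|_2^2$ and the explicit truncation before Freedman just make rigorous what the paper states more loosely (a Taylor remainder handled by Markov, and ``Doob applied to the centered sums'').
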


\begin{proof}
The martingale increments in \eqref{eq:Mx} have conditional subexponential norm at most
$$
  C_{\phi,K}\sqrt d\,\eta_d
  =
  C_{\phi,K}\eta_0 d^{-(K-1)/2} \upsilon_{K,d} ,
$$
and conditional variance at most $C_{\phi,K}\eta_0^2d^{-(K-1)} \upsilon_{K,d}^2$.  Since $N_S\le 2S\eta_0^{-1}d^{K-1} \upsilon_{K,d}^{-2}$ for large $d$, the predictable quadratic variation is bounded by $C_{\phi,K,S}\eta_0$.  Freedman's inequality, or a Bernstein inequality for martingales with conditionally subexponential increments, gives \eqref{eq:Mx-bound}.  The proof of \eqref{eq:My-bound} is identical: by \eqref{eq:power-norms}, $\norm{u_{j,p}}_2\le C d^{-1/2}$ for $p\ge2$, so the increment scale is
$$
  d\eta_d\norm{u_{j,p}}_2
  \le
  C\eta_0 d^{-(K-1)/2} \upsilon_{K,d},
$$
and the same quadratic variation bound applies.

For the retraction, write $G_{j,t}=\widehat G_{j,t}({\tilde{\theta}}_t)$.  Since $G_{j,t}\perp{\tilde{\theta}}_{j,t}$,
$$
  \frac{{\tilde{\theta}}_{j,t}+\eta_dG_{j,t}}{\norm{{\tilde{\theta}}_{j,t}+\eta_dG_{j,t}}_2}
  =
  {\tilde{\theta}}_{j,t}
  +\eta_dG_{j,t}
  -\frac12\eta_d^2\norm{G_{j,t}}_2^2{\tilde{\theta}}_{j,t}
  +O(\eta_d^3\norm{G_{j,t}}_2^3).
$$
Using \eqref{eq:vector-stoch-moments}, Doob's inequality applied to the centered sums of $\norm{G_{j,t}}_2^2$, and Markov's inequality for the cubic remainder, we obtain, with probability at least $1-\delta_{\rm sgd}/4$,
$$
  \sum_{t<n}\eta_d^2\norm{G_{j,t}}_2^2
  \le
  C_{\phi,K,S,\delta_{\rm sgd}}\eta_0
  \sum_{t<n}h_d
  +C_{\phi,K,S,\delta_{\rm sgd}}\sqrt{\eta_0},
$$
after multiplying by the relevant overlap.  Since the $x$-projection of the quadratic term is negative, this gives the lower bound \eqref{eq:retraction-control-x}.  The same expansion with $q=u_{j,p}$, using $|\ip{{\tilde{\theta}}_{j,t}}{u_{j,p}}|=|y_{j,p}(t)|/d$, gives \eqref{eq:retraction-control-y}.  The cubic terms are smaller because
$$
  N_S\eta_d^3\E[\norm{G_{j,t}}_2^3]
  \le
  C_{\phi,K,S}\eta_0^2 d^{-(K-1)/2} \upsilon_{K,d}=o(1)
$$
for fixed $K\ge2$.
\end{proof}

\noindent \underline{\textbf{Weak Recovery:}}
We work on the event of Lemma~\ref{lem:sgd-martingale-controls}.  Let
$$
  F_j({\tilde{\theta}})
  :=
  (I-{\tilde{\theta}}_j{\tilde{\theta}}_j^\top)
  \left(\mathcal G_\theta\times_{\ell\ne j}{\tilde{\theta}}_\ell\right)
$$
be the population spherical gradient.  Since $\E[T_n\mid\theta]=\mathcal G_\theta$,
$$
  \E[\widehat G_{j,n}({\tilde{\theta}}_n)\mid\mathcal F_n]=F_j({\tilde{\theta}}_n).
$$

We use the rescaled planted overlaps
\begin{equation}
  x_j(n):=\sqrt d\,m_j(n)
  =\sqrt d\,\ip{{\tilde{\theta}}_{j,n}}{\theta_j},
  \qquad j=1,\ldots,K,
  \label{eq:rescaled-planted-overlaps}
\end{equation}
and the rescaled auxiliary overlaps
\begin{equation}
  y_{j,p}(n):=d\,h_{j,p}(n)
  =d\,\ip{{\tilde{\theta}}_{j,n}}{u_{j,p}},
  \qquad j=1,\ldots,K,
  \quad 2\le p\le K.
  \label{eq:rescaled-aux-overlaps}
\end{equation}
The effective discrete time step for the $x$-variables is
\begin{equation}
  h_d:=\eta_0 d^{-(K-1)} \upsilon_{K,d}.
  \label{eq:effective-time-step}
\end{equation}
Indeed, the leading population drift in $x_j$ is
$$
  \sqrt d\,\eta_d\,\lambda\prod_{\ell\ne j}m_\ell
  =
  h_d\,\lambda\prod_{\ell\ne j}x_\ell.
$$
Thus $O(d^{K-1} \upsilon_{K,d}^{-1})$ online samples correspond to $O(1)$ time in the rescaled population dynamics.

Define the stopped time
$$
  \tau
  :=
  \inf\left\{
  n\ge0:
  \min_j x_j(n)\ge \varepsilon\sqrt d
  \text{ or }
  \min_j x_j(n)\le \frac a2
  \text{ or }
  \max_{j,p}|y_{j,p}(n)|\ge B
  \right\}.
$$
On $[0,\tau]$ we have
\begin{equation}
  x_j(n)\ge \frac a2,
  \qquad
  |y_{j,p}(n)|\le B.
  \label{eq:sgd-bootstrap-region}
\end{equation}
Equivalently,
$$
  m_j(n)\ge \frac{a}{2\sqrt d},
  \qquad
  |h_{j,p}(n)|\le \frac{B}{d}.
$$
Thus the population estimates from Theorem~\ref{thm:general-k-pop-flow} and its proof apply on the stopped region:
\begin{align}
  \ip{F_j({\tilde{\theta}}_n)}{\theta_j}
  &=
  \lambda\prod_{\ell\ne j}m_\ell(n)(1-m_j(n)^2)
  +e_j(n),
  \label{eq:sgd-pop-x}\\
  |e_j(n)|
  &\le
  C_{\phi,K,\delta}d^{-1/2}
  \prod_{\ell\ne j}m_\ell(n),
  \label{eq:sgd-pop-x-error}\\
  \left|\ip{F_j({\tilde{\theta}}_n)}{u_{j,p}}\right|
  &\le
  C_{\phi,K,\delta,B}d^{-1}
  \prod_{\ell\ne j}m_\ell(n),
  \qquad 2\le p\le K.
  \label{eq:sgd-pop-y}
\end{align}

\paragraph{Recursion for the planted overlaps}
The retraction expansion gives, for $n<\tau$,
\begin{align}
  x_j(n+1)-x_j(n)
  &=
  \sqrt d\,\eta_d\ip{F_j({\tilde{\theta}}_n)}{\theta_j}
  +\Delta M^x_{j,n}
  +R^x_{j,n},
  \label{eq:x-recursion-raw}
\end{align}
where $\Delta M^x_{j,n}=M^x_{j,n+1}-M^x_{j,n}$ and the cumulative effect of $R^x_{j,n}$ is controlled by \eqref{eq:retraction-control-x}.  Using \eqref{eq:sgd-pop-x}--\eqref{eq:sgd-pop-x-error} and $m_\ell=x_\ell/\sqrt d$, we get
\begin{equation}
  \sqrt d\,\eta_d\ip{F_j({\tilde{\theta}}_n)}{\theta_j}
  =
  h_d\lambda\prod_{\ell\ne j}x_\ell(n)
  +r^x_{j,n},
  \label{eq:x-pop-main}
\end{equation}
with
\begin{equation}
  |r^x_{j,n}|
  \le
  C_{\phi,K,\delta}d^{-1/2}h_d
  \prod_{\ell\ne j}x_\ell(n)
  +C_{\phi,K,\delta}h_d m_j(n)^2
  \prod_{\ell\ne j}x_\ell(n).
  \label{eq:x-pop-error}
\end{equation}
Before weak recovery, $m_j(n)\le\varepsilon$.  Taking $\varepsilon$ small and then $d$ large makes \eqref{eq:x-pop-error} at most
\begin{equation}
  \frac{\lambda}{8}h_d\prod_{\ell\ne j}x_\ell(n).
  \label{eq:x-error-small}
\end{equation}
The retraction term contributes a linear damping of size at most
\begin{equation}
  C_{\phi,K,\delta,\delta_{\rm sgd}}\eta_0 h_d x_j(n)
  \label{eq:x-retraction-damping}
\end{equation}
in rescaled time, up to the cumulative martingale error already included in Lemma~\ref{lem:sgd-martingale-controls}.  Thus the stopped recursion has the lower bound
\begin{equation}
  x_j(n+1)-x_j(n)
  \ge
  h_d\left[
  \frac{3\lambda}{4}\prod_{\ell\ne j}x_\ell(n)
  -C_{\phi,K,\delta,\delta_{\rm sgd}}\eta_0 x_j(n)
  \right]
  +\Delta M^x_{j,n}
  \label{eq:x-lower-recursion-with-damping}
\end{equation}
for all $n<\tau$, after absorbing the lower-order Edgeworth errors.

When $j$ realizes the current minimum, $x_j(n)=x_*(n):=\min_q x_q(n)$ and
\[
  \prod_{\ell\ne j}x_\ell(n)\ge x_*(n)^{K-1}.
\]
Choose $a$ and $\eta_0$ so that
\begin{equation}
  C_{\phi,K,\delta,\delta_{\rm sgd}}\eta_0 x_*
  \le
  \frac{\lambda}{4}x_*^{K-1}
  \qquad\text{for all }x_*\ge a/2~,
  \label{eq:damping-dominated}
\end{equation}
where we used the fact $K\ge 2$. Consequently, for every index $j$ attaining the minimum,
\begin{equation}
  x_j(n+1)-x_j(n)
  \ge
  \frac{\lambda}{2}h_d x_*(n)^{K-1}
  +\Delta M^x_{j,n}.
  \label{eq:x-lower-recursion}
\end{equation}

\paragraph{Recursion for the auxiliary overlaps}
Similarly, using \eqref{eq:sgd-pop-y}, for $2\le p\le K$,
\begin{align}
  y_{j,p}(n+1)-y_{j,p}(n)
  &=
  d\eta_d\ip{F_j({\tilde{\theta}}_n)}{u_{j,p}}
  +\Delta M^y_{j,p,n}
  +R^y_{j,p,n}.
  \label{eq:y-recursion}
\end{align}
Since
$$
  d\eta_d\,d^{-1}\prod_{\ell\ne j}m_\ell
  =
  \eta_0 d^{-K/2} \upsilon_{K,d} \prod_{\ell\ne j}m_\ell
  =
  h_d d^{-1/2}\prod_{\ell\ne j}x_\ell,
$$
the deterministic auxiliary drift is smaller by a factor $d^{-1/2}$ than the planted drift in the rescaled $x$-variables.  Along the stopped trajectory,
$$
  \sum_{n<\tau}h_d\prod_{\ell\ne j}x_\ell(n)
  \le C_{\phi,K,\delta,\delta_{\rm sgd}}\sqrt d,
$$
because the same sum controls the increase of $x_j$ up to the weak-recovery level $\varepsilon\sqrt d$, modulo the already-controlled martingale and retraction terms.  Hence the total deterministic auxiliary contribution is $O_{\phi,K,\delta,\delta_{\rm sgd}}(1)$.  The retraction contribution is bounded by \eqref{eq:retraction-control-y}, and the martingale contribution by \eqref{eq:My-bound}.  Therefore, after choosing $B$ large and $\eta_0$ small,
\begin{equation}
  |y_{j,p}(n)|\le \frac B2
  \qquad\text{for all }n\le\tau.
  \label{eq:y-bootstrap-improved}
\end{equation}
Thus the auxiliary stopping alternative cannot occur.

\paragraph{Growth of the minimum planted overlap}
By \eqref{eq:Mx-bound}, the cumulative martingale fluctuation in each $x_j$ is at most $C\sqrt{\eta_0}$.  Choose $\eta_0$ sufficiently small that this is at most $a/8$.  The minimum-overlap estimate \eqref{eq:x-lower-recursion} then implies that no planted overlap can fall below $a/2$ before weak recovery, so the lower-overlap stopping alternative cannot occur.

Ignoring the harmless martingale fluctuation, \eqref{eq:x-lower-recursion} compares $x_*$ from below with the Euler discretization
\begin{equation}
  z_{n+1}=z_n+\frac{\lambda}{4}h_d z_n^{K-1},
  \qquad
  z_0=\frac a2.
  \label{eq:euler-comparison}
\end{equation}
For $h_d$ sufficiently small, this discrete recursion is bounded below by the solution of
\begin{equation}
  \dot z=\frac{\lambda}{8}z^{K-1},
  \qquad
  z(0)=\frac a2,
  \label{eq:comparison-ode-sgd}
\end{equation}
observed at times $s_n=nh_d$. For $K\ge3$, this ODE reaches any level $\varepsilon\sqrt d$ by a time bounded by a constant
\begin{equation}
  S
  \ge
  \frac{8}{\lambda(K-2)}
  \left(\frac{2}{a}\right)^{K-2},
  \label{eq:S-choice}
\end{equation}
independent of $d$; more precisely, the time to reach $\varepsilon\sqrt d$ is
$$
  \frac{8}{\lambda(K-2)}
  \left[
  \left(\frac{2}{a}\right)^{K-2}
  -
  (\varepsilon\sqrt d)^{-(K-2)}
  \right]
  \le S.
$$
Thus weak recovery occurs before $N_S=\lceil S\eta_0^{-1}d^{K-1}\rceil$ iterations.
For $K=2$, the ODE (\ref{eq:comparison-ode-sgd}) reaches level $\varepsilon \sqrt{d}$ in time $O(\log d)$, leading to a sample complexity $N_S = \lceil S \eta_0^{-1} d (\log d)^2 \rceil$ in this case.
This proves \eqref{eq:sgd-hitting-time}--\eqref{eq:sgd-weak-recovery}.

\noindent \underline{\textbf{Strong Recovery:}}
Let
$n_\epsilon$ be a weak-recovery time at which
\begin{equation}
  \min_{j\le K}m_j(n_\epsilon)\ge\epsilon_0,
  \qquad
  \max_{j,p}|h_{j,p}(n_\epsilon)|\le \frac{B_0}{d},
  \quad 2\le p\le K,
  \label{eq:online-strong-start}
\end{equation}
for fixed constants $\epsilon_0,B_0>0$.
We give the proof conditionally on the state at $n_\epsilon$ satisfying
\eqref{eq:online-strong-start}.  Let $N=N_{\rm str}$ and write
$s=n-n_\epsilon$ for the additional time index.  The population part of the one-step
update is controlled by Lemma~\ref{lem:strong-basin-vector-field}.  In particular,
as long as
\begin{equation}
  \min_jm_j(n)\ge\epsilon_0/2,
  \qquad
  \max_{j,p}|h_{j,p}(n)|\le \frac{A}{d},
  \label{eq:online-strong-bootstrap}
\end{equation}
we have, for suitable constants $c,C>0$,
\begin{equation}
  \E[\Delta_j(n+1)-\Delta_j(n)\mid\mathcal F_n]
  \le
  -c\eta_d\Delta_j(n)
  +C\eta_d d^{-1}
  +C\eta_d^2\E[\|\widehat G_{j,n}({\tilde{\theta}}_n)\|_2^2\mid\mathcal F_n].
  \label{eq:online-delta-drift}
\end{equation}
The last term is the retraction contribution.  By Lemma~\ref{lem:sgd-moment-bounds},
$\E[\|\widehat G_{j,n}({\tilde{\theta}}_n)\|_2^2\mid\mathcal F_n]\le C_{\phi,K}d$; hence
\begin{equation}
  \E[\Delta_j(n+1)-\Delta_j(n)\mid\mathcal F_n]
  \le
  -c\eta_d\Delta_j(n)
  +C\eta_d d^{-1}
  +C\eta_d^2 d.
  \label{eq:online-delta-drift-2}
\end{equation}
Since $\eta_d=d^{-K/2} \upsilon_{K,d} \eta_0$, the last term contributes the deterministic floor
$C\eta_d d=C\eta_0 d^{1-K/2} \upsilon_{K,d}$ after division by the contraction rate.

The martingale increments in the $m_j$-coordinates over this strong phase have
conditional subexponential norm at most $C\eta_d$ and conditional variance at most
$C\eta_d^2$.  Therefore, by Freedman's inequality and a union bound over the
$K$ planted overlaps,
\begin{equation}
  \sup_{0\le s\le N}
  \left|
  \sum_{r=0}^{s-1}
  \eta_d
  \left\langle
  \widehat G_{j,n_\epsilon+r}-
  \E[\widehat G_{j,n_\epsilon+r}\mid\mathcal F_{n_\epsilon+r}],
  \theta_j
  \right\rangle
  \right|
  \le
  C d^{-K/4} \upsilon_{K,d} \sqrt{\log d}
  \label{eq:online-strong-martingale}
\end{equation}
with probability at least $1-\delta_{\rm str}/4$, uniformly in $j$.
Indeed, $N\eta_d^2\le C\eta_0 d^{-K/2} \upsilon_{K,d}^2 \log d$.

Combining \eqref{eq:online-delta-drift-2} with
\eqref{eq:online-strong-martingale} gives the discrete comparison
\begin{equation}
  \Delta_j(n_\epsilon+s)
  \le
  (1-c\eta_d)^s\Delta_j(n_\epsilon)
  +C\left(d^{-1}+\eta_0d^{1-K/2} \upsilon_{K,d} \right)
  +C d^{-K/4} \upsilon_{K,d} \sqrt{\log d}
  \label{eq:online-delta-comparison}
\end{equation}
for all $s\le N$, up to the bootstrap stopping time.
Choosing the constant in $N_{\rm str}$ large enough makes
$(1-c\eta_d)^N\le d^{-2}$.  Thus
\begin{equation}
  \Delta_j(n_\epsilon+N)
  \le
  C\left(
  d^{-1}
  +\eta_0d^{1-K/2} \upsilon_{K,d}
  +d^{-K/4}\upsilon_{K,d} \sqrt{\log d}
  \right).
  \label{eq:online-delta-final}
\end{equation}
The slightly larger bound \eqref{eq:online-strong-rhod}, with the harmless
$\log d$ multiplying the retraction floor, covers the uniform-in-time envelope.
Since $1-m_j\le \Delta_j$, this proves \eqref{eq:online-strong-overlap}, provided
the bootstrap remains valid.

For that purpose, we must finally control the auxiliary overlaps.  In the strong basin, the population
part satisfies, by Lemma~\ref{lem:strong-basin-vector-field},
\begin{equation}
  \frac{d}{dt}|h_{j,p}|
  \le
  -c|h_{j,p}|+Cd^{-1}.
\end{equation}
The same one-step expansion as above therefore yields
\begin{equation}
  \E[|h_{j,p}(n+1)|-|h_{j,p}(n)|\mid\mathcal F_n]
  \le
  -c\eta_d|h_{j,p}(n)|+C\eta_d d^{-1}+C\eta_d^2d|h_{j,p}(n)|.
  \label{eq:online-aux-drift-strong}
\end{equation}
For $d$ large, the retraction term is absorbed into the damping.  The martingale
increment in the $h_{j,p}$ coordinate has conditional subexponential norm at most
$C\eta_d\|u_{j,p}\|_2\le C\eta_dd^{-1/2}$, so over $N$ steps its cumulative size is
at most
\begin{equation}
  C\eta_d d^{-1/2}\sqrt{N\log d}
  \le
  C d^{-(K/4+1/2)} \upsilon_{K,d} \log d
  \le
  C d^{-1}
  \label{eq:online-strong-aux-martingale}
\end{equation}
for fixed $K\ge 2$, after increasing constants at the finitely many small values of
$K$.  Hence the discrete Gronwall argument gives
$|h_{j,p}(n)|\le C/d$ throughout the strong phase.  The lower-overlap part of the
bootstrap follows from \eqref{eq:online-delta-comparison}, because the right-hand
side is $o(1)$.  Thus the bootstrap cannot fail, and the theorem follows.
\end{proof}

\section{SGD over non-linear correlation}
\label{seq:nonlinear_correl}

The previous section has established efficient recovery of the planted parameters of the MSIM model, using online SGD over the linear correlation, with sample complexity $\widetilde{O}(d^{K-1})$. This linear correlation exploits the specific low-rank structure present in the first Wiener chaos $\E[f(X) X]$, and is thus `tailored' to the MSIM model. In particular, it can be viewed as performing optimization over a depth-$K$, low-rank linear network.

One could then ask whether a more `canonical' method would obtain similar guarantees. In our context, such canonical method naturally brings us to an online SGD method, this time trained on a \emph{teacher-student} setting. In other words, given respectively teacher $\theta=(\theta_1, \ldots, \theta_K)$ and student $\tilde{\theta}=(\tilde{\theta}_1, \ldots, \tilde{\theta}_K)$ parameters, one could consider online SGD over either the population correlation loss $\mathcal{L}(\tilde{\theta}):= \langle f_{\theta} , f_{\tilde{\theta}} \rangle_{L^2}$, or the MSE loss $\mathcal{L}_{\text{MSE}}(\tilde{\theta}):= \| f_{\theta} - f_{\tilde{\theta}} \|^2$.
In either case, the non-linearities in the student network create additional perturbative terms in the overlap dynamics.
Note that, contrary to the SIM setting $K=1$, the model is not rotationally invariant, and thus these two losses are \emph{not} equivalent. Our main result, presented in this section, establishes weak recovery of the SGD correlation objective when $n = O(d^{K-1})$, followed by a continuation phase up to $1- o_d(1)$ overlap with additional $\widetilde{O}( d^{K/2})$ samples.
In Section \ref{sec:experiments} we compare these two objectives empirically, providing evidence that the MSE landscape becomes rough for $K>2$. In the companion paper \cite{dai2026MSIM} we analyze the MSE landscape in the only feasible setting $K=2$.

\subsection{Setup and Main Result}
We consider, as in the linear setting, a rescaled learning rate of the form
\begin{equation}
  \eta_d:=\eta_0 d^{-K/2} \upsilon_{K,d},
  \label{eq:nl-sgd-step-size}
\end{equation}
with $\eta_0>0$ a sufficiently small constant.

Let $X_0,X_1,\ldots$ be iid copies of $X$.  Given the current student
${\tilde{\theta}}_n=({\tilde{\theta}}_{1,n},\ldots,{\tilde{\theta}}_{K,n})$, define the one-sample nonlinear
correlation spherical gradient in mode $j$ by
\begin{equation}
  \widehat F^{\rm nl}_{j,n}({\tilde{\theta}}_n)
  :=
  (I-{\tilde{\theta}}_{j,n}{\tilde{\theta}}_{j,n}^{\top})
  \nabla_{{\tilde{\theta}}_j}
  \bigl[f_\theta(X_n)f_{\tilde{\theta}}(X_n)\bigr]_{{\tilde{\theta}}={\tilde{\theta}}_n}.
  \label{eq:nl-sample-gradient}
\end{equation}
Equivalently,
$$
  \widehat F^{\rm nl}_{j,n}({\tilde{\theta}}_n)
  =
  f_\theta(X_n)
  (I-{\tilde{\theta}}_{j,n}{\tilde{\theta}}_{j,n}^{\top})
  \nabla_{{\tilde{\theta}}_j}f_{\tilde{\theta}}(X_n)\big|_{{\tilde{\theta}}={\tilde{\theta}}_n}.
$$
The conditional mean is the population nonlinear correlation field:
\begin{equation}
  \E[\widehat F^{\rm nl}_{j,n}({\tilde{\theta}}_n)\mid\mathcal F_n]
  =F_j^{\rm nl}({\tilde{\theta}}_n)
  :=\nabla^{S^{d-1}}_{{\tilde{\theta}}_j}{\mathcal{L}}({\tilde{\theta}}_n),
  \label{eq:nl-sample-unbiased}
\end{equation}
where $\mathcal F_n$ is the sigma-field generated by the initialization and the
samples $X_0,\ldots,X_{n-1}$.  The online spherical SGD recursion is now defined as
\begin{equation}
  \overline{\theta}_{j,n+1}
  ={\tilde{\theta}}_{j,n}+\eta_d\widehat F^{\rm nl}_{j,n}({\tilde{\theta}}_n),
  \qquad
  {\tilde{\theta}}_{j,n+1}
  =\frac{\overline{\theta}_{j,n+1}}
         {\norm{\overline{\theta}_{j,n+1}}_2},
  \qquad j=1,\ldots,K .
  \label{eq:nl-online-sgd-update}
\end{equation}
Because $\widehat F^{\rm nl}_{j,n}$ is tangent to the sphere at
${\tilde{\theta}}_{j,n}$, the normalization is a second-order correction.

Our main result establishes that online SGD over the non-linear correlation objective achieves strong recovery (ie $\min_j m_j = 1 - o_d(1)$ ) using $\widetilde{O}(d^{K-1})$ samples.
\begin{theorem}[Main Result, SGD over non-linear correlation]
\label{thm:mainresult}
Fix $K\ge 2$.  Assume the smoothness
hypotheses required for the nonlinear Edgeworth expansion, and the
fixed-confidence planted-direction event used above.  Fix
$\delta_{\rm sgd}\in(0,1)$.  There exist constants
\[
  \eta_0,a,B,\varepsilon,S,d_0>0,
\]
depending only on $\phi,K$ and on the fixed confidence parameters, such that the
following holds for all $d\ge d_0$.  Run the nonlinear online spherical SGD
recursion \eqref{eq:nl-online-sgd-update} with raw step size
$\eta_d=\eta_0d^{-K/2} \upsilon_{K,d}$.  Assume the initialization satisfies the favorable
basin conditions
\begin{equation}
  m_j(0)=\ip{{\tilde{\theta}}_{j,0}}{\theta_j}\ge a d^{-1/2},
  \qquad j=1,\ldots,K,
\end{equation}
and
\begin{equation}
  (\theta,
  {\tilde{\theta}}_0)\in\mathcal J_{P,B/4},
\end{equation}
where $\mathcal{J}_{a,b}$ is provided in Definition \ref{def:dynamic-mixed-incoherence}.
Then, with probability at least $1-\delta_{\rm sgd}$ over the online samples, we have:
\begin{itemize}
    \item Weak Recovery: there exists an iteration
\begin{equation}
  n_\varepsilon
  \le \begin{cases}
  S\eta_0^{-1}d^{K-1} & \quad K > 2 ~,\\
  S\eta_0^{-1}d (\log d)^2 & \quad K=2 ~,
  \end{cases}
  \label{eq:nl-sgd-weak-time}
\end{equation}
such that
\begin{equation}
  \min_{1\le j\le K}m_j(n_\varepsilon)
  \ge \varepsilon .
  \label{eq:nl-sgd-weak-recovery}
\end{equation}
\item Strong Recovery: After an additional $N_{\text{str}} = O( d^{K/2} \log d)$ samples, the overlaps reach
\begin{equation}
    \min_{j \leq K} m_j \geq 1 - o_d(1)~,
\end{equation}
where the precise overlap strength is quantified in \eqref{eq:nl-online-sqrt-rhod}.
\end{itemize}
\end{theorem}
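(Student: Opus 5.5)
The plan is to reduce the nonlinear dynamics to the linear-correlation analysis of Theorem~\ref{thm:online-sgd-general-k}, treating the difference between the nonlinear field $F^{\rm nl}_j$ and the linear proxy field $\lambda F_j$ as a perturbation. The perturbation must be controlled as long as the student stays incoherent. Concretely, I would write
\[
  \mathcal L(\tilde\theta)=\E[f_\theta(X)f_{\tilde\theta}(X)]=\lambda\widetilde{\mathcal L}(\tilde\theta)+\bigl(\mathcal L-\lambda\widetilde{\mathcal L}\bigr)(\tilde\theta).
\]
I would then expand $f_{\tilde\theta}$ by the same path/chain-rule argument used in Theorem~\ref{thm:edgeworth-hierarchy-body}, now applied to the \emph{student} network. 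The student gradient $\nabla_{\tilde\theta_j}f_{\tilde\theta}(X)$ is a contraction of $X$ with $\bigotimes_{\ell\neq j}\tilde\theta_\ell$, weighted by products of $\phi'$ along spines. Hence $\E[f_\theta \nabla_{\tilde\theta_j}f_{\tilde\theta}]$ is a Wiener-chaos pairing of teacher and student. The first-chaos part reproduces $\kappa^K\,\mathcal G_\theta\times_{\ell\ne j}\tilde\theta_\ell$ (up to the student analogue of the Edgeworth hierarchy). The higher chaoses pair $\mathcal G_{r,\theta}$ with student chaos tensors built from mixed copy-diagonal atoms, which involve quantities such as $\sum_a\theta_\ell(a)^p\tilde\theta_\ell(a)^q$.

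The key structural step is to identify a finite family of \emph{mixed incoherence} statistics $h^{(p,q)}_{\ell}=\langle \theta_\ell^{\odot p},\tilde\theta_\ell^{\odot q}\rangle$ and $\|\tilde\theta_\ell^{\odot q}\|$. This is presumably the set $\mathcal J_{P,B}$ of Definition~\ref{def:dynamic-mixed-incoherence}. The goal is to show that, on the region where these are bounded by $B$ times their random-initialization scale, every non-leading atom contributes to $\dot m_j$ at most $C d^{-1/2}\prod_{\ell\neq j}m_\ell$, plus residuals of order $d^{-K/2}$. Both kinds of term are dominated exactly as in the bootstrap of Theorem~\ref{thm:general-k-pop-flow}, using $\prod_{\ell\ne j}m_\ell\gtrsim d^{-(K-1)/2}$. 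The mechanism is that each higher-chaos pairing forces at least one coincidence of coordinates on a spine, and each such coincidence costs a factor $d^{-1/2}$ under incoherence of both $\theta$ and $\tilde\theta$. The linear proxy already supplies the drift $\lambda\prod_{\ell\ne j}m_\ell(1-m_j^2)$.

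Next comes \emph{propagation of incoherence}. I would compute the drift of each mixed statistic under $F^{\rm nl}$, again atom by atom, and show it is $O(d^{-1/2})$ relative to its allowed scale times $\prod_{\ell\neq j}m_\ell$. Integrating against $\int\prod_{\ell\ne j}m_\ell\lesssim 1$, as in \eqref{eq:product-integral}, then keeps these statistics within $B/2$ of their initial scale. The stochastic part is handled as in Lemmas~\ref{lem:sgd-moment-bounds}--\ref{lem:sgd-martingale-controls}. The one-sample field $f_\theta(X)\nabla f_{\tilde\theta}(X)$ is still subexponential along fixed directions, because $f_{\tilde\theta}$ is Lipschitz with Gaussian-contraction structure. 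This gives martingale bounds $O(\sqrt{\eta_0})$ in the rescaled coordinates and retraction damping $O(\eta_0 h_d x_j)$, so the Euler comparison with $\dot z=\tfrac{\lambda}{8}z^{K-1}$ yields $n_\varepsilon\lesssim \eta_0^{-1}d^{K-1}$, respectively $d(\log d)^2$ for $K=2$.

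For strong recovery, I would reproduce Lemma~\ref{lem:strong-basin-vector-field} for $F^{\rm nl}$ in the basin $\{m_j\ge\epsilon_0\}\cap\mathcal J$. The gradient-linearization errors there are $O(d^{-1/2})$ multiplicatively and $O(d^{-1})$ additively. I would then run the same discrete Gronwall argument over $N_{\rm str}=O(d^{K/2}\log d)$ steps. Once overlaps are macroscopic, the floor is expected at the $d^{-1/2}$ scale, since the $\tilde\theta_\ell$ are no longer incoherent relative to $\theta_\ell$ in the same sense. The main obstacle I expect is the linearization-error estimate $\nabla(\mathcal L-\lambda\widetilde{\mathcal L})$ in the search phase. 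There the student preactivations are non-Gaussian and correlated with the teacher spines through the small overlaps, so one needs a joint (teacher–student) Edgeworth/Lindeberg expansion of pairs $(Y^{(r)},\tilde Y^{(r)})$ with weights $\theta_r,\tilde\theta_r$. I would first try to handle this by Taylor-expanding in the overlap-induced covariance and applying Lemma~\ref{lem:weighted-bivariate-edgeworth-step} layerwise, checking that every term is suppressed by enough powers of $d^{-1/2}$ or of $m_\ell$.
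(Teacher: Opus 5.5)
Your search-phase plan has the same architecture as the paper: comparison with $\lambda\widetilde{\mathcal L}$, the class $\mathcal J_{P,B}$, propagation of incoherence using $\int A_j\,dt\lesssim 1$, and the same martingale/retraction bookkeeping. However, the error mechanism you describe is incomplete, and this becomes a genuine gap in the strong phase.

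\emph{The search phase.} Not every non-leading contribution to $\nabla\mathcal L$ comes from a coordinate coincidence costing $d^{-1/2}$. The diagonal higher-chaos pairings, i.e.\ the Hermite energies $a_\ell^2$ with $\ell\ge2$ in $\Psi(q)=\sum_\ell a_\ell^2q^\ell$, give Gaussian-level terms that are suppressed only by extra overlap factors. For example, for $K=2$ the Gaussian part of $\mathcal L$ is $\Psi(m_2\Psi(m_1))$, not $\lambda^2m_1m_2$. The correct comparison is Proposition~\ref{thm:product-scale-comparison}. There the nonlinear field has leading coefficient $A_j(m)=M_{-j}(m)B_j(m)$, coming from the recursion $q_r=\Psi(m_rq_{r-1})$, and $\|\nabla^S_{\tilde\theta_j}(\lambda\widetilde{\mathcal L}-\mathcal L)\|_2\le C(|\lambda^2-B_j(m)|+d^{-1/2})M_{-j}$. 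In the search phase $|\lambda^2-B_j(m)|\lesssim\|m\|_\infty\le\varepsilon$, so this term is absorbed only by taking $\varepsilon$ small, not by a $d^{-1/2}$ gain. Similarly, the leading transport $qA_j(S_j^{p+1,q-1}-m_jS_j^{p,q})$ of a mixed moment is of the same order as its allowed scale, not $d^{-1/2}$ smaller. It closes by induction on the number $q$ of student powers.

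\emph{The strong phase: wrong leading coefficient.} In the macroscopic basin $\{m_j\ge\epsilon_0\}$, the quantity $\lambda^2-B_j(m)$ is of order one. So your claim that the linearization errors there are $O(d^{-1/2})$ multiplicatively is false. Reproducing Lemma~\ref{lem:strong-basin-vector-field} for $F^{\rm nl}$ as a perturbation of the linear field therefore does not yield $\dot\Delta_j\le-c\Delta_j+\dots$. You must work with the exact coefficient $A_j(m)$ and prove it stays bounded below on $[\epsilon_0,1]^K$. The missing ingredient is Lemma~\ref{lem:Bi-positive-macroscopic}. Since $\Psi$ has nonnegative coefficients, $\Psi(q)/q\ge\kappa^2$ and $\Psi'(q)\ge\kappa^2$ on $[0,1]$. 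Hence each of the $K$ factors of $B_j$ is at least $\kappa^2$, and $A_j(m)\asymp 1$.

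\emph{The strong phase: propagation of incoherence.} You never show that $\mathcal J_{P,B}$ survives the continuation phase, and your weak-phase device fails there. Over population time $C\log d$ one has $\int A_j\,dt\asymp\log d$; only $\int A_j(1-m_j^2)\,dt$ is bounded. A Gronwall bound with positive feedback proportional to $A_j$, as in the search phase, would therefore inflate the moment bounds by a power of $d$. The paper instead keeps the damping term $-qA_jm_jS_j^{p,q}$, and $-A_jm_j|\tilde\theta_j(a)|$ for coordinates. In the macroscopic basin this damping is at least $c|S_j^{p,q}|$, which gives time-uniform bounds $D^+|S_j^{p,q}|\le-c|S_j^{p,q}|+C\tau_n(d)$ (Proposition~\ref{prop:nl-naive-J-strong-floor}).

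\emph{The floor.} The additive error in the macroscopic basin is $O(d^{-1/2})$, not $O(d^{-1})$. It comes from brackets such as $S_j^{3,0}-m_jS_j^{2,1}=O(\tau_3(d)m_j)$, and it is what produces the $d^{-1/2}$ floor in $\rho_d^{\rm nl}$.
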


\paragraph{Roadmap}
After a normalization by $\lambda$ to account for the strength of the leading spike, the natural starting point is to relate the population dynamics of the nonlinear correlation $\mathcal{L}$ with the linearized model $\widetilde{\mathcal{L}}$ from Section \ref{sec:sgd_linear}.
Our goal will be to control the relative error of the two vector fields $\nabla_{\tilde{\theta}}^S (\mathcal{L} - \lambda \tilde{\mathcal{L}})$, by exploiting the incoherence of student directions \emph{throughout} the dynamics. We accomplish this by showing a \emph{propagation-of-incoherence} in the student directions, which finally allows us to show that the non-linear model also escapes the mediocrity of initialization and achieves strong-recovery. The claimed Theorem \ref{thm:mainresult} is thus obtained from Theorems \ref{thm:nl-online-sgd-weak-recovery} and  \ref{thm:nl-online-sgd-strong-recovery}.

\subsection{Preliminaries}

Recall that the linearized student is
$$
  g_{\tilde{\theta}}(X):=\ip{X}{{\tilde{\theta}}_1\ot\cdots\ot{\tilde{\theta}}_K}.
$$
We will compare these two losses
\begin{align*}
  {\widetilde{\mathcal{L}}}({\tilde{\theta}})&:=\E_X\bigl[f_\theta(X)g_{\tilde{\theta}}(X)\bigr],\qquad
  {\mathcal{L}}({\tilde{\theta}}):=\E_X\bigl[f_\theta(X)f_{\tilde{\theta}}(X)\bigr];
\end{align*}
specifically we will focus on the rescaled gradient difference and its spherical projection
\begin{equation}
  {\varphi_j}({\tilde{\theta}})
  :=\grad_{{\tilde{\theta}}_j}\bigl(\lambda {\widetilde{\mathcal{L}}}-{\mathcal{L}}\bigr)({\tilde{\theta}}),
  \qquad
  {\varphi_j}^{S}({\tilde{\theta}})
  :=(I-{\tilde{\theta}}_j{\tilde{\theta}}_j^\top){\varphi_j}({\tilde{\theta}}).
  \label{eq:delta-def}
\end{equation}
The factor $\lambda$ matches the linear objective to the first-chaos
linearization of the nonlinear correlation around zero overlap.

\paragraph{Pointwise overlaps}

Now we consider the full pointwise teacher--student
overlaps
\begin{equation}
  \mu_j:=\theta_j\od{\tilde{\theta}}_j\in\R^d,
  \label{eq:pointwise-overlap}
\end{equation}
and its power-sum contractions.  The scalar overlap is
\begin{equation}
  s_j:=\ip{{\tilde{\theta}}_j}{\theta_j}
  =\sum_{a=1}^d\mu_j(a).
  \label{eq:scalar-overlap}
\end{equation}

For nonnegative integers $p,q$, define the mixed moments
\begin{equation}
  S_j^{p,q}:=
  \sum_{a=1}^d \theta_j(a)^p{\tilde{\theta}}_j(a)^q,
  \label{eq:mixed-moments}
\end{equation}
with the convention that $S_j^{1,1}=s_j$.  Also define the mixed-power vectors
\begin{equation}
  v_j^{p,q}:=\theta_j^{\od p}\od{\tilde{\theta}}_j^{\od q}
  \in\R^d.
  \label{eq:mixed-power-vectors}
\end{equation}
Thus $v_j^{1,0}=\theta_j$, $v_j^{0,1}={\tilde{\theta}}_j$, and
$v_j^{1,1}=\mu_j$.

We use the standard spherical coordinate envelope
\begin{equation}
  \iota_d:=\sqrt{\frac{\log(e d)}{d}}.
  \label{eq:coordinate-envelope-iota}
\end{equation}
This is the order of the maximum coordinate of a random point on
$\sph$ at fixed or high probability.

\paragraph{A finite mixed-moment class}

For a total coordinate degree $n\ge3$, define
\begin{equation}
  \omega_n:=\left\lfloor \frac{n-1}{2}\right\rfloor-\frac12,
  \qquad
  \tau_n(d):=d^{-\omega_n}.
  \label{eq:relative-scale-tau}
\end{equation}
The definition is chosen so that, at a favorable random
initialization with $m_j(0)\asymp d^{-1/2}$, signed mixed moments of total
coordinate degree $n$ are bounded by $\tau_n(d)m_j(0)$ with constant
probability.

\begin{definition}[Dynamic mixed-incoherence class]
\label{def:dynamic-mixed-incoherence}
Fix an integer $P\ge3$ and a constant $B<\infty$.  We say that
$(\theta,{\tilde{\theta}})$ belongs to $\mathcal J_{P,B}$ if, for all $j\in[K]$,
\begin{equation}
  \norm{\theta_j}_\infty\le B\iota_d,
  \qquad
  \norm{{\tilde{\theta}}_j}_\infty\le B\iota_d,
  \label{eq:J-linfty}
\end{equation}
and, for every $p,q\ge0$ with $3\le p+q\le P$,
\begin{equation}
  |S_j^{p,q}|
  \le B\tau_{p+q}(d)m_j,
  \qquad m_j=\ip{{\tilde{\theta}}_j}{\theta_j}>0.
  \label{eq:J-mixed}
\end{equation}
\end{definition}
The special scalar overlap $S_j^{1,1}=m_j$ is not constrained except for
positivity.  The logarithmic factor in \eqref{eq:J-linfty} is only a
coordinatewise envelope.  The signed mixed-moment bounds
\eqref{eq:J-mixed} are kept at their natural scales and are the quantities
that enter the product-scale comparison and the overlap dynamics.

For uniform iid initialization these estimates hold with constant
probability at the natural starting scale $m_r\asymp d^{-1/2}$, and the
propagation-of-incoherence argument is precisely the assertion that such bounds
remain valid until the relevant recovery time, as will be shown in Section \ref{sec:propagation-mixed-incoherence}.

\begin{remark}[Examples]
At the first nontrivial order, $\mathcal J_{P,B}$ contains the bounds
\[
  |S_j^{2,1}|+|S_j^{1,2}|+|S_j^{3,0}|+|S_j^{0,3}|
  \le B d^{-1/2}m_j.
\]
If $m_j\asymp d^{-1/2}$, this says that all four moments are $O(d^{-1})$,
which is the natural scale under independent spherical initialization.
\end{remark}

\paragraph{Exact gradient identities}

Recall that, by Stein's identity,
\begin{equation}
  {\widetilde{\mathcal{L}}}({\tilde{\theta}})
  =\E_X\bigl[f_\theta(X)\ip{X}{{\tilde{\theta}}_1\ot\cdots\ot{\tilde{\theta}}_K}\bigr]
  =\ip{\mathcal G_\theta}{{\tilde{\theta}}_1\ot\cdots\ot{\tilde{\theta}}_K}~,
  \label{eq:linear-stein}
\end{equation}
thus $  \grad_{{\tilde{\theta}}_j}{\widetilde{\mathcal{L}}}({\tilde{\theta}})
  =
  \mathcal G_\theta\times_{\ell\ne j}{\tilde{\theta}}_\ell.$
 On the other hand, the nonlinear objective satisfies
\begin{equation}
  \grad_{{\tilde{\theta}}_j}{\mathcal{L}}({\tilde{\theta}})
  =
\E_X\left[f_\theta(X)\grad_{{\tilde{\theta}}_j}f_{\tilde{\theta}}(X)\right].
  \label{eq:grad-nonlinear}
\end{equation}
Consequently,
\begin{equation}
  {\varphi_j}({\tilde{\theta}})
  = {\lambda}\mathcal G_\theta\times_{\ell\ne j}{\tilde{\theta}}_\ell - \E_X\left[f_\theta(X)\grad_{{\tilde{\theta}}_j}f_{\tilde{\theta}}(X)\right].
  \label{eq:delta-exact}
\end{equation}

\paragraph{Gaussian proxy for the nonlinear correlation}
We consider the Gaussian correlation kernel
\begin{equation}
  \Psi(q):=\E[\phi(G_1)\phi(G_2)],
  \qquad
  \Corr(G_1,G_2)=q.
  \label{eq:Psi}
\end{equation}
Its Hermite expansion is
\begin{equation}
  \Psi(q)=\sum_{\ell\ge1}a_\ell^2 q^\ell,
  \qquad
  a_1=\kappa.
  \label{eq:Psi-Hermite}
\end{equation}
For general scalar overlaps $s_1,\ldots,s_K$, we define
\begin{equation}
  q_0:=1,
  \qquad
  q_r:=\Psi(s_r q_{r-1}),
  \qquad 1\le r\le K.
  \label{eq:q-recursion}
\end{equation}
The Gaussian proxy for the nonlinear correlation is
\begin{equation}
  {\mathcal{L}}_G(s_1,\ldots,s_K):=q_K.
  \label{eq:Gaussian-proxy}
\end{equation}
Its Euclidean gradient in mode $j$ is
\begin{equation}
  \grad_{{\tilde{\theta}}_j}{\mathcal{L}}_G
  =A_j(s)\theta_j,
  \label{eq:proxy-gradient}
\end{equation}
where
\begin{equation}
  A_j(s)
  :=
  q_{j-1}\Psi'(s_jq_{j-1})
  \prod_{r=j+1}^{K}
  s_r\Psi'(s_rq_{r-1}).
  \label{eq:Aj-general}
\end{equation}
For small scalar overlaps,
$$
  A_j(s)={\lambda^2}\prod_{\ell\ne j}s_\ell+
  \text{higher powers of }s.
$$
For general $s$, however, the coefficient $A_j(s)$ is not equal to ${\lambda^2}\prod_{\ell\ne j}s_\ell$.

\subsection{Edgeworth expansions with general pointwise overlaps}

The following lemma provides the necessary estimates for the bootstrap argument, and is obtained by expanding the two teacher--student
trees over the incoherent class.
Throughout this subsection we use the slackened Edgeworth truncation
convention of Remark~\ref{rem:edgeworth-slack-convention}: when a product-scale estimate requires all terms below formal order \(J\), the order-\(J\) explicit terms and the smaller log-bearing analytic remainder are absorbed into the error.

\begin{lemma}[Uniform smooth differentiated weighted Edgeworth expansion]
\label{lem:uniform-differentiated-weighted-edgeworth}
Fix integers $P\ge3$ and $L\ge L(P)$.  Let $(V_a,W_a)_{a=1}^d$ be iid
centered subgaussian pairs with all moments up to order $L$ bounded by a
constant $B_0$.  Let $u,v\in \mathcal{S}^{d-1}$ be weights satisfying the signed
mixed-incoherence bounds in $\mathcal J_{P,B}$ together with the planted
power-sum event.
Put
$$
  S_u:=\sum_a u_aV_a,
  \qquad
  S_v:=\sum_a v_aW_a.
$$
For every $h\in C^L(\R^2)$ whose derivatives up to order $L$ have polynomial
growth, there is a finite expansion
\begin{equation}
  \E h(S_u,S_v)
  =\E h(G_u,G_v)
  +\sum_{1\le q\le P-1}\sum_{\rho\in\mathcal A_q}
     c_{h,\rho}\,\mathcal C_\rho(u,v)
  +\mathcal R_P(h;u,v),
  \label{eq:uniform-weighted-edgeworth}
\end{equation}
where $(G_u,G_v)$ is the centered Gaussian vector with the same covariance as
$(S_u,S_v)$, each $\mathcal C_\rho$ is a monomial in signed contractions
$C_{r,s}$ of total formal order $q$, and the number of terms depends only on
$P$.  Moreover the remainder is bounded uniformly over all admissible $(u,v)$:
\begin{equation}
  |\mathcal R_{P}(h;u,v)|
  \le C_{h,P,B}d^{-P/2}.
  \label{eq:generic-mixed-edgeworth-rem}
\end{equation}

The same expansion holds after differentiating with respect to any coordinate
$v_b$; each differentiated non-Gaussian term is again a finite sum of products
of signed contractions and mixed-power vectors, and every such term has
strictly positive formal order.
\end{lemma}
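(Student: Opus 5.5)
We prove the lemma by running the bivariate analogue of the Lindeberg/characteristic-function argument of Lemma~\ref{lem:appendix-weighted-edgeworth}, keeping track of \emph{mixed} signed contractions instead of single power sums. The first step is the joint cumulant generating function. For $t=(t_1,t_2)$, independence gives
\[
  \log\E e^{i(t_1S_u+t_2S_v)}=\sum_a\log\psi(t_1u_a,t_2v_a),
\]
where $\psi$ is the characteristic function of $(V,W)$. Expanding $\log\psi$ to order $L-1$ writes this as
\[
  -\tfrac12 t^\top\Sigma t+\sum_{3\le r+s\le L-1}\frac{(it_1)^r(it_2)^s}{r!\,s!}\,\kappa_{r,s}\,C_{r,s}(u,v)+\mathrm{Err},
  \qquad C_{r,s}(u,v):=\sum_a u_a^r v_a^s,
\]
with $\kappa_{r,s}$ the joint cumulants of $(V,W)$. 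Here $\Sigma$ is exactly the covariance of $(S_u,S_v)$, which is why the reference Gaussian is $(G_u,G_v)$ and no $\Delta_1,\Delta_2$ corrections appear.

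For the error term we use $|\mathrm{Err}|\lesssim |t|^{L}\sum_a(|u_a|+|v_a|)^{L}\le C|t|^L d^{1-L/2}$, which follows from the $\ell^\infty$ bound \eqref{eq:J-linfty} up to log factors. We assign $C_{r,s}$ the formal order $2\omega_{r+s}+1$ when $(u,v)$ is a mixed pair, and $2\lfloor (r+s-1)/2\rfloor$ for pure teacher or student sums. This matches \eqref{eq:J-mixed} together with $m_j\lesssim 1$, and the planted event \eqref{eq:app-Pm}. All $C_{r,s}$ with $r+s\ge3$ then have order $\ge1$, and bounding $\sum_a|u_a|^{r}|v_a|^{s}\le \|u\|_\infty^{r+s-2}$ gives an unsigned fallback of $d^{-(r+s-2)/2}$ up to logs.

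Next we expand the exponential of the non-Gaussian cumulant terms. We keep the monomials $\mathcal C_\rho$ of formal order $\le P-1$, which is a finite set depending only on $P$ since each factor has order $\ge1$. Fourier inversion then converts $(it_1)^r(it_2)^s$ into $\E\partial_1^r\partial_2^s h(G_u,G_v)$, which gives the coefficients $c_{h,\rho}$. We first do this for $C_b^L$ test functions with a smooth cutoff. The extension to polynomial growth is the same truncation argument as in Lemma~\ref{lem:weighted-edgeworth-poly-growth}, using $h_R=h\,\eta(\cdot/R)$ with $R\asymp\sqrt{\log d}$ and the uniform subgaussian tails of $(S_u,S_v)$. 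Dropped monomials of order $\ge P$ and the analytic remainder (order $\ge L-2$, hence $\ge P$ once $L\ge L(P)$) are absorbed via the slackened convention of Remark~\ref{rem:edgeworth-slack-convention}. This yields \eqref{eq:generic-mixed-edgeworth-rem} uniformly, because every bound used depends only on the class constants $B$, $P$ and $B_0$.

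For the differentiated statement we differentiate the finite expansion termwise in $v_b$. There are three sources of $v_b$-dependence:
\begin{itemize}
\item $\partial_{v_b}C_{r,s}=s\,u_b^r v_b^{s-1}$, which is the $b$-th coordinate of the mixed-power vector $u^{\odot r}\odot v^{\odot(s-1)}$;
\item the covariance entries of $(G_u,G_v)$, which depend on $v$ only through $\sum_a v_a^2$ and $\sum_a u_av_a$. Differentiating these produces the Gaussian leading term plus mixed-power vectors $v$ and $u$ times Gaussian expectations, so all non-Gaussian terms keep a factor of positive formal order;
\item the remainder.
\end{itemize}
To control the differentiated remainder we write $\partial_{v_b}\E h(S_u,S_v)=\E[W_b\,\partial_2 h(S_u,S_v)]$ and isolate coordinate $b$ by a leave-one-out decomposition. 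We then apply the undifferentiated expansion to the sum with coordinate $b$ deleted, where the deleted power sums differ by $u_b^rv_b^s$, exactly as in \eqref{eq:app-deleted-power-sum}, and Taylor-expand in the single term $u_bV_b,\,v_bW_b$.

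The main obstacle is this last step: making the differentiated remainder uniform, and in particular checking that the local Taylor terms in $(u_b,v_b)$ reorganize into mixed-power vector coordinates with the claimed positive formal order rather than leaving an $O(\iota_d)$ error. I would first try to handle it by mirroring the proof of Lemma~\ref{lem:appendix-coeff-expansion}, counting local coordinate powers as order.
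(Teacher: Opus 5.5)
Your route is genuinely different from the paper's, and it is viable. There is one step that is wrong as written, and the step you flag as open does close.

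\textbf{Comparison of routes.} The paper proves this lemma by a Lindeberg swap:
\begin{itemize}
\item it replaces $(V_a,W_a)$ one coordinate at a time by Gaussian pairs with the same covariance;
\item it Taylor-expands $h(T_a+(u_at_1,v_at_2))$ to order $P+1$ and uses the cancellation of degrees one and two;
\item it collects $\frac{u_a^rv_a^s}{r!s!}\bigl(\E V^rW^s-\E\Gamma^r\Xi^s\bigr)\E\partial_1^r\partial_2^sh(T_a)$ and re-expands these recursively;
\item it gets the differentiated statement by differentiating that finite identity in $v_b$.
\end{itemize}
You instead run the bivariate cumulant/Fourier argument with cutoff and truncation, i.e.\ the two-dimensional version of Lemmas~\ref{lem:appendix-weighted-edgeworth}--\ref{lem:weighted-edgeworth-poly-growth}.

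Lindeberg needs no Fourier analysis and does not care whether the covariance is invertible. Its cost is that turning moment differences at the hybrid sums $T_a$ into monomials in the global $C_{r,s}$ is bookkeeping the paper only asserts. Your route produces the polynomial in joint cumulants and global contractions directly, with the correct reference Gaussian.

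Your cost is the frequency split, where you should not rely on Gaussian damping. Since $\ip{u}{v}=m_j$ may approach $1$ and the input pairs become nearly collinear, $\Sigma$ can be nearly singular. Expand $\log\psi$ only to a fixed degree on $|t|\le T$, with $T$ chosen so the cumulant error stays $O(1)$. There, writing $\psi_S$ for the characteristic function of $(S_u,S_v)$, $|\psi_S|\le1$ gives $e^{-t^\top\Sigma t/2}|e^{\vartheta z(t)}|\le e^{O(1)}$ for $\vartheta\in[0,1]$, which controls the Taylor remainder of $e^{z}$. For $|t|>T$ use $|\widehat{h_R}(t)|\lesssim R^{O(1)}|t|^{-L}$; this is where a large $L(P)$ enters.

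\textbf{The order assignment is off by one and breaks \eqref{eq:generic-mixed-edgeworth-rem}.} With $\omega_n=\lfloor(n-1)/2\rfloor-\tfrac12$ as in \eqref{eq:relative-scale-tau}, your mixed order $2\omega_{r+s}+1$ equals $2\lfloor(r+s-1)/2\rfloor$, i.e.\ it claims $|C_{r,s}|\lesssim d^{-\lfloor(r+s-1)/2\rfloor}$. But \eqref{eq:J-mixed} with $m_j\le1$ only gives $|C_{r,s}|\le B\tau_{r+s}(d)m_j\le Bd^{-\omega_{r+s}}$, which is one half-power of $d$ weaker. The extra $d^{-1/2}$ exists only when $m_j\lesssim d^{-1/2}$. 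The pure student sums $C_{0,s}=S_j^{0,s}$ have the same problem: $\mathcal J_{P,B}$ constrains them relatively, so they do not inherit the planted-event order.

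The lemma must hold uniformly over $\mathcal J_{P,B}$, including fixed positive overlaps, since it feeds Proposition~\ref{prop:general-edgeworth-inputs} and Lemma~\ref{lem:nl-naive-J-vector-field}. Take $P=3$. Your truncation assigns order $4$ to the term $\tfrac18\kappa_{2,1}^2C_{2,1}^2\,\E\partial_1^4\partial_2^2h(G_u,G_v)$ from $z^2/2$ and discards it. Yet the class only bounds it by $O(d^{-1}m_j^2)$, which is not $O(d^{-3/2})$ when $m_j\asymp1$. The fix is to give every contraction involving $v$ the order $2\omega_{r+s}=2\lfloor(r+s-1)/2\rfloor-1\ge1$. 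The retained set stays finite and nothing else changes.

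\textbf{The differentiated remainder closes.} The $O(\iota_d)$ per-coordinate terms you worry about are exactly the Gaussian gradient, not an error. Rather than deleting coordinate $b$, expand $\partial_{v_b}\E h(S_u,S_v)=\E[W_b\,\partial_2h(S_u,S_v)]$ by the cumulant (generalized Stein) expansion in the single independent summand $(u_bV_b,v_bW_b)$:
\[
\partial_{v_b}\E h(S_u,S_v)=\sum_{\substack{2\le r+s\le K\\ s\ge1}}\frac{\kappa_{r,s}}{r!\,(s-1)!}\,u_b^rv_b^{s-1}\,\E\bigl[\partial_1^r\partial_2^sh(S_u,S_v)\bigr]+\mathrm{Rem}_b,\qquad |\mathrm{Rem}_b|\lesssim(|u_b|+|v_b|)^K .
\]
This is the physical-space form of $\partial_{v_b}\psi_S=\psi_S\,\partial_{v_b}\log\psi(t_1u_b,t_2v_b)$, so it is consistent with differentiating your expansion termwise.

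Now expand each $\E[\partial_1^r\partial_2^sh(S_u,S_v)]$ with the undifferentiated result:
\begin{itemize}
\item The Gaussian parts of the two $r+s=2$ terms give $\kappa_{1,1}u_b\E\partial_{12}h(G_u,G_v)+\kappa_{0,2}v_b\E\partial_{22}h(G_u,G_v)=\partial_{v_b}\E h(G_u,G_v)$.
\item Their non-Gaussian parts are $u$ or $v$ times positive-order monomials.
\item Every term with $r+s\ge3$ carries $u^{\odot r}\odot v^{\odot(s-1)}$ of degree at least two.
\item Each undifferentiated remainder is multiplied by at least one factor $u_b$ or $v_b$. Since $\norm{u}_2=\norm{v}_2=1$, its $\ell_2$ norm over $b$ is $\le Cd^{-P/2}$.
\item $\norm{(\mathrm{Rem}_b)_b}_2\lesssim\iota_d^{K-1}$, which is negligible for $K$ large.
\end{itemize}
Your leave-one-out version also works, but it forces you to re-expand the deleted contractions $C_{r,s}-u_b^rv_b^s$ and the deleted covariance. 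The cumulant form keeps the expectation over the full sum.
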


\begin{proof}
We give the standard Lindeberg proof to make the uniformity explicit.  Let
$(\Gamma_a,\Xi_a)$ be iid centered Gaussian pairs with the same covariance
matrix as $(V_a,W_a)$.  Replace the summands one coordinate at a time.  At the
$a$-th replacement, write the contribution of the other coordinates as $T_a$ and
consider
$$
  R_a(t_1,t_2):=h(T_a+(u_at_1,v_at_2)).
$$
Taylor expand $R_a$ at $(0,0)$ to order $P+1$.  The terms of total degree one
and two cancel because the Gaussian pair matches the first two joint moments.
For $r+s\ge3$ the Taylor coefficients contribute
$$
  \frac{u_a^rv_a^s}{r!s!}
  \bigl(\E[V_a^rW_a^s]-\E[\Gamma_a^r\Xi_a^s]\bigr)
  \E[\partial_1^r\partial_2^s h(T_a)] .
$$
Summing over $a$ produces the signed contractions $C_{r,s}(u,v)$.  Applying the
same replacement expansion recursively to the derivative expectations gives the
finite Edgeworth polynomial in \eqref{eq:uniform-weighted-edgeworth}.  The
Taylor remainders are uniformly bounded because the summands are subgaussian,
the derivatives of $h$ have polynomial growth, and the number of expansion
orders is fixed.  The mixed-incoherent bounds on the contractions then give a
uniform remainder estimate.  The coordinate envelope \(\iota_d\) is used only to
control the finitely many uncontracted coordinate-power vectors and the Taylor
remainder uniformly; the contracted estimates below are governed by the signed
moment bounds rather than by \(\iota_d\).

For the differentiated expansion, differentiate the finite Lindeberg identity
with respect to $v_b$.  Dominated convergence is justified by the same
subgaussian moment and polynomial derivative bounds.  The derivative either
hits an explicit factor $v_a^s$, producing a mixed-power vector in coordinate
$b$, or it hits the covariance of the Gaussian comparison, producing exactly the
derivative of the Gaussian expectation.  All non-Gaussian differentiated terms
still contain either a positive-order signed contraction or a positive-order
mixed-power vector.  This proves the differentiated statement.
\end{proof}

We now state the finite-dimensional expansion needed to compare the two vector
fields.  The expansion is uniform for individually incoherent $\theta$ and
${\tilde{\theta}}$, with arbitrary scalar overlaps $s_j$.

\begin{proposition}[General-overlap Edgeworth inputs]
\label{prop:general-edgeworth-inputs}
Fix $M\ge1$.  Assume the smoothness condition above.  On
the incoherence event of Definition~\ref{def:dynamic-mixed-incoherence},
the following expansions hold.

The nonlinear correlation vector field has
\begin{equation}
  \grad_{{\tilde{\theta}}_j}{\mathcal{L}}({\tilde{\theta}})
  =
  A_j(s)\theta_j
  +
  \widetilde V_j^{[1]}({\tilde{\theta}},\theta)
  +
  \sum_{q=2}^{M}\widetilde V_j^{[q]}({\tilde{\theta}},\theta)
  +
  \widetilde R_{j,M+1}({\tilde{\theta}},\theta),
  \label{eq:nonlinear-general-expansion}
\end{equation}
where $A_j(s)$ is the Gaussian correlation term defined in \eqref{eq:Aj-general}.  For every $q\ge1$,
\begin{equation}
  \norm{\widetilde V_j^{[q]}({\tilde{\theta}},\theta)}_2
  \le
  C_{\phi,K,q,B}d^{-q/2},
  \label{eq:nonlinear-q-bound-general}
\end{equation}
and
\begin{equation}
  \norm{\widetilde R_{j,M+1}({\tilde{\theta}},\theta)}_2
  \le
  C_{\phi,K,M,B}d^{-(M+1)/2}.
  \label{eq:nonlinear-remainder-general}
\end{equation}
The first correction $\widetilde V_j^{[1]}$ is a finite linear combination of
mixed atoms of formal order one:
\begin{equation}
  \widetilde V_j^{[1]}({\tilde{\theta}},\theta)
  =
  \sum_{a\in\mathcal A_{j,1}}
  c_{j,a}(s)
  \mathfrak m_{j,a}(\theta,{\tilde{\theta}})
  v_{j,a}(\theta,{\tilde{\theta}}).
  \label{eq:V-first-structure}
\end{equation}
Here $|\mathcal A_{j,1}|=O_K(1)$, the coefficient functions $c_{j,a}$ are
smooth functions of $s=(s_1,\ldots,s_K)$ depending only on $\phi$ and $K$, and
one of the following alternatives holds for each atom:
\begin{enumerate}[label=(\alph*)]
\item $\mathfrak m_{j,a}$ is bounded and
      $v_{j,a}\in\{\theta_j^{\od2},\theta_j\od{\tilde{\theta}}_j,
      {\tilde{\theta}}_j^{\od2}\}$, hence $\norm{v_{j,a}}_2=O(d^{-1/2})$;
\item $v_{j,a}\in\{\theta_j,{\tilde{\theta}}_j\}$ and
      $\mathfrak m_{j,a}$ is a mixed signed moment of order one, for example
      $S_r^{2,1}$, $S_r^{1,2}$, $S_r^{3,0}$, $S_r^{0,3}$, or a product of
      order-one moments across layers.
\end{enumerate}
Consequently $\norm{\widetilde V_j^{[1]}}_2=O(d^{-1/2})$.
\end{proposition}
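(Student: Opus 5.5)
The plan is to mimic the proof of Theorem~\ref{thm:edgeworth-hierarchy-body}, now for two coupled computation trees (teacher with weights $\theta_r$, student with weights $\tilde\theta_r$) sharing the same Gaussian input $X$.

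\textbf{Step 1: pair recursion.} Write $\grad_{\tilde\theta_j}f_{\tilde\theta}(X)$ by the chain rule. Its $a$-th coordinate is
\[
\sum_{i_{j+1},\ldots,i_K} Z^{(j-1)}_{\tilde\theta;a,i_{j+1},\ldots}\,\phi'(\tilde Y^{(j)})\prod_{r>j}\tilde\theta_r(i_r)\phi'(\tilde Y^{(r)}).
\]
The correlation gradient \eqref{eq:grad-nonlinear} is therefore an expectation of products of teacher and student features along one spine. At each layer $r$, the corresponding teacher and student preactivations form a bivariate weighted sum $(S_{\theta_r},S_{\tilde\theta_r})$ of iid pairs $(V_a,W_a)$, namely the teacher and student features of the previous layer. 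This is exactly the setting of Lemma~\ref{lem:uniform-differentiated-weighted-edgeworth}, with covariance $s_r\cdot\Cov(V,W)$.

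\textbf{Step 2: Gaussian leading term.} Induct on $r$ as in Proposition~\ref{prop:finite-d-stability-recursion}. At zeroth Edgeworth order, the pair covariance after layer $r$ is $\Psi(s_r q_{r-1})=q_r$. This reproduces $\mathcal L_G=q_K$. Differentiating the Gaussian term in $\tilde\theta_j$ only through $s_j=\langle\tilde\theta_j,\theta_j\rangle$ gives, by the chain rule through \eqref{eq:q-recursion}, exactly $A_j(s)\theta_j$, with $A_j$ as in \eqref{eq:Aj-general}.

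\textbf{Step 3: corrections.} The remaining terms come from the differentiated Lindeberg expansion of Lemma~\ref{lem:uniform-differentiated-weighted-edgeworth}. These are the signed contractions $C_{p,q}=S_r^{p,q}$ (times cumulants, which are themselves expanded inductively as in Lemma~\ref{lem:appendix-coeff-expansion}), the mixed-power vectors produced when the $v_b$-derivative hits $v_b^{s}$, and the spine-deletion corrections of Lemma~\ref{lem:appendix-path-taylor}. Each term is graded by the formal order, in which $S^{p,q}$ with $p+q=n$ counts as in \eqref{eq:relative-scale-tau}. On $\mathcal J_{P,B}$ we have $|S^{p,q}|\le B\tau_n m\le Bd^{-\omega_n}$, and Lemma~\ref{lem:appendix-spherical-event} together with \eqref{eq:J-linfty} give $\|\theta^{\odot p}\odot\tilde\theta^{\odot q}\|_2\lesssim(\log d)^{C}d^{-(p+q-1)/2}$. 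Grouping terms by order yields $\widetilde V^{[q]}_j$ satisfying \eqref{eq:nonlinear-q-bound-general}. Truncating at $M$ with the slackened convention of Remark~\ref{rem:edgeworth-slack-convention} gives \eqref{eq:nonlinear-remainder-general}, where any log factors are absorbed by the one-order slack.

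\textbf{Step 4: the order-one level.} There are two sources of formal order one. Either the derivative lands on a quadratic local factor, which produces $\theta_j^{\odot2}$, $\mu_j$ or $\tilde\theta_j^{\odot2}$; this is case (a). Or a cubic signed contraction $S^{p,q}$ with $p+q=3$ multiplies the base vectors $\theta_j$ or $\tilde\theta_j$; this is case (b). The coefficients are Gaussian expectations at covariance $q_r(s)$ and hence smooth in $s$.

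\textbf{Main obstacle.} The hardest part is uniformity of the remainders when $s$ is arbitrary rather than near zero. Specifically, the bivariate Gaussian comparison can be near-degenerate (correlation close to $1$), and the lower-layer feature pairs are non-Gaussian with $s$-dependent cumulants that must be propagated through $K$ layers. I would handle this with the Lindeberg replacement, which never needs a non-degenerate density: only moment bounds and polynomial-growth derivatives enter. I would then run the induction on the finite family of test functions $\phi^{(a)}(x)\phi^{(b)}(y)$, checking at each step that the number of terms depends only on $(K,M)$.
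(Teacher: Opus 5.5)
Your proposal follows the paper's proof: recursively apply the uniform (differentiated) bivariate Lindeberg--Edgeworth lemma to the coupled teacher--student layer sums, and identify the Gaussian term as $q_K(s)$, whose $s_j$-derivative is $A_j(s)\theta_j$. The corrections then come from differentiating the mixed moments via $\nabla_{\tilde\theta_j}S_j^{p,q}=q\,\theta_j^{\odot p}\odot\tilde\theta_j^{\odot(q-1)}$ or the smooth $s$-dependent coefficients, and the order-one atoms sort into cases (a) and (b).

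The chain-rule spine and deletion corrections of your Step~1 are unnecessary, since the paper expands the scalar $\mathcal L(\tilde\theta)$ first and differentiates afterwards. As a minor point, your $\ell_\infty$-envelope bound puts $(\log d)^{C}$ factors on explicit atoms built from $\tilde\theta_j$-power vectors, and the one-order slack does not remove these; it only absorbs the analytic remainder. So your route yields the level and remainder bounds up to polylogarithmic factors, which is harmless downstream.
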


\begin{proof}
We prove this result from a common finite-order Edgeworth expansion.
Throughout the
proof $K,M$ are fixed, and constants may depend on $\phi,K,M$ and on the
incoherence constant $B$, but never on $d$.

We  expand
$$
  {\mathcal{L}}({\tilde{\theta}})=\E[f_\theta(X)f_{\tilde{\theta}}(X)].
$$
At layer $r$ the coupled teacher--student preactivation pair has the form
\begin{equation}
  \left(
  \sum_a \theta_r(a)Z^{(r-1)}_{\theta,a},
  \sum_a {\tilde{\theta}}_r(a)Z^{(r-1)}_{{\tilde{\theta}},a}
  \right),
  \label{eq:proof-coupled-layer}
\end{equation}
where the pairs
$(Z^{(r-1)}_{\theta,a},Z^{(r-1)}_{{\tilde{\theta}},a})$ are iid over $a$, conditional on
the previous recursive construction.  Since $\phi$ is globally Lipschitz and
$K$ is fixed, all variables in this recursion have subgaussian norms and moments
up to order $L$ bounded by a constant depending only on $\phi,K,L$.

Applying the weighted Edgeworth expansion from Lemma \ref{lem:uniform-differentiated-weighted-edgeworth}
recursively to \eqref{eq:proof-coupled-layer} yields
\begin{equation}
  {\mathcal{L}}({\tilde{\theta}})
  =
  q_K(s)
  +
  \sum_{1\le q\le M}
  \sum_{\tau\in\mathcal T_q^{\rm nl}}
  c_\tau(s)\,
  \mathsf M_\tau(\theta,{\tilde{\theta}})
  +
  \mathcal R^{\rm nl}_{M+1}({\tilde{\theta}},\theta).
  \label{eq:proof-nl-scalar-expansion}
\end{equation}
Here $q_0=1$ and $q_r=\Psi(s_rq_{r-1})$ are exactly the Gaussian proxy
recursions; $c_\tau(s)$ are smooth functions of the scalar overlaps and of the
intermediate Gaussian covariances; and $\mathsf M_\tau$ is a monomial in signed
teacher, student, and mixed moments
$$
  S_r^{p,q}=\sum_a\theta_r(a)^p{\tilde{\theta}}_r(a)^q,
  \qquad p+q\ge3.
$$
Every non-Gaussian term in the second sum has positive formal order.  The
remainder is bounded by $C d^{-(M+1)/2}$ under the signed moment bounds.

The smoothness assumptions allow differentiation of
\eqref{eq:proof-nl-scalar-expansion} under the expectation; equivalently, one
may apply the differentiated form of the weighted Edgeworth expansion in
Lemma~\ref{lem:uniform-differentiated-weighted-edgeworth}.  Differentiating the
Gaussian proxy gives
$$
  \nabla_{{\tilde{\theta}}_j}q_K(s)
  =
  \frac{\partial q_K}{\partial s_j}\,\theta_j
  =
  A_j(s)\theta_j,
$$
with $A_j(s)$ as in \eqref{eq:Aj-general}.  Differentiating a mixed moment in
mode $j$ gives the exact identity
$$
  \nabla_{{\tilde{\theta}}_j}S_j^{p,q}
  =q\,\theta_j^{\od p}\od{\tilde{\theta}}_j^{\od(q-1)},
  \qquad q\ge1.
$$
Moments in modes $r\ne j$ remain scalar coefficients.  Therefore
\begin{equation}
  \nabla_{{\tilde{\theta}}_j}{\mathcal{L}}({\tilde{\theta}})
  =A_j(s)\theta_j
  +
  \sum_{1\le q\le M}\widetilde V_j^{[q]}({\tilde{\theta}},\theta)
  +
  \widetilde R_{j,M+1}({\tilde{\theta}},\theta),
  \label{eq:proof-nl-vector-expansion}
\end{equation}
where each $\widetilde V_j^{[q]}$ is a finite sum of mixed-power vectors
multiplied by signed mixed moment monomials of total formal order $q$.  This is
\eqref{eq:nonlinear-general-expansion}.

The norm bounds follow by the same order counting.  A vector
$\theta_j^{\od p}\od{\tilde{\theta}}_j^{\od q}$ of positive formal order has norm
$O(d^{-1/2})$ when its signed second moment is controlled at spherical scale;
Scalar mixed moments of positive formal order are controlled by
Definition~\ref{def:dynamic-mixed-incoherence} .
Thus a level-$q$ term has norm
at most $C d^{-q/2}$ on the signed moment event, and the remainder has norm
$C d^{-(M+1)/2}$.  This proves \eqref{eq:nonlinear-q-bound-general} and
\eqref{eq:nonlinear-remainder-general}.

At first order there are only two possibilities.  Either the positive formal
order is carried by the differentiated-mode vector, yielding one of
$
  \theta_j^{\od2},\quad \theta_j\od{\tilde{\theta}}_j,
  \quad {\tilde{\theta}}_j^{\od2},
$
or the differentiated-mode vector is the leading direction
$\theta_j$ or ${\tilde{\theta}}_j$ and the positive formal order is carried by a scalar
mixed moment in another mode, such as $S_r^{2,1}$, $S_r^{1,2}$,
$S_r^{3,0}$, $S_r^{0,3}$, or a product of first-order moments.  This is exactly
the structural description in \eqref{eq:V-first-structure}.  The proof of
Proposition~\ref{prop:general-edgeworth-inputs} is complete.
\end{proof}

\begin{remark}
Contrary to Theorem \ref{thm:edgeworth-hierarchy-body}, the nonlinear first correction
 is not written with universal scalar constants
because, at general overlap, the coefficients are functions of the layerwise
Gaussian covariances $q_r$ and the overlaps $s_r$.  The important structural
fact is that every non-Gaussian finite-$d$ correction has positive formal order
in the pointwise incoherence scale.
\end{remark}

\subsection{Gradient Comparison in the favorable basin}
\label{sec:product-scale-control}

We now establish the comparison estimate that is most useful for the  dynamics.  In this section we write
$$
  m_j:=s_j=\ip{{\tilde{\theta}}_j}{\theta_j},
  \qquad
  M_{-i}:=\prod_{j\ne i}m_j.
$$
If we recall the dynamics for the linear correlation objective, the key property that enables escaping the `mediocrity' of initialization is an inequality of the form $\dot{m}_j \gtrsim M_{-j}$ for each $j\in [K]$. We will now transfer this inequality to the non-linear correlation setting, by establishing that the gradient corrections ${\varphi_j}$ are also of order $M_{-j}$ in the favorable recovery basin, where the overlaps are positive and the signed mixed moments propagated by the dynamics are small relative to the corresponding overlap.

The Gaussian mismatch also admits an exact product factorization.  Since
$\Psi(0)=0$, define
\begin{equation}
  H(q):=
  \begin{cases}
    \Psi(q)/q, & q\ne0,\\
    \kappa^2, & q=0.
  \end{cases}
  \label{eq:H-def}
\end{equation}
Then $H$ is smooth near $[-1,1]$ under the smoothness assumptions on $\phi$.
Using the recursion $q_r=\Psi(m_rq_{r-1})$, we may write
\begin{equation}
  q_r
  =
  \left(\prod_{\ell=1}^r m_\ell\right)
  \left(\prod_{\ell=1}^r H(m_\ell q_{\ell-1})\right).
  \label{eq:q-product-factorization}
\end{equation}
Consequently,
\begin{equation}
  A_i(m)
  =
  M_{-i}(m) B_i(m),
  \label{eq:Ai-product-factorization}
\end{equation}
where
\begin{equation}
  B_i(m)
  :=
  \left(\prod_{\ell<i} H(m_\ell q_{\ell-1})\right)
  \Psi'(m_iq_{i-1})
  \left(\prod_{r=i+1}^{K}\Psi'(m_rq_{r-1})\right).
  \label{eq:Bi-def}
\end{equation}
Thus the Gaussian mismatch satisfies
\begin{equation}
  {\varphi_i}^G(m)
  =
  M_{-i}(m)\bigl[{\lambda^2}-B_i(m)\bigr]\theta_i,
  \label{eq:Gaussian-mismatch-product}
\end{equation}
and
\begin{equation}
  {\varphi_i}^{S,G}(m)
  =
  M_{-i}(m)\bigl[{\lambda^2}-B_i(m)\bigr]
  (\theta_i-m_i{\tilde{\theta}}_i).
  \label{eq:spherical-Gaussian-mismatch-product}
\end{equation}
In particular,
\begin{equation}
  \|{\varphi_i}^{S,G}(m)\|_2
  \le
  C_{\phi,K}|M_{-i}(m)|.
  \label{eq:Gaussian-product-bound}
\end{equation}
If $\|m\|_\infty\le m_0$ with $m_0$ small, then a Taylor expansion of $H$ and
$\Psi'$ at zero gives
\begin{equation}
  |{\lambda^2}-B_i(m)|
  \le C_{\phi,K}\|m\|_\infty,
  \label{eq:small-overlap-Bi}
\end{equation}
and hence
\begin{equation}
  \|{\varphi_i}^{S,G}(m)\|_2
  \le
  C_{\phi,K}\|m\|_\infty |M_{-i}(m)|.
  \label{eq:small-overlap-Gaussian-product-bound}
\end{equation}

\begin{proposition}[Gradient comparison of the rescaled gradients]
\label{thm:product-scale-comparison}
Fix $K\ge2$ and expand the two vector fields $\nabla \mathcal{L}$, $\nabla \widetilde{\mathcal{L}}$ through order $M\ge K-1$.
Assume the hypotheses of Proposition~\ref{prop:general-edgeworth-inputs} and
suppose $(\theta,{\tilde{\theta}})$ satisfies relative mixed incoherence to order $M$ in
the sense of Definition~\ref{def:dynamic-mixed-incoherence}.  Assume moreover
that the positive overlaps obey the favorable-basin lower bound
\begin{equation}
  m_*:=\min_{1\le j\le K}m_j
  \ge a d^{-1/2}
  \label{eq:favorable-lower-bound-product}
\end{equation}
for a fixed constant $a>0$.  Then, for each $i\in[K]$,
\begin{equation}
  {\varphi_i}^S({\tilde{\theta}})
  =
  M_{-i}(m)\bigl[{\lambda^2}-B_i(m)\bigr]
  (\theta_i-m_i{\tilde{\theta}}_i)
  +
  \mathcal E_i^S({\tilde{\theta}},\theta),
  \label{eq:product-scale-main-decomposition}
\end{equation}
where
\begin{equation}
  \|\mathcal E_i^S({\tilde{\theta}},\theta)\|_2
  \le
  C_{\phi,K,M,B,a}
  d^{-1/2}|M_{-i}(m)| .
  \label{eq:product-scale-error}
\end{equation}
Consequently,
\begin{equation}
  \|\nabla_{{\tilde{\theta}}_i}^S({\lambda} \widetilde{\mathcal{L}}-{\mathcal{L}})({\tilde{\theta}})\|_2
  \le
  C_{\phi,K,M,B,a}
  \left(|{\lambda^2}-B_i(m)|+d^{-1/2}\right)
  |M_{-i}(m)| .
  \label{eq:product-scale-total-bound}
\end{equation}
If in addition $\|m\|_\infty\le m_0$ is small, then
\begin{equation}
  \|\nabla_{{\tilde{\theta}}_i}^S({\lambda} \widetilde{\mathcal{L}}-{\mathcal{L}})({\tilde{\theta}})\|_2
  \le
  C_{\phi,K,M,B,a}
  \left(\|m\|_\infty+d^{-1/2}\right)
  |M_{-i}(m)| .
  \label{eq:small-overlap-product-scale-total-bound}
\end{equation}
\end{proposition}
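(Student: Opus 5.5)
We compare the two vector fields through their Edgeworth expansions. Proposition~\ref{prop:general-edgeworth-inputs} gives $\nabla_{\tilde\theta_i}\mathcal L = A_i(m)\theta_i + \sum_{q=1}^M \widetilde V_i^{[q]} + \widetilde R_{i,M+1}$. Theorem~\ref{thm:edgeworth-hierarchy-body} gives $\lambda\nabla_{\tilde\theta_i}\widetilde{\mathcal L} = \lambda\sum_{q<K}\mathcal G^{[q]}_\theta\times_{\ell\ne i}\tilde\theta_\ell + \lambda\mathcal R_K\times_{\ell\ne i}\tilde\theta_\ell$.

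The leading terms are handled by exact identities. First, $\lambda\mathcal G^{[0]}_\theta\times_{\ell\ne i}\tilde\theta_\ell=\lambda^2 M_{-i}\theta_i$. Second, the factorization \eqref{eq:Ai-product-factorization} gives $A_i=M_{-i}B_i$. Projecting with $(I-\tilde\theta_i\tilde\theta_i^\top)$ therefore produces exactly the main term of \eqref{eq:product-scale-main-decomposition}. The bounds \eqref{eq:product-scale-total-bound} and \eqref{eq:small-overlap-product-scale-total-bound} then follow from \eqref{eq:product-scale-error} together with \eqref{eq:Gaussian-product-bound} and \eqref{eq:small-overlap-Bi}. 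The whole task is thus to prove \eqref{eq:product-scale-error} for $\mathcal E_i^S$. This error collects the levels $q\ge1$ of both expansions and the two remainders.

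\textbf{Remainders.} Choose $M\ge K-1$. Then $\|\widetilde R_{i,M+1}\|\le Cd^{-K/2}$ and $\|\mathcal R_K\times_{\ell\ne i}\tilde\theta_\ell\|\le\|\mathcal R_K\|_F\le Cd^{-K/2}$. The favorable-basin bound \eqref{eq:favorable-lower-bound-product} gives $|M_{-i}|\ge a^{K-1}d^{-(K-1)/2}$, so $d^{-K/2}\le C_a d^{-1/2}|M_{-i}|$. This is the same absorption used in the proof of Theorem~\ref{thm:general-k-pop-flow}.

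\textbf{Linear side, levels $q\ge1$.} We argue atom by atom exactly as in \eqref{eq:rank-one-derivative}. An atom $a_\tau\bigotimes_r\theta_r^{\odot b_r}$ contracted in modes $\ell\ne i$ gives factors $\langle\tilde\theta_\ell,\theta_\ell^{\odot b_\ell}\rangle=S_\ell^{b_\ell,1}$.
\begin{itemize}
\item If $b_\ell=1$, the factor is $m_\ell$.
\item If $b_\ell\ge2$, the class $\mathcal J_{P,B}$ gives $|S_\ell^{b_\ell,1}|\le B\tau_{b_\ell+1}m_\ell\le Bd^{-1/2}m_\ell$.
\end{itemize}
Hence each contraction is bounded by a multiple of $m_\ell$, and the atom's contribution is bounded by $|a_\tau|\,\|\theta_i^{\odot b_i}\|_2\,|M_{-i}|$ times $d^{-1/2}$ raised to the number of non-planted contracted modes. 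Since the atom has formal order $\ge1$, at least one half-power of $d$ is lost. This loss comes from one of three sources: the coefficient bound \eqref{eq:coefficient-bound}, the power vector in mode $i$ (of norm $\le Cd^{-1/2}$), or a contracted mode. Each atom therefore contributes at most $Cd^{-1/2}|M_{-i}|$.

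\textbf{Nonlinear side, levels $q\ge1$. This is the main obstacle.} The bound \eqref{eq:nonlinear-q-bound-general} gives only $d^{-q/2}$, not $d^{-q/2}|M_{-i}|$. We need to show that every non-Gaussian correction to $\nabla_{\tilde\theta_i}\mathcal L$ also carries the product $\prod_{\ell\ne i}$ of overlap-type factors.

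The plan is to revisit the recursive two-tree expansion \eqref{eq:proof-nl-scalar-expansion}. The key observation is that the teacher and student trees decouple when some layer has zero mixed coupling. Concretely, suppose every mixed contraction $S_\ell^{p,q}$ with $p,q\ge1$ at layer $\ell$ vanishes. Then the two preactivations at that layer are independent; by translation invariance and centering, their pair covariance vanishes up to non-mixed terms; and the correlation factorizes into $\E f_\theta\,\E f_{\tilde\theta}$, which is zero when $\E\phi=0$.

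It follows that every term of $\mathcal L$ is a monomial containing, for each layer $\ell$, at least one mixed factor $S_\ell^{p,q}$ with $p,q\ge1$. Here $m_\ell$ is the case $p=q=1$, and the intermediate covariances $q_r$ are multiples of $m_\ell$ through \eqref{eq:q-product-factorization}. By $\mathcal J_{P,B}$, each such mixed factor is $\le C\tau_{p+q}m_\ell$.

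Differentiating in $\tilde\theta_i$ removes at most one factor in mode $i$ and replaces it by a mixed-power vector $v_i^{p,q-1}$. All modes $\ell\ne i$ retain a factor bounded by $Cm_\ell$, which yields the $|M_{-i}|$ factor. The same formal-order counting as on the linear side then gives $Cd^{-q/2}|M_{-i}|$ for level $q$. Making this decoupling argument rigorous uniformly through the Lindeberg recursion of Lemma~\ref{lem:uniform-differentiated-weighted-edgeworth} is where we expect the main difficulty.

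Summing the finitely many atoms from both sides, together with the remainders, proves \eqref{eq:product-scale-error}.
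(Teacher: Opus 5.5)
Your skeleton matches the paper's proof: the Gaussian mismatch is handled exactly via $A_i=M_{-i}B_i$; every Edgeworth atom is bounded relative to $|M_{-i}|$ by requiring each mode $r\neq i$ to supply a factor $\lesssim d^{-a_r/2}m_r$; and the remainders are absorbed through $|M_{-i}|\ge a^{K-1}d^{-(K-1)/2}$. Your linear-side bookkeeping via $S_\ell^{b,1}$ and $\mathcal J_{P,B}$ is fine.

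The gap is the nonlinear decoupling step, and it is more than a matter of rigor. The factorized correlation $\E f_\theta\,\E f_{\tilde\theta}$ does not vanish. The assumption $\E\phi=0$ only gives $\E\phi(G)=0$ for Gaussian $G$. For $K\ge2$ the top preactivation is non-Gaussian, and generically $\E f_\theta\asymp d^{-1}$; for instance it contains the term $\tfrac{1}{24}\chi_4(Z^{(K-1)})\,\E\phi^{(4)}(G)\sum_a\theta_K(a)^4$, and $\sum_a\theta_K(a)^4\ge d^{-1}$. What your decoupling argument actually yields is
\[
\mathcal L=\E f_\theta\,\E f_{\tilde\theta}+(\text{terms with a mixed factor at every layer}).
\]
The first piece contributes $-\E f_\theta\,\nabla^S_{\tilde\theta_i}\E f_{\tilde\theta}$ to $\mathcal E_i^S$, and this carries no factor of $M_{-i}$. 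Nothing on the linear side can cancel it, since that side only has contractions $S_\ell^{b,1}$ and mode-$i$ vectors $\theta_i^{\odot b}$.

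This piece is too large for the claimed bound. Take $i=K\ge3$. Then $\E f_{\tilde\theta}$ contains the mean-shift term $\kappa\bigl(\sum_a\tilde\theta_K(a)\bigr)\E Z^{(K-1)}_{\tilde\theta}$, with generically $\E Z^{(K-1)}_{\tilde\theta}\asymp d^{-1}$. Its spherical gradient $\kappa\,\E Z^{(K-1)}_{\tilde\theta}\,(I-\tilde\theta_K\tilde\theta_K^\top)(1,\ldots,1)^\top$ has norm of order $d^{-1/2}$. Note that $\mathcal J_{P,B}$ controls power sums relative to $m_K$, but not the norms of their gradients. So generically $\|\mathcal E_K^S\|_2\gtrsim d^{-3/2}$.

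The only available absorption is the basin lower bound. At $m_j\asymp a d^{-1/2}$, the inequality $d^{-3/2}\le C_a d^{-1/2}|M_{-i}|$ holds only for $K\le 3$. So your plan closes for $K\le3$ once this term is added back and absorbed, but not for $K\ge4$. As far as I can tell, \eqref{eq:product-scale-error} itself fails for $K\ge4$ unless $\E f_\theta=0$ (e.g.\ odd $\phi$), or one works with the centered correlation $\E f_\theta f_{\tilde\theta}-\E f_\theta\,\E f_{\tilde\theta}$.

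The paper's proof does not fill this in. It simply asserts, from the ``separated structure,'' that each mode $r\neq i$ carries a scalar with $|\chi_r|\le B d^{-a_r/2}m_r$. That is exactly the claim that breaks on this zeroth-chaos term.
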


\begin{proof}
The Gaussian part is exactly \eqref{eq:spherical-Gaussian-mismatch-product}.
It remains to bound the finite-dimensional Edgeworth levels and the residual
relative to $M_{-i}$.

Consider first a single level-$q$ Edgeworth atom in
${\varphi_i}^{S,[q]}$.  By the separated structure of the Edgeworth expansion, the
atom is a product of contractions in modes $r\ne i$ and one vector in the
differentiated mode $i$, followed by spherical projection.  Let $a_i/2$ be the
formal order of the remaining vector in mode $i$, and let $a_r/2$ be the formal
order of the scalar correction in mode $r\ne i$.  The level condition gives
$$
  a_i+\sum_{r\ne i}a_r=q.
$$
By relative mixed incoherence,
$$
  \|w_i\|_2\le B d^{-a_i/2},
  \qquad
  |\chi_r|\le B d^{-a_r/2}m_r,
  \quad r\ne i.
$$
Therefore the norm of the projected atom is bounded by
$$
  C_{\phi,K,M,B}
  d^{-a_i/2}
  \prod_{r\ne i}\left(d^{-a_r/2}m_r\right)
  =
  C_{\phi,K,M,B}d^{-q/2}|M_{-i}(m)|.
$$
Since the number of atoms through order $M$ is bounded by a constant depending
only on $K$ and $M$, summing over the level-$q$ atoms gives
\begin{equation}
  \|{\varphi_i}^{S,[q]}\|_2
  \le
  C_{\phi,K,M,B}d^{-q/2}|M_{-i}(m)|,
  \qquad 1\le q\le M.
  \label{eq:level-q-product-bound-proof}
\end{equation}
The sum of $q=1,\ldots,M$ is dominated by the $q=1$ contribution, giving
$C d^{-1/2}|M_{-i}|$.

The residual satisfies
$$
  \|\mathcal R_{i,M+1}\|_2
  \le C_{\phi,K,M,B}d^{-(M+1)/2}.
$$
Because $M\ge K-1$ and $m_*\ge a d^{-1/2}$,
$$
  |M_{-i}(m)|
  \ge
  a^{K-1}d^{-(K-1)/2}.
$$
Hence
$$
  d^{-(M+1)/2}
  \le
  a^{-(K-1)}d^{-(M-K+2)/2}|M_{-i}(m)|
  \le
  C_a d^{-1/2}|M_{-i}(m)|,
$$
where the last inequality uses $M\ge K-1$.  This absorbs the residual into
\eqref{eq:product-scale-error}.  Combining the Gaussian product factorization
with the Edgeworth and residual bounds proves
\eqref{eq:product-scale-main-decomposition}--\eqref{eq:product-scale-total-bound}.
The small-overlap refinement follows from \eqref{eq:small-overlap-Bi}.
\end{proof}

\subsection{Propagation of mixed incoherence along the nonlinear flow}
\label{sec:propagation-mixed-incoherence}

The gradient comparison in Proposition ~\ref{thm:product-scale-comparison}
assumes relative mixed incoherence in the positive-overlap basin. We now establish a dynamical criterion showing that this condition is propagated by the nonlinear
correlation flow.  Throughout this section, let
\begin{equation}
  F_j({\tilde{\theta}}):=\nabla_{{\tilde{\theta}}_j}^{S}{\mathcal{L}}({\tilde{\theta}}),
  \qquad j=1,\ldots,K,
  \label{eq:nonlinear-flow-field}
\end{equation}
and consider the population nonlinear-correlation flow
\begin{equation}
  \dot{\tilde{\theta}}_j(t)=F_j({\tilde{\theta}}(t)),
  \qquad j=1,\ldots,K.
  \label{eq:nonlinear-flow-propagation}
\end{equation}
We work on the positive branch where the $m_j$'s are positive.

The propagation argument uses the same Edgeworth expansion as
Proposition~\ref{prop:general-edgeworth-inputs}, but in a differentiated form.

\begin{proposition}[Uniform differentiated nonlinear Edgeworth bounds]
\label{prop:differentiated-edgeworth}
Fix $P\ge3$ and $B,a<\infty$.  Assume that $\phi\in C^{L}$, for
$L=L(K,P)$ sufficiently large, that $\phi$ is globally Lipschitz, and that all
its derivatives up to order $L$ have at most polynomial growth. Assume an incoherent teacher, in the sense
that the signed power sums of the $\theta_j$'s have their
spherical sizes.
Then, after choosing the weak-overlap radius
$\varepsilon>0$ sufficiently small, the following holds uniformly for every
student tuple ${\tilde{\theta}}\in\mathcal J_{P,B}$ satisfying
\begin{equation}
  a d^{-1/2}\le m_j=\ip{{\tilde{\theta}}_j}{\theta_j}\le \varepsilon,
  \qquad j=1,\ldots,K .
  \label{eq:favorable-small-positive}
\end{equation}
The nonlinear vector field
$$
  F_j({\tilde{\theta}})=\nabla_{{\tilde{\theta}}_j}^{S}{\mathcal{L}}({\tilde{\theta}})
$$
admits the decomposition
\begin{equation}
  F_j({\tilde{\theta}})
  =A_j(m)(\theta_j-m_j{\tilde{\theta}}_j)+E_j({\tilde{\theta}}),
  \label{eq:edgeworth-field-decomposition}
\end{equation}
where $A_j(m)$ is the Gaussian-proxy coefficient from
\eqref{eq:Aj-general}.  Moreover, there is a constant
$C=C_{\phi,K,P,B,a}<\infty$, independent of $d$ and of
${\tilde{\theta}}\in\mathcal J_{P,B}$, such that:
\begin{enumerate}[label=(\roman*)]
\item the planted-overlap error satisfies
\begin{equation}
  |\ip{E_j({\tilde{\theta}})}{\theta_j}|
  \le C d^{-1/2}A_j(m);
  \label{eq:E-overlap-bound}
\end{equation}
\item the coordinate error satisfies
\begin{equation}
  \norm{E_j({\tilde{\theta}})}_\infty
  \le C \iota_d A_j(m);
  \label{eq:E-infty-bound}
\end{equation}
\item for every $p,q\ge0$ with $3\le p+q\le P$,
\begin{equation}
  \left|
  q\sum_{a=1}^d \theta_j(a)^p{\tilde{\theta}}_j(a)^{q-1}E_j({\tilde{\theta}})_a
  \right|
  \le C A_j(m)\tau_{p+q}(d)m_j.
  \label{eq:E-mixed-bound}
\end{equation}
For $q=0$ the left-hand side is interpreted as zero.
\end{enumerate}
Finally,
\begin{equation}
  c_0M_{-j}(m)\le A_j(m)\le C_0M_{-j}(m),
  \label{eq:A-comparable-M}
\end{equation}
where $0<c_0<C_0<\infty$ depend only on $\phi,K$ and on the choice of
$\varepsilon$.
\end{proposition}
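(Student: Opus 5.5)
We derive everything from the expansion of Proposition~\ref{prop:general-edgeworth-inputs}, applied with $M\ge K-1$ and then projected onto the tangent space. Write $F_j=(I-{\tilde{\theta}}_j{\tilde{\theta}}_j^\top)\nabla_{{\tilde{\theta}}_j}{\mathcal{L}}$. The Gaussian term $A_j(s)\theta_j$ projects to $A_j(m)(\theta_j-m_j{\tilde{\theta}}_j)$. We therefore define
\[
E_j:=(I-{\tilde{\theta}}_j{\tilde{\theta}}_j^\top)\Bigl(\sum_{q=1}^{M}\widetilde V_j^{[q]}+\widetilde R_{j,M+1}\Bigr).
\]
With this choice \eqref{eq:edgeworth-field-decomposition} holds by construction.

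\textbf{Comparability \eqref{eq:A-comparable-M}.} This follows from the factorization $A_j=M_{-j}B_j$ in \eqref{eq:Ai-product-factorization} together with \eqref{eq:small-overlap-Bi}. We take $\varepsilon$ small enough that $|B_j-\lambda^2|\le\lambda^2/2$; both constants then depend only on $\phi,K,\varepsilon$. From here on, every bound "relative to $M_{-j}$" may be rewritten as a bound relative to $A_j$.

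\textbf{Atom-by-atom bounds (i)--(iii).} Following the proof of Proposition~\ref{thm:product-scale-comparison}, every level-$q$ atom of $E_j$ is a product of three pieces: scalar contractions $\chi_r$ in the modes $r\ne j$, a scalar mixed moment in mode $j$, and a mixed-power vector $w_j=\theta_j^{\odot p'}\odot{\tilde{\theta}}_j^{\odot q'}$, possibly followed by the rank-one projection. Membership in $\mathcal J_{P,B}$ gives $|\chi_r|\le Bd^{-a_r/2}m_r$, where the ordinary overlap $m_r$ is recovered whenever the order vanishes. Each of the three estimates then amounts to testing $w_j$ against a different functional.

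For (i) we pair $w_j$ with $\theta_j$, which produces $S_j^{p'+1,q'}$. Whenever the atom has positive order, this moment is either of order $d^{-1/2}m_j$ or carries an extra factor $d^{-1/2}$. The projection term contributes $m_j\ip{{\tilde{\theta}}_j}{w_j}=m_jS_j^{p',q'+1}$, which is smaller still. Altogether this gives $Cd^{-1/2}M_{-j}$.

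For (ii) we use only the coordinate envelope \eqref{eq:J-linfty}: $\|w_j\|_\infty\le (B\iota_d)^{p'+q'}\le C\iota_d$ when $p'+q'\ge1$. The projection term is at most $\|{\tilde{\theta}}_j\|_\infty|\ip{{\tilde{\theta}}_j}{w_j}|\le C\iota_d$. The scalar prefactors are $O(M_{-j})$.

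For (iii) we pair $E_j$ with the vector $q\,\theta_j^{\odot p}\odot{\tilde{\theta}}_j^{\odot(q-1)}$. An atom then becomes a moment $S_j^{p+p',\,q-1+q'}$ of total degree $n=p+q-1+p'+q'\ge p+q$. Because degree is monotone in $\tau_n$, we have $\tau_n\le\tau_{p+q}$, and so this moment is at most $B\tau_{p+q}m_j$. The projection part produces $S_j^{p,q}\cdot S_j^{p',q'+1}$, which is bounded by $B\tau_{p+q}m_j\cdot O(1)$. There is one exception: if $p+p'+q-1+q'$ exceeds $P$, the moment lies outside the class. In that case we bound it crudely by $\|\theta_j\|_\infty^{\cdot}\|\cdot\|$ norms: a high-degree moment is at most $\iota_d^{\,n-2}$, which beats $\tau_{p+q}m_j\ge \tau_{p+q}ad^{-1/2}$ once $n$ is large. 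We arrange this by taking $M$ and the truncation suitably relative to $P$, with the remaining excess absorbed into the residual.

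\textbf{Residuals.} The analytic remainder $\widetilde R_{j,M+1}$ is controlled only in $\ell_2$, at order $d^{-(M+1)/2}$. We take $M$ large: $M\ge K+2P$ suffices. Cauchy--Schwarz against $\theta_j$ or against the test vectors, whose norms are $O(1)$, together with the lower bound $M_{-j}\ge a^{K-1}d^{-(K-1)/2}$, absorbs the remainder into each of (i)--(iii). For (ii) we use $\|\cdot\|_\infty\le\|\cdot\|_2$.

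\textbf{Main obstacle.} The delicate step is (iii), for two reasons. First, we need exact relative scales $\tau_{p+q}m_j$ rather than absolute $d^{-\cdot}$ bounds. Second, the differentiated Edgeworth expansion of Lemma~\ref{lem:uniform-differentiated-weighted-edgeworth} must be shown to produce only atoms whose mode-$j$ moment retains a factor $\theta_j$ of degree at least $1$. Without such a factor, a term like $S_j^{0,n}$ would not carry $m_j$. I would try first to show that any $\theta_j$-free mode-$j$ moment comes paired with the Gaussian covariance derivative, which produces $\theta_j$ explicitly. This follows from the fact that the teacher–student coupling in mode $j$ enters only through the weights $\theta_j(a){\tilde{\theta}}_j(a)$, together with the centering of student-only cumulants. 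Failing that, I would add to the bootstrap the assumption that the student-only moments $S_j^{0,n}$ have $\tau_n m_j$ size, which $\mathcal J_{P,B}$ already encodes.
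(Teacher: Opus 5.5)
Your proposal follows essentially the same route as the paper. The leading tangent field comes from differentiating the Gaussian proxy, $\partial q_K/\partial m_j=A_j(m)$. Everything non-Gaussian (Edgeworth atoms and remainder) goes into $E_j$ and is bounded atom by atom: against $\theta_j$, in $\ell_\infty$ through the coordinate envelope, and against $q\,\theta_j^{\odot p}\odot\tilde\theta_j^{\odot(q-1)}$ through the relative bounds of $\mathcal J_{P,B}$, which already contain the pure-student moments you were worried about. The comparability $c_0M_{-j}\le A_j\le C_0M_{-j}$ then follows from $A_j=M_{-j}B_j$ with $B_j(0)=\lambda^2$.

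Your explicit handling of contracted moments of degree above $P$ goes beyond what the paper writes, since the paper silently assumes they stay in the class. However, the crude bound $(B\iota_d)^{n-2}$ does not fully close this case. A level-one atom with vector $\theta_j^{\odot2}$, $\theta_j\odot\tilde\theta_j$ or $\tilde\theta_j^{\odot2}$, paired with a test vector of degree $P-1$ (that is, $p+q=P$), produces a moment of degree exactly $P+1$ however $M$ is chosen. For odd $P$ the ratio $(B\iota_d)^{P-1}/(\tau_P\,a\,d^{-1/2})$ grows like $(\log d)^{(P-1)/2}$, so this boundary case needs a further argument, for instance restricting to even $P$.
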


\begin{proof}
The proof is a uniform finite-order Edgeworth expansion for the coupled
teacher--student recursive tree, followed by differentiation.

\paragraph{Step 1: Gaussian approximation and its derivative}
Let
$$
  \Psi(q):=\E[\phi(G_1)\phi(G_2)],
  \qquad \Corr(G_1,G_2)=q .
$$
For two Gaussian layer-wise trees whose correlations are described
only by the scalar overlaps $m_r=\ip{{\tilde{\theta}}_r}{\theta_r}$, the layer
correlations obey
$$
  q_0=1,\qquad q_r=\Psi(m_rq_{r-1}),\qquad r=1,\ldots,K.
$$
Recall the Gaussian proxy given by
$ {\mathcal{L}}_G(m)=q_K$.
Differentiating this scalar recursion gives
\begin{equation}
  \frac{\partial q_K}{\partial m_j}
  =A_j(m)
  =q_{j-1}\Psi'(m_jq_{j-1})
    \prod_{r=j+1}^K m_r\Psi'(m_rq_{r-1}),
  \label{eq:Aj-proof}
\end{equation}
which is exactly \eqref{eq:Aj-general}.  Hence the spherical gradient of the
Gaussian proxy in mode $j$ is
\begin{equation}
  \nabla_{{\tilde{\theta}}_j}^{S}{\mathcal{L}}_G
  =A_j(m)(\theta_j-m_j{\tilde{\theta}}_j).
  \label{eq:gaussian-proxy-vector-field}
\end{equation}
This identifies the leading term in \eqref{eq:edgeworth-field-decomposition}.

\paragraph{Step 2: Uniform mixed Edgeworth expansion}
We now apply Lemma \ref{lem:uniform-differentiated-weighted-edgeworth} recursively to the two trees
$(Z^{(r)}_\theta,Z^{(r)}_{\tilde{\theta}})$.  At layer $r$, the preactivation pair is
\begin{equation}
  \left(
  \sum_a\theta_r(a)Z^{(r-1)}_{\theta,a},
  \sum_a{\tilde{\theta}}_r(a)Z^{(r-1)}_{{\tilde{\theta}},a}
  \right),
  \label{eq:coupled-layer-weighted-sum}
\end{equation}
where the pairs
$(Z^{(r-1)}_{\theta,a},Z^{(r-1)}_{{\tilde{\theta}},a})$ are iid conditional on the
previous levels.  Smoothness and Lipschitzness of $\phi$ propagate subgaussian
moments uniformly through the fixed depth.  Induction over $r$ therefore gives
a uniform finite expansion for every layer correlation, and in particular for
${\mathcal{L}}({\tilde{\theta}})$, of the form
\begin{equation}
  {\mathcal{L}}({\tilde{\theta}})
  ={\mathcal{L}}_G(m)
   +\sum_{\tau\in\mathcal T_{\le P}}
      c_\tau\,
      \mathsf S_\tau(\theta,{\tilde{\theta}})
      \prod_{r=1}^K m_r^{\ell_{\tau,r}}
   +\mathcal R_{P+1}({\tilde{\theta}}).
  \label{eq:Ltilde-edgeworth-expanded}
\end{equation}
Here $\mathcal T_{\le P}$ is a finite set depending only on $K,P$; each
$\mathsf S_\tau$ is a product of signed teacher, student, or mixed coordinate
moments
$$
  S_r^{p,q}=\sum_a\theta_r(a)^p{\tilde{\theta}}_r(a)^q,
  \qquad p+q\ge3,
$$
and every non-Gaussian term in the sum has at least one unit of formal order.
Uniformly on $\mathcal J_{P,B}$,
\begin{equation}
  \left|
  \mathsf S_\tau(\theta,{\tilde{\theta}})
  \prod_{r=1}^K m_r^{\ell_{\tau,r}}
  \right|
  \le
  C_{\phi,K,P,B}\,d^{-1/2}\,\text{(corresponding Gaussian product)},
  \label{eq:nonleading-one-half-loss}
\end{equation}
whenever the term contributes to a first derivative in a favorable basin.  This
is the same half-order gain as in the first Stein tensor hierarchy: it comes
either from a scalar Edgeworth coefficient, from replacing a planted contraction
by a mixed-incoherent contraction, or from a small differentiated-mode bracket.

\paragraph{Step 3: Differentiating the expansion}
We now differentiate \eqref{eq:Ltilde-edgeworth-expanded} with respect to
${\tilde{\theta}}_j$ and project onto the tangent space.  Differentiating the Gaussian
term gives \eqref{eq:gaussian-proxy-vector-field}.  All remaining terms form
$E_j({\tilde{\theta}})$.  Since the expansion is finite and the derivative of any mixed
moment is another mixed coordinate-power vector,
\begin{equation}
  \nabla_{{\tilde{\theta}}_j} S_j^{p,q}
  =q\,\theta_j^{\od p}\od{\tilde{\theta}}_j^{\od(q-1)},
  \qquad q\ge1,
  \label{eq:mixed-moment-gradient}
\end{equation}
while moments in modes $r\ne j$ contribute only scalar differentiated factors.
Thus every component of $E_j$ is a finite sum of atoms of the schematic form
\begin{equation}
  C_\tau\,
  \mathsf S_\tau(\theta,{\tilde{\theta}})
  \left(\prod_{r\ne j} T_{\tau,r}({\tilde{\theta}}_r,\theta_r)\right)
  \left(\theta_j^{\od p}\od{\tilde{\theta}}_j^{\od q}\right),
  \label{eq:E-atom-normal-form}
\end{equation}
where the total formal order of the atom is positive.  The contractions
$T_{\tau,r}$ are either planted overlaps $m_r$ or mixed moments satisfying the
bounds in $\mathcal J_{P,B}$.

We now estimate the three quantities in the proposition.

\emph{Overlap estimate.}  Contracting \eqref{eq:E-atom-normal-form} with
$\theta_j$ produces either a signed power sum of total degree at least three in
the $j$-th mode, or an atom that already contains a positive-order scalar
Edgeworth factor.  By $\mathcal J_{P,B}$ and the teacher power-sum event, this
costs at least one factor $d^{-1/2}$ relative to the leading Gaussian
contraction.  Since $A_j(m)\asymp M_{-j}(m)$ in the small positive-overlap
regime, summing the finite number of atoms gives
$$
  |\ip{E_j({\tilde{\theta}})}{\theta_j}|
  \le C d^{-1/2}A_j(m),
$$
which is \eqref{eq:E-overlap-bound}.

\emph{Coordinate estimate.}  Each coordinate of the vector
$\theta_j^{\od p}\od{\tilde{\theta}}_j^{\od q}$ appearing in
\eqref{eq:E-atom-normal-form} is bounded by a power of the coordinate envelope
$\iota_d$.  The leading possible differentiated-mode vector has
coordinate size $O(\iota_d)$, and every non-leading atom has at least one
additional half-order from the scalar/mixed part.  Consequently
$$
  \norm{E_j({\tilde{\theta}})}_\infty
  \le C \iota_d A_j(m),
$$
which proves \eqref{eq:E-infty-bound}.

\emph{Mixed-moment estimate.}  We now contract \eqref{eq:E-atom-normal-form} against
$\theta_j^{\od p}\od{\tilde{\theta}}_j^{\od(q-1)}$.  The differentiated-mode contraction
is a signed mixed moment of total coordinate degree at least $p+q$, possibly
multiplied by additional positive-order scalar moments.  By the definition of
$\mathcal J_{P,B}$ and by the planted power-sum event, this contraction is
bounded by $C\tau_{p+q}(d)m_j$ times the same product scale that appears in
$A_j(m)$.  Therefore
$$
  \left|
  q\sum_{a=1}^d\theta_j(a)^p{\tilde{\theta}}_j(a)^{q-1}E_j({\tilde{\theta}})_a
  \right|
  \le C A_j(m)\tau_{p+q}(d)m_j,
$$
which is \eqref{eq:E-mixed-bound}.

Finally, from \eqref{eq:Ai-product-factorization} and \eqref{eq:Bi-def}, we verify that
the function $B_j$ is continuous and
$B_j(0)={\lambda^2}>0$.  Choosing $\varepsilon>0$ small enough gives
$c_0\le B_j(m)\le C_0$ whenever $\norm{m}_\infty\le\varepsilon$, proving
\eqref{eq:A-comparable-M}.  This completes the proof.
\end{proof}

Proposition~\ref{prop:differentiated-edgeworth} is the nonlinear analogue of
Theorem \ref{thm:edgeworth-hierarchy-body} for the first Stein tensor.  The only activation-side requirements
are finite smoothness, controlled derivative growth, and the non-degeneracy
$\kappa\ne0$.  The extra hypothesis compared with the linear Stein tensor is
uniformity over the moving student direction, encoded by
${\tilde{\theta}}\in\mathcal J_{P,B}$.  The next theorem shows that this mixed-incoherent
class is propagated dynamically from an independent incoherent initialization.

\begin{theorem}[Propagation of mixed incoherence]
\label{thm:propagation-mixed-incoherence}
Fix $K\ge2$ and a finite order $P\ge3$.  Assume the activation is smooth enough
for Proposition~\ref{prop:differentiated-edgeworth}.  Work on the
planted-direction event in that assumption.  Let ${\tilde{\theta}}(t)$ solve the nonlinear
correlation flow \eqref{eq:nonlinear-flow-propagation}.  Suppose the
initialization satisfies, for all $j\in[K]$,
\begin{equation}
  m_j(0)\ge a d^{-1/2},
  \qquad
  (\theta,{\tilde{\theta}}(0))\in\mathcal J_{P,B_0}.
  \label{eq:propagation-initialization}
\end{equation}
Assume also that pure teacher moments satisfy the same relative bounds at
initialization, namely
\begin{equation}
  |S_j^{p,0}|
  \le B_0\tau_p(d)m_j(0),
  \qquad 3\le p\le P+1.
  \label{eq:pure-teacher-relative}
\end{equation}
Then there exist constants $B\ge B_0$, $d_0$ and $\varepsilon>0$, depending only
on $\phi,K,P,a,B_0$, such that for all $d\ge d_0$, up to the weak-recovery time
\begin{equation}
  \tau_\varepsilon:=\inf\{t\ge0:\min_{j\le K}m_j(t)\ge\varepsilon\},
  \label{eq:tau-eps-propagation}
\end{equation}
one has
\begin{equation}
  (\theta,{\tilde{\theta}}(t))\in\mathcal J_{P,B},
  \qquad 0\le t\le\tau_\varepsilon.
  \label{eq:J-propagated}
\end{equation}
Equivalently,
\begin{equation}
  |S_j^{p,q}(t)|
  \le B\tau_{p+q}(d)m_j(t),
  \qquad 3\le p+q\le P,
  \label{eq:mixed-moments-propagated}
\end{equation}
and
\begin{equation}
  \norm{{\tilde{\theta}}_j(t)}_\infty\le B\iota_d,
  \qquad 0\le t\le\tau_\varepsilon.
  \label{eq:alpha-infty-propagated}
\end{equation}
Moreover, throughout $[0,\tau_\varepsilon]$,
\begin{equation}
  \dot m_j(t)\ge cM_{-j}(t),
  \qquad j=1,\ldots,K,
  \label{eq:propagated-positive-drift}
\end{equation}
for a constant $c>0$ depending only on $\phi,K$.
\end{theorem}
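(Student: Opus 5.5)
We argue by a continuity/bootstrap argument, in the spirit of the weak-recovery proof of Theorem~\ref{thm:general-k-pop-flow}. Fix $B\ge 2B_0$ to be chosen and let
\[
  \tau:=\inf\Bigl\{t\ge0:\ \min_j m_j(t)\ge\varepsilon\ \text{ or }\ \min_j m_j(t)\le \tfrac a2 d^{-1/2}\ \text{ or }\ (\theta,\tilde\theta(t))\notin\mathcal J_{P,B}\Bigr\}.
\]
On $[0,\tau]$ the student lies in $\mathcal J_{P,B}$ and satisfies \eqref{eq:favorable-small-positive} (with $a/2$). Hence Proposition~\ref{prop:differentiated-edgeworth} applies, with constants depending on $B$ but not on $d$.

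\textbf{Step 1: the positive drift.} Contract the decomposition \eqref{eq:edgeworth-field-decomposition} with $\theta_j$. This gives
\[
  \dot m_j=A_j(m)(1-m_j^2)+\langle E_j,\theta_j\rangle\ \ge\ A_j(m)\bigl(1-\varepsilon^2-Cd^{-1/2}\bigr)\ \ge\ \tfrac12 c_0 M_{-j}
\]
by \eqref{eq:E-overlap-bound} and \eqref{eq:A-comparable-M}, once $d\ge d_0(B)$. This is \eqref{eq:propagated-positive-drift}. It also excludes the lower-overlap alternative, since each $m_j$ is nondecreasing. As in \eqref{eq:product-integral}, we obtain the integral budget $\int_0^\tau A_j(m)\,dt\le C\int_0^\tau \dot m_j\,dt\le C\varepsilon$. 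More usefully, we will work in relative form, $\int_s^t A_j\,du\le C\,(m_j(t)-m_j(s))$.

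\textbf{Step 2: mixed moments.} Differentiate $S_j^{p,q}$ along the flow, using $\dot{\tilde\theta}_j=A_j(\theta_j-m_j\tilde\theta_j)+E_j$ and \eqref{eq:mixed-moment-gradient}. The result is
\[
  \dot S_j^{p,q}=A_j\bigl(q\,S_j^{p+1,q-1}-q\,m_j S_j^{p,q}\bigr)+q\sum_a\theta_j(a)^p\tilde\theta_j(a)^{q-1}E_j(a).
\]
The last term is bounded by $CA_j\tau_{p+q}m_j$ by \eqref{eq:E-mixed-bound}. The term $-qm_jA_jS_j^{p,q}$ is damping of relative size at most $\varepsilon$. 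The transfer term $S_j^{p+1,q-1}$ has total degree $p+q$, so it sits at the same scale $\tau_{p+q}$. The chain $q\to q-1\to\cdots\to0$ therefore terminates at the pure teacher moment $S_j^{p+q,0}$, which is controlled by \eqref{eq:pure-teacher-relative}. Since $m_j$ is increasing, it is constant in time. We thus prove the bound for fixed total degree $n=p+q$ by induction on $q$.

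Set $\Sigma_j^{p,q}:=S_j^{p,q}/m_j$. Then
\[
  \dot\Sigma=\dot S/m_j-\Sigma\,\dot m_j/m_j,
\]
and the second term is damping because $\dot m_j>0$. This leaves $|\dot\Sigma_j^{p,q}|\le CA_j\tau_n\,(1+B_{q-1})/m_j$. Integrating with $\dot m_j\ge cA_j$ gives
\[
  |\Sigma_j^{p,q}(t)|\le B_0\tau_n+C\tau_n(1+B_{q-1})\log\frac{m_j(t)}{m_j(0)}.
\]
\emph{This logarithm, of size up to $\log(\varepsilon\sqrt d/a)$, is the main obstacle.}

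To remove it, we would use the finer structure of the error. The dangerous contributions are the leading transfer term $A_jS_j^{p+1,q-1}$ and the $E_j$-term in \eqref{eq:E-mixed-bound}. Both actually carry an extra factor comparable to $m_j$, since the differentiated-mode contraction against $\theta_j^{\odot p}$ reproduces an overlap-weighted moment. This would make the integrand of the form $\tau_n A_j$ rather than $\tau_n A_j/m_j$. Then
\[
  \int_0^\tau A_j\,dt\le C\,m_j(\tau)\le C\varepsilon
\]
yields a $d$-independent increment of $\Sigma$, and $B$ can be chosen as $2(B_0+C)$ with $\varepsilon$ small. If this refinement fails, the fallback is to accept a polylogarithmic $B$ and trace the resulting logarithmic losses through Proposition~\ref{prop:differentiated-edgeworth}, which are harmless against its $d^{-1/2}$ gains. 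Either way, the $\mathcal J$-alternative cannot be triggered first.

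\textbf{Step 3: coordinates.} For the $\ell_\infty$ bound, write
\[
  \tilde\theta_j(t)=e^{-\int A_jm_j}\tilde\theta_j(0)+\int_0^t e^{-\int_s^t A_jm_j}\bigl(A_j\theta_j+E_j\bigr)\,ds.
\]
By Assumption~\ref{ass:incoherence_informal}, \eqref{eq:E-infty-bound} and Step~1, the coordinates are bounded by
\[
  \|\tilde\theta_j(0)\|_\infty+\bigl(\|\theta_j\|_\infty+C\iota_d\bigr)\int_0^\tau A_j\,ds\le B_0\iota_d+C\varepsilon\,\iota_d.
\]

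Collecting Steps~1--3, no stopping alternative other than $\min_jm_j\ge\varepsilon$ can occur first, so $\tau=\tau_\varepsilon$. This gives \eqref{eq:J-propagated}--\eqref{eq:alpha-infty-propagated}.
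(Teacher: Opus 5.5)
Your argument follows the paper's proof. It uses the same three-alternative stopping time, the drift bound from \eqref{eq:E-overlap-bound}--\eqref{eq:A-comparable-M}, the ratio $S_j^{p,q}/m_j$ with induction on the student power $q$ down to the time-independent pure teacher moment, and the budget $\int A_j\lesssim\int\dot m_j$ for both moments and coordinates. The logarithm you single out as the main obstacle, however, comes from dropping a factor $m_j$ in Step~2. Your induction hypothesis is the \emph{relative} bound $|S_j^{p+1,q-1}(t)|\le B_{q-1}\tau_n(d)m_j(t)$. The base case is relative too, since $|S_j^{n,0}|\le B_0\tau_n m_j(0)\le B_0\tau_n m_j(t)$ by monotonicity of $m_j$. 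And \eqref{eq:E-mixed-bound}, which you quoted correctly, also carries the factor $m_j$. Discarding the two damping terms $-qA_jm_j|\Sigma_j^{p,q}|$ and $-|\Sigma_j^{p,q}|\dot m_j/m_j$, one gets
\[
  D^+|\Sigma_j^{p,q}|
  \le\frac{qA_j|S_j^{p+1,q-1}|+\bigl|q\sum_a\theta_j(a)^p\tilde\theta_j(a)^{q-1}E_j(a)\bigr|}{m_j}
  \le CA_j\tau_n(1+B_{q-1}),
\]
with no $1/m_j$. So the ``refinement'' you propose needs no finer structure of $E_j$: it is exactly the stated bounds. Integrating against $\int_0^\tau A_j\,dt\le C\varepsilon$ then gives a $d$-independent increment. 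Drop the polylogarithmic fallback. The theorem asks for $B$ independent of $d$, and the constants in Proposition~\ref{prop:differentiated-edgeworth} depend on $B$ in an unspecified way, so a growing $B$ would not even be controlled.

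One feature of your version is worth making explicit. The constant in $\int_0^\tau A_j\,dt\le C\varepsilon$ does not depend on $B$: it comes from $\dot m_j\ge A_j/2$ once $d\ge d_0(B)$. The constants in \eqref{eq:E-infty-bound} and \eqref{eq:E-mixed-bound}, by contrast, do depend on $B$. So fix $B$ first (say $B=4B_0$), then take $\varepsilon$ small depending on $B$. This closes both the moment bound and your Duhamel coordinate bound without circularity. The paper uses only $\int_0^\tau A_j\le C$ and then ``increases $B$'' after Gronwall, which, read literally, is circular on precisely this point. (Both versions implicitly need every $m_j$, not only the minimum, to stay below $\varepsilon$ on $[0,\tau]$. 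That is required to invoke Proposition~\ref{prop:differentiated-edgeworth} there, but the stopping time as written only controls $\min_j m_j$.)
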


\begin{proof}
The proof is a stopping-time bootstrap.  Let $B\ge4B_0$ be chosen below and set
$$
  \tau:=\inf\Bigl\{t\ge0:
  \min_j m_j(t)\ge\varepsilon
  \text{ or }
  \min_j m_j(t)\le \frac{a}{2\sqrt d}
  \text{ or }
  (\theta,{\tilde{\theta}}(t))\notin\mathcal J_{P,B}
  \Bigr\}.
$$
We prove that, for $d$ large, neither the lower-overlap nor the
mixed-incoherence stopping alternative can occur before weak recovery.

\paragraph{Leading mixed-moment transport}
We start by establishing a mixed-moment transport for the gaussian field.
\begin{lemma}[Leading mixed-moment transport]
\label{lem:leading-mixed-transport}
Let
$$
  F_j^0({\tilde{\theta}}):=A_j(m)(\theta_j-m_j{\tilde{\theta}}_j).
$$
For $q\ge1$,
\begin{equation}
  \frac{\mathrm d}{\mathrm dt}S_j^{p,q}\Big|_{F^0}
  =qA_j(m)\bigl(S_j^{p+1,q-1}-m_jS_j^{p,q}\bigr).
  \label{eq:leading-mixed-transport}
\end{equation}
For $q=0$, $S_j^{p,0}$ is time-independent.  In particular,
\begin{equation}
  \dot m_j\Big|_{F^0}=A_j(m)(1-m_j^2).
  \label{eq:leading-overlap-transport}
\end{equation}
\end{lemma}

\begin{proof}[Proof of Lemma \ref{lem:leading-mixed-transport}]
For $q\ge1$,
\begin{align*}
  \frac{\mathrm d}{\mathrm dt}S_j^{p,q}
  &=q\sum_{a=1}^d\theta_j(a)^p{\tilde{\theta}}_j(a)^{q-1}\dot{\tilde{\theta}}_j(a)\\
  &=qA_j(m)\sum_{a=1}^d\theta_j(a)^p{\tilde{\theta}}_j(a)^{q-1}
      \bigl(\theta_j(a)-m_j{\tilde{\theta}}_j(a)\bigr)\\
  &=qA_j(m)\bigl(S_j^{p+1,q-1}-m_jS_j^{p,q}\bigr).
\end{align*}
The overlap identity is the case $(p,q)=(1,1)$.
\end{proof}

\paragraph{Overlap drift}
Using \eqref{eq:edgeworth-field-decomposition},
$$
  \dot m_j
  =A_j(m)(1-m_j^2)+\ip{E_j}{\theta_j}.
$$
By \eqref{eq:E-overlap-bound}, \eqref{eq:A-comparable-M}, and
$m_j\le\varepsilon\le1/2$ before weak recovery,
$$
  \dot m_j
  \ge \frac34A_j(m)-Cd^{-1/2}A_j(m)
  \ge cM_{-j}(m)
$$
for $d$ large.  This proves \eqref{eq:propagated-positive-drift} on the
bootstrap interval, and in particular the lower-overlap stopping alternative
cannot occur.

\paragraph{Coordinate incoherence}
For each coordinate $a$,
$$
  \dot{\tilde{\theta}}_j(a)=A_j(m)(\theta_j(a)-m_j{\tilde{\theta}}_j(a))+E_j(a).
$$
On the bootstrap interval,
$|\theta_j(a)|+|{\tilde{\theta}}_j(a)|\le B\iota_d$ and $m_j\le1$.  Together with
\eqref{eq:E-infty-bound}, this gives
\begin{equation}
  \frac{\mathrm d}{\mathrm dt}|{\tilde{\theta}}_j(a)|
  \le C B \iota_d A_j(m).
  \label{eq:coordinate-derivative-bound}
\end{equation}
Also, from the positive drift and $A_j(m)\le C M_{-j}(m)$,
$$
  \int_0^\tau A_j(m(t))\,\mathrm dt
  \le C\int_0^\tau \dot m_j(t)\,\mathrm dt
  \le C.
$$
Integrating \eqref{eq:coordinate-derivative-bound} and increasing $B$ gives
$$
  \norm{{\tilde{\theta}}_j(t)}_\infty\le B\iota_d,
  \qquad 0\le t\le\tau.
$$

\paragraph{Mixed moments}
Fix $j$ and $(p,q)$ with $3\le n:=p+q\le P$.  If $q=0$, then $S_j^{p,0}$ is
fixed.  Since $m_j(t)\ge m_j(0)/2$ on the bootstrap interval,
\eqref{eq:pure-teacher-relative} implies
$$
  |S_j^{p,0}|
  \le 2B_0\tau_p(d)m_j(t)
  \le B\tau_p(d)m_j(t)
$$
after increasing $B$.

Now suppose $q\ge1$.  Combining Lemma~\ref{lem:leading-mixed-transport} with
\eqref{eq:E-mixed-bound} yields
\begin{equation}
  \dot S_j^{p,q}
  =qA_j(m)\bigl(S_j^{p+1,q-1}-m_jS_j^{p,q}\bigr)+\mathcal E_j^{p,q},
  \qquad
  |\mathcal E_j^{p,q}|
  \le C A_j(m)\tau_n(d)m_j.
  \label{eq:Spq-evolution-propagation}
\end{equation}
Define the relative ratio
$$
  R_j^{p,q}(t):=\frac{|S_j^{p,q}(t)|}{m_j(t)}.
$$
Since $\dot m_j\ge0$ on the bootstrap interval, the upper Dini derivative obeys
\begin{align}
  D^+R_j^{p,q}(t)
  &\le \frac{|\dot S_j^{p,q}(t)|}{m_j(t)}\nonumber\\
  &\le C A_j(m)R_j^{p,q}(t)
      +C A_j(m)\frac{|S_j^{p+1,q-1}(t)|}{m_j(t)}
      +C A_j(m)\tau_n(d).
  \label{eq:R-evolution-propagation}
\end{align}
The source moment $S_j^{p+1,q-1}$ has the same total degree $n$ and one fewer
student power.  We therefore argue by induction on $q$.  The case $q=0$ was
proved above.  If the bound holds for $q-1$, then
$$
  |S_j^{p+1,q-1}(t)|\le B\tau_n(d)m_j(t),
$$
and hence
$$
  D^+R_j^{p,q}(t)
  \le C A_j(m)R_j^{p,q}(t)+CB A_j(m)\tau_n(d).
$$
As above, $\int_0^\tau A_j(m(t))\,\mathrm dt\le C$.  Gronwall's inequality
therefore gives
$$
  R_j^{p,q}(t)
  \le C\bigl(R_j^{p,q}(0)+B\tau_n(d)\bigr)
  \le C'B\tau_n(d).
$$
Increasing $B$ uniformly over the finite collection $3\le p+q\le P$ closes the
induction and proves \eqref{eq:mixed-moments-propagated} on $[0,\tau]$.
Thus no mixed-incoherence constraint can be the first stopping alternative.
The bootstrap closes, so \eqref{eq:J-propagated} holds until weak recovery.
\end{proof}

\begin{corollary}[Gradient comparison along the nonlinear flow]
\label{cor:product-scale-along-flow}
Under the hypotheses of Theorem~\ref{thm:propagation-mixed-incoherence}, choose
$P$ large enough to contain all mixed moments appearing in the order-$M$
Edgeworth comparison.  Then, for every $0\le t\le\tau_\varepsilon$ and every
$i\in[K]$,
\begin{equation}
  \left\|
  \nabla_{{\tilde{\theta}}_i}^{S}({\lambda} {\widetilde{\mathcal{L}}}-{\mathcal{L}})({\tilde{\theta}}(t))
  \right\|_2
  \le
  C\left(|{\lambda^2}-B_i(m(t))|+d^{-1/2}\right)|M_{-i}(t)|.
  \label{eq:product-scale-along-flow}
\end{equation}
In the small-overlap regime,
\begin{equation}
  \left\|
  \nabla_{{\tilde{\theta}}_i}^{S}({\lambda} {\widetilde{\mathcal{L}}}-{\mathcal{L}})({\tilde{\theta}}(t))
  \right\|_2
  \le
  C\left(\|m(t)\|_\infty+d^{-1/2}\right)|M_{-i}(t)|.
  \label{eq:small-overlap-product-scale-along-flow}
\end{equation}
Thus the relative mixed-incoherence assumption in
Theorem~\ref{thm:product-scale-comparison} is propagated dynamically from an
incoherent independent initialization, conditional on the differentiated Edgeworth input.
\end{corollary}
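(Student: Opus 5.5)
The proof combines Theorem~\ref{thm:propagation-mixed-incoherence} with Proposition~\ref{thm:product-scale-comparison}, evaluated pointwise in time along the trajectory. The corollary is essentially a composition, so the work is in checking that every hypothesis of the comparison proposition holds at each time $t\in[0,\tau_\varepsilon]$, with constants uniform in $t$.

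\textbf{Step 1: choice of $P$.} Fix $M\ge K-1$ as required by Proposition~\ref{thm:product-scale-comparison}. Choose $P=P(K,M)$ large enough that every signed mixed moment $S_r^{p,q}$ appearing in the order-$M$ expansions of Proposition~\ref{prop:general-edgeworth-inputs} has $p+q\le P$. The expansion involves finitely many atoms, depending only on $(K,M)$, so such a $P$ exists. The pure teacher moments $S_j^{p,0}$ with $p\le P+1$ are the ones needed in \eqref{eq:pure-teacher-relative}.

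\textbf{Step 2: the hypotheses hold along the flow.} Apply Theorem~\ref{thm:propagation-mixed-incoherence} with this $P$. It gives a constant $B$, independent of $d$ and $t$, such that $(\theta,\tilde\theta(t))\in\mathcal J_{P,B}$ for all $t\le\tau_\varepsilon$. It also gives the positive drift \eqref{eq:propagated-positive-drift}. Since $\dot m_j\ge0$, this yields $m_j(t)\ge m_j(0)\ge a d^{-1/2}$, which is exactly the favorable-basin lower bound \eqref{eq:favorable-lower-bound-product}. Moreover $m_j(t)\le \varepsilon$ before the weak-recovery time, up to the last coordinate that crosses the threshold. If needed, one can stop instead when $\max_j m_j$ reaches $\varepsilon$, or note that \eqref{eq:product-scale-total-bound} needs no upper bound on the overlaps.

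\textbf{Step 3: conclusion.} With these inputs, \eqref{eq:product-scale-total-bound} at the state $\tilde\theta(t)$ gives \eqref{eq:product-scale-along-flow}. The constant $C=C_{\phi,K,M,B,a}$ is uniform in $t$ because $B$ and $a$ are. For the small-overlap version, use \eqref{eq:small-overlap-Bi}, that is $|\lambda^2-B_i(m)|\le C\|m\|_\infty$ once $\|m(t)\|_\infty\le m_0$. This holds when $\varepsilon\le m_0$ on the portion of the trajectory where all overlaps remain below $\varepsilon$. It then gives \eqref{eq:small-overlap-product-scale-along-flow}.

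\textbf{Main obstacle.} The delicate point is the consistency of $P$. Propagation must hold at exactly the mixed-moment order that the comparison consumes, including the differentiated atoms counted in \eqref{eq:E-mixed-bound}. I would verify this by tracking the maximal coordinate degree generated at each Edgeworth level $q\le M$ and taking $P$ to be that maximum. A second point is the range of $m$. The comparison is only claimed up to $\tau_\varepsilon$, so if some $m_j$ exceeds $\varepsilon$ before the minimum does, I would apply only the general bound \eqref{eq:product-scale-along-flow} there, which requires no smallness of $m$.
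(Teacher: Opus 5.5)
Your proposal is correct and is the intended argument: the paper states the corollary without a separate proof, as Proposition~\ref{thm:product-scale-comparison} applied pointwise along the trajectory, with membership in $\mathcal J_{P,B}$ and the lower bound $m_j(t)\ge m_j(0)\ge a d^{-1/2}$ both supplied by Theorem~\ref{thm:propagation-mixed-incoherence}, the latter through its positive drift. Your aside that all $m_j\le\varepsilon$ before $\tau_\varepsilon$ is not accurate as stated, since the stopping time only controls $\min_j m_j$; as you note, though, it is not needed, because \eqref{eq:product-scale-total-bound} uses no upper bound on the overlaps and the small-overlap estimate is only claimed where $\|m(t)\|_\infty\le m_0$.
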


\subsection{Weak recovery for the nonlinear correlation flow}
\label{sec:nonlinear-weak-recovery-transfer}

We now combine the gradient comparison and the propagation of mixed incoherence to transfer the weak-recovery guarantee from the rescaled linear correlation flow to the genuine nonlinear correlation flow. The key point is
that, in the favorable positive-overlap basin, the difference between the two spherical vector fields is a relative error of product scale.

Let
\begin{equation}
  \bar L({\tilde{\theta}}):={\lambda} {\widetilde{\mathcal{L}}}({\tilde{\theta}}),
  \qquad
  F_i^{\rm lin}({\tilde{\theta}}):=\nabla_{{\tilde{\theta}}_i}^S \bar L({\tilde{\theta}}),
  \qquad
  F_i^{\rm nl}({\tilde{\theta}}):=\nabla_{{\tilde{\theta}}_i}^S {\mathcal{L}}({\tilde{\theta}}).
  \label{eq:lin-nl-vector-fields}
\end{equation}
The rescaling by \({\lambda}\) is necessary because the first-chaos part of
\({\widetilde{\mathcal{L}}}\) has coefficient \({\lambda}\), while the Gaussian linearization of the
nonlinear correlation has coefficient \({\lambda^2}\).  With this rescaling,
the leading planted drift of the two objectives agrees at zero overlap.

We shall use the following form of the linear-flow estimate.  It is the
modewise drift estimate underlying the weak-recovery theorem for the rescaled
linear correlation objective.

\begin{lemma}[Linear product-drift estimate]
\label{lem:linear-product-drift-estimate}
Fix \(K\ge2\).  Work on the planted-direction Edgeworth event for the first
Stein tensor.  Suppose \({\tilde{\theta}}\) lies in the favorable basin
$$
  m_j=\ip{{\tilde{\theta}}_j}{\theta_j}>0,
  \qquad
  \min_j m_j\ge a d^{-1/2},
$$
and satisfies the auxiliary overlap bounds required in the linear-flow proof.
Then, for each \(i\in[K]\),
\begin{equation}
  \ip{F_i^{\rm lin}({\tilde{\theta}})}{\theta_i}
  =
  {\lambda^2} M_{-i}(m)(1-m_i^2)+r_i^{\rm lin}({\tilde{\theta}}),
  \label{eq:linear-product-drift}
\end{equation}
where
\begin{equation}
  |r_i^{\rm lin}({\tilde{\theta}})|
  \le C_{\phi,K} d^{-1/2}|M_{-i}(m)|.
  \label{eq:linear-product-drift-error}
\end{equation}
\end{lemma}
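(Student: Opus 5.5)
The estimate follows from the overlap computation already done in the proof of Theorem~\ref{thm:general-k-pop-flow}, after multiplying through by the rescaling factor $\lambda$. By definition $F_i^{\rm lin}=\lambda(I-{\tilde{\theta}}_i{\tilde{\theta}}_i^\top)(\mathcal G_\theta\times_{\ell\ne i}{\tilde{\theta}}_\ell)$. Hence $\ip{F_i^{\rm lin}}{\theta_i}$ is $\lambda$ times the population drift $\dot m_i$ of the unrescaled linear flow \eqref{eq:population-flow}, evaluated at ${\tilde{\theta}}$.

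The plan has three steps.

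\textbf{Step 1: expand the Stein tensor.} Insert the Edgeworth hierarchy \eqref{eq:edgeworth-hierarchy-gflow} for $\mathcal G_\theta$. Each rank-one atom $a\,v_1\ot\cdots\ot v_K$ contributes
\[
a\Bigl(\prod_{\ell\ne i}\ip{{\tilde{\theta}}_\ell}{v_\ell}\Bigr)\bigl(\ip{\theta_i}{v_i}-m_i\ip{{\tilde{\theta}}_i}{v_i}\bigr),
\]
as in \eqref{eq:rank-one-derivative}.

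\textbf{Step 2: extract the main term.} The leading atom $\lambda\,u_{1,1}\ot\cdots\ot u_{K,1}$ gives exactly $\lambda M_{-i}(1-m_i^2)$. After the extra factor $\lambda$ this becomes $\lambda^2M_{-i}(1-m_i^2)$.

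\textbf{Step 3: bound the rest.} For every non-leading atom of level $q\ge1$, I reuse the three-way case analysis of the weak-recovery proof. The half-power loss $d^{-1/2}$ relative to $M_{-i}$ comes from one of three sources:
\begin{itemize}
\item the scalar coefficient bound \eqref{eq:coefficient-bound} with $\sigma_\tau\ge1$;
\item a non-planted contraction in a mode $\ell\ne i$, which is bounded via the auxiliary bound $|h_{\ell,p}|\le B/d\le (2B/a)d^{-1/2}m_\ell$ as in \eqref{eq:aux-contraction-replace};
\item a small $i$-mode bracket, bounded by $O(d^{-1})$ as in \eqref{eq:small-j-bracket}.
\end{itemize}
Together with \eqref{eq:nonleading-m-contribution}, this gives a contribution $C d^{-1/2}M_{-i}$ per atom. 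The number of atoms is $O_K(1)$ by \eqref{eq:sharp-rank}.

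Finally, the residual $\mathcal R_K$ contributes at most $\|\mathcal R_K\|_F\le Cd^{-K/2}$. By the favorable-basin lower bound $M_{-i}\ge a^{K-1}d^{-(K-1)/2}$, this is at most $C_a d^{-1/2}M_{-i}$. Collecting these bounds and multiplying by $\lambda$ gives \eqref{eq:linear-product-drift}--\eqref{eq:linear-product-drift-error}.

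The main point needing care is the third case when $b_{\tau,i}\ge2$. There the bracket is $O(d^{-1})$, and this must still be dominated by $d^{-1/2}M_{-i}$ uniformly in the basin. When all the other modes are planted, the atom's contribution is $O(d^{-1})\prod_{\ell\ne i}m_\ell$, which is fine.

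When some other mode $\ell$ carries a power vector, I replace that contraction by $(2B/a)d^{-1/2}m_\ell$ as above. This only improves the bound, so every such atom stays $O(d^{-1/2}M_{-i})$.

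If $\min_j m_j$ is not small, so that $m_\ell$ is of order one, the same bounds hold trivially, since $|h_{\ell,p}|\le B/d\le d^{-1/2}m_\ell$ up to constants. The estimate therefore holds on the whole favorable basin with constants depending on $\phi,K,\delta,a,B$, which the statement absorbs into $C_{\phi,K}$.
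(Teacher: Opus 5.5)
Your proposal is correct and follows essentially the same route as the paper: you expand $\mathcal G_\theta$ via the Edgeworth hierarchy, read off $\lambda^2 M_{-i}(1-m_i^2)$ from the rescaled leading atom, bound each non-leading atom by $C d^{-1/2} M_{-i}$ using the case analysis from the linear weak-recovery proof, and absorb the $d^{-K/2}$ residual using $M_{-i}\ge a^{K-1}d^{-(K-1)/2}$. The paper's proof is a compressed sketch of exactly this argument; your explicit treatment of the $b_{\tau,i}\ge 2$ bracket and your note that the constants depend on $a,B,\delta$ fill in details the paper leaves implicit.
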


\begin{proof}
The rescaled linear vector field is generated by
$\nabla^S_{{\tilde{\theta}}_i}({\lambda} L)$.  The leading level of the first Stein
tensor is \({\lambda}\theta_1\ot\cdots\ot\theta_K\), so after multiplication by
\({\lambda}\) the leading mode-\(i\) spherical drift is exactly
$$
  {\lambda^2}M_{-i}(m)(\theta_i-m_i{\tilde{\theta}}_i).
$$
Taking inner product with \(\theta_i\) gives
\({\lambda^2}M_{-i}(m)(1-m_i^2)\).  The higher Edgeworth levels have the same
finite-rank structure as in the linear weak-recovery theorem.  Under the
auxiliary overlap bounds, each non-leading atom has at least one additional
formal half-order and therefore contributes at most
\(C_{\phi,K}d^{-1/2}|M_{-i}|\) to the planted-overlap drift.  The order-\(K\)
residual is absorbed in the same way using \(\min_jm_j\ge a d^{-1/2}\).  This
proves \eqref{eq:linear-product-drift}--\eqref{eq:linear-product-drift-error}.
\end{proof}

\begin{theorem}[Nonlinear correlation weak recovery]
\label{thm:nonlinear-correlation-weak-recovery}
Fix \(K\ge2\).  Assume that
\(\phi\) is smooth enough for the nonlinear Edgeworth expansion and for
Proposition~\ref{prop:differentiated-edgeworth}.  Work on the planted-direction
incoherence and Edgeworth event appearing in the previous sections.  Choose the
sign convention so that the favorable overlaps are positive.

There exist constants
$$
  a,B,\varepsilon,c,C,d_0>0,
$$
depending only on \(\phi,K\) and on the fixed direction-event confidence, such
that the following holds for all \(d\ge d_0\).  Let \({\tilde{\theta}}(t)\) solve the
nonlinear correlation flow
\begin{equation}
  \dot{\tilde{\theta}}_i(t)=F_i^{\rm nl}({\tilde{\theta}}(t))
  =\nabla_{{\tilde{\theta}}_i}^S{\mathcal{L}}({\tilde{\theta}}(t)),
  \qquad i=1,\ldots,K.
  \label{eq:nonlinear-flow-weak-recovery}
\end{equation}
Assume the initialization satisfies
\begin{equation}
  m_i(0)=\ip{{\tilde{\theta}}_i(0)}{\theta_i}\ge a d^{-1/2},
  \qquad i=1,\ldots,K,
  \label{eq:nl-weak-init-overlap}
\end{equation}
and satisfies the mixed-incoherence initialization required by
Theorem~\ref{thm:propagation-mixed-incoherence}, namely
\begin{equation}
  (\theta,{\tilde{\theta}}(0))\in\mathcal J_{P,B/4}
  \label{eq:nl-weak-init-incoherence}
\end{equation}
for a sufficiently large finite \(P=P(K)\), together with the corresponding
pure-teacher incoherence bounds.  Define
\begin{equation}
  \tau_\varepsilon:=\inf\{t\ge0:\min_{i\le K}m_i(t)\ge\varepsilon\}.
  \label{eq:nl-weak-tau-eps}
\end{equation}
Then
\begin{equation}
  \tau_\varepsilon
  \le \begin{cases}
    C d^{K/2-1} & \qquad K > 2 ~, \\
    C \log d & \qquad K = 2 ~.
    \end{cases}
  \label{eq:nl-weak-time}
\end{equation}
Moreover, throughout \(0\le t\le\tau_\varepsilon\),
\begin{equation}
  (\theta,{\tilde{\theta}}(t))\in\mathcal J_{P,B},
  \label{eq:nl-weak-propagated-incoherence}
\end{equation}
and the planted overlaps obey the product-drift lower bound
\begin{equation}
  \dot m_i(t)
  \ge c\prod_{\ell\ne i}m_\ell(t),
  \qquad i=1,\ldots,K.
  \label{eq:nl-weak-positive-drift}
\end{equation}
Consequently the nonlinear population flow achieves weak recovery in time
\(O_{\phi,K}(d^{K/2-1}\upsilon_{K,d}) \), conditional on the favorable basin and sign pattern.
\end{theorem}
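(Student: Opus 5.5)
The plan is to reduce everything to Theorem~\ref{thm:propagation-mixed-incoherence} together with a scalar comparison ODE, exactly as in the weak-recovery part of Theorem~\ref{thm:general-k-pop-flow}. First, I fix $\varepsilon>0$ small enough for Proposition~\ref{prop:differentiated-edgeworth}, so that \eqref{eq:A-comparable-M} holds on $\{\norm{m}_\infty\le\varepsilon\}$. I then take $B$ large relative to $B_0:=B/4$, as produced by Theorem~\ref{thm:propagation-mixed-incoherence}, and choose $P=P(K)$ large enough to cover all mixed moments in the order-$M$ comparison with $M\ge K-1$.

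The pure-teacher relative bounds \eqref{eq:pure-teacher-relative} are assumed as part of the initialization. Under the favorable basin $m_j(0)\ge a d^{-1/2}$, they are consistent with the planted power-sum event, since $|S_j^{p,0}|\lesssim d^{-\omega_p/2}$ is at most $\tau_p(d)\,a d^{-1/2}$ up to constants. Invoking Theorem~\ref{thm:propagation-mixed-incoherence} then yields \eqref{eq:nl-weak-propagated-incoherence} and the drift bound $\dot m_i\ge cM_{-i}$, i.e.\ \eqref{eq:nl-weak-positive-drift}, on $[0,\tau_\varepsilon]$. One subtlety needs care here. The propagation theorem stops at the first time \emph{all} overlaps reach $\varepsilon$, but Proposition~\ref{prop:differentiated-edgeworth} requires every $m_j\le\varepsilon$. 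I will handle this by a stopping time at the first exit of any $m_j$ from $[a/(2\sqrt d),\varepsilon']$ with $\varepsilon'>\varepsilon$ still small. Past that exit, coordinates that have already exceeded $\varepsilon$ are only required to keep $\dot m_j\ge0$, using $A_j(m)(1-m_j^2)$ with $A_j>0$ for small overlaps. I expect this to be the main obstacle: the asymmetry between fast and slow coordinates, together with the need to keep $m_j$ below the threshold where $B_j(m)$ may lose its lower bound.

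With the drift inequality in hand, set $m_*(t)=\min_j m_j(t)$. Whenever $m_*<\varepsilon$ we have $M_{-j}\ge m_*^{K-1}$, so the Dini derivative satisfies $D^+m_*\ge c\,m_*^{K-1}$. For $K\ge3$, integrating $\frac{d}{dt}m_*^{-(K-2)}\le -c(K-2)$ from $m_*(0)\ge a d^{-1/2}$ gives $\tau_\varepsilon\le \frac{1}{c(K-2)}(\sqrt d/a)^{K-2}=Cd^{K/2-1}$. For $K=2$, $\log m_*$ grows at rate at least $c$, which gives $\tau_\varepsilon\le c^{-1}\log(\varepsilon\sqrt d/a)=O(\log d)$. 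These are the two cases of \eqref{eq:nl-weak-time}.

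Finally, Corollary~\ref{cor:product-scale-along-flow} and Lemma~\ref{lem:linear-product-drift-estimate} serve as a consistency check rather than a needed step. Along the flow, $F^{\rm nl}$ and $F^{\rm lin}$ differ by $O((\varepsilon+d^{-1/2})M_{-i})$, so the nonlinear drift matches the linear one up to a small relative error. Taking $\varepsilon$ small would therefore also recover \eqref{eq:nl-weak-positive-drift} directly from \eqref{eq:linear-product-drift}. In either route, the constants $a,B,\varepsilon,c,C,d_0$ depend only on $\phi$, $K$ and the direction-event confidence.
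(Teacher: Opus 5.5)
Apart from one point, this is the paper's argument: a bootstrap on the favorable incoherent region that yields $\dot m_i\ge cM_{-i}$, followed by the Dini comparison for $m_*$ with exactly your two time bounds. The paper obtains the drift from Lemma~\ref{lem:linear-product-drift-estimate} and Corollary~\ref{cor:product-scale-along-flow}, and calls Theorem~\ref{thm:propagation-mixed-incoherence} only to rule out the incoherence exit. Reading the drift directly off \eqref{eq:propagated-positive-drift}, as you do, is legitimate, so the linear comparison is indeed optional.

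The point you leave open is a genuine hole, and the paper's proof has the same hole. It asserts $m_i(t)\in[a/(2\sqrt d),\varepsilon]$ on $[0,\tau]$, although the stopping rule only controls $\min_i m_i$. Theorem~\ref{thm:propagation-mixed-incoherence} uses $m_j\le\varepsilon$ in the same way, so invoking it does not settle the issue.

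Your stopping time at $\max_j m_j\ge\varepsilon'$ is the right first move, but the continuation aims at the wrong obstruction. $B_j$ never loses its lower bound: Lemma~\ref{lem:Bi-positive-macroscopic} gives $B_j\ge b_{\phi,K}>0$ on all of $[0,1]^K$. What actually fails above the threshold is twofold: Proposition~\ref{prop:differentiated-edgeworth} is only stated for $\norm{m}_\infty\le\varepsilon$, and the leading drift carries the factor $1-m_i^2$. So settling for $\dot m_j\ge0$ after an exit cannot give \eqref{eq:nl-weak-positive-drift}.

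In fact, without an upper bound on the initial overlaps that bound fails outright. The choice $\tilde\theta_1(0)=\theta_1$ satisfies every hypothesis, yet $\dot m_1(0)=\ip{F_1^{\rm nl}}{\theta_1}=0$ because $F_1^{\rm nl}$ is tangent at $\theta_1$.

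Instead, add the assumption $\max_j m_j(0)\le\varepsilon$, which holds automatically at random initialization. Apply Proposition~\ref{prop:differentiated-edgeworth} at radius $2\varepsilon$, and show that the exit $\max_j m_j\ge 2\varepsilon$ never occurs before $\tau_\varepsilon$. With $\Pi:=\prod_\ell m_\ell$ one has $m_jA_j=\Pi B_j$, so \eqref{eq:E-overlap-bound} and \eqref{eq:small-overlap-Bi} give, for $\varepsilon$ small and $d$ large,
\[
  \Bigl|\tfrac{d}{dt}\bigl(m_j^2-m_k^2\bigr)\Bigr|\le C\bigl(\varepsilon+d^{-1/2}\bigr)\Pi,
  \qquad
  \tfrac{d}{dt}m_j^2\ge\lambda^2\Pi .
\]
Hence $\int_0^\tau\Pi\,dt\le 4\varepsilon^2/\lambda^2$, and up to $\tau_\varepsilon$
\[
  \max_j m_j^2\le m_*^2+\varepsilon^2+C\varepsilon^3<4\varepsilon^2 .
\]
This closes your bootstrap, and the same device repairs the corresponding step in the proof of Theorem~\ref{thm:propagation-mixed-incoherence}.
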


\begin{proof}
The proof is a stopping-time comparison with the rescaled linear flow.  Let
$$
  \tau:=\inf\Bigl\{t\ge0:
  \min_i m_i(t)\ge\varepsilon
  \text{ or }
  \min_i m_i(t)\le \frac{a}{2\sqrt d}
  \text{ or }
  (\theta,{\tilde{\theta}}(t))\notin\mathcal J_{P,B}
  \Bigr\}.
$$
On \([0,\tau]\), by choosing \(\varepsilon\le1/2\), we have
$$
  m_i(t)\in\left[\frac{a}{2\sqrt d},\varepsilon\right],
  \qquad
  (\theta,{\tilde{\theta}}(t))\in\mathcal J_{P,B}.
$$
Theorem~\ref{thm:propagation-mixed-incoherence} gives propagation of the mixed
moments once we have the positive product drift; conversely, the gradient comparison is valid on the mixed-incoherent bootstrap region.  We now verify the
drift bound.

For each \(i\), write
$$
  \dot m_i(t)=\ip{F_i^{\rm nl}({\tilde{\theta}}(t))}{\theta_i}.
$$
We now insert the rescaled linear vector field:
\begin{align}
  \dot m_i(t)
  &=\ip{F_i^{\rm lin}({\tilde{\theta}}(t))}{\theta_i}
    +\ip{F_i^{\rm nl}({\tilde{\theta}}(t))-F_i^{\rm lin}({\tilde{\theta}}(t))}{\theta_i}.
  \label{eq:nl-drift-linear-plus-error}
\end{align}
By Lemma~\ref{lem:linear-product-drift-estimate},
\begin{equation}
  \ip{F_i^{\rm lin}({\tilde{\theta}}(t))}{\theta_i}
  \ge
  {\lambda^2}M_{-i}(t)(1-m_i(t)^2)
  -C d^{-1/2}M_{-i}(t).
  \label{eq:linear-drift-lower-proof}
\end{equation}
On the bootstrap region, Corollary~\ref{cor:product-scale-along-flow}, or
Theorem~\ref{thm:product-scale-comparison} applied pointwise, gives
\begin{equation}
  \norm{F_i^{\rm nl}({\tilde{\theta}}(t))-F_i^{\rm lin}({\tilde{\theta}}(t))}_2
  \le
  C\left(\norm{m(t)}_\infty+d^{-1/2}\right)M_{-i}(t).
  \label{eq:nl-linear-product-error-proof}
\end{equation}
Thus
\begin{equation}
  \dot m_i(t)
  \ge
  {\lambda^2}M_{-i}(t)(1-m_i(t)^2)
  -C\left(\varepsilon+d^{-1/2}\right)M_{-i}(t).
  \label{eq:nl-drift-lower-before-choice}
\end{equation}
Choose \(\varepsilon>0\) sufficiently small so that
$$
  1-\varepsilon^2\ge \frac34,
  \qquad
  C\varepsilon\le \frac14{\lambda^2},
$$
and then choose \(d\) sufficiently large so that
\(Cd^{-1/2}\le{\lambda^2}/4\).  Then
\begin{equation}
  \dot m_i(t)
  \ge c M_{-i}(t),
  \qquad
  c:=\frac14{\lambda^2}>0,
  \label{eq:nl-drift-lower-final}
\end{equation}
for all \(t\le\tau\).  Hence the lower-overlap stopping alternative cannot
occur.  The mixed-incoherence stopping alternative cannot occur by
Theorem~\ref{thm:propagation-mixed-incoherence}.  Therefore
\(\tau=\tau_\varepsilon\), and \eqref{eq:nl-weak-propagated-incoherence} and
\eqref{eq:nl-weak-positive-drift} hold up to weak recovery.

It remains to estimate the hitting time.  Let
$$
  m_*(t):=\min_{1\le i\le K}m_i(t).
$$
The lower Dini derivative satisfies
$$
  D^+m_*(t)\ge c m_*(t)^{K-1},
  \qquad 0\le t\le\tau_\varepsilon.
$$
For \(K\ge3\), this implies
$$
  \frac{\mathrm d}{\mathrm dt}m_*(t)^{-(K-2)}
  \le -c(K-2).
$$
Using \(m_*(0)\ge a d^{-1/2}\), we obtain
$$
  m_*(t)^{-(K-2)}
  \le
  \left(\frac{\sqrt d}{a}\right)^{K-2}
  -c(K-2)t.
$$
Therefore \(m_*(t)\) reaches the fixed level \(\varepsilon\) by time
$$
  T_\varepsilon
  \le
  \frac{1}{c(K-2)}
  \left[
    \left(\frac{\sqrt d}{a}\right)^{K-2}-\varepsilon^{-(K-2)}
  \right]
  \le C d^{(K-2)/2}.
$$
Finally, for $K=2$ we have $m_*(t) \geq m_*(0) \exp( ct) \geq a d^{-1/2} \exp(c t)$, and thus $m_*(t)$ reaches level $\varepsilon$ by time $T_\varepsilon \leq \frac{1}{2c} \log\left(\frac{d \varepsilon^2}{a^2} \right) = O( \log d)$.
This proves \eqref{eq:nl-weak-time} and completes the proof.
\end{proof}

\begin{corollary}[Constant-probability initialization]
\label{cor:nl-random-init-weak}
In the setting of Theorem~\ref{thm:nonlinear-correlation-weak-recovery}, suppose
\[{\tilde{\theta}}_1(0),\ldots,{\tilde{\theta}}_K(0)\stackrel{\rm iid}{\sim}\Unif(S^{d-1})\]
independently of the planted directions.  Conditional on the correct sign
pattern and on the planted-direction event, there exists a constant
\(p_0=p_0(a,B,K)>0\), independent of \(d\), such that the initialization
conditions \eqref{eq:nl-weak-init-overlap}--\eqref{eq:nl-weak-init-incoherence}
hold with probability at least \(p_0\).  On this event, the nonlinear
correlation flow weakly recovers all directions in time
$$
  O_{\phi,K}\bigl(d^{K/2-1} \upsilon_{K,d}\bigr).
$$
\end{corollary}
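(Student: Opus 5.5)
The plan is to reduce Corollary~\ref{cor:nl-random-init-weak} to Theorem~\ref{thm:nonlinear-correlation-weak-recovery}. It suffices to show that, for iid uniform initialization independent of $\theta$, conditionally on the planted-direction event and on the correct sign pattern, the three initialization requirements hold simultaneously with probability bounded below by a $d$-independent constant. These requirements are \eqref{eq:nl-weak-init-overlap}, membership in $\mathcal J_{P,B/4}$, and the pure-teacher relative bound \eqref{eq:pure-teacher-relative}. The time bound then follows verbatim from \eqref{eq:nl-weak-time}.

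\textbf{Overlap condition.} We write $\tilde\theta_j(0)=g_j/\|g_j\|_2$ with $g_j\sim N(0,I_d)$ independent of $\theta$. Then $\sqrt d\,m_j(0)$ is asymptotically $N(0,1)$, and the $K$ modes are independent. Hence $\PP(\sqrt d\, m_j(0)\in[a,A]\ \forall j)\ge p_1(a,A,K)>0$, exactly as in Corollary~\ref{cor:random-init}. We deliberately also impose the upper bound $m_j(0)\le A d^{-1/2}$, since it is needed below.

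\textbf{Mixed incoherence.} The coordinate bound $\|\tilde\theta_j\|_\infty\le B\iota_d$ holds with probability $1-o(1)$. For $S_j^{p,q}=\sum_a\theta_j(a)^p\tilde\theta_j(a)^q$ with $3\le p+q\le P$, we condition on $\theta$ and bound the numerator $\sum_a\theta_j(a)^pg_j(a)^q$.
\begin{itemize}
\item If $q$ is odd, the sum is centered, and its variance is $\sum_a\theta_j(a)^{2p}\,\E G^{2q}\lesssim d^{1-p}$. Dividing by $\|g\|^q\asymp d^{q/2}$ gives a size $d^{(1-p-q)/2}=d^{(1-n)/2}$, where $n=p+q$.
\item If $q$ is even, the mean term $\E G^q\,P_{j,p}$ is controlled by the power-sum event of Lemma~\ref{lem:appendix-spherical-event}, and the fluctuation is handled as in the odd case.
\end{itemize}
In both cases we must compare with $\tau_n(d)m_j(0)\asymp d^{-\omega_n-1/2}=d^{-\lfloor (n-1)/2\rfloor}$. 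We check that the typical size of $S^{p,q}$ is at most $d^{-\lfloor(n-1)/2\rfloor}$ in every parity case, possibly up to constants. For example, $n$ even and $q$ even gives mean of order $d^{-(n-2)/2}$, which matches. The only delicate case is when $p$ is odd, $q$ is even and $p+q$ is odd. Then the mean $P_{j,p}\,d^{-q/2}\sim d^{-(p-1)/2-q/2}=d^{-(n-1)/2}$, which is fine. Markov and hypercontractivity give each bound with probability $\ge 1-\delta'$ for a large constant $B$, and a union bound over the finitely many $(j,p,q)$ costs only a constant. The pure-teacher bound \eqref{eq:pure-teacher-relative} is deterministic given $\theta$: $|P_{j,p}|\lesssim d^{-\omega_p/2}$, which is at most $\tau_p(d)\,a d^{-1/2}$ by the same exponent bookkeeping.

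\textbf{Combining and main obstacle.} The overlap event has probability $\ge p_1$, and the incoherence events have probability $\ge 1-\delta'$. Choosing $B$ large enough that $\delta'\le p_1/2$ gives $p_0=p_1/2$, conditioned on the sign pattern (probability $2^{-K}$, absorbed in the conditioning). The main obstacle is the dependence between the overlap event and the mixed moments: both are functions of the same $g_j$. Rather than assume independence, I use the inequality $\PP(E\cap F)\ge\PP(E)-\PP(F^c)$, so that only a small failure probability for $F$ is needed. This avoids any independence claim. The remaining care is the exponent check that the random-initialization scales sit exactly at $\tau_n(d)m_j(0)$, and this is precisely what the definition \eqref{eq:relative-scale-tau} was designed for.
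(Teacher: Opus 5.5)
Your proposal is correct and follows the same route as the paper's proof: a constant-probability overlap event, a high-probability coordinate envelope, and mixed moments at their natural spherical scales. It is more careful than the paper, since your exponent check against $\tau_n(d)m_j(0)\gtrsim d^{-\lfloor (n-1)/2\rfloor}$ and your use of $\PP(E\cap F)\ge\PP(E)-\PP(F^c)$ with $B$ large actually justify the intersection step, which the paper simply asserts for two constant-probability events. Two minor remarks: the upper bound $m_j(0)\le A d^{-1/2}$ is unnecessary, because the constraints of $\mathcal J_{P,B/4}$ only relax as $m_j$ grows; and for $K=2$ the bound \eqref{eq:nl-weak-time} gives $O(\log d)=O(d^{K/2-1}\upsilon_{K,d}^{-1})$, so the factor $\upsilon_{K,d}$ in the stated time should be read as $\upsilon_{K,d}^{-1}$.
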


\begin{proof}
For independent spherical vectors,
\(\sqrt d\,\ip{{\tilde{\theta}}_i(0)}{\theta_i}\) converges in distribution to a standard
normal.  Hence, after conditioning on the correct signs, the event
\(m_i(0)\ge a d^{-1/2}\) for all \(i\) has probability bounded below by a
positive constant depending only on \(a\) and \(K\).  The coordinate envelope
\(\norm{{\tilde{\theta}}_i(0)}_\infty\le (B/4)\iota_d\) holds with probability tending to
one for \(B\) large enough.  The signed mixed moments defining
\(\mathcal J_{P,B/4}\) are finite in number and have their natural spherical
scales with constant probability when the two vectors are independent.  Thus the
intersection of the favorable-overlap event, the coordinate-envelope event, and
the finite mixed-moment event has probability bounded below by a positive
constant.  Taking \(B\) large enough yields the claim.
\end{proof}

\begin{remark}[Role of the comparison]
The theorem uses two facts.  First, the rescaled linear flow has positive drift
of product size \(M_{-i}\).  Second, along the nonlinear flow, propagation of
mixed incoherence makes the vector-field difference
\(F_i^{\rm nl}-F_i^{\rm lin}\) a relative error of size
\(O(\|m\|_\infty+d^{-1/2})M_{-i}\).  By stopping the weak-recovery phase at a
small fixed \(\varepsilon\), the relative error is absorbed into the leading rank-one drift.
\end{remark}

\subsection{Strong Recovery}
\label{sec:nl-correlation-strong-recovery}

The weak-recovery argument in \Cref{sec:nonlinear-weak-recovery-transfer}
stops at a fixed small overlap because it uses the small-overlap gradient
comparison \eqref{eq:small-overlap-product-scale-along-flow}.  Once the
trajectory has entered a macroscopic positive-overlap basin, the correct
leading coefficient is no longer the constant \({\lambda^2}\), but the exact
Gaussian coefficient \(A_j(m)\) from \eqref{eq:Aj-general}.  The factorization
\eqref{eq:Ai-product-factorization}--\eqref{eq:Bi-def} shows that this
coefficient still has the product form needed for the logistic strong-phase argument we already used in the linear setting.
We will use the same mixed-incoherence class
\(\mathcal J_{P,B}\) from \Cref{def:dynamic-mixed-incoherence}, and show that it can still be propagated in the macroscopic overlap regime.

\begin{lemma}[Positivity of the macroscopic Gaussian coefficient]
\label{lem:Bi-positive-macroscopic}
Let \(m_i\in[0,1]\) for every
\(i\), and let \(B_i(m)\) be the coefficient in \eqref{eq:Bi-def}.  Then there
are constants \(0<b_{\phi,K}\le C_{\phi,K}<\infty\), depending only on
\(\phi,K\), such that
\begin{equation}
  b_{\phi,K}\le B_i(m)\le C_{\phi,K},
  \qquad i=1,\ldots,K .
  \label{eq:Bi-macroscopic-bounds}
\end{equation}
Consequently, if \(\min_i m_i\ge\eps_0>0\), then
\begin{equation}
  0<c_{\phi,K,\eps_0}\le A_i(m)=M_{-i}(m)B_i(m)
  \le C_{\phi,K},
  \qquad i=1,\ldots,K .
  \label{eq:Ai-macroscopic-bounds}
\end{equation}
\end{lemma}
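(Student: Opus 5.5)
The plan is to analyse each factor of \(B_i(m)\) in \eqref{eq:Bi-def} separately on the compact cube \(m\in[0,1]^K\). The argument uses the Hermite expansion \(\Psi(q)=\sum_{\ell\ge1}a_\ell^2q^\ell\) from \eqref{eq:Psi-Hermite}, with nonnegative coefficients, \(a_1^2=\kappa^2>0\) and \(\Psi(1)=\E\phi(G)^2=1\).

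First I establish the range of the layer correlations. By induction on \(r\), if \(m_r\in[0,1]\) and \(q_{r-1}\in[0,1]\), then \(m_rq_{r-1}\in[0,1]\), and hence \(q_r=\Psi(m_rq_{r-1})\in[0,1]\), because \(\Psi\) is nondecreasing on \([0,1]\) with \(\Psi(0)=0\) and \(\Psi(1)=1\). So every argument \(x=m_\ell q_{\ell-1}\) entering \eqref{eq:Bi-def} lies in \([0,1]\).

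Next I bound the two kinds of factors uniformly for \(x\in[0,1]\). For \(H(x)=\Psi(x)/x=\sum_{\ell\ge1}a_\ell^2x^{\ell-1}\), nonnegativity of the coefficients gives \(\kappa^2\le H(x)\le H(1)=1\). For \(\Psi'(x)=\sum_\ell \ell a_\ell^2x^{\ell-1}\), the same argument gives \(\kappa^2\le\Psi'(x)\le\Psi'(1)=\sum_\ell\ell a_\ell^2\). The upper bound needs \(\Psi'(1)<\infty\), i.e. the Gaussian Dirichlet energy \(\E\phi'(G)^2=\sum_\ell \ell a_\ell^2\), which is finite since \(\phi\) is globally Lipschitz (Assumption~\ref{ass:edgeworth-regularity}). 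Since \(B_i\) is a product of exactly \(K\) such factors, this yields
\[
  \kappa^{2K}\le B_i(m)\le \max(1,\E\phi'(G)^2)^K ,
\]
which is \eqref{eq:Bi-macroscopic-bounds} with \(b_{\phi,K}=\lambda^2\). For \eqref{eq:Ai-macroscopic-bounds} I then use the factorization \eqref{eq:Ai-product-factorization}: \(M_{-i}(m)\in[\eps_0^{K-1},1]\) whenever \(\min_i m_i\ge\eps_0\) and \(m_i\le1\), and multiplying by the bounds on \(B_i\) gives \(c_{\phi,K,\eps_0}=\eps_0^{K-1}\lambda^2\).

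The only delicate point is justifying the power series at the endpoint \(x=1\), specifically the termwise differentiation giving \(\Psi'\). I would handle this by monotone convergence, since all terms are nonnegative on \([0,1]\); alternatively one can use the Mehler identity \(\Psi'(q)=\E[\phi'(G_1)\phi'(G_2)]\), which is bounded by Cauchy--Schwarz. With either route the argument is elementary.
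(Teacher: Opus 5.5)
Your proof is correct and follows essentially the same route as the paper. Both use the recursion to keep every argument $m_\ell q_{\ell-1}$ in $[0,1]$, bound each of the $K$ factors $H$ and $\Psi'$ below by $\kappa^2$ via the nonnegative Hermite coefficients and above on $[0,1]$, then conclude with $A_i=M_{-i}B_i$. Your version is slightly more explicit than the paper's, which just invokes continuity of $H$ and $\Psi'$ on $[0,1]$: you give $b_{\phi,K}=\kappa^{2K}$ and the upper constant $\max(1,\E\phi'(G)^2)^K$, and you justify $\Psi'(1)=\E\phi'(G)^2<\infty$ at the endpoint.
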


\begin{proof}
By \eqref{eq:Psi-Hermite},
$$
  \Psi(q)=\sum_{r\ge1}a_r^2q^r,
  \qquad a_1=\kappa\ne0 .
$$
For \(q\in[0,1]\),
$$
  H(q)=\frac{\Psi(q)}{q}=\sum_{r\ge1}a_r^2q^{r-1}\ge\kappa^2,
  \qquad
  \Psi'(q)=\sum_{r\ge1}r a_r^2q^{r-1}\ge\kappa^2,
$$
with the continuous interpretation \(H(0)=\kappa^2\).  The recursion
\eqref{eq:q-recursion} keeps \(q_r\in[0,1]\) whenever all scalar overlaps lie in
\([0,1]\).  Thus every factor in \eqref{eq:Bi-def} is bounded below by
\(\kappa^2\).  The upper bound follows from continuity of \(H\) and \(\Psi'\)
on \([0,1]\).  The estimate for \(A_i=M_{-i}B_i\) follows from
\eqref{eq:Ai-product-factorization}.
\end{proof}

We now define the restriction of the mixed-incoherence class (cf Definition \ref{def:dynamic-mixed-incoherence}) to macroscropic overlaps.
\begin{definition}[Macroscopic weak mixed-incoherence class]
\label{def:macroscopic-weak-mixed-incoherence}
For fixed \(P\ge3\), \(B<\infty\), and \(\eps_0>0\), write
\begin{equation}
  \mathcal C^{\rm nl}_{P,B}(\eps_0)
  :=
  \left\{(\theta,{\tilde{\theta}}):\min_{j\le K}m_j\ge\eps_0,
  \; (\theta,{\tilde{\theta}})\in\mathcal J_{P,B}\right\}.
  \label{eq:macroscopic-weak-class}
\end{equation}
\end{definition}

\begin{lemma}[Macroscopic nonlinear vector-field estimates under \(\mathcal J_{P,B}\)]
\label{lem:nl-naive-J-vector-field}
Fix \(P\ge3\), \(\eps_0\in(0,1)\), and \(B<\infty\).  Assume the hypotheses of
\Cref{prop:general-edgeworth-inputs}, and work on the same planted-direction
incoherence event.  For all sufficiently large \(d\), uniformly over
\((\theta,{\tilde{\theta}})\in\mathcal C^{\rm nl}_{P,B}(\eps_0)\), the nonlinear
correlation vector field admits the decomposition
\begin{equation}
  F_j^{\rm nl}({\tilde{\theta}})
  =A_j(m)(\theta_j-m_j{\tilde{\theta}}_j)+E_j({\tilde{\theta}}),
  \qquad j=1,\ldots,K .
  \label{eq:nl-naive-J-field-decomposition}
\end{equation}
For every \(j\le K\),
\begin{align}
  \dot m_j
  &=
  \bigl(A_j(m)+\rho_j^{\rm mult}\bigr)(1-m_j^2)+\rho_j^{\rm add},
  \label{eq:nl-naive-J-m-decomposition}\\
  |\rho_j^{\rm mult}|+|\rho_j^{\rm add}|
  &\le C_{\phi,K,P,B,\eps_0}d^{-1/2},
  \label{eq:nl-naive-J-errors}\\
  \norm{E_j({\tilde{\theta}})}_\infty
  &\le C_{\phi,K,P,B,\eps_0}\,\iota_d .
  \label{eq:nl-naive-J-coordinate-error}
\end{align}
Finally, for all \(p,q\ge0\) with \(q\ge1\) and \(3\le n:=p+q\le P\),
\begin{equation}
  \left|
  q\sum_{a=1}^d\theta_j(a)^p{\tilde{\theta}}_j(a)^{q-1}E_j({\tilde{\theta}})_a
  \right|
  \le C_{\phi,K,P,B,\eps_0}\tau_n(d)m_j .
  \label{eq:nl-naive-J-mixed-error}
\end{equation}
Consequently, with \(\Delta_j:=1-m_j^2\),
\begin{equation}
  \dot\Delta_j
  \le -c_0\Delta_j+C_0d^{-1/2},
  \label{eq:nl-naive-J-delta-ineq}
\end{equation}
for constants \(c_0,C_0>0\) depending only on
\(\phi,K,P,B,\eps_0\).
\end{lemma}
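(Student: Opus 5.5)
The plan is to rerun the expansion behind \Cref{prop:general-edgeworth-inputs} and \Cref{prop:differentiated-edgeworth}, now in the macroscopic basin, where the overlaps $m_j$ are bounded below by $\eps_0$. The decomposition \eqref{eq:nl-naive-J-field-decomposition} is then simply a definition: $E_j$ is whatever remains of $F_j^{\rm nl}$ after subtracting the spherically projected Gaussian-proxy gradient $A_j(m)(\theta_j-m_j{\tilde{\theta}}_j)$. By \eqref{eq:gaussian-proxy-vector-field} this subtracted term is exactly $\nabla^S_{{\tilde{\theta}}_j}\mathcal L_G$. By \Cref{prop:general-edgeworth-inputs}, $E_j$ is the projection of $\sum_{q\ge1}\widetilde V_j^{[q]}+\widetilde R_{j,M+1}$. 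That proposition is valid for arbitrary scalar overlaps, with smooth coefficients $c_\tau(s)$ that are bounded on $[0,1]^K$. In the macroscopic regime we no longer need to compare errors to the product scale $M_{-j}$, since $M_{-j}\ge\eps_0^{K-1}$. All error bounds therefore only have to be absolute: $O(d^{-1/2})$, $O(\iota_d)$, and $O(\tau_n m_j)$. Moreover $m_j\ge\eps_0$ lets us trade relative bounds for absolute ones at the cost of constants depending on $\eps_0$.

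\textbf{Step 1 (overlap equation).} Take the inner product of \eqref{eq:nl-naive-J-field-decomposition} with $\theta_j$. The leading term is $A_j(m)(1-m_j^2)$. We then sort the atoms of $E_j$ according to the mode-$j$ vector, exactly as in the proof of \Cref{lem:strong-basin-vector-field}.
\begin{itemize}
\item[(a)] If the mode-$j$ vector is $\theta_j$ or ${\tilde{\theta}}_j$, the atom carries a scalar factor of positive formal order, hence of size $O(d^{-1/2})$ by \eqref{eq:nonlinear-q-bound-general}. After projection, its pairing with $\theta_j$ is $c\,(1-m_j^2)$ when the vector is $\theta_j$ and $0$ when it is ${\tilde{\theta}}_j$. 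These contributions go into $\rho_j^{\rm mult}$.
\item[(b)] If the mode-$j$ vector is a mixed-power vector $v_j^{p,q}$ with $p+q\ge2$, then $\langle v_j^{p,q},\theta_j\rangle-m_j\langle v_j^{p,q},{\tilde{\theta}}_j\rangle$ is a combination of $S_j^{p+1,q}$ and $S_j^{p,q+1}$. By $\mathcal J_{P,B}$ (using $m_j\le1$) and the teacher power-sum event, this is $O(d^{-1/2})$. These contributions go into $\rho_j^{\rm add}$.
\end{itemize}
The residual is $O(d^{-(M+1)/2})$, which is also absorbed into $\rho_j^{\rm add}$. This yields \eqref{eq:nl-naive-J-m-decomposition}--\eqref{eq:nl-naive-J-errors}.

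\textbf{Step 2 (coordinate and mixed bounds).} The coordinate bound \eqref{eq:nl-naive-J-coordinate-error} follows atom by atom. Each mode-$j$ vector $\theta_j^{\od p}\od{\tilde{\theta}}_j^{\od q}$ has entries at most $(B\iota_d)^{p+q}\le B\iota_d$. Its scalar coefficients are bounded uniformly on $\mathcal C^{\rm nl}_{P,B}(\eps_0)$. The residual is handled via $\|\cdot\|_\infty\le\|\cdot\|_2$.

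For \eqref{eq:nl-naive-J-mixed-error}, contract each atom against $q\,\theta_j^{\od p}\od{\tilde{\theta}}_j^{\od(q-1)}$, including the $-m_j{\tilde{\theta}}_j\langle{\tilde{\theta}}_j,\cdot\rangle$ part of the projection. This produces a signed mixed moment $S_j^{p+p',\,q-1+q'}$ of total degree $\ge n$. We bound it by $B\tau_{n'}m_j\le B\tau_n m_j$, using monotonicity of $\tau$ in the degree, times bounded scalar factors. If the atom has a leading vector $\theta_j$, we instead get $S_j^{p+1,q-1}$ with a scalar coefficient of order $d^{-1/2}$, which is even smaller. The residual contributes $O(d^{-(M+1)/2}\|v\|_2)$. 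This is $\le\tau_n m_j$ once $M$ is large, because $m_j\ge\eps_0$.

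\textbf{Step 3 (contraction).} We have $\dot\Delta_j=-2m_j\dot m_j$. By \Cref{lem:Bi-positive-macroscopic}, $A_j(m)\ge c_{\phi,K,\eps_0}$, and for large $d$ this dominates $|\rho_j^{\rm mult}|$. Hence
\[
\dot\Delta_j\le -2\eps_0\,\tfrac{c}{2}\,\Delta_j+2|\rho_j^{\rm add}|,
\]
which is \eqref{eq:nl-naive-J-delta-ineq}.

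The main obstacle is case (b) of Step 1 and the analogous terms in Step 2. There we must verify that, at general macroscopic overlaps, the projection terms $m_j\langle v,{\tilde{\theta}}_j\rangle$ still reduce to controlled moments, and that the $s$-dependent coefficients from the differentiated Edgeworth expansion remain bounded uniformly in $s\in[\eps_0,1]^K$. I would handle the latter through the smoothness of $\Psi,H$ on $[0,1]$ and the compactness of this parameter set.
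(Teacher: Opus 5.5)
Your proposal is correct and takes essentially the same route as the paper. Like the paper, you expand the gradient via \Cref{prop:general-edgeworth-inputs} and take the projected Gaussian-proxy term as the leading part. You then sort the first-level atoms by their mode-$j$ vector: degree-three brackets such as $S_j^{2,1}-m_jS_j^{1,2}$ are controlled by $\mathcal J_{P,B}$, atoms along $\theta_j$ carry an $O(d^{-1/2})$ scalar coefficient, and atoms along ${\tilde{\theta}}_j$ are killed by the projection. Finally you bound the coordinate and mixed contractions atom by atom and close with \Cref{lem:Bi-positive-macroscopic}.

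Two slips are harmless. First, the projection correction in Step 2 is $-{\tilde{\theta}}_j\langle{\tilde{\theta}}_j,\cdot\rangle$, not $-m_j{\tilde{\theta}}_j\langle{\tilde{\theta}}_j,\cdot\rangle$; it contributes $-qS_j^{p,q}\langle{\tilde{\theta}}_j,v\rangle=O(\tau_n m_j)$, so the bound is unaffected. Second, moments of total degree $n'>P$ are not covered by $\mathcal J_{P,B}$, but the coordinate envelope bounds them by $(B\iota_d)^{n'-2}$, which suffices here because $m_j\ge\eps_0$.
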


\begin{proof}
We leverage the nonlinear general-overlap Edgeworth hierarchy from
\Cref{prop:general-edgeworth-inputs}.  By
\eqref{eq:nonlinear-general-expansion}, with \(M\ge P\) fixed,
\begin{equation}
  \grad_{{\tilde{\theta}}_j}{\mathcal{L}}({\tilde{\theta}})
  = A_j(m)\theta_j+
  \widetilde V_j^{[1]}({\tilde{\theta}},\theta)+
  \sum_{r=2}^{M}\widetilde V_j^{[r]}({\tilde{\theta}},\theta)+
  \widetilde R_{j,M+1}({\tilde{\theta}},\theta).
  \label{eq:nl-naive-J-euclidean-expansion-proof}
\end{equation}
After spherical projection, the Gaussian part gives exactly $  A_j(m)(\theta_j-m_j{\tilde{\theta}}_j)$, which is the first term in \eqref{eq:nl-naive-J-field-decomposition}.

Consider the first nonlinear Edgeworth level, using
\eqref{eq:V-first-structure}.  The possible differentiated-mode vectors are
\(\theta_j^{\od2}\), \(\theta_j\od{\tilde{\theta}}_j\), \({\tilde{\theta}}_j^{\od2}\),
\(\theta_j\), and \({\tilde{\theta}}_j\), up to scalar coefficients depending smoothly on
macroscopic overlaps.  After spherical projection and contraction with
\(\theta_j\), the first three vector types produce linear combinations of
\begin{equation}
  S_j^{3,0}-m_jS_j^{2,1},
  \qquad
  S_j^{2,1}-m_jS_j^{1,2},
  \qquad
  S_j^{1,2}-m_jS_j^{0,3}.
  \label{eq:nl-naive-J-degree-three-brackets}
\end{equation}
On \(\mathcal C^{\rm nl}_{P,B}(\eps_0)\), each term in
\eqref{eq:nl-naive-J-degree-three-brackets} is bounded by
\(O_B(\tau_3(d)m_j)=O_{B,\eps_0}(d^{-1/2})\).  If the vector part is
\(\theta_j\), then the scalar coefficient has positive mixed Edgeworth order
and is again \(O(d^{-1/2})\), giving an \(O(d^{-1/2})(1-m_j^2)\)
multiplicative correction; if the vector part is \({\tilde{\theta}}_j\), the spherical
projection kills it.  This proves the overlap decomposition
\eqref{eq:nl-naive-J-m-decomposition}--\eqref{eq:nl-naive-J-errors} up to
higher-order terms.

The coordinate estimate \eqref{eq:nl-naive-J-coordinate-error} follows from the
same finite expansion: every differentiated-mode vector has coordinate size at
most a fixed power of the coordinate envelope, and every non-Gaussian scalar
coefficient has positive formal order.  Since the class
\(\mathcal C^{\rm nl}_{P,B}(\eps_0)\) already includes
\(\norm{{\tilde{\theta}}_j}_\infty+\norm{\theta_j}_\infty\lesssim\iota_d\), all finitely
many atoms are bounded by \(C\iota_d\) in \(\ell_\infty\).

For \eqref{eq:nl-naive-J-mixed-error}, we contract each non-Gaussian atom against
\(q\theta_j^{\od p}\od{\tilde{\theta}}_j^{\od(q-1)}\).  The resulting scalar is either a
signed mixed moment of total coordinate degree at least \(n=p+q\), or a lower
coordinate-degree moment multiplied by an additional positive-order scalar
Edgeworth coefficient.  In both cases, the bounds in \(\mathcal J_{P,B}\),
together with the fixed planted-direction power-sum event, give the scale
\(\tau_n(d)m_j\).  Summing the finite number of atoms gives
\eqref{eq:nl-naive-J-mixed-error}.  All higher nonlinear Edgeworth levels have
formal order at least two and are therefore smaller than the first-level
contribution by \eqref{eq:nonlinear-q-bound-general}; the remainder is smaller
by \eqref{eq:nonlinear-remainder-general}.

Finally, by \Cref{lem:Bi-positive-macroscopic}, \(A_j(m)\ge
c_{\phi,K,\eps_0}>0\) on the positive macroscopic branch.  Since
\(m_j\ge\eps_0\),
$$
  \dot\Delta_j=-2m_j\dot m_j
  \le -c_0\Delta_j+C_0d^{-1/2},
$$
which is \eqref{eq:nl-naive-J-delta-ineq}.
\end{proof}

\begin{proposition}[Macroscopic propagation and continuation to the \(d^{-1/2}\) floor]
\label{prop:nl-naive-J-strong-floor}
Let \({\tilde{\theta}}(t)\) solve the nonlinear correlation flow
\eqref{eq:nonlinear-flow-propagation}.  Suppose that, for some time \(t_0\),
\begin{equation}
  (\theta,{\tilde{\theta}}(t_0))\in\mathcal C^{\rm nl}_{P,B_0}(\eps_0)
  \label{eq:nl-naive-J-initial-assumption}
\end{equation}
for fixed \(P\ge3\), \(B_0<\infty\), and \(\eps_0>0\).  Then there exist
constants \(B\ge B_0\), \(c,C,d_0<\infty\), depending only on
\(\phi,K,P,B_0,\eps_0\), such that for all \(d\ge d_0\),
\begin{equation}
  (\theta,{\tilde{\theta}}(t))\in\mathcal C^{\rm nl}_{P,B}(\eps_0/2),
  \qquad t_0\le t\le t_0+C\log d .
  \label{eq:nl-naive-J-propagated-conclusion}
\end{equation}
Moreover, with
\begin{equation}
  \Delta_j(t):=1-m_j(t)^2,
  \qquad
  \Delta_*(t):=\max_{j\le K}\Delta_j(t),
  \label{eq:nl-naive-J-delta-def}
\end{equation}
one has, throughout the same interval,
\begin{equation}
  \Delta_*(t)
  \le
  e^{-c(t-t_0)}\Delta_*(t_0)+C d^{-1/2}.
  \label{eq:nl-naive-J-logistic-bound}
\end{equation}
In particular, after increasing the constant in the time interval if necessary,
\begin{equation}
  \min_{j\le K}m_j(t_0+C\log d)
  \ge 1-Cd^{-1/2}.
  \label{eq:nl-naive-J-overlap-floor}
\end{equation}
\end{proposition}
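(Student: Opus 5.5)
The argument is a stopping-time bootstrap on the interval $[t_0, t_0+C\log d]$, built on the vector-field estimates of Lemma~\ref{lem:nl-naive-J-vector-field}. Define
\[
  \tau:=\inf\bigl\{t\ge t_0:\ \min_j m_j(t)<\eps_0/2\ \text{or}\ (\theta,{\tilde{\theta}}(t))\notin\mathcal J_{P,B}\bigr\},
\]
with $B\ge 4B_0$ to be fixed later. On $[t_0,\tau\wedge(t_0+C\log d)]$ the state lies in $\mathcal C^{\rm nl}_{P,B}(\eps_0/2)$. Hence Lemma~\ref{lem:nl-naive-J-vector-field}, applied with $\eps_0/2$ in place of $\eps_0$, gives three inputs:
\begin{itemize}
\item the differential inequality \eqref{eq:nl-naive-J-delta-ineq} for each $\Delta_j$;
\item the coordinate error bound \eqref{eq:nl-naive-J-coordinate-error};
\item the mixed-moment error bound \eqref{eq:nl-naive-J-mixed-error}.
\end{itemize}
Gronwall applied to \eqref{eq:nl-naive-J-delta-ineq} immediately gives \eqref{eq:nl-naive-J-logistic-bound}, with $c=c_0$ and $C=C_0/c_0$.

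For the lower-overlap alternative, note that $\Delta_j(t_0)\le 1-\eps_0^2$. The Gronwall bound then keeps $\Delta_j(t)\le 1-\eps_0^2+Cd^{-1/2}<1-\eps_0^2/4$ for large $d$, so $m_j$ never drops below $\eps_0/2$. Choosing the time constant $\ge c_0^{-1}\log d$ gives $\Delta_*\le Cd^{-1/2}$ at the final time, and $1-m_j\le\Delta_j$ yields \eqref{eq:nl-naive-J-overlap-floor}.

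The remaining task is to show the $\mathcal J_{P,B}$ alternative cannot occur first. Unlike the weak phase, we no longer have the integrability $\int A_j\,dt\le C$. The interval has length $O(\log d)$ and $A_j\asymp 1$, so any naive Gronwall over this interval loses a factor $d^{C}$. The mixed moments must therefore be handled by a damping mechanism, not by integration of the drift.

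By Lemma~\ref{lem:leading-mixed-transport} together with \eqref{eq:nl-naive-J-mixed-error},
\[
  \dot S_j^{p,q}=qA_j(m)\bigl(S_j^{p+1,q-1}-m_jS_j^{p,q}\bigr)+O(\tau_n(d)m_j).
\]
The term $-qA_jm_jS_j^{p,q}$ is a genuine damping, since $A_jm_j\ge c_{\phi,K,\eps_0}>0$ by Lemma~\ref{lem:Bi-positive-macroscopic}. Inducting on $q$, with the source $S_j^{p+1,q-1}$ already bounded by $B'\tau_n m_j$ and $S_j^{p,0}$ constant and controlled by \eqref{eq:pure-teacher-relative}, the linear inequality
\[
  \tfrac{d}{dt}|S|\le -c|S|+C\tau_n
\]
gives a time-uniform bound $|S_j^{p,q}(t)|\le |S_j^{p,q}(t_0)|+C\tau_n$. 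Since $m_j\ge\eps_0/2$, this is $\le B\tau_n m_j$ for suitable $B$. The constants grow with $q$ but only finitely many pairs $(p,q)$ appear.

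The $\ell_\infty$ bound needs the same treatment. The coordinatewise equation is
\[
  \dot{\tilde{\theta}}_j(a)=A_j\theta_j(a)-A_jm_j{\tilde{\theta}}_j(a)+E_j(a).
\]
Here $|\theta_j(a)|\le C\iota_d$, the term $-A_jm_j{\tilde{\theta}}_j(a)$ damps at rate $\ge c$, and $|E_j(a)|\le C\iota_d$. This gives a uniform bound $\|{\tilde{\theta}}_j(t)\|_\infty\le\|{\tilde{\theta}}_j(t_0)\|_\infty+C\iota_d\le B\iota_d$.

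I expect this step to be the main obstacle: replacing integrability of the drift by the damping term $-A_jm_j(\cdot)$. One must check that the constant produced by \eqref{eq:nl-naive-J-mixed-error}, which depends on $B$, does not feed back in a way that prevents closing the bootstrap. I would handle this as follows:
\begin{itemize}
\item run the argument at fixed $B$ to get a bound $C_1(B)$;
\item observe that $C_1(B)$ enters only through the source term $C\tau_n/c$, whose constant depends on $B$ only via lower-order terms;
\item choose $B$ larger than $C_1(B)$, or else restrict to a sub-interval and iterate.
\end{itemize}
Once both the mixed-moment and $\ell_\infty$ bounds hold, the $\mathcal J_{P,B}$ stopping alternative cannot occur. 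This gives \eqref{eq:nl-naive-J-propagated-conclusion}.
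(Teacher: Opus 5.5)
Your proposal is correct and follows essentially the same route as the paper. Both arguments run a stopping-time bootstrap on $[t_0,t_0+C\log d]$: Gronwall applied to \eqref{eq:nl-naive-J-delta-ineq} gives the logistic bound and excludes the lower-overlap exit, and the damping term $-A_j(m)m_j(\cdot)$, rather than integrability of $A_j$, closes the coordinate envelope directly and the mixed-moment bounds by induction on the number $q$ of student powers. One minor correction: the $q=0$ case does not need \eqref{eq:pure-teacher-relative}, which is a hypothesis of Theorem~\ref{thm:propagation-mixed-incoherence} and not of this proposition, because the bound on the constant moment $S_j^{p,0}$ already follows from $(\theta,{\tilde{\theta}}(t_0))\in\mathcal J_{P,B_0}$ and $m_j(t)\ge\eps_0/2$; the $B$-feedback issue you raise is handled in the paper only by the same final enlargement of $B$, with no finer bookkeeping.
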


\begin{proof}
The proof is a macroscopic stopping-time bootstrap, analogous to the proof of
\Cref{thm:propagation-mixed-incoherence}, but with the damping in the triangular
transport kept rather than discarded.  Choose \(B\ge B_0\) large, set
\(T_d:=t_0+C_T\log d\), and define \(\tau\) as the first time in
\([t_0,T_d]\) at which either
$$
  \min_j m_j(t)\le \eps_0/2
  \quad\text{or}\quad
  (\theta,{\tilde{\theta}}(t))\notin\mathcal J_{P,B}.
$$
On \([t_0,\tau]\), the estimates of
\Cref{lem:nl-naive-J-vector-field} apply with \(\eps_0/2\).  The differential
inequality \eqref{eq:nl-naive-J-delta-ineq} gives, for each \(j\),
$$
  \Delta_j(t)
  \le e^{-c(t-t_0)}\Delta_j(t_0)+C d^{-1/2},
  \qquad t_0\le t\le\tau .
$$
Taking the maximum over \(j\) gives \eqref{eq:nl-naive-J-logistic-bound} on the
stopped interval.  This also rules out the lower-overlap exit for large \(d\):
if \(m_j\) reached \(\eps_0/2\), then \(\Delta_j=1-\eps_0^2/4\), whereas the
last display is at most \(1-\eps_0^2+o_d(1)\) at the first such time.

It remains to rule out exit from \(\mathcal J_{P,B}\).  For coordinates, using
\eqref{eq:nl-naive-J-field-decomposition} and
\eqref{eq:nl-naive-J-coordinate-error},
$$
  D^+|{\tilde{\theta}}_j(a)|
  \le -A_j(m)m_j|{\tilde{\theta}}_j(a)|+A_j(m)|\theta_j(a)|+C\iota_d
  \le -c|{\tilde{\theta}}_j(a)|+C\iota_d
$$
on the stopped interval, because \(A_j(m)m_j\ge c_{\phi,K,\eps_0}>0\) and
\(A_j(m)\le C_{\phi,K}\).  Hence
\(\norm{{\tilde{\theta}}_j(t)}_\infty\le C\iota_d\) on \([t_0,\tau]\), and increasing
\(B\) rules out coordinate-envelope exit.

For mixed moments, fix \(j\) and \(3\le n:=p+q\le P\).  If \(q=0\), then
\(S_j^{p,0}\) is fixed, and the initial bound together with
\(m_j(t)\ge\eps_0/2\) gives
\(|S_j^{p,0}|\le C_{B_0,\eps_0}\tau_p(d)m_j(t)\).  Now let \(q\ge1\).  Combining
\Cref{lem:leading-mixed-transport} with \eqref{eq:nl-naive-J-mixed-error} gives
\begin{equation}
  \dot S_j^{p,q}
  =qA_j(m)\bigl(S_j^{p+1,q-1}-m_jS_j^{p,q}\bigr)+\mathcal E_j^{p,q},
  \qquad
  |\mathcal E_j^{p,q}|\le C\tau_n(d)m_j .
  \label{eq:nl-macroscopic-Spq-evolution}
\end{equation}
The crucial point, compared with the small-overlap proof, is the damping term
\(-qA_j(m)m_jS_j^{p,q}\), whose coefficient is bounded below by a positive
constant in the macroscopic basin.  Inducting on \(q\), assume first that
\(|S_j^{p+1,q-1}(t)|\le C_{q-1}\tau_n(d)\) on the stopped interval.  Then
\eqref{eq:nl-macroscopic-Spq-evolution} implies
$$
  D^+|S_j^{p,q}|
  \le -c|S_j^{p,q}|+C_q\tau_n(d).
$$
Gronwall gives \(|S_j^{p,q}(t)|\le C_q'\tau_n(d)\) uniformly up to \(\tau\).
Since \(m_j(t)\ge\eps_0/2\), this implies
\(|S_j^{p,q}(t)|\le (2C_q'/\eps_0)\tau_n(d)m_j(t)\).  A finite induction over
\(q=0,1,\ldots,n\), then over \(3\le n\le P\), and a final enlargement of
\(B\), rules out mixed-moment exit.  Thus \(\tau=T_d\), proving
\eqref{eq:nl-naive-J-propagated-conclusion}.

Finally, choose \(C_T\) large enough that
\(e^{-cC_T\log d}\Delta_*(t_0)\le d^{-1/2}\).  Then
\eqref{eq:nl-naive-J-overlap-floor} follows from
\eqref{eq:nl-naive-J-logistic-bound} and the inequality
\(1-m_j\le1-m_j^2\) on the positive branch.
\end{proof}

\subsection{Online nonlinear SGD}
\label{sec:online-sgd-nonlinear-correlation}

We now pass from the population nonlinear correlation flow to online stochastic
spherical SGD.  The argument is the stochastic analogue of
Theorem~\ref{thm:nonlinear-correlation-weak-recovery}.  It uses the same two
ingredients: gradient comparison with the rescaled linear correlation field and
propagation of mixed incoherence. The stochastic estimates are the same as in
the linear-correlation analysis: the one-sample spherical gradient has Euclidean
norm of order $\sqrt d$, so the raw step size is $\eta_d=\eta_0 d^{-K/2} \upsilon_{K,d}$.

\paragraph{One-step stochastic estimates}

The following proposition is the nonlinear analogue of the one-step stochastic
estimates used for the linear correlation objective.  The proof is included to
make clear that the only new input is the population drift estimate from the
previous section; all martingale and retraction estimates follow from the same
subexponential bounds as in the linear case.

\begin{proposition}[One-step estimates for nonlinear online SGD]
\label{prop:nl-sgd-one-step}
Fix $K\ge2$, $P\ge3$, and constants $B,a<\infty$.  Assume the smoothness
hypotheses of Proposition~\ref{prop:differentiated-edgeworth}.  Work on the
fixed-confidence planted-direction event.  Let
\begin{equation}
  N:=\begin{cases}
      \left\lceil \frac{S}{\eta_0}d^{K-1}\right\rceil & \quad K>2~, \\
      \left\lceil \frac{S}{\eta_0}d (\log d)^2\right\rceil & \quad K=2~.
  \end{cases}
  \label{eq:NSnonlinear}
\end{equation}
for a fixed $S<\infty$, and consider the stopped trajectory on which
$$
  m_j(n):=\ip{{\tilde{\theta}}_{j,n}}{\theta_j}\ge \frac{a}{2\sqrt d},
  \qquad
  (\theta,{\tilde{\theta}}_n)\in\mathcal J_{P,B},
  \qquad n\le N.
$$
Then, for every fixed sample-failure probability $\delta_{\rm sgd}\in(0,1)$,
there are constants $C,c<\infty$, depending only on
$\phi,K,P,B,a,S,\delta_{\rm sgd}$, such that, with probability at least
$1-\delta_{\rm sgd}$ over the online samples, the following estimates hold
simultaneously for all stopped times $n\le N$.

First, the gradient norm and retraction errors obey
\begin{equation}
  \max_{j\le K}\max_{n\le N}
  \norm{\widehat F^{\rm nl}_{j,n}({\tilde{\theta}}_n)}_2
  \le C\sqrt d\log d,
  \label{eq:nl-gradient-norm-event}
\end{equation}
so that
\begin{equation}
  \eta_d^2
  \norm{\widehat F^{\rm nl}_{j,n}({\tilde{\theta}}_n)}_2^2
  \le C\eta_0^2 d^{1-K}(\log d)^2 \upsilon_{K,d}^2.
  \label{eq:nl-retraction-size}
\end{equation}

Second, define the planted-overlap martingale increments
\begin{equation}
  \xi_{j,n}
  :=
  \sqrt d\,\eta_d
  \ip{\widehat F^{\rm nl}_{j,n}({\tilde{\theta}}_n)-F_j^{\rm nl}({\tilde{\theta}}_n)}{\theta_j}.
  \label{eq:nl-x-martingale-increment}
\end{equation}
Then
\begin{equation}
  \max_{j\le K}\max_{t\le N}
  \left|\sum_{n=0}^{t-1}\xi_{j,n}\right|
  \le C\sqrt{\eta_0}.
  \label{eq:nl-x-martingale-bound}
\end{equation}

Third, for each mixed moment
$$
  S_j^{p,q}(n)=\sum_{r=1}^d\theta_j(r)^p{\tilde{\theta}}_{j,n}(r)^q,
  \qquad 3\le p+q\le P,
$$
define the corresponding martingale increment
\begin{equation}
  \zeta_{j,p,q,n}
  :=
  \eta_d q\sum_{r=1}^d
  \theta_j(r)^p{\tilde{\theta}}_{j,n}(r)^{q-1}
  \ip{e_r}{\widehat F^{\rm nl}_{j,n}({\tilde{\theta}}_n)-F_j^{\rm nl}({\tilde{\theta}}_n)}.
  \label{eq:nl-mixed-martingale-increment}
\end{equation}
For $q=0$ this is interpreted as zero.  Then
\begin{equation}
  \max_{j,p,q}\max_{t\le N}
  \left|\sum_{n=0}^{t-1}\zeta_{j,p,q,n}\right|
  \le
  C\sqrt{\eta_0}\,\tau_{p+q}(d)d^{-1/2}.
  \label{eq:nl-mixed-martingale-bound}
\end{equation}
Finally, the coordinate martingales satisfy
\begin{equation}
  \max_{j\le K}\max_{r\le d}\max_{t\le N}
  \left|
  \sum_{n=0}^{t-1}
  \eta_d
  \ip{e_r}{\widehat F^{\rm nl}_{j,n}({\tilde{\theta}}_n)-F_j^{\rm nl}({\tilde{\theta}}_n)}
  \right|
  \le C\sqrt{\eta_0}\,\iota_d .
  \label{eq:nl-coordinate-martingale-bound}
\end{equation}
\end{proposition}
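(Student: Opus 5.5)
The plan is to follow the structure of Lemma~\ref{lem:sgd-martingale-controls}, replacing the linear one-sample gradient $T_n\times_{\ell\ne j}\tilde\theta_\ell$ by $\widehat F^{\rm nl}_{j,n}$. The basic input is a conditional tail bound. Given $\mathcal F_n$, $\widehat F^{\rm nl}_{j,n}$ equals $f_\theta(X_n)$ times the projected student gradient $(I-\tilde\theta_j\tilde\theta_j^\top)\nabla_{\tilde\theta_j}f_{\tilde\theta}(X_n)$. By the chain rule, this gradient is the vector whose $a$-th coordinate is $\sum \bigl(\prod_{r\ne j}\tilde\theta_r(i_r)\bigr)D_r(\mathbf i)\,Z^{(j-1)}_{a,i_{j+1},\dots}$, summed over the remaining path indices. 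For any fixed unit test vector $v$, $\langle\nabla_{\tilde\theta_j}f_{\tilde\theta},v\rangle$ is a Lipschitz function of $X$ with constant $C_{\phi,K}$, since each layer is a unit contraction followed by a Lipschitz $\phi$. Hence it is subgaussian, uniformly in $\tilde\theta$. Multiplying by the subgaussian $f_\theta(X_n)$ then gives
\[
\|\langle \widehat F^{\rm nl}_{j,n},v\rangle\|_{\psi_1\mid\mathcal F_n}\le C_{\phi,K}\|v\|_2,
\]
which is the exact analogue of \eqref{eq:scalar-stoch-proj-bound}.

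For \eqref{eq:nl-gradient-norm-event}, apply this bound with $v=e_r$ and take a union bound over $r\le d$, $j\le K$ and $n\le N$, which is polynomially many events. This gives coordinates of size $O(\log d)$. That alone is not enough to bound the Euclidean norm. I would instead use Gaussian concentration of the $C\sqrt d$-Lipschitz map $X\mapsto\|\nabla_{\tilde\theta_j}f_{\tilde\theta}(X)\|_2$ around its mean, whose second moment is $O(d)$. A union bound over $N=\mathrm{poly}(d)$ steps then costs only a $\log d$ factor, and \eqref{eq:nl-retraction-size} follows by squaring.

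For the martingale bounds, I would use Freedman's inequality with conditionally subexponential increments, applied to the stopped sums. The planted-overlap increments $\xi_{j,n}$ have $\psi_1$ norm at most $C\sqrt d\,\eta_d=C\eta_0 d^{-(K-1)/2}\upsilon_{K,d}$. Since $N\le S\eta_0^{-1}d^{K-1}\upsilon_{K,d}^{-2}$, the quadratic variation is $O(\eta_0)$, which yields \eqref{eq:nl-x-martingale-bound} exactly as in \eqref{eq:Mx-bound}. For the coordinate martingales \eqref{eq:nl-coordinate-martingale-bound}, the scale is $\eta_d\sqrt N\asymp\sqrt{\eta_0}\,d^{-1/2}$. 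A union bound over the $d$ coordinates adds $\sqrt{\log d}$, giving $\sqrt{\eta_0}\,\iota_d$.

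The mixed moments \eqref{eq:nl-mixed-martingale-bound} are the delicate part. Here the test vector $w_n:=q\,\theta_j^{\odot p}\odot\tilde\theta_{j,n}^{\odot(q-1)}$ is $\mathcal F_n$-measurable, but on the stopped region only its $\ell_2$ norm is controlled. Using the envelope \eqref{eq:J-linfty} together with the power-sum event, $\|w_n\|_2^2=q^2S_j^{2p,2q-2}$-type sums are bounded by $C d^{-(n-1)}$ up to log factors. By Freedman, the martingale is then bounded by $C\eta_d\sqrt N\,d^{-(n-1)/2}\lesssim\sqrt{\eta_0}\,d^{-n/2}$. I then need to compare this with the target $\tau_n(d)d^{-1/2}=d^{-\lfloor (n-1)/2\rfloor}$. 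Since $n/2\ge\lfloor(n-1)/2\rfloor$, the target is met, and the slack absorbs the $\log d$ factors from the $\iota_d$ envelope and from the union bound over the finitely many $(j,p,q)$. I expect this $\ell_2$ estimate on $w_n$ under $\mathcal J_{P,B}$ to be the main point requiring care. In particular, one must use the even power sums $\sum_a\theta_j(a)^{2p}\tilde\theta_j(a)^{2q-2}\le\|\tilde\theta_j\|_\infty^{2q-2}\|\theta_j^{\odot p}\|_2^2$ rather than signed moments, so that no $m_j$ factor is needed.
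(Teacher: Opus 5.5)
Your overall route matches the paper's: conditional $\psi_1$ control of $\ip{\widehat F^{\rm nl}_{j,n}}{v}$ for $\mathcal F_n$-measurable $v$, Freedman on the stopped sums, and union bounds. The planted-overlap and coordinate martingales go through exactly as in the paper.

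Two side remarks. First, your gradient-norm step bounds $|f_\theta(X_n)|\lesssim\sqrt{\log d}$ and concentrates $\norm{\nabla_{\tilde\theta_j}f_{\tilde\theta}(X_n)}_2$. This differs from the paper's net/Bernstein sketch and is a reasonable way to reach $\sqrt d\log d$. Second, neither that norm map nor $X\mapsto\ip{\nabla_{\tilde\theta_j}f_{\tilde\theta}(X)}{v}$ is globally Lipschitz. Differentiating in $X$ produces $\phi''$ of a preactivation times unbounded feature contractions, so those steps need a truncation or local-Lipschitz version.

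The genuine problem is the mixed-moment step: the margin you rely on does not exist.

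\textbf{The $\ell_2$ estimate is off by a factor $d$.} The inequality you quote gives $\norm{w_n}_2^2=q^2S_j^{2p,2q-2}\le q^2\norm{\tilde\theta_{j,n}}_\infty^{2q-2}\norm{\theta_j^{\od p}}_2^2\lesssim(\log d)^{q-1}d^{-(n-2)}$, not $d^{-(n-1)}$. Already for $(p,q)=(2,1)$ one has $\norm{w_n}_2^2=P_{j,4}\asymp 3/d$.

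\textbf{Consequence.} With the correct norm, Freedman yields $\sqrt{\eta_0}\,d^{-(n-1)/2}$ times the envelope logarithms. The target is $\tau_n(d)d^{-1/2}=d^{-\lfloor (n-1)/2\rfloor}$.
\begin{itemize}
\item For even $n$ you keep a factor $d^{-1/2}$ of room.
\item For odd $n$ the two scales coincide. The $(\log d)^{(q-1)/2}$ produced by $\norm{\tilde\theta_{j,n}}_\infty\le B\iota_d$ is then not absorbed whenever $q\ge2$. For instance, at $n=3$, $\norm{2\theta_j\od\tilde\theta_{j,n}}_2$ and $\norm{3\tilde\theta_{j,n}^{\od 2}}_2$ are only bounded by $C\iota_d$, rather than by $C\tau_3(d)=Cd^{-1/2}$.
\end{itemize}

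\textbf{Why the logarithm matters.} The weak-recovery bootstrap needs the accumulated mixed-moment martingale to stay below $(B/8)\tau_n(d)m_j$ with $m_j\ge a/(2\sqrt d)$. That is exactly the log-free form of the bound.

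\textbf{What the paper does, and what is missing.} The paper feeds the log-free bound $\norm{v_{j,p,q,n}}_2\le C_{B,P}\tau_{p+q}(d)$ into Freedman, giving quadratic variation $C\eta_0\tau_{p+q}(d)^2d^{-1}$. For $q=1$ this follows from the planted power-sum event. For odd $n$ and $q\ge2$ it requires delocalization of the student beyond the $\ell_\infty$ envelope, for example $\sum_a\theta_j(a)^2\tilde\theta_{j,n}(a)^2\le C/d$. The paper asserts this bound without spelling it out. The relative bound $S_j^{2,2}\le B\tau_4(d)m_j$ in $\mathcal J_{P,B}$ delivers it only while $m_j\lesssim d^{-1/2}$.

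To close the step, you must either propagate such an $\ell_4$-type delocalization along the stopped trajectory, or carry the logarithm explicitly and re-check the bootstrap. Appealing to slack does not work.
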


\begin{proof}
We only address the standard estimates needed later.  Since $\phi$ is globally
Lipschitz with polynomially controlled derivatives and every parameter vector
has unit norm, the random variables $f_\theta(X_n)$ and
$f_{{\tilde{\theta}}_n}(X_n)$ are subgaussian with constants depending only on $\phi,K$.
Moreover, for every deterministic unit vector $v\in{\sph}$ and every mode $j$,
\begin{equation}
  \left\|
  \ip{(I-{\tilde{\theta}}_{j,n}{\tilde{\theta}}_{j,n}^{\top})
  \nabla_{{\tilde{\theta}}_j}[f_\theta(X_n)f_{\tilde{\theta}}(X_n)]_{{\tilde{\theta}}={\tilde{\theta}}_n}}{v}
  \right\|_{\psi_1\mid\mathcal F_n}
  \le C_{\phi,K}.
  \label{eq:nl-projection-psi1}
\end{equation}
The same chain-rule calculation also gives
\begin{equation}
  \E\left[
  \norm{\widehat F^{\rm nl}_{j,n}({\tilde{\theta}}_n)}_2^2
  \mid\mathcal F_n
  \right]
  \le C_{\phi,K}d .
  \label{eq:nl-gradient-second-moment}
\end{equation}
A Bernstein bound for subexponential variables, applied to a fixed coordinate
net and then union-bounded over $j\le K$ and $n\le N$, yields
\eqref{eq:nl-gradient-norm-event}.  Since
$\eta_d=\eta_0d^{-K/2} \upsilon_{K,d}$, this implies \eqref{eq:nl-retraction-size}.

For the planted overlap martingale, \eqref{eq:nl-projection-psi1} with
$v=\theta_j$ gives
$$
  \norm{\xi_{j,n}}_{\psi_1\mid\mathcal F_n}
  \le C\sqrt d\,\eta_d
  = C\eta_0 d^{-(K-1)/2} \upsilon_{K,d}.
$$
The predictable quadratic variation up to time $N$ is therefore bounded by
$C\eta_0$, and the individual envelope is
$o(1)$ for fixed $K\ge2$.  Freedman's inequality for martingales with
subexponential increments gives \eqref{eq:nl-x-martingale-bound}, after
absorbing the logarithm depending on $\delta_{\rm sgd}$ into $C$.

For the mixed moment martingales, set
$$
  v_{j,p,q,n}(r):=q\theta_j(r)^p{\tilde{\theta}}_{j,n}(r)^{q-1}.
$$
On $\mathcal J_{P,B}$ and the planted power-sum event,
\begin{equation}
  \norm{v_{j,p,q,n}}_2
  \le C_{B,P}\tau_{p+q}(d),
  \label{eq:mixed-vector-norm-for-mart}
\end{equation}
with the convention that the bound is trivial if $q=0$.  Applying
\eqref{eq:nl-projection-psi1} with
$v=v_{j,p,q,n}/\norm{v_{j,p,q,n}}_2$ gives
$$
  \norm{\zeta_{j,p,q,n}}_{\psi_1\mid\mathcal F_n}
  \le C\eta_d\tau_{p+q}(d).
$$
Over $N=O(\eta_0^{-1}d^{K-1})$ steps for $K>2$ and $N=O( d (\log d)^2 )$ for $K=2$, the quadratic variation is bounded by
$C\eta_0\tau_{p+q}(d)^2d^{-1}$, and Freedman's inequality gives
\eqref{eq:nl-mixed-martingale-bound}.  The coordinate estimate is identical,
with $v=e_r$ and a union bound over $r\le d$; the factor $\sqrt{\log(ed)}$ is
precisely the coordinate envelope $\iota_d$.
\end{proof}

\begin{theorem}[Online SGD weak recovery for nonlinear correlation]
\label{thm:nl-online-sgd-weak-recovery}
Fix $K\ge 2$.  Assume the smoothness
hypotheses required for the nonlinear Edgeworth expansion, and the
fixed-confidence planted-direction event used above.  Fix
$\delta_{\rm sgd}\in(0,1)$.  There exist constants
$$
  \eta_0,a,B,\varepsilon,S,d_0>0,
$$
depending only on $\phi,K$ and on the fixed confidence parameters, such that the
following holds for all $d\ge d_0$.  Run the nonlinear online spherical SGD
recursion \eqref{eq:nl-online-sgd-update} with raw step size
$\eta_d=\eta_0d^{-K/2} \upsilon_{K,d}$.  Assume the initialization satisfies the favorable
basin conditions
\begin{equation}
  m_j(0)=\ip{{\tilde{\theta}}_{j,0}}{\theta_j}\ge a d^{-1/2},
  \qquad j=1,\ldots,K,
  \label{eq:nl-sgd-init-overlap}
\end{equation}
and
\begin{equation}
  (\theta,
  {\tilde{\theta}}_0)\in\mathcal J_{P,B/4}.
  \label{eq:nl-sgd-init-incoherence}
\end{equation}
Then, with probability at least $1-\delta_{\rm sgd}$ over the online samples,
there exists an iteration
\begin{equation}
  n_\varepsilon
  \le \begin{cases}
    S\eta_0^{-1}d^{K-1} & \qquad K>2 ~, \\
    S\eta_0^{-1} d (\log d)^2 & \qquad K=2
    \end{cases}
  \label{eq:nl-sgd-weak-time}
\end{equation}
such that
\begin{equation}
  \min_{1\le j\le K}m_j(n_\varepsilon)
  \ge \varepsilon .
  \label{eq:nl-sgd-weak-recovery}
\end{equation}
Moreover, uniformly for all $0\le n\le n_\varepsilon$,
\begin{equation}
  m_j(n)\ge \frac{a}{2\sqrt d},
  \qquad
  (\theta,{\tilde{\theta}}_n)\in\mathcal J_{P,B},
  \qquad j=1,\ldots,K.
  \label{eq:nl-sgd-uniform-basin}
\end{equation}
Consequently, conditioned on the favorable initialization and correct sign
pattern, online SGD for the nonlinear correlation objective achieves weak
recovery with sample complexity
$$
  N_{\rm weak}=\widetilde{O}_{\phi,K}(d^{K-1}).
$$
\end{theorem}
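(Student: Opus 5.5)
We run a discrete stopping-time bootstrap that is the stochastic analogue of the proof of Theorem~\ref{thm:nonlinear-correlation-weak-recovery}. It follows the template of the linear online proof (Theorem~\ref{thm:online-sgd-general-k}). Let $N$ be as in \eqref{eq:NSnonlinear}, and work on the event of Proposition~\ref{prop:nl-sgd-one-step}, which has probability at least $1-\delta_{\rm sgd}$. Define
$$
\tau:=\inf\Bigl\{n: \min_j m_j(n)\ge\varepsilon \text{ or } \min_j m_j(n)\le \tfrac{a}{2\sqrt d} \text{ or } (\theta,{\tilde{\theta}}_n)\notin\mathcal J_{P,B}\Bigr\}\wedge N .
$$
On $[0,\tau)$ the trajectory lies in the region where Proposition~\ref{prop:differentiated-edgeworth} applies pointwise. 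So each population step is $F^{\rm nl}_j=A_j(m)(\theta_j-m_j{\tilde{\theta}}_j)+E_j$, with the bounds \eqref{eq:E-overlap-bound}--\eqref{eq:A-comparable-M}. The retraction expansion then writes every tracked statistic as a sum of three pieces: the population drift times $\eta_d$, a martingale increment, and a second-order normalization term of size $\eta_d^2\|\widehat F^{\rm nl}\|^2$.

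\textbf{Step 1: overlaps.} For $x_j=\sqrt d\,m_j$ the recursion is
$$
x_j(n+1)-x_j(n)\ge h_d\bigl[\tfrac34 A_j(m)d^{(K-1)/2}-C\eta_0x_j\bigr]+\xi_{j,n},
$$
where the damping coefficient $C\eta_0$ comes from \eqref{eq:nl-retraction-size}. Two choices make this work. Choosing $\varepsilon$ small gives $A_j\ge c_0M_{-j}$ and $1-m_j^2\ge 3/4$. Choosing $d$ large absorbs the $d^{-1/2}A_j$ error. The martingale sum is bounded by $C\sqrt{\eta_0}\le a/8$ via \eqref{eq:nl-x-martingale-bound}. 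After that, the Euler/ODE comparison \eqref{eq:euler-comparison}--\eqref{eq:S-choice} applies verbatim to $x_*=\min_j x_j$. It gives $\min_j x_j\ge a/2$ up to $\tau$, and it gives hitting of $\varepsilon\sqrt d$ within $S\eta_0^{-1}d^{K-1}$ steps (or $S\eta_0^{-1}d(\log d)^2$ when $K=2$). The same comparison yields the key integrated bound
$$
\sum_{n<\tau}\eta_d A_j(m(n))\le C .
$$
This holds because $\sqrt d\,\eta_d A_j$ is dominated by the increments of $x_j$, whose total is at most $\varepsilon\sqrt d$.

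\textbf{Step 2: coordinates and mixed moments.} Mirroring the proof of Theorem~\ref{thm:propagation-mixed-incoherence}, we expand $S_j^{p,q}(n+1)$ to first order in the step. The drift equals $\eta_d$ times the right-hand side of \eqref{eq:Spq-evolution-propagation}, plus the martingale $\zeta_{j,p,q,n}$ and a retraction/Taylor remainder. The remainder is of order $\eta_d^2\|\widehat F\|^2\,|S_j^{p,q}|$ plus terms involving $\sum_r\theta_j(r)^p{\tilde{\theta}}_j(r)^{q-2}\widehat F_r^2$, which are controlled through the coordinate envelope. We bound the ratio $R_j^{p,q}=|S_j^{p,q}|/m_j$ and use the fact that $m_j$ is essentially nondecreasing up to martingale noise. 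A discrete Gronwall argument, driven by $\sum\eta_dA_j\le C$ and by induction on $q$, then gives $R_j^{p,q}\le C'B_0\tau_{p+q}(d)$. Two extra error terms must be absorbed along the way. The cumulative martingale term, by \eqref{eq:nl-mixed-martingale-bound}, is $C\sqrt{\eta_0}\tau_{p+q}d^{-1/2}$, and since $m_j\ge a/(2\sqrt d)$ this is at most $C\sqrt{\eta_0}\tau_{p+q}m_j/a$, which is fine. The cumulative retraction contribution is $N\eta_d^2 d\log^2 d\cdot|S|=O(\eta_0\log^2 d)\,|S|$, which must be compared against the relative scale. The coordinate envelope is handled the same way: the drift gives $C\iota_d\sum\eta_dA_j$, and the martingale contributes $C\sqrt{\eta_0}\iota_d$ by \eqref{eq:nl-coordinate-martingale-bound}. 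Pure teacher moments $S_j^{p,0}$ are constant. Since $m_j(n)\ge m_j(0)/2$, they inherit the required bound from the initialization. Taking $B$ large relative to $B_0=B/4$ then rules out the incoherence exit.

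\textbf{Main obstacle.} The hardest part is the retraction/normalization correction in Step 2, together with its analogue for $m_j$. Rescaling by $\|\bar\theta\|^{-1}$ multiplies every $S_j^{p,q}$ by $(1+\eta_d^2\|\widehat F\|^2)^{-q/2}$, which is harmless. Squaring ${\tilde{\theta}}+\eta_d\widehat F$ coordinatewise, however, creates cross terms that are not obviously of relative size $\tau_{p+q}m_j$. A crude count gives a total of order $N\eta_d^2d\sim\eta_0\log^2d$, which is only borderline. I would first try to control these terms in conditional expectation. The conditional second moment $\E[\widehat F_r\widehat F_{r'}\mid\mathcal F_n]$ is an $O(1)$ matrix, roughly $\E f_\theta^2$ times a projection. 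Contracting it against $\theta^{\odot p}\odot{\tilde{\theta}}^{\odot (q-2)}$ therefore gives a signed moment of lower degree times $N\eta_d^2=O(\eta_0 d^{-1})$, which is at most $\tau_{p+q}m_j$. The fluctuations of these quadratic terms would be handled with a second martingale bound. If this argument leaves a residual $\log$ factor, I would absorb it by shrinking $\eta_0$ by a polylogarithmic factor, which is consistent with the $\widetilde O(d^{K-1})$ claim.
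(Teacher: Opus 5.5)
Your proposal follows the paper's proof essentially step for step: the same three-way stopping time on the event of Proposition~\ref{prop:nl-sgd-one-step}, and the drift lower bound from $F^{\rm nl}_j=A_j(m)(\theta_j-m_j\tilde\theta_j)+E_j$ fed into the Euler comparison. It also shares the integrated bound $\sum_{n<\tau}\eta_d A_j(m(n))\le C$ taken from the total overlap increase, and propagates the mixed moments and coordinate envelope through \eqref{eq:E-mixed-bound}, \eqref{eq:E-infty-bound} and the martingale bounds together with $m_j\ge a/(2\sqrt d)$. The one place you go beyond the paper is the normalization term: the paper controls it only via the high-probability bound \eqref{eq:nl-retraction-size}, whose $(\log d)^2$ factor cannot be absorbed by increasing $d$ when $K>2$, so treating it in conditional expectation, as in the linear-correlation proof, with a separate martingale bound for the fluctuations, is the right repair rather than a deviation.
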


\begin{proof}
The proof is a stopped stochastic comparison with the population theorem.
Let
$$
  x_j(n):=\sqrt d\,m_j(n),
  \qquad
  x_*(n):=\min_j x_j(n).
$$
As we did in the linear setting, define the stopping time $\tau$ to be the first $n$ such that one of the following occurs:
\begin{enumerate}[label=(\roman*)]
\item $\min_j m_j(n)\ge\varepsilon$;
\item $\min_j m_j(n)<a/(2\sqrt d)$;
\item $(\theta,{\tilde{\theta}}_n)\notin\mathcal J_{P,B}$.
\end{enumerate}
We prove that, on the high-probability event of
Proposition~\ref{prop:nl-sgd-one-step}, the second and third alternatives
cannot occur before the first, and that the first occurs before
$n_\varepsilon$ steps.

Let us start by establishing the one-step planted-overlap recursion.
Since $\widehat F^{\rm nl}_{j,n}$ is tangent to the sphere at ${\tilde{\theta}}_{j,n}$,
$$
  {\tilde{\theta}}_{j,n+1}
  =
  \frac{{\tilde{\theta}}_{j,n}+\eta_d\widehat F^{\rm nl}_{j,n}}
       {(1+\eta_d^2\norm{\widehat F^{\rm nl}_{j,n}}_2^2)^{1/2}}.
$$
Therefore, for $n<\tau$,
\begin{align}
  x_j(n+1)-x_j(n)
  &=
  \sqrt d\,\eta_d
  \ip{F_j^{\rm nl}({\tilde{\theta}}_n)}{\theta_j}
  +\xi_{j,n}
  +R^{(m)}_{j,n},
  \label{eq:nl-sgd-x-one-step}
\end{align}
where $\xi_{j,n}$ is defined in \eqref{eq:nl-x-martingale-increment} and the
retraction remainder satisfies, by \eqref{eq:nl-retraction-size},
\begin{equation}
  |R^{(m)}_{j,n}|
  \le C\eta_0^2 d^{1-K}x_j(n)(\log d)^2 \upsilon_{K,d}^2.
  \label{eq:nl-sgd-x-retraction-error}
\end{equation}
On the stopped trajectory, Theorem~\ref{thm:nonlinear-correlation-weak-recovery}
and the gradient comparison imply the population drift lower bound
\begin{equation}
  \ip{F_j^{\rm nl}({\tilde{\theta}}_n)}{\theta_j}
  \ge c\prod_{\ell\ne j}m_\ell(n).
  \label{eq:nl-pop-drift-input-sgd}
\end{equation}
Consequently,
\begin{equation}
  x_j(n+1)-x_j(n)
  \ge
  c\eta_0 d^{-(K-1)}
  \prod_{\ell\ne j}x_\ell(n)
  +\xi_{j,n}
  -C\eta_0^2d^{1-K}x_j(n)(\log d)^2 \upsilon_{K,d}^2.
  \label{eq:nl-sgd-x-lower}
\end{equation}
Choosing $\eta_0$ sufficiently small and then $d$ sufficiently large, the
last term is dominated by the positive drift whenever $x_j(n)$ remains below
$\varepsilon\sqrt d$ and $x_*(n)\ge a/2$.  Thus
\begin{equation}
  x_j(n+1)-x_j(n)
  \ge
  c_1\eta_0 d^{-(K-1)}
  \prod_{\ell\ne j}x_\ell(n)
  +\xi_{j,n}.
  \label{eq:nl-sgd-x-comparison}
\end{equation}

Let us now argue that the escape event (ii) cannot occur. Indeed,
on the event \eqref{eq:nl-x-martingale-bound}, the cumulative martingale
perturbation in every planted-overlap coordinate is at most
$C\sqrt{\eta_0}$.  We choose $\eta_0$ so small that this is at most $a/8$.
The same discrete comparison we used in the proof of Theorem \ref{thm:online-sgd-general-k} for
$$
  u_{n+1}=u_n+c_1\eta_0 d^{-(K-1)}u_n^{K-1}
$$
then gives the following: there exists $S=S(a,\varepsilon,c_1,K)<\infty$ such
that, before time $S\eta_0^{-1}d^{K-1}$ for $K>2$ (and $S= \eta_0^{-1} d (\log d)^2$ for $K=2$), either $x_*(n)$ reaches
$\varepsilon\sqrt d$ or one of the stopping alternatives (ii)--(iii) occurs.
Moreover, the same comparison and the martingale bound prevent
$x_*(n)$ from falling below $a/2$ before that time.  Hence alternative (ii) cannot occur first.

It remains to show that alternative (iii) cannot occur first.  Fix a mixed
moment $S_j^{p,q}$ with $3\le p+q\le P$.  A Taylor expansion of the retraction
in \eqref{eq:nl-online-sgd-update} gives, for $n<\tau$,
\begin{align}
  S_j^{p,q}(n+1)-S_j^{p,q}(n)
  &=
  \eta_d q\sum_{r=1}^d\theta_j(r)^p{\tilde{\theta}}_{j,n}(r)^{q-1}
  F_j^{\rm nl}({\tilde{\theta}}_n)_r
  +\zeta_{j,p,q,n}
  +R^{(S)}_{j,p,q,n}.
  \label{eq:nl-sgd-mixed-one-step}
\end{align}
By Proposition~\ref{prop:differentiated-edgeworth}, the deterministic term is
bounded by
\begin{equation}
  C\eta_d A_j(m(n))\tau_{p+q}(d)m_j(n)
  \le
  C\eta_d M_{-j}(n)\tau_{p+q}(d)m_j(n).
  \label{eq:nl-sgd-mixed-drift-bound}
\end{equation}
The retraction remainder is bounded by the same quantity multiplied by
$O(\eta_0\log^2 d)$, and is absorbed by decreasing $\eta_0$ and increasing
$d_0$.  Summing over time and using the positive planted-overlap drift in
\eqref{eq:nl-pop-drift-input-sgd} gives
\begin{equation}
  \sum_{n<\tau}\eta_d M_{-j}(n)
  \le C,
  \label{eq:nl-sgd-sum-M}
\end{equation}
just as in the population proof, because the left-hand side is controlled by
the total increase of $m_j$.  Therefore the accumulated deterministic drift of
$S_j^{p,q}$ is at most
$C\tau_{p+q}(d)m_j(n)$ at the relevant scale.  The accumulated martingale part
is controlled by \eqref{eq:nl-mixed-martingale-bound}; choosing $B$ large and
then $\eta_0$ small makes it smaller than $(B/8)\tau_{p+q}(d)m_j(n)$ throughout
the stopped interval, using $m_j(n)\ge a/(2\sqrt d)$.  Since the initialization
lies in $\mathcal J_{P,B/4}$, all mixed-moment inequalities in
$\mathcal J_{P,B}$ remain valid up to $\tau$.

For the coordinate envelope, the same retraction expansion coordinatewise,
combined with \eqref{eq:E-infty-bound} and the coordinate martingale bound
\eqref{eq:nl-coordinate-martingale-bound}, gives
$$
  \norm{{\tilde{\theta}}_{j,n}}_\infty\le B\iota_d
$$
for all $n<\tau$, after increasing $B$ and taking $\eta_0$ small.  Hence the
coordinate part of $\mathcal J_{P,B}$ also propagates.  Alternative (iii)
therefore cannot occur first.

We have shown that the only possible first stopping event is weak recovery, and
that it occurs by time $S\eta_0^{-1}d^{K-1} \upsilon_{K,d}^{-2}$.  This proves
\eqref{eq:nl-sgd-weak-time}--\eqref{eq:nl-sgd-uniform-basin}.
\end{proof}

\begin{corollary}[Constant-probability online weak recovery]
\label{cor:nl-online-random-init}
In the setting of Theorem~\ref{thm:nl-online-sgd-weak-recovery}, suppose
$$
  {\tilde{\theta}}_1(0),\ldots,{\tilde{\theta}}_K(0)\stackrel{\rm iid}{\sim}\Unif(S^{d-1})
$$
independently of the planted directions.  Conditional on the correct sign
pattern and the fixed-confidence planted-direction event, the favorable
initialization hypotheses \eqref{eq:nl-sgd-init-overlap} and
\eqref{eq:nl-sgd-init-incoherence} hold with probability bounded below by a
constant independent of $d$.  On this event, nonlinear-correlation online SGD
with step size $\eta_d=\eta_0d^{-K/2}\upsilon_{K,d}$ achieves weak recovery in
$\widetilde{O}_{\phi,K}(d^{K-1})$ samples with probability at least
$1-\delta_{\rm sgd}$ over the online samples.
\end{corollary}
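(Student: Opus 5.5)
The argument will mirror the proof of Corollary~\ref{cor:nl-random-init-weak}, with the population weak-recovery theorem replaced by Theorem~\ref{thm:nl-online-sgd-weak-recovery}. The initialization is independent of both the planted directions and the online samples. So it suffices to show that the deterministic-in-$d$ event
$$
\mathcal I_d:=\Bigl\{m_j(0)\ge a d^{-1/2}\ \forall j\Bigr\}\cap\Bigl\{(\theta,\tilde\theta_0)\in\mathcal J_{P,B/4}\Bigr\}
$$
has probability at least some $p_0>0$ independent of $d$. On $\mathcal I_d$ we then invoke Theorem~\ref{thm:nl-online-sgd-weak-recovery} conditionally on $\mathcal F_0$. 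Because the samples are independent of $\tilde\theta_0$, that theorem's $1-\delta_{\rm sgd}$ guarantee holds conditionally. The sample count $n_\varepsilon\le S\eta_0^{-1}d^{K-1}$ (respectively $S\eta_0^{-1}d(\log d)^2$ when $K=2$) is $\widetilde O_{\phi,K}(d^{K-1})$.

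The overlap part I would handle exactly as in Corollary~\ref{cor:random-init}. Write $\tilde\theta_j(0)=g_j/\|g_j\|_2$ with $g_j\sim N(0,I_d)$ independent of $\theta$. Then $\sqrt d\,m_j(0)$ is, up to a factor $\sqrt d/\|g_j\|_2\to1$, a standard Gaussian $\langle g_j,\theta_j\rangle$, and the $K$ of these are independent. Conditioning on the correct sign pattern, $\PP(m_j(0)\ge a d^{-1/2}\ \forall j)\ge c_a^K$ for large $d$, with $c_a>0$. The coordinate envelope $\|\tilde\theta_j(0)\|_\infty\le (B/4)\iota_d$ fails with probability $o_d(1)$ once $B$ is large, by Gaussian maxima and $\chi^2$ concentration. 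The teacher envelope holds on the planted event.

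The main obstacle is the signed mixed-moment bounds $|S_j^{p,q}(0)|\le (B/4)\tau_{p+q}(d)m_j(0)$ for $3\le p+q\le P$. These must hold \emph{jointly} with the overlap lower bound, which is itself only a constant-probability event. I would avoid conditioning directly on the overlap event. Instead I would use that on it $m_j(0)\ge a d^{-1/2}$, so it suffices to show $|S_j^{p,q}(0)|\le (aB/4)\tau_{p+q}(d)d^{-1/2}$ with probability $1-\eta$, where $\eta$ is small compared with $c_a^K$. With $\theta_j$ fixed on the planted event, $S_j^{p,q}=\|g_j\|^{-q}\sum_a\theta_j(a)^p g_j(a)^q$. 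This is a degree-$q$ Gaussian polynomial. Its mean is $\E[G^q]\,\|g\|^{-q}$-normalized $P_{j,p+q}$-type power sums when $q$ is even, and zero when $q$ is odd. Its standard deviation is at most $C d^{-q/2}\|\theta_j^{\odot p}\|_2\lesssim d^{-(p+q-1)/2}$.

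Comparing with $\tau_n d^{-1/2}=d^{-\lfloor (n-1)/2\rfloor}$ shows the fluctuation is admissible. The mean term uses $|P_{j,p+q}|\le C d^{-\omega_{p+q}/2}$ from Lemma~\ref{lem:appendix-spherical-event}, and this matches the same scale for the relevant parities. The case $q=0$ is the pure-teacher assumption, which holds on the planted event. Hypercontractivity with Markov's inequality and a union bound over the finitely many $(j,p,q)$ gives failure probability at most $\eta$ once $B$ is large. Intersecting then yields $p_0\ge c_a^K-\eta-o_d(1)>0$.
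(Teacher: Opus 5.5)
Your proposal is correct and follows the paper's route: the argument of Corollary~\ref{cor:nl-random-init-weak} combined with Theorem~\ref{thm:nl-online-sgd-weak-recovery}. Your use of $m_j(0)\ge a d^{-1/2}$ to turn the relative mixed-moment bounds into absolute ones, followed by the union bound $c_a^K-\eta$, handles the joint event more carefully than the paper's ``intersection of constant-probability events'' phrasing.

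One correction: for even $q$, the conditional mean of $S_j^{p,q}(0)$ is $\asymp \E[G^q]\,d^{-q/2}\sum_a\theta_j(a)^p$, which is of order $d^{1-q/2}$ for $p=0$ and $d^{-q/2}$ for $p\in\{1,2\}$. It is not a $P_{j,p+q}$ term, so it sits exactly at the admissible scale $d^{-\lfloor (p+q-1)/2\rfloor}$ rather than below it. Your conclusion still holds, but only because $B$ is taken large relative to $1/a$ and the planted power-sum constants.
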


\begin{remark}[Relation to the linear-correlation SGD proof]
The linear-correlation proof uses Stein's identity to make the one-sample tensor
$X_nf_\theta(X_n)$ an unbiased estimator of the first Stein tensor and then
controls the retraction, martingale, and discretization terms over
$\widetilde{O}(d^{K-1})$ iterations.  The nonlinear proof above follows the same stochastic
route.  The only replacement is the drift input: instead of the finite-rank
Stein tensor alone, we use the gradient nonlinear comparison together with propagation of mixed incoherence to show that the conditional nonlinear drift is a relative perturbation of the rescaled linear drift throughout the search phase.
\end{remark}

\subsection{Online SGD continuation to the \(d^{-1/2}\) floor}
\label{sec:nl-online-sgd-strong-recovery}

Finally, we provide the stochastic analogue of
\Cref{prop:nl-naive-J-strong-floor}.  This should be read as the nonlinear
counterpart of the strong phase in \Cref{thm:online-sgd-general-k}, but with the
population error floor \(d^{-1/2}\) inherited from
\Cref{lem:nl-naive-J-vector-field}.  As in the population result, the class
\(\mathcal J_{P,B}\) is propagated from its value at the beginning of the
macroscopic phase rather than assumed throughout.

\begin{theorem}[Online SGD continuation to a \(d^{-1/2}\) overlap floor]
\label{thm:nl-online-sgd-strong-recovery}
Assume the setting of \Cref{thm:nl-online-sgd-weak-recovery}, the hypotheses of
\Cref{prop:nl-sgd-one-step}, and the smoothness required for
\Cref{prop:general-edgeworth-inputs}.  Let \(n_0\) be a stopping time such that,
conditional on \(\mathcal F_{n_0}\),
\begin{equation}
  (\theta,\alpha_{n_0})\in\mathcal C^{\rm nl}_{P,B_0}(\eps_0)
  \label{eq:nl-online-sqrt-start}
\end{equation}
for fixed \(P,B_0\) and \(\eps_0>0\).  Fix
\(\delta_{\rm str}\in(0,1)\).  Then there are constants
\(B\ge B_0\), \(C,c,d_0<\infty\), depending only on
\(\phi,K,P,B_0,\eps_0,\delta_{\rm str}\) and on the fixed-confidence planted
incoherence event, such that for all \(d\ge d_0\), with conditional probability
at least \(1-\delta_{\rm str}\) over the future online samples, the following
holds.

Run the nonlinear online spherical SGD recursion
\eqref{eq:nl-online-sgd-update} for
\begin{equation}
  N^{\rm nl}
  :=
  \left\lceil C\eta_0^{-1}d^{K/2}\log d\right\rceil
  \label{eq:nl-online-sqrt-extra-samples}
\end{equation}
additional iterations.  Then, uniformly for
\(0\le r\le N^{\rm nl}\),
\begin{equation}
  (\theta,\alpha_{n_0+r})
  \in\mathcal C^{\rm nl}_{P,B}(\eps_0/2),
  \label{eq:nl-online-sqrt-class-propagated}
\end{equation}
and at the end of the continuation phase,
\begin{equation}
  \min_{j\le K}m_j(n_0+N^{\rm nl})
  \ge 1-\rho_{d}^{\rm nl},
  \label{eq:nl-online-sqrt-overlap}
\end{equation}
where
\begin{equation}
  \rho_{d}^{\rm nl}
  :=
  C\left(
  d^{-1/2}
  +\eta_0d^{1-K/2}\upsilon_{K,d} \log d
  +d^{-K/4}\upsilon_{K,d} \sqrt{\log d}
  \right).
  \label{eq:nl-online-sqrt-rhod}
\end{equation}
\end{theorem}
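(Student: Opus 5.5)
The argument is a stochastic version of the population continuation \Cref{prop:nl-naive-J-strong-floor}, run as a stopped bootstrap in the same way as the strong phase of \Cref{thm:online-sgd-general-k}. Set $T:=n_0+N^{\rm nl}$. Let $\tau$ be the first $n\in[n_0,T]$ at which either $\min_j m_j(n)<\eps_0/2$ or $(\theta,\alpha_n)\notin\mathcal J_{P,B}$, with $B\ge B_0$ fixed later. Up to $\tau$ the iterates stay in $\mathcal C^{\rm nl}_{P,B}(\eps_0/2)$, so \Cref{lem:nl-naive-J-vector-field} applies at every step. It gives:
\begin{itemize}
\item the drift decomposition $\dot m_j=(A_j+\rho^{\rm mult}_j)(1-m_j^2)+\rho^{\rm add}_j$ with errors of size $d^{-1/2}$;
\item the lower bound $A_j(m)m_j\ge c>0$ from \Cref{lem:Bi-positive-macroscopic};
\item the coordinate error bound $\|E_j\|_\infty\lesssim\iota_d$;
\item the mixed-moment error bound $\lesssim\tau_n(d)m_j$.
\end{itemize}
I would prove the statement on the intersection of the sample events constructed below, and then show that $\tau=T$ on that event.

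\emph{Step 1: overlap contraction.} Expand the normalized retraction to second order exactly as in \eqref{eq:online-delta-drift}. Using \eqref{eq:nl-naive-J-delta-ineq} and $\E[\|\widehat F^{\rm nl}_{j,n}\|^2\mid\mathcal F_n]\le Cd$ from \eqref{eq:nl-gradient-second-moment}, we get for $n<\tau$
\[
\E[\Delta_j(n+1)-\Delta_j(n)\mid\mathcal F_n]\le -c\eta_d\Delta_j(n)+C\eta_d d^{-1/2}+C\eta_d^2 d .
\]
The martingale part of $\Delta_j$ has conditionally subexponential increments of size $C\eta_d$ (from \eqref{eq:nl-projection-psi1} with $v=\theta_j$). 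By Freedman's inequality over $N^{\rm nl}$ steps, using $N^{\rm nl}\eta_d^2\lesssim\eta_0 d^{-K/2}\upsilon_{K,d}^2\log d$, the martingale is bounded by $Cd^{-K/4}\upsilon_{K,d}\sqrt{\log d}$ uniformly in time, with probability $1-\delta_{\rm str}/3$. I would handle the martingale in its damped form $\sum_r(1-c\eta_d)^{s-r}\Delta M_r$, or bound it crudely as in \eqref{eq:online-strong-martingale}. A discrete Gronwall argument then gives
\[
\Delta_j(n)\le(1-c\eta_d)^{n-n_0}\Delta_j(n_0)+C\bigl(d^{-1/2}+\eta_0 d^{1-K/2}\upsilon_{K,d}\log d+d^{-K/4}\upsilon_{K,d}\sqrt{\log d}\bigr).
\]
The $\log d$ in the retraction floor covers the high-probability gradient envelope \eqref{eq:nl-gradient-norm-event}. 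The right-hand side is $1-\eps_0^2+o_d(1)<1-\eps_0^2/4$, so the lower-overlap exit cannot happen. Choosing $C$ in $N^{\rm nl}$ large enough that $(1-c\eta_d)^{N^{\rm nl}}\le d^{-1}$, and using $1-m_j\le\Delta_j$, yields \eqref{eq:nl-online-sqrt-overlap}.

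\emph{Step 2: propagation of $\mathcal J_{P,B}$.} This is the main obstacle. In the weak phase, the time budget was controlled by $\sum\eta_d M_{-j}\le C$. Here the horizon is $\eta_d N^{\rm nl}\asymp\log d$, which is unbounded, so the accumulated drift and noise must be controlled through the damping $-qA_jm_jS^{p,q}_j$, exactly as in the population proof.

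For the coordinates, I would use the one-step recursion $\alpha_{j,n+1}(a)\approx(1-\eta_dA_jm_j)\alpha_{j,n}(a)+\eta_d(A_j\theta_j(a)+E_j(a))+\eta_d\,\text{noise}$. Summing with geometric weights bounds the deterministic part by $C\iota_d$. The noise would be handled by a damped (discounted) version of \eqref{eq:nl-coordinate-martingale-bound}: the quadratic variation is at most $\eta_d\cdot\eta_d/(c\eta_d)\cdot d\cdot$(per-coordinate variance $\sim 1$), which gives $\sqrt{\eta_0 d^{-K/2}\log d}\ll\iota_d$.

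For the mixed moments, I would induct on $q$ as in \Cref{prop:nl-naive-J-strong-floor}, using the discrete analogue of \eqref{eq:nl-macroscopic-Spq-evolution}. The drift produces a $(1-c\eta_d)$ contraction, a source term bounded by $C\eta_d\tau_n(d)$ from the induction hypothesis and \eqref{eq:nl-naive-J-mixed-error}, and a retraction error of relative size $\eta_d^2 d$ times the moment. The discounted martingale has quadratic variation $\lesssim\eta_d\tau_n(d)^2$, because $\|v_{j,p,q,n}\|_2\lesssim\tau_n$ by \eqref{eq:mixed-vector-norm-for-mart}. This gives a fluctuation $\sqrt{\eta_0 d^{-K/2}}\,\tau_n\sqrt{\log d}=o(\tau_n)$.

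With these bounds, $|S^{p,q}_j|\le C_q\tau_n(d)\le(2C_q/\eps_0)\tau_n(d)m_j$. Enlarging $B$ rules out the incoherence exit, so $\tau=T$ and \eqref{eq:nl-online-sqrt-class-propagated} holds.

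Union bounding over the finitely many $(j,p,q)$, the $d$ coordinates, and the three sample events, then absorbing logarithmic factors into the constants, completes the argument.
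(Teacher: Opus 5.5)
Your proposal is correct and follows essentially the same route as the paper. Both arguments use a stopped bootstrap on $\mathcal C^{\rm nl}_{P,B}(\eps_0/2)$, the one-step $\Delta_j$ drift inequality with the $\eta_d^2 d$ retraction floor, Freedman for the planted-overlap martingale, discrete Gronwall, and propagation of $\mathcal J_{P,B}$ by induction on the number of student powers using the damping $-qA_jm_jS_j^{p,q}$. Where you discount the coordinate and mixed-moment martingales, the paper simply reuses the undiscounted bounds of \Cref{prop:nl-sgd-one-step}. That is also the cleaner way to make your Step 2 rigorous. The damping rate $A_j(m_n)m_j(n)$ is random, so your discount weights are not predictable and the discounted sum is not a martingale. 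The weights are, however, monotone in $[0,1]$, so Abel summation bounds the discounted noise by twice the undiscounted supremum.

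One caveat, shared with the paper, concerns $K=2$, where $\upsilon_{K,d}=(\log d)^{-1}$. There $\eta_d N^{\rm nl}\asymp 1$ rather than $\log d$, so $(1-c\eta_d)^{N^{\rm nl}}\le d^{-1}$ is false. The stated bound survives only because in that case $\rho_d^{\rm nl}\asymp\eta_0$, so the constant in $N^{\rm nl}$ can be taken of order $\log(1/\eta_0)$.
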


\begin{proof}
The proof is the stopped stochastic version of
\Cref{prop:nl-naive-J-strong-floor}. We stop the process when either
\eqref{eq:nl-online-sqrt-class-propagated} fails or one of the martingale and
retraction estimates from \Cref{prop:nl-sgd-one-step} fails over the interval of
length \(N^{\rm nl}\).

For the planted error \(\Delta_j(n):=1-m_j(n)^2\), the retraction expansion of
\eqref{eq:nl-online-sgd-update}, together with the population inequality
\eqref{eq:nl-naive-J-delta-ineq}, gives on the stopped interval
\begin{equation}
  \E\bigl[\Delta_j(n+1)-\Delta_j(n)\mid\mathcal F_n\bigr]
  \le
  -c\eta_d\Delta_j(n)+C\eta_d d^{-1/2}+C\eta_d^2d.
  \label{eq:nl-online-sqrt-delta-drift}
\end{equation}
The last term is the usual spherical-retraction error, using the one-sample
gradient norm bound \eqref{eq:nl-gradient-norm-event}.  The planted-overlap
martingale is controlled as in the strong phase of
\Cref{thm:online-sgd-general-k}; over \(N^{\rm nl}\) steps it
contributes \(O(d^{-K/4} \upsilon_{K,d} \sqrt{\log d})\) uniformly in \(j\).  Discrete Gronwall
therefore yields, for all \(r\le N^{\rm nl}\),
\begin{equation}
  \Delta_j(n_0+r)
  \le
  e^{-c\eta_dr}\Delta_j(n_0)
  +C\left(d^{-1/2}
  +\eta_0d^{1-K/2}\log d \upsilon_{K,d}
  +d^{-K/4}\upsilon_{K,d} \sqrt{\log d}\right).
  \label{eq:nl-online-sqrt-delta-comparison}
\end{equation}
Taking \(r=N^{\rm nl}\) and increasing \(C\) gives
\eqref{eq:nl-online-sqrt-overlap}.

The propagation of \(\mathcal C^{\rm nl}_{P,B}(\eps_0/2)\) follows from the
same stopped induction as in the proof of
\Cref{prop:nl-naive-J-strong-floor}, with martingale errors added to the
coordinate and mixed-moment recursions.  The leading mixed-moment transport is
$$
  qA_j(m)\bigl(S_j^{p+1,q-1}-m_jS_j^{p,q}\bigr),
$$
where \(A_j(m)\asymp1\) and \(m_j\ge\eps_0/2\) on the stopped interval by
\Cref{lem:Bi-positive-macroscopic}.  Thus the term
\(-qA_j(m)m_jS_j^{p,q}\) damps each moment, while
\(S_j^{p+1,q-1}\) is handled by induction on the number of student powers.  The
non-Gaussian corrections are estimated using
\Cref{lem:nl-naive-J-vector-field}; the martingale increments are the same ones
controlled in
\eqref{eq:nl-mixed-martingale-increment}--\eqref{eq:nl-mixed-martingale-bound},
and the coordinate envelope follows from \eqref{eq:nl-coordinate-martingale-bound}.
After increasing \(B\), no stopped exit occurs before
\(N^{\rm nl}\) with probability at least \(1-\delta_{\rm str}\).
\end{proof}

\begin{corollary}[Online weak-to-strong nonlinear recovery]
\label{cor:nl-online-weak-to-sqrt-recovery}
In the setting of \Cref{thm:nl-online-sgd-weak-recovery}, suppose that at the
weak-recovery time \(n_\eps\) from \eqref{eq:nl-sgd-weak-time} the iterates
satisfy
\begin{equation}
  (\theta,\alpha_{n_\eps})\in\mathcal C^{\rm nl}_{P,B_0}(\eps).
  \label{eq:nl-online-sqrt-at-weak-time}
\end{equation}
Then, conditioned on the favorable initialization and correct sign pattern,
nonlinear-correlation online SGD reaches
\begin{equation}
  \min_{j\le K}m_j
  \ge 1-\rho_{d}^{\rm nl}
  \label{eq:nl-online-weak-to-sqrt-overlap}
\end{equation}
after an additional \(O_{\phi,K}(d^{K/2}\log d)\) samples, with
\(\rho_{d}^{\rm nl}\) defined in \eqref{eq:nl-online-sqrt-rhod}.
\end{corollary}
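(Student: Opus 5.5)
The corollary follows by concatenating the two online results already in hand. There are two phases, so the number of samples is the weak-recovery budget plus the continuation budget.

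\emph{Weak phase.} On the favorable initialization event and the planted-direction event, Theorem~\ref{thm:nl-online-sgd-weak-recovery} gives, with probability at least $1-\delta_{\rm sgd}$, an iteration $n_\eps$ satisfying \eqref{eq:nl-sgd-weak-time}. At that iteration $\min_j m_j(n_\eps)\ge\eps$, and by \eqref{eq:nl-sgd-uniform-basin} the iterate still lies in $\mathcal J_{P,B}$. This is exactly the membership $(\theta,\alpha_{n_\eps})\in\mathcal C^{\rm nl}_{P,B_0}(\eps)$ with $B_0:=B$. So hypothesis \eqref{eq:nl-online-sqrt-at-weak-time} is automatically met, and I will note this so the corollary's assumption is not vacuous.

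\emph{Continuation phase.}
1. I check that $n_\eps$, defined as the first entrance time of $\min_j m_j$ into $[\eps,1]$, is a stopping time for $(\mathcal F_n)$.
2. I apply Theorem~\ref{thm:nl-online-sgd-strong-recovery} with $n_0=n_\eps$, $\eps_0=\eps$, and $B_0$ as above.
3. Conditionally on $\mathcal F_{n_\eps}$, the future samples are independent of the past. So with conditional probability at least $1-\delta_{\rm str}$, after $N^{\rm nl}=\lceil C\eta_0^{-1}d^{K/2}\log d\rceil=O_{\phi,K}(d^{K/2}\log d)$ further iterations we get $\min_j m_j\ge 1-\rho_d^{\rm nl}$.

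Each iteration uses one fresh sample, so the additional sample count equals $N^{\rm nl}$. A union bound over the two phases gives total failure probability at most $\delta_{\rm sgd}+\delta_{\rm str}$.

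The main point needing care is this conditioning step, and I expect it to be routine. The constants of Theorem~\ref{thm:nl-online-sgd-strong-recovery} depend only on $(\phi,K,P,B_0,\eps_0,\delta_{\rm str})$, and these are deterministic once the weak phase is fixed. Hence the conditional bound holds uniformly over $\mathcal F_{n_\eps}$-measurable starting states, and integrating it gives the unconditional statement.

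Finally, $\rho_d^{\rm nl}=o_d(1)$ for fixed $K\ge2$: each of its three terms is $o_d(1)$, namely $d^{-1/2}\to0$, $\eta_0 d^{1-K/2}\upsilon_{K,d}\log d\to0$ (for $K=2$ this uses $\upsilon_{2,d}=(\log d)^{-1}$), and $d^{-K/4}\upsilon_{K,d}\sqrt{\log d}\to0$. Hence the bound $1-\rho_d^{\rm nl}$ is indeed strong recovery.
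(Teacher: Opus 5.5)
Your argument is correct and matches how the paper intends the corollary to follow. The paper gives no separate proof: the corollary is just Theorem~\ref{thm:nl-online-sgd-weak-recovery} followed by Theorem~\ref{thm:nl-online-sgd-strong-recovery}, applied with $n_0=n_\eps$ and $\eps_0=\eps$. Your observation that hypothesis \eqref{eq:nl-online-sqrt-at-weak-time} is already supplied by \eqref{eq:nl-sgd-uniform-basin}, with $B_0=B$, agrees with the paper's remark that the weak phase provides the mixed-incoherence bounds needed to start the continuation.

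One correction concerns your final paragraph, which is not needed for the corollary. For $K=2$ the middle term of $\rho_d^{\rm nl}$ does not vanish. Indeed $d^{1-K/2}\upsilon_{2,d}\log d=(\log d)^{-1}\log d=1$, so that term equals the fixed constant $C\eta_0$. Hence $\rho_d^{\rm nl}=o_d(1)$ holds only for $K\ge3$; for $K=2$ it is merely small when $\eta_0$ is small. Since the corollary only asserts the bound $1-\rho_d^{\rm nl}$, either drop that claim or restrict it to $K\ge3$.
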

\begin{remark}[Correlation vs MSE strong recovery]
    Corollary \ref{cor:nl-online-weak-to-sqrt-recovery} shows that the basic SGD algorithm over the correlation loss used for weak recovery can automatically reach overlaps of order $1 - \rho_d^{\rm nl} = 1 - o_d(1)$. A natural question is how to reach truly strong recovery, in the sense of $\min_j m_j(n) \ge 1-\varepsilon$ for any $\varepsilon >0$ for any fixed $d$. In the companion paper \cite{dai2026MSIM} we show that SGD on the MSE loss achieves such strong recovery when $K=2$; we believe the same argument should extend to $K>2$ and yield strong recovery guarantees for $K>2$, provided this SGD method is warmstarted after the weak recovery phase, but we leave this extension as an interesting question for future work. That said, the resulting algorithm is arguably less `canonical' than our SGD method, requiring the user to switch from correlation to MSE loss after weak recovery.
\end{remark}

\section{Numerical Experiments}
\label{sec:experiments}

 While correlation SGD enjoys strong recovery guarantees in the high-dimensional limit for any $K$ with constant probability, as we just showed, the MSE SGD dynamics are affected by the additional energy gradient $\nabla_{\tilde{\theta}} \| f_{\tilde{\theta}} \|^2$, which creates fluctuations of order $d^{-1/2}$ even in the mediocrity zone. As a result, MSE recovery is presumably efficient only when $K \leq 2$, where the signal strength $d^{-(K-1)/2}$ can outweigh these non-informative fluctuations.

To gain further intuition on this phenomenon, we illustrate empirically the performance of online SGD, specifically the comparison between $\mathcal{L}$ and $\mathcal{L}_{\text{MSE}}$.
We consider a MSIM model using the normalized GeLU
activation
$$
  \phi(x)=\frac{\operatorname{GeLU}(x)-\mu_{\rm GeLU}}{\sigma_{\rm GeLU}},
$$
with $\mu_{\rm GeLU}$ and $\sigma_{\rm GeLU}$ adjusted so that
$\E[\phi]=0$ and $\| \phi\|_2 = 1$.
Teacher directions are sampled independently and uniformly from
\(\mathcal{S}^{d-1}\).  Student directions are sampled independently and uniformly from the same sphere and then flipped layerwise, if needed, so that we are in the positive overlap event at initialization
$$
  \langle \tilde\theta_j,\theta_j\rangle \ge 0,\qquad j=1,\ldots,K,
$$
in accordance to our theory.
We consider online SGD with minibatches, to better exploit the GPU resources, so the per-update learning rate is
$  \eta_{\rm update}=B\,\eta_0 d^{-K/2},$ where \(B\) is the effective batch size.  In our experiments, we use $B=64$ and $\eta_0 \in \{0.05, 0.1\}$. The sample horizon is parameterized as
$N=C\,d^{K-1},$ where \(C\) is a reported horizon constant that we adjust around $C \sim 10^2$. We used Slurm job arrays using NVIDIA A100 GPUs.

Figure \ref{fig:overlaps} shows the recovery performance of online SGD.
We verify that, as expected, the correlation loss achieves recovery at roughly $O( d^{K-1})$ samples, whereas the MSE loss fails to recover the planted parameters as $K>2$ and $d$ increases, relative to the correlation setting. Because the theorem is conditional on a favorable initialization event of constant probability, the main curves show the top half of seeds, corresponding to successful/favorable trajectories. We also report the raw success fraction across all seeds in Table \ref{tab:overlap-success}. As expected, the success probability decays with $K$.

\begin{figure}
    \centering
    \includegraphics[width=0.65\linewidth]{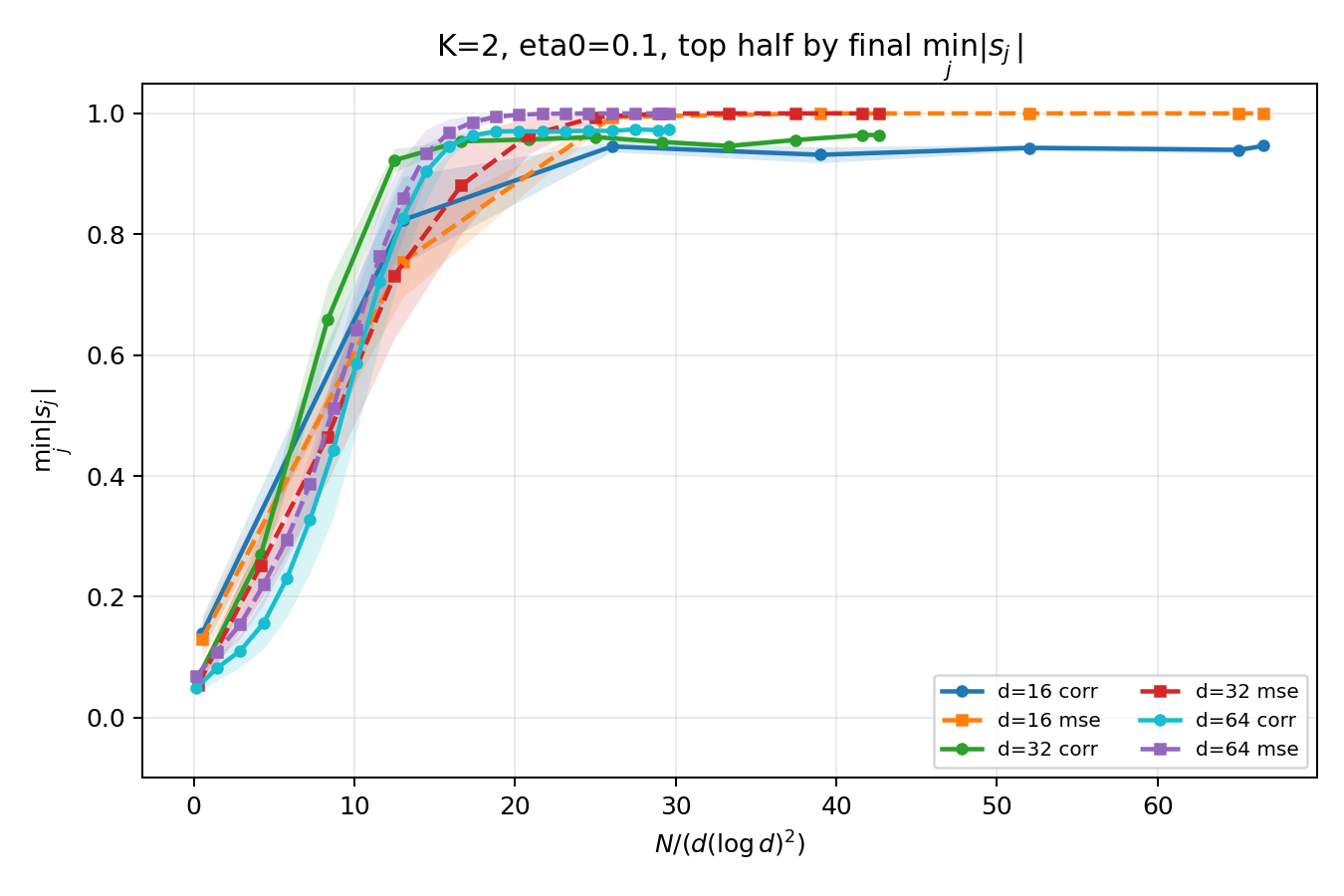}
    \includegraphics[width=0.65\linewidth]{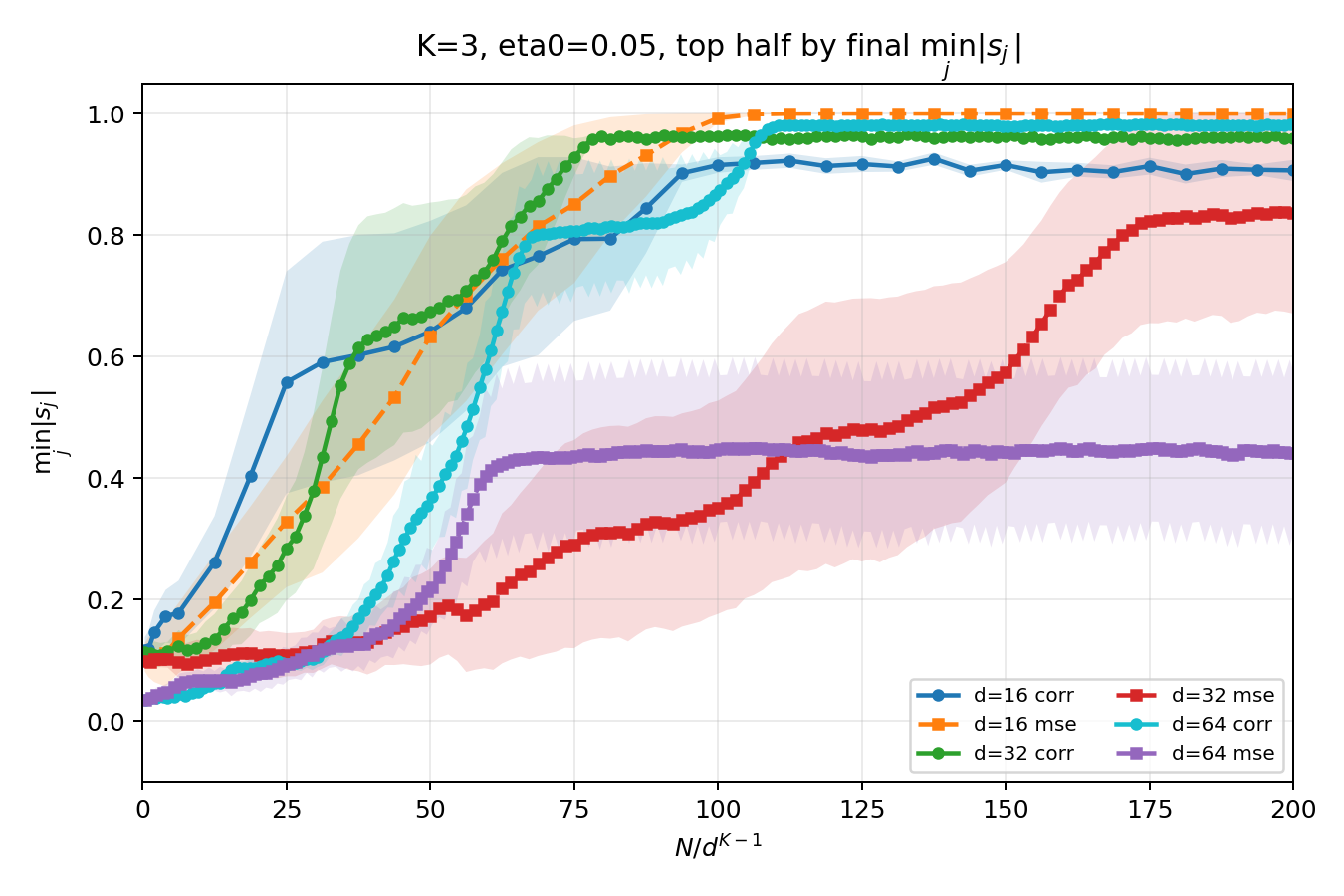}
    \includegraphics[width=0.65\linewidth]{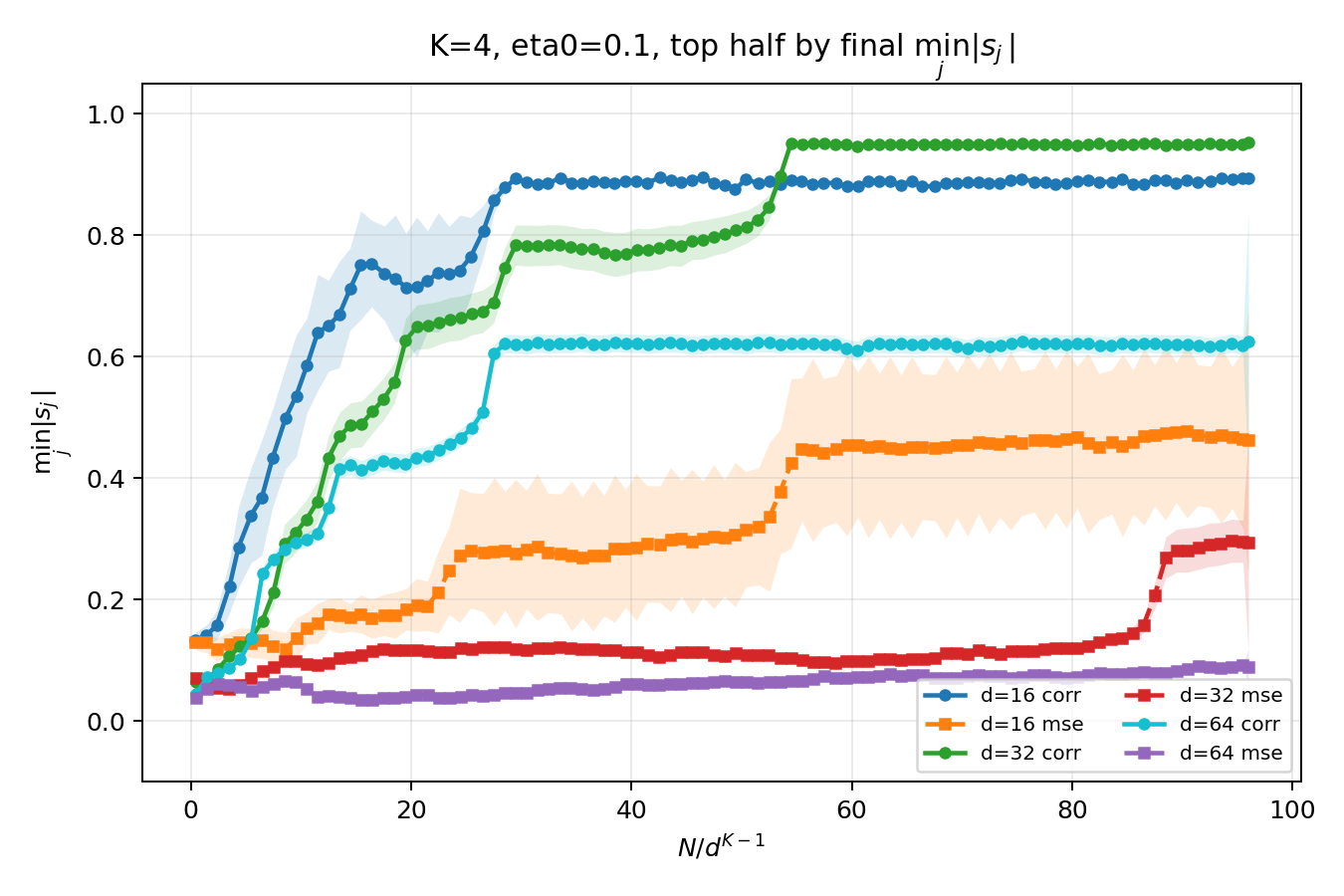}
    \caption{Overlap performance $\min_j |m_j(n)|$ as a function of the number of samples $n$, for $K \in \{2,3,4\}$ and $d \in \{16,32,64\}$. Solid lines correspond to correlation SGD while dashed lines to MSE. We run 10 different seeds and provide error bars for the top half performing ones. }
    \label{fig:overlaps}
\end{figure}

\begin{table}
    \centering
    \begin{tabular}{ccccc}
        \toprule
        \(K\) & \(d\) & Correlation SGD & MSE SGD & Experiment setting \\
        \midrule
        2 & 16 & \(10/10\) & \(9/10\) & \(\eta_0=0.1,\ C=512\) \\
        2 & 32 & \(10/10\) & \(10/10\) & \(\eta_0=0.1,\ C=512\) \\
        2 & 64 & \(10/10\) & \(8/10\) & \(\eta_0=0.1,\ C=512\) \\
        \midrule
        3 & 16 & \(10/10\) & \(5/10\) & \(\eta_0=0.05,\ C=640\) \\
        3 & 32 & \(10/10\) & \(6/10\) & \(\eta_0=0.05,\ C=640\) \\
        3 & 64 & \(10/10\) & \(3/10\) & \(\eta_0=0.05,\ C=640\) \\
        \midrule
        4 & 16 & \(5/10\) & \(2/10\) & \(\eta_0=0.1,\ C=96\) \\
        4 & 32 & \(6/10\) & \(1/10\) & \(\eta_0=0.1,\ C=96\) \\
        4 & 64 & \(3/10\) & \(0/10\) & \(\eta_0=0.1,\ C=96\) \\
        \bottomrule
    \end{tabular}
    \caption{Raw success fractions across all 10 seeds for the experiments in Figure~\ref{fig:overlaps}. A run is counted as successful if its final overlap satisfies \(\min_j |m_j|>0.75\). The horizon is \(N=C d^{K-1}\), except for \(K=2\), where the plotted theory-normalized scale uses \(N/(d(\log d)^2)\).}
    \label{tab:overlap-success}
\end{table}

\section{Conclusions and Future Directions}
\label{sec:conclusions}
In this work, we have analyzed the Multiscale Single-Index Model, originally introduced in \cite{oymak2021learning}, in which a $d^K$-dimensional input with multiscale structure is iteratively coarse-grained via local, translation-invariant single-index models, and where the number of scales $K$ corresponds directly to the depth of the associated neural network.

The scale separation present in the MSIM model, combined with  incoherence and non-degeneracy assumptions, creates informative low-rank structure in the associated Wiener chaos expansion. In particular, the first chaos provides a natural nonlinear analogue of the Tensor PCA model where the informative dominant rank-one spike is `polluted' by a hierarchy of subspaces of increasing dimension along a staircase spectrum at orders $d^{-q/2}$.  While spectral recovery methods such as tensor unfolding are robust to these spurious subspaces, gradient-based methods require a bookkeeping of high-order terms, that we implement via an Edgeworth expansion. As a result of this expansion, our main results establish (i) a quantitative separation between MSIM models and shallow NNs, showing that depth is necessary to achieve good approximation, and (ii) efficient SGD recovery of the MSIM model using online SGD on the correlation loss, confirming that same-timescale online backpropagation on the nonlinear correlation objective suffices for recovery, without any need for layerwise training.

Our work leaves several open questions ahead of us. First, an arguably more natural   high-dimensional regime is where $d$ (the so-called `receptive field') is fixed, but depth $K$ is diverging; this regime will a priori require substantial technical innovation, since the mean-field approximation underlying our Edgeworth arguments no longer applies.
Next, an important limitation of our work is that it imposes a form of strong scale separation: each layer `lives' on its own dedicated scale, and, in the language of CNNs, the architecture imposes stride equal to the filter patch size. While this is a valid assumption for certain physical data (e.g images), and is also present in the Random Hierarchy Model from \cite{cagnetta2024deep}, many popular deep architectures do not have scale separation; for instance in ResNets \cite{he2015deepresiduallearningimage} and their associated NeuralODEs \cite{chen2019neuralordinarydifferentialequations}, as well as in Transformers \cite{vaswani2017attention} and their mathematical blueprints \cite{geshkovski2025mathematical}. More generally, in architectures where depth is akin to a temporal evolution, as in diffusion and flow-based generative models \cite{sohl2015deep, albergo2023stochastic, lipman2022flow}, there is no scale separation by construction. Building toy models that explain feature learning in these settings remains an outstanding challenge, initiated e.g. in \cite{chizat2025hidden, chaintron2026resnets}. 

\bibliographystyle{alpha}
\bibliography{arxiv_refs}

\newpage
\appendix

\section{Deferred Proofs of Section \ref{sec:spectral_estimation}}

\begin{lemma}[Matricized empirical Stein concentration]\label{lem:matrix-conc}
Assume \eqref{ass:regularity}. Conditional on any fixed directions \(\theta_1,\ldots,\theta_K\in S^{d-1}\), there is a constant \(C_{\phi,K}<\infty\) such that
$$
  \norm{f_\theta(X)}_{\psi_2}\le C_{\phi,K}.
$$
Consequently, for every \(\delta\in(0,1)\), with probability at least \(1-\delta\) over the samples,
\begin{equation}
  \norm{\Mat_a(T_n-\E_XT_n)}_\op
  \le
  C_{\phi,K}\left[
  \sqrt{\frac{D_*\log((D_L+D_R)/\delta)}{n}}
  +
  \frac{\sqrt{D_*}\,\log^{3/2}(n/\delta)\log((D_L+D_R)/\delta)}{n}
  \right].
  \label{eq:matrix-conc-full}
\end{equation}
In particular, if
$$
  n\ge C_{\phi,K}D_*\log^4((D_L+D_R)n/\delta),
$$
then the first term dominates and
\begin{equation}
  \norm{\Mat_a(T_n-\E_XT_n)}_\op
  \le
  C_{\phi,K}
  \sqrt{\frac{D_*\log((D_L+D_R)/\delta)}{n}}.
  \label{eq:matrix-conc-simple}
\end{equation}
\end{lemma}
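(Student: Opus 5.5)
The proof has two parts. The first is a subgaussian bound on \(f_\theta(X)\) via Gaussian concentration. The second is a truncated matrix Bernstein inequality for the flattened sum \(\sum_\ell y_\ell\Mat_a(X_\ell)\).

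\textbf{Step 1 (subgaussianity).} Under \eqref{ass:regularity}, \(\phi\) is \(L\)-Lipschitz with \(L=\norm{\phi'}_\infty\). Each map \(Z^{(r-1)}\mapsto Y^{(r)}\) contracts one mode against a unit vector. It therefore has operator norm at most one from \(\R^{d^{K-r+1}}\) to \(\R^{d^{K-r}}\) (Frobenius to Frobenius), by Cauchy--Schwarz applied fiberwise. Entrywise application of \(\phi\) is \(L\)-Lipschitz in Frobenius norm. Hence \(X\mapsto f_\theta(X)\) is \(L^K\)-Lipschitz on \(\R^{d^K}\). Gaussian concentration then gives \(\norm{f_\theta-\E f_\theta}_{\psi_2}\le C L^K\). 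The mean \(|\E f_\theta|\) is bounded by \(\E|f_\theta(X)-f_\theta(0)|+|f_\theta(0)|\). Here \(f_\theta(0)\) is the \(K\)-fold composite of \(\phi(0)\)-type constants passed through sums \(\sum_j\theta_r(j)\), which are only \(O(\sqrt d)\) in the worst case. To avoid this I would instead use \(\E\phi(G)=0\) layerwise, or simply note that \(\norm{f_\theta}_{L^2}\) is bounded by induction through the same Lipschitz recursion. Since each \(Y^{(r)}\) entry has bounded second moment, this is enough, and \(\norm{f_\theta}_{\psi_2}\le C_{\phi,K}\) follows.

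\textbf{Step 2 (truncation).} Write \(A_\ell:=y_\ell\Mat_a(X_\ell)\in\R^{D_L\times D_R}\). The matrix \(\Mat_a(X_\ell)\) is a \(D_L\times D_R\) standard Gaussian matrix, so \(\norm{\Mat_a(X_\ell)}_\op\le C(\sqrt{D_L}+\sqrt{D_R}+\sqrt{\log(n/\delta)})\) for all \(\ell\) with probability \(1-\delta/3\). Similarly, \(|y_\ell|\le C\sqrt{\log(n/\delta)}\) uniformly in \(\ell\) by Step 1. Define the truncated summands \(\tilde A_\ell:=A_\ell\one\{\mathcal T_\ell\}\), where \(\mathcal T_\ell\) is the intersection of these two events. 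On a \(1-2\delta/3\) event, \(A_\ell=\tilde A_\ell\) for all \(\ell\). The almost-sure bound is then \(\norm{\tilde A_\ell}_\op\le R:=C\sqrt{D_*}\log^{1/2}(n/\delta)\), up to the additional \(\log(n/\delta)\) factors that produce the \(\log^{3/2}\) in \eqref{eq:matrix-conc-full}. The truncation bias is \(\norm{\E A_\ell-\E\tilde A_\ell}_\op\le \E[\norm{A_\ell}_\op\one\{\mathcal T_\ell^c\}]\). Cauchy--Schwarz bounds this by \(\sqrt{D_*}\,\mathrm{poly}(\delta/n)\), which is negligible against the second term.

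\textbf{Step 3 (variance and Bernstein).} The variance proxy is \(\sigma^2=\max(\norm{\E A A^\top}_\op,\norm{\E A^\top A}_\op)\). For a unit vector \(u\in\R^{D_L}\) we have \(u^\top\E[y^2 \Mat X\Mat X^\top]u=\sum_{c}\E[y^2\langle u,\Mat X e_c\rangle^2]\). Each summand is at most \(C_{\phi,K}\): the factor \(\langle u,\Mat X e_c\rangle\) is \(N(0,1)\), and Cauchy--Schwarz with the \(\psi_2\) bound on \(y\) controls the product. Summing over \(c\) gives \(\le C D_R\), and symmetrically \(\le CD_L\), so \(\sigma^2\le C D_*\). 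The main obstacle is exactly this step. The entries are dependent through \(y\), so one cannot use independent-entry tools, and the bound must come from a coordinate-free second-moment computation. Matrix Bernstein for \(\sum_\ell(\tilde A_\ell-\E\tilde A_\ell)/n\) then gives the bound
\[
\sqrt{\sigma^2\log((D_L+D_R)/\delta)/n}+R\log((D_L+D_R)/\delta)/n .
\]
This is \eqref{eq:matrix-conc-full} after a union bound over the three events. Finally, \eqref{eq:matrix-conc-simple} follows because the second term is at most the first once \(n\ge C D_*\log^4((D_L+D_R)n/\delta)\).
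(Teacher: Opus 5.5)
Your route is the paper's: the same coordinate-free second-moment computation $u^\top\E[y^2\Mat_a(X)\Mat_a(X)^\top]u=\sum_c\E[y^2\ip{u}{\Mat_a(X)e_c}^2]\le CD_R$ giving $\sigma^2\le CD_*$, the same truncation of $|y_\ell|$ and $\norm{\Mat_a(X_\ell)}_\op$ at level $\sqrt{\log(n/\delta)}$, and rectangular matrix Bernstein. Steps 2 and 3 are fine once Step 1 holds. The gap is in Step 1, exactly where you try to justify what the paper only asserts.

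Your Lipschitz argument correctly gives $\norm{f_\theta-\E f_\theta}_{\psi_2}\le CL^K$. Variances also stay bounded: for any $Y$, $\mathrm{Var}\,\phi(Y)\le L^2\,\mathrm{Var}\,Y$, and each contraction sums independent, identically distributed entries of $Z^{(r-1)}$ against the unit vector $\theta_r$, so $\mathrm{Var}\,Y^{(r)}\le L^{2(r-1)}$. You also correctly noticed that $\ip{\theta_r}{\mathbf 1}$ can be of order $\sqrt d$, but neither proposed fix controls the mean. The identity $\E\phi(G)=0$ only helps at layer one, where the preactivations are exactly Gaussian. $Y^{(2)}$ is not Gaussian, so $\mu:=\E\phi(Y^{(2)})$ is generally nonzero, and the next contraction gives $\E Y^{(3)}=\mu\,\ip{\theta_3}{\mathbf 1}$. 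The Lipschitz recursion controls fluctuations only, so your claim that every entry of $Y^{(r)}$ has bounded second moment is false for arbitrary unit directions.

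Concretely, take $K=3$, $\theta_1=\theta_2=e_1$, and $\theta_3=\pm\mathbf 1/\sqrt d$ with the sign chosen so that $\E Y^{(3)}=|\mu|\sqrt d$. Then $\mu=\E\phi(\phi(G))$, which is generically nonzero under \eqref{ass:normalization}--\eqref{ass:regularity}, and $Y^{(3)}=|\mu|\sqrt d+O_P(1)$. Whenever $\mu\ne0$ and $\phi$ grows linearly at $+\infty$ (as the normalized GeLU of Section~\ref{sec:experiments} does), $|\E f_\theta|\gtrsim\sqrt d$. Splitting $f_\theta=\E f_\theta+(f_\theta-\E f_\theta)$, the matrix $\Mat_a(T_n-\E_XT_n)$ contains $\E f_\theta\cdot\frac1n\sum_\ell\Mat_a(X_\ell)$. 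Its operator norm is of order $\sqrt{dD_*/n}$, which exceeds the right-hand side of \eqref{eq:matrix-conc-simple} by a factor of order $\sqrt{d/\log d}$. So the statement with a $d$-independent constant ``for any fixed directions'' is false once $K\ge3$.

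The missing ingredient is a bound on $\max_r|\ip{\theta_r}{\mathbf 1}|$, i.e.\ the quantity $B_1(\theta)$ of \eqref{eq:eta-star-theta}. With it, the induction $\E Y^{(r+1)}=\ip{\theta_{r+1}}{\mathbf 1}\,\E Z^{(r)}$, $|\E Z^{(r)}|\le|\phi(0)|+L\bigl(|\E Y^{(r)}|+L^{r-1}\bigr)$ closes with a constant depending on $\phi,K,B_1(\theta)$, and the rest of your proof goes through verbatim. Since $B_1(\theta)=O(1)$ on the spherical event \eqref{eq:app-P1}, the use of the lemma in Corollary~\ref{cor:sample-size_body} is unaffected. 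The paper's one-line justification of the $\psi_2$ bound has exactly the same hole. The right repair is therefore to add this hypothesis to the statement rather than to search for a sharper argument.
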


\begin{proof}[Proof of Lemma \ref{lem:matrix-conc}]
Let \(A:=\Mat_a(X)\in\R^{D_L\times D_R}\) and \(Y:=f_\theta(X)A\). Since \(\phi\) is globally Lipschitz and all contractions have norm one, each scalar coordinate generated by the network is subgaussian with a \(\psi_2\)-norm depending only on \(\phi\) and \(K\). In particular,
$$
  \norm{f_\theta(X)}_{\psi_2}\le C_{\phi,K}.
$$
Let \(Z:=Y-\E Y\). We first bound the matrix variance. For any unit vector \(p\in\R^{D_L}\),
\begin{align*}
  p^\top\E[YY^\top]p
  &=\E\bigl[f_\theta(X)^2\norm{A^\top p}_2^2\bigr]\\
  &=\sum_{j=1}^{D_R}\E\bigl[f_\theta(X)^2\ip{A_{:j}}{p}^2\bigr].
\end{align*}
For each column \(j\), \(\ip{A_{:j}}p\) is a standard Gaussian random variable. By Cauchy-Schwarz and the subgaussian moment bound for \(f_\theta(X)\),
$$
  \E\bigl[f_\theta(X)^2\ip{A_{:j}}p^2\bigr]
  \le
  (\E f_\theta(X)^4)^{1/2}(\E\ip{A_{:j}}p^4)^{1/2}
  \le C_{\phi,K}.
$$
Hence \(\norm{\E YY^\top}_\op\le C_{\phi,K}D_R\). Similarly,
\(\norm{\E Y^\top Y}_\op\le C_{\phi,K}D_L\). Centering can only decrease the second moment in the Loewner order after subtracting \((\E Y)(\E Y)^\top\) and its transpose. Therefore
\begin{equation}
  \sigma^2:=
  \max\{\norm{\E ZZ^\top}_\op,\norm{\E Z^\top Z}_\op\}
  \le C_{\phi,K}D_*.
  \label{eq:variance-proxy}
\end{equation}

The sample matrices are not bounded, so we truncate. Standard Gaussian matrix concentration gives
$$
  \norm A_\op\le C(\sqrt{D_L}+\sqrt{D_R}+t)
$$
with probability at least \(1-e^{-t^2}\), and subgaussianity gives
\(\abs{f_\theta(X)}\le C_{\phi,K}t\) with probability at least \(1-e^{-ct^2}\). Taking \(t\asymp\sqrt{\log(n/\delta)}\) and union bounding over \(n\) samples yields, except on an event of probability at most \(\delta/2\),
\begin{equation}
  \norm{Y_\ell}_\op
  \le
  C_{\phi,K}\sqrt{D_*}\sqrt{\log(n/\delta)},
  \qquad 1\le\ell\le n.
  \label{eq:sample-envelope}
\end{equation}
The expectation of the truncated tail is exponentially small at this level and is absorbed into the second term in \eqref{eq:matrix-conc-full}. Applying rectangular matrix Bernstein to the truncated independent centered matrices, with variance proxy \(n\sigma^2\) and envelope \eqref{eq:sample-envelope}, gives \eqref{eq:matrix-conc-full}. The simplified bound \eqref{eq:matrix-conc-simple} follows when the envelope term is no larger than the square-root term.
\end{proof}

\begin{lemma}[Rank-one Wedin bound]\label{lem:wedin}
Let
$$
  \widehat M=\lambda uv^\top+E,
  \qquad
  \norm u_2=\norm v_2=1,
  \qquad
  \lambda\ne0.
$$
If \(\norm E_\op\le\abs\lambda/4\), then any top left and right singular vectors \(\widehat u,\widehat v\) of \(\widehat M\) satisfy
\begin{equation}
  \distpm(\widehat u,u)
  \le C\frac{\norm E_\op}{\abs\lambda},
  \qquad
  \distpm(\widehat v,v)
  \le C\frac{\norm E_\op}{\abs\lambda}.
  \label{eq:wedin}
\end{equation}
\end{lemma}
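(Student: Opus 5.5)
The argument is the Davis--Kahan/Wedin $\sin\Theta$ bound, specialized to a rank-one signal. First we control the top singular value. Weyl's inequality gives $|\sigma_1(\widehat M)-|\lambda||\le\norm E_\op\le|\lambda|/4$ and $\sigma_2(\widehat M)\le\norm E_\op\le|\lambda|/4$. Hence
$$
\sigma_1(\widehat M)\ge\tfrac34|\lambda| \qquad\text{and}\qquad \sigma_1(\widehat M)-\sigma_2(\widehat M)\ge|\lambda|/2,
$$
so the top singular pair is well separated from the rest of the spectrum. Multiplying $\widehat u$ and $\widehat v$ by the same sign (which leaves the singular pair valid), we may assume $\ip{\widehat u}{u}\ge0$.

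Next we bound $\norm{P_{u^\perp}\widehat u}_2$ directly. From $\widehat M\widehat v=\sigma_1\widehat u$ we get
$$
\sigma_1\widehat u=\lambda u\,\ip{v}{\widehat v}+E\widehat v.
$$
Applying $P_{u^\perp}$ kills the signal term, leaving $\sigma_1\norm{P_{u^\perp}\widehat u}_2=\norm{P_{u^\perp}E\widehat v}_2\le\norm E_\op$. Therefore
$$
\sin\angle(\widehat u,u)=\norm{P_{u^\perp}\widehat u}_2\le\frac{\norm E_\op}{\sigma_1}\le\frac{4\norm E_\op}{3|\lambda|}.
$$
Running the same argument on $\widehat M^\top\widehat u=\sigma_1\widehat v$ gives $\sin\angle(\widehat v,v)\le 4\norm E_\op/(3|\lambda|)$.

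It remains to pass from angles to $\distpm$. For unit vectors with $c=|\ip{\widehat u}{u}|$, we have $\distpm(\widehat u,u)^2=2(1-c)\le 2(1-c^2)=2\sin^2$, so $\distpm\le\sqrt2\,\sin$. This yields \eqref{eq:wedin} with, for example, $C=2$.

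The one point requiring care is that $\distpm$ is sign-invariant but the pair $(\widehat u,\widehat v)$ must share a consistent sign with $\lambda uv^\top$. Since the bound is stated separately for each of $u$ and $v$ with its own sign choice, this causes no issue. Note that no gap argument beyond the lower bound on $\sigma_1$ is actually needed, because the signal has rank one and the projection $P_{u^\perp}$ removes it exactly.
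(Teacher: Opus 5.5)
Your proof is correct and follows the same route as the paper, which simply invokes Wedin's $\sin\Theta$ theorem using the Weyl-controlled gap and then converts sines into $\distpm$; you instead prove the $\sin\Theta$ step inline by projecting $\widehat M\widehat v=\sigma_1\widehat u$ onto $u^\perp$, which is Wedin's argument specialized to a rank-one signal, and this yields the explicit constant $C=2$. Your remark that only the lower bound $\sigma_1(\widehat M)\ge\tfrac34|\lambda|$ is needed, and not the empirical gap, is accurate, because the projection annihilates the signal term exactly.
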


\begin{proof}[Proof of Lemma \ref{lem:wedin}]
This is Wedin's sin-theta theorem for singular subspaces. The rank-one signal has first singular value \(\abs\lambda\) and second singular value zero. Under \(\norm E_\op\le\abs\lambda/4\), the empirical spectral gap is at least \(\abs\lambda/2\), so the sines of the principal angles between \(\widehat u,u\) and \(\widehat v,v\) are bounded by \(C\norm E_\op/\abs\lambda\). For unit vectors, Euclidean distance up to sign is bounded by a universal constant times the sine of the angle.
\end{proof}

\begin{lemma}[Recursive Kronecker factor extraction]\label{lem:recursive}
Fix \(m\ge1\) and unit vectors \(q_1,\ldots,q_m\in S^{d-1}\). Let
$$
  q=q_1\ot\cdots\ot q_m\in\R^{d^m}.
$$
Suppose \(\widehat q\in\R^{d^m}\) is unit norm and
$$
  \distpm(\widehat q,q)\le\eps\le c_m
$$
for a sufficiently small constant \(c_m>0\). Apply the recursive SVD factorization from Section 3.1 to \(\widehat q\). Then the returned vectors \(\widehat q_1,\ldots,\widehat q_m\) satisfy
\begin{equation}
  \max_{1\le r\le m}\distpm(\widehat q_r,q_r)
  \le C_m\eps,
  \label{eq:recursive-factor-bound}
\end{equation}
where \(C_m\) depends only on \(m\).
\end{lemma}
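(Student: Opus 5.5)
We argue by induction on \(m\), following the recursive structure of the factorization. The base case \(m=1\) is trivial: the procedure returns \(\widehat q\) itself, so \(C_1=1\). For \(m\ge2\), write \(m_1=\lfloor m/2\rfloor\), \(m_2=m-m_1\), and set \(q_L:=q_1\ot\cdots\ot q_{m_1}\in\R^{d^{m_1}}\), \(q_R:=q_{m_1+1}\ot\cdots\ot q_m\in\R^{d^{m_2}}\). These are unit vectors, and reshaping \(q\) as a \(d^{m_1}\times d^{m_2}\) matrix gives exactly the rank-one matrix \(Q=q_Lq_R^\top\) with singular value \(1\).

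The key step is to apply the rank-one Wedin bound, Lemma~\ref{lem:wedin}, to this reshaping. Choose the sign \(s\in\{\pm1\}\) attaining \(\distpm(\widehat q,q)=\norm{\widehat q-sq}_2\le\eps\). The reshaping is an isometry from the Euclidean norm to the Frobenius norm, so
\[
\widehat Q=s\,q_Lq_R^\top+E,\qquad \norm{E}_\op\le\norm{E}_F=\norm{\widehat q-sq}_2\le\eps .
\]
Take \(\lambda=s\) (so \(\abs\lambda=1\)) and require \(c_m\le1/4\). Lemma~\ref{lem:wedin} then yields
\[
\distpm(\widehat q_L,q_L)\le C\eps,\qquad \distpm(\widehat q_R,q_R)\le C\eps
\]
for the top singular vectors \(\widehat q_L,\widehat q_R\). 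These are unit vectors, so the recursion is applied to admissible inputs.

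Next we invoke the induction hypothesis for orders \(m_1\) and \(m_2\), both smaller than \(m\) when \(m\ge2\). This requires \(C\eps\le\min(c_{m_1},c_{m_2})\), which we secure by choosing \(c_m:=\min\{1/4,\,c_{m_1}/C,\,c_{m_2}/C\}\). The hypothesis gives \(\distpm(\widehat q_r,q_r)\le C_{m_i}C\eps\) for every factor in the corresponding block. Setting \(C_m:=C\max(C_{m_1},C_{m_2})\) proves \eqref{eq:recursive-factor-bound}.

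Since the recursion depth is \(O(\log m)\) and all constants are defined recursively from \(m\) alone, \(C_m\) and \(c_m\) depend only on \(m\). The only delicate point is the sign ambiguity. Each application of Wedin determines the blocks only up to sign, and the signs of \(q_L\) and \(q_R\) are coupled through \(s\). This causes no difficulty, because we only ever claim \(\distpm\)-closeness for each factor separately, so the individual sign choices never need to be reconciled. If Lemma~\ref{lem:wedin} is stated only for a positive \(\lambda\), we replace \(q_L\) by \(s q_L\), which leaves all \(\distpm\) quantities unchanged.
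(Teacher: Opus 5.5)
Your proposal is correct and follows the paper's proof essentially step for step. Both arguments use induction on $m$, reshape $q$ into the rank-one matrix $q_Lq_R^\top$ so that the perturbation has Frobenius (hence operator) norm at most $\eps$, apply the rank-one Wedin bound, and then invoke the induction hypothesis on the two blocks. Your explicit choices of $c_m$ and $C_m$, and your handling of the sign through $\lambda=s$, simply spell out the bookkeeping the paper summarizes as ``after increasing $C_m$.''
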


\begin{proof}[Proof of Lemma \ref{lem:recursive}]
The proof is by induction on \(m\). The case \(m=1\) is immediate. For \(m\ge2\), write \(m_1=\lfloor m/2\rfloor\), \(m_2=m-m_1\), and define
$$
  q_L:=q_1\ot\cdots\ot q_{m_1},
  \qquad
  q_R:=q_{m_1+1}\ot\cdots\ot q_m.
$$
The balanced reshaping of \(q\) is exactly the rank-one matrix \(q_Lq_R^\top\), with singular value one. After choosing the sign of \(\widehat q\) so that \(\norm{\widehat q-q}_2\le\eps\), the reshaped perturbation has Frobenius norm \(\eps\), hence operator norm at most \(\eps\). Wedin's theorem applied to
$$
  \Mat(\widehat q)=q_Lq_R^\top+E,
  \qquad
  \norm E_\op\le\eps,
$$
gives
$$
  \distpm(\widehat q_L,q_L)+\distpm(\widehat q_R,q_R)
  \le C\eps.
$$
Applying the induction hypothesis to \(\widehat q_L\) and \(\widehat q_R\) proves \eqref{eq:recursive-factor-bound}, after increasing \(C_m\).
\end{proof}

\end{document}